\documentclass{article}

\usepackage[square]{natbib}

\usepackage{microtype}
\usepackage{graphicx}
\usepackage{booktabs}
\usepackage{xspace}
\usepackage{colortbl}
\usepackage[accepted]{icml2026}
\usepackage{amsmath}

\usepackage[utf8]{inputenc} 
\usepackage[T1]{fontenc}    
\usepackage{lipsum,booktabs}
\usepackage{amsmath,mathrsfs,amssymb,amsfonts,bm}
\usepackage{rotating}
\usepackage{pdflscape}
\usepackage{tcolorbox}
\usepackage{hyperref,url,dsfont,nicefrac}
\usepackage{multirow}
\usepackage{makecell}
\usepackage{anyfontsize}
\usepackage{bbding}
\usepackage{paralist}
\usepackage{enumerate}
\hypersetup{
    colorlinks,
    breaklinks,
    linkcolor = blue,
    citecolor = blue,
    urlcolor  = black,
}
\allowdisplaybreaks
\usepackage{appendix}
\usepackage{multirow,makecell,tabularx}
\usepackage{xfrac}
\usepackage{nicefrac}
\usepackage{threeparttable}
\usepackage{pifont}
\usepackage{tikz}
\usepackage{wrapfig}
\usepackage{enumitem}
\usepackage{circledsteps}

\usepackage{algorithm}
\usepackage{algpseudocode}

\newlength\myindent
\renewcommand{\tilde}{\widetilde}
\renewcommand{\hat}{\widehat}

\def \A {\mathcal{A}}

\def \Bc {\mathcal{B}}

\def \B {\mathbb{B}}

\def \D {\mathcal{D}}
\def \E {\mathbb{E}}

\def \F {\mathcal{F}}

\def \H {\mathcal{H}}
\def \I {\mathcal{I}}

\def \M {\mathcal{M}}

\def \O {\mathcal{O}}

\def \R {\mathbb{R}}
\def \Rc {\mathcal{R}}

\def \Vb {\bar{V}}

\def \X {\mathcal{X}}

\def \a {\mathbf{a}}
\def \b {\mathbf{b}}

\def \c {\mathbf{c}}

\def \eb {\mathbf{e}}

\def \g {\mathbf{g}}

\def \gt {\tilde{g}}
\def \tgb{\tilde{\mathbf{g}}}

\def \I {\mathbb{I}}

\def \m {\boldsymbol{m}}

\def \p {\boldsymbol{p}}

\def \r {\mathbf{r}}

\def \u {\mathbf{u}}
\def \v {\mathbf{v}}
\def \w {\mathbf{w}}
\def \x {\mathbf{x}}
\def \y {\mathbf{y}}

\def \P {\mathbb{P}}

\def \wh {\hat{\w}}

\def \ws {\w^\star}

\def \xh {\hat{\x}}

\def \xs {\x^\star}

\def \mub {\boldsymbol{\mu}}

\def \epsilon {\varepsilon}

\def \Ot {\tilde{\O}}

\def \sumT {\sum_{t=1}^T}

\def \sumTT {\sum_{t=2}^T}

\def \sumN {\sum_{i=1}^N}

\def \is {{i^\star}}

\def \ith {{$i$\text{-th}}\xspace}

\def \ellb {\boldsymbol{\ell}}

\def \Reg {\textsc{Reg}}

\def \DReg {\textsc{D-Reg}}
\def \meta {\textsc{Meta-Reg}}
\def \base {\textsc{Base-Reg}}

\def \mlprod {\textsc{Adapt-ML-Prod}\xspace}

\def \exp {\text{exp}}
\def \sc {\text{sc}}

\newcommand{\rom}[1]{\setcounter{romancounter}{#1}\textit{(\roman{romancounter})}}

\usepackage{mathtools}
\usepackage{autobreak}
\let\norm\undefined 
\newcommand\norm[1]{\left\| #1 \right\|}

\newcommand\abs[1]{\left| #1 \right|}
\newcommand\inner[2]{\langle #1, #2 \rangle}
\newcommand\ceil[1]{\lceil #1 \rceil}

\newcommand\sbr[1]{\left( #1 \right)}
\newcommand\mbr[1]{\left[ #1 \right]}
\newcommand\bbr[1]{\left\{ #1 \right\}}

\newcommand\term[1]{\text{\textsc{term}~(\textsc{#1})}}
\newcommand\given[1][]{\:#1\vert\:}
\newcommand\givenn[1][]{\:#1\middle\vert\:}
\newcommand\seq[1]{\{#1\}_{t=1}^T}

\DeclareMathOperator*{\argmin}{arg\,min}

\def \define {\triangleq}

\numberwithin{equation}{section}

\def \ohedge {\textsf{Optimistic-Hedge}\xspace}

\usepackage{amsthm}

\newenvironment{proof}{\par\noindent{\bf Proof\ }}{\hfill\qedsymbol\\[2mm]}
\renewcommand\qedsymbol{$\blacksquare$}

\newtheorem{Theorem}{Theorem}
\newtheorem{Lemma}{Lemma}
\newtheorem{Corollary}{Corollary}

\theoremstyle{definition}

\newtheorem{Assumption}{Assumption}

\definecolor{darkgreen}{RGB}{0,100,0}
\newtheorem{NoteTemp}{Note}

\newtheorem{Remark}{Remark}

\usepackage{graphicx,subfigure} 

\definecolor{myred}{RGB}{192,0,1}

\definecolor{myblue}{RGB}{68,114,196}

\definecolor{mygreen}{RGB}{0,128,0}

\definecolor{myorange}{RGB}{230,120,0}

\definecolor{mypurple}{RGB}{110,0,160}

\usepackage{prettyref}
\newcommand{\pref}[1]{\prettyref{#1}}

\newcommand{\savehyperref}[2]{\texorpdfstring{\hyperref[#1]{#2}}{#2}}
\newrefformat{eq}{\savehyperref{#1}{Eq.~\textup{(\ref*{#1})}}}
\newrefformat{eqn}{\savehyperref{#1}{Eq.~(\ref*{#1})}}
\newrefformat{lem}{\savehyperref{#1}{Lemma~\ref*{#1}}}
\newrefformat{lemma}{\savehyperref{#1}{Lemma~\ref*{#1}}}
\newrefformat{def}{\savehyperref{#1}{Definition~\ref*{#1}}}
\newrefformat{line}{\savehyperref{#1}{Line~\ref*{#1}}}
\newrefformat{thm}{\savehyperref{#1}{Theorem~\ref*{#1}}}
\newrefformat{cond}{\savehyperref{#1}{Condition~\ref*{#1}}}
\newrefformat{table}{\savehyperref{#1}{Table~\ref*{#1}}}
\newrefformat{corr}{\savehyperref{#1}{Corollary~\ref*{#1}}}
\newrefformat{cor}{\savehyperref{#1}{Corollary~\ref*{#1}}}
\newrefformat{sec}{\savehyperref{#1}{Section~\ref*{#1}}}
\newrefformat{subsec}{\savehyperref{#1}{Section~\ref*{#1}}}
\newrefformat{app}{\savehyperref{#1}{Appendix~\ref*{#1}}}
\newrefformat{appendix}{\savehyperref{#1}{Appendix~\ref*{#1}}}
\newrefformat{ass}{\savehyperref{#1}{Assumption~\ref*{#1}}}
\newrefformat{assum}{\savehyperref{#1}{Assumption~\ref*{#1}}}
\newrefformat{ex}{\savehyperref{#1}{Example~\ref*{#1}}}
\newrefformat{fig}{\savehyperref{#1}{Figure~\ref*{#1}}}
\newrefformat{alg}{\savehyperref{#1}{Algorithm~\ref*{#1}}}
\newrefformat{remark}{\savehyperref{#1}{Remark~\ref*{#1}}}
\newrefformat{conj}{\savehyperref{#1}{Conjecture~\ref*{#1}}}
\newrefformat{prop}{\savehyperref{#1}{Proposition~\ref*{#1}}}
\newrefformat{proto}{\savehyperref{#1}{Protocol~\ref*{#1}}}
\newrefformat{prob}{\savehyperref{#1}{Problem~\ref*{#1}}}
\newrefformat{claim}{\savehyperref{#1}{Claim~\ref*{#1}}}
\newrefformat{que}{\savehyperref{#1}{Question~\ref*{#1}}}
\newrefformat{op}{\savehyperref{#1}{Open Problem~\ref*{#1}}}
\newrefformat{fn}{\savehyperref{#1}{Footnote~\ref*{#1}}}
\newrefformat{require}{\savehyperref{#1}{Requirement~\ref*{#1}}}

\def \mix {\textnormal{\textsc{mix}}}

\makeatletter
\newcommand{\toccontents}{\@starttoc{toc}}

\newcommand{\Bottomcoef}{\kappa}
\def \mlprod {\textsf{Adapt-ML-Prod}\xspace}
\def \omlprod {\textsf{Optimistic-Adapt-ML-Prod}\xspace}
\def \ith {{i\text{-th}}\xspace}

\def \OOGD {\textnormal{\textsf{OOGD}}\xspace}
\newcounter{romancounter}

\def \scvx {\textnormal{sc}}
\def \exp {\textnormal{exp}}
\def \cvx {\textnormal{cvx}}
\def \lin {\textnormal{lin}}

\def \Vb {\bar{V}}

\def \hsc {h^{\textnormal{sc}}}
\def \hc {h^{\cvx}}
\def \hl {h^{\lin}}

\def \meta {\textsc{Meta-Reg}}
\def \base {\textsc{Base-Reg}}

\begin{document}

\twocolumn[
    \icmltitle{Improved Dimension Dependence for\\ Bandit Convex Optimization with Gradient Variation}

    \icmlsetsymbol{equal}{*}

    \begin{icmlauthorlist}
        \icmlauthor{Hang Yu}{keylab,AI}
        \icmlauthor{Yu-Hu Yan}{keylab,AI}
        \icmlauthor{Peng Zhao}{keylab,AI}
    \end{icmlauthorlist}

    \icmlaffiliation{keylab}{State Key Laboratory for Novel Software Technology, Nanjing University, China}
    \icmlaffiliation{AI}{School of Artificial Intelligence, Nanjing University, China}

    \icmlcorrespondingauthor{Peng Zhao}{zhaop@lamda.nju.edu.cn}

    \icmlkeywords{}

    \vskip 0.3in
]
\printAffiliationsAndNotice{}

\begin{abstract}
    Gradient-variation online learning has drawn increasing attention due to its deep connections to game theory and optimization.
    It has been studied extensively in the full-information setting, but is underexplored with bandit feedback.
    In this work, we focus on gradient variation in Bandit Convex Optimization (BCO) with \mbox{two-point} feedback.
    By proposing a refined analysis of the \mbox{\emph{non-consecutive}} gradient variation, a fundamental quantity in gradient variation with bandit feedback, we improve the dimension dependence for both convex and strongly convex functions compared with the best known results~\citep{chiang2013beating}.
    Our improved analysis of the \mbox{non-consecutive} gradient variation also implies other favorable \mbox{problem-dependent} guarantees, such as \mbox{gradient-variance} and \mbox{small-loss} regret bounds.
    Beyond the \mbox{two-point} setup, we demonstrate the versatility of our technique by achieving the \emph{first} \mbox{gradient-variation} bound for one-point bandit linear optimization over \mbox{hyper-rectangular} domains.
    Finally, we validate the effectiveness of our results in more challenging tasks such as dynamic and universal regret minimization, establishing the \emph{first} gradient-variation dynamic and universal regret bounds for two-point BCO.
\end{abstract}


\section{Introduction}
\label{sec:intro}
Online Convex Optimization (OCO) is a powerful and fundamental framework for modeling the interaction between a learner and the environment over time \citep{hazan2016introduction,orabona2019modern}.
In round $t\in[T]$, the learner selects $\x_t \in \X \subseteq \R^d$, while the environment simultaneously chooses a convex function $f_t:\X\to \R$.
Then the learner suffers the loss $f_t(\x_t)$ and receives gradient feedback about the online function, aiming to optimize the \mbox{game-theoretical} performance measure known as regret~\citep{cesa2006prediction}, which is defined as
\begin{equation*}
    \Reg_T^{\text{(OCO)}} \define \sumT f_t(\x_t) - \min_{\x\in\X} \sumT f_t(\x).
\end{equation*}
For OCO, the minimax optimal regret results are $\O(\sqrt{T})$ for convex and $\O(\log T)$ for strongly convex functions \citep{hazan2016introduction}.
Beyond worst-case minimax optimality, the literature considers enhancing the adaptivity of the learner by adapting the regret to problem-dependent hardness.
Among various problem-dependent quantities, \emph{gradient variation}~\citep{chiang2012online, yang2014regret} is of particular interest, defined as the cumulative variation of gradients across consecutive functions:
\begin{equation}
    \label{eq:VT}
    V_T \define \sumTT \sup_{\x \in \X} \|\nabla f_t(\x) - \nabla f_{t-1}(\x)\|^2_2.
\end{equation}
By adapting to the gradient variation, the aforementioned minimax regret guarantees can be improved to $\O(\sqrt{V_T})$ for convex functions and $\O(\log V_T)$ for strongly convex functions. Gradient-variation online learning has received renewed attention in recent years, particularly since its introduction into dynamic regret minimization \citep{zhao2020dynamic,zhao2024adaptivity}, which has led to dedicated algorithmic structures and new technical tools and has further inspired a growing body of work in various settings~\citep{qiu2023gradient,tsai2023datadependent,yan2023universal,tarzanagh2024online,xie2024gradient,yan2024simple,zhao2026gradient}.
This line of research also reveals that gradient-variation adaptivity has profound connections to bridging adversarial and stochastic optimization~\citep{Sarah2022between,chen2024optimistic}, enabling fast rates in games~\citep{rakhlin2013optimization,syrgkanis2015fast}, and facilitating acceleration in smooth offline optimization~\citep{cutkosky2019anytime,kavis2019unixgrad,yan2025optimistic,zhao2025gradient}, among others.

\begin{table*}[!t]
    \centering
    \caption{\small{Comparison of problem-dependent regret bounds for two-point BCO. Here, we consider the \emph{nondegenerate} setup for clarity, where we assume $V_T, W_T, F_T \ge \Omega(d)$. $V_T, W_T$, and $F_T$ denote the gradient variation~\eqref{eq:VT}, gradient variance~\eqref{eq:WT}, and small loss~\eqref{eq:FT}, respectively. The \smash{$\Ot(\cdot)$} notation omits logarithmic factors in the dimension $d$ and the time horizon $T$. We use `\textemdash' to denote results that match but do not improve upon state-of-the-art bounds.}}
    \renewcommand{\arraystretch}{1.2}
    \resizebox{0.8\linewidth}{!}{
    \begin{tabular}{r|ccc}
    \hline

    \hline
    & \textbf{Linear} & \textbf{Convex} & \textbf{$\lambda$-Strongly Convex} \\
    \hline
    \cellcolor{gray!10}\rule{0pt}{5mm}\textbf{\citet{chiang2013beating}} & \cellcolor{gray!10}$\O\big(d^{\frac{3}{2}}\sqrt{V_T}\big)$ & \cellcolor{gray!10} $\Ot\big(d^2\sqrt{V_T}\big)$ &\cellcolor{gray!10} $\O\big(\frac{d^2}{\lambda}\log V_T\big)$\\[1mm]
    \hline \hline
    \rule{0pt}{5mm} \textbf{Ours} [Gradient Variation $V_T$] & \textemdash & $\Ot\big(d^{\frac{3}{2}}\sqrt{V_T}\big)$ [\pref{thm:cvx-base}] & $\O\big(\frac{d}{\lambda}\log V_T\big)$ [\pref{thm:scvx-base}]  \\[1mm]
    \hline
    \rule{0pt}{5mm} \textbf{Ours} [Gradient Variance $W_T$] & $\O\big(\sqrt{dW_T}\big)$ [\pref{thm:WT}] & $\O\big(d\sqrt{W_T}\big)$ [\pref{thm:WT}] & $\O\big(\frac{d}{\lambda}\log W_T\big)$ [\pref{thm:WT}]  \\[1mm]
    \hline
    \cline{2-4}
    \rule{0pt}{5mm} \textbf{Ours} [Small Loss $F_T$] & $\O\big(\sqrt{dF_T}\big)$ [\pref{thm:FT}] & $\O\big(\sqrt{dF_T}\big)$ [\pref{thm:FT}] & $\O\big(\frac{d}{\lambda}\log F_T\big)$ [\pref{thm:FT}]  \\[1mm]
    \hline

    \hline
    \end{tabular}
    }
    \label{table:mainresults}
\end{table*}

While gradient-variation online learning has been studied extensively in the full-information setting, it is still underexplored in Bandit Convex Optimization (BCO), where the learner only has access to the function values. Based on the number of function values queried, BCO can be classified into \mbox{one-point}, \mbox{two-point}, and \mbox{multi-point} settings.
In the \mbox{one-point} setup, achieving gradient-variation regret (specialized as squared path-length regret in multi-armed bandits) remains open~\citep{wei2018more}.
By contrast, when it comes to the \emph{two-point} setup, the \mbox{gradient-variation} regret bounds become attainable~\citep{chiang2013beating}.
Specifically, two-point BCO allows the learner to query two points $\x_t, \x_t^\prime \in \X$ at round $t \in [T]$, and to observe the corresponding loss values, $f_t(\x_t)$ and $f_t(\x_t^\prime)$, revealed by an oblivious environment.
\citet{chiang2013beating} initiated the study of gradient variation in two-point BCO and provided the first bounds of \smash{$\O(\sqrt{d^3 V_T})$}, \smash{$\Ot(d^2 \sqrt{V_T})$}, and \smash{$\O(\frac{d^2}{\lambda}\log (dV_T))$} for linear, convex, and $\lambda$-strongly convex functions, where $d$ is the dimension and $\Ot(\cdot)$ omits the logarithmic factors in $T$ and $d$.
While their results enjoy the optimal dependence on $V_T$, they incur a large dimension dependence, as the $\Omega(\sqrt{dT})$ convexity lower bound~\citep{duchi2015optimal} indicates that a tighter dimension dependence is possible for gradient-variation bounds in two-point BCO.

Mitigating the dimension dependence is a fundamental challenge in BCO~\citep{agarwal2010optimal,fokkema2024online} and zeroth-order stochastic optimization~\citep{duchi2015optimal,nesterov2017random,wang2018stochastic}, and there has been a lot of progress on this front.
The difficulty stems from the inherent information bottleneck in bandit feedback, where reconstructing a $d$-dimensional gradient from scalar function values necessitates a sampling complexity that scales unfavorably with dimension $d$~\citep{lattimore2025banditconvexoptimisation}.

In bandit gradient-variation online learning, reducing the dimension dependence poses additional challenges.
To see this, we provide an intuition.
In OCO, where the learner has access to the full gradient information in \emph{all} directions, e.g., $\nabla f_t$ and $\nabla f_{t-1}$, the gradient-variation regret is straightforward to achieve by using the well-known optimistic online learning technique~\citep{chiang2012online}.
However, with bandit feedback, the learner can sample \emph{only one} direction at each round.
For example, at the $t$-th round, the learner samples a random direction $i_t \in [d]$, constructs a gradient estimator, and obtains an estimate of $\nabla_{i_t} f_t$, where $\nabla_{i} f$ denotes the gradient of $f$ in the $i$-th direction.
Therefore, it is hard to analyze $\nabla_{i_t} f_t - \nabla_{i_{t-1}} f_{t-1}$ directly because the two directions between consecutive rounds are very likely to be different.
To this end, in bandit optimization, an essential quantity is a \emph{non-consecutive} version of the gradient variation~\citep{chiang2013beating, wei2018more}, which conceptually depends on the following term:
\begin{equation}
    \sumT (\nabla_{i_t} f_t - \nabla_{i_t} f_{\alpha_t})^2,
\end{equation}
where $\alpha_t$ is the largest integer such that $0 \leq \alpha_t < t$ and $i_{\alpha_t} = i_t$.
Since the learner can only sample one direction at each round, the non-consecutive sampling gap, i.e., $t-\alpha_t$, will inevitably scale with the dimension $d$, leading to an additional dimension dependence compared with regret bounds in the full-information setting. 

In this work, we tighten the dimension dependence of the gradient-variation regret bounds in two-point BCO by unraveling the inherent correlation structure in the \emph{non-consecutive gradient variation}.
By carefully decoupling these dependencies, we achieve \smash{$\Ot(d^{\frac{3}{2}}\sqrt{V_T})$} for convex functions and \smash{$\O(\frac{d}{\lambda}\log V_T)$} for $\lambda$-strongly convex functions, thereby improving the best known results by factors of nearly \smash{$\sqrt{d}$} and $d$, respectively.

Our analysis for \mbox{non-consecutive} gradient-variation also implies regret scaling with other favorable \mbox{problem-dependent} quantities, such as gradient variance $W_T$ and small loss $F_T$, thereby offering multiple perspectives to depict the \mbox{problem-dependent} hardness.
Among the implied results, in particular, we achieve \smash{$\O\sbr{\sqrt{d F_T}+d}$} for convex functions and \smash{$\O\sbr{\sqrt{d W_T}+d}$} for linear functions, which are both optimal up to an \emph{additive} $\O(d)$ term. \pref{table:mainresults} summarizes our complete results.

Beyond the two-point setup, we generalize our techniques to one-point Bandit Linear Optimization (BLO).
Briefly, we introduce a novel gradient estimator with an associated algorithm and establish the \emph{first} gradient-variation regret bound for one-point BLO, in a special case where the domain is a hyper-rectangle, highlighting the versatility of our approach.

Finally, we showcase the effectiveness of our methods in more challenging environments: \textit{(i)} \mbox{\emph{dynamic regret}}~\citep{zhang2018adaptive}, where the learner competes against oblivious time-varying comparators; \textit{(ii)} \mbox{\emph{universal regret}}~\citep{van2016metagrad}, where the learner has no prior knowledge of the function's curvature but aims to match the guarantees of curvature-aware methods.
To conclude, we establish the \emph{first} gradient-variation dynamic and universal regret bounds for two-point BCO.

\textbf{Contributions.}~~
Our contributions are summarized below:
\begin{itemize}[left=2pt, itemsep=-2pt, topsep=-3pt]
    \item For two-point BCO with gradient variation, we obtain $\Ot(d^{\frac{3}{2}}\sqrt{V_T})$ and $\O(\frac{d}{\lambda}\log V_T)$ for convex and $\lambda$-strongly convex functions, thereby improving the previously best known results by factors of almost $\sqrt{d}$ and $d$, respectively.
    \item We achieve the \emph{first} gradient-variance and small-loss regret bounds for two-point BCO; among them, \smash{$\O(\sqrt{d F_T}+d)$} and \smash{$\O(\sqrt{d W_T}+d)$} for convex and linear functions, respectively, are \emph{the first} \mbox{problem-dependent} guarantees that can recover the minimax optimal $\O(\sqrt{dT})$ regret bound.
    \item We derive the \emph{first} gradient-variation regret bound in the one-point BLO setting over hyper-rectangular domains.
    \item We establish the \emph{first} gradient-variation dynamic and universal regret bounds in two-point BCO.
\end{itemize}

\textbf{Organization.}~~
The rest of the paper is organized as follows:
In \pref{sec:preliminary}, we introduce the preliminaries.
In \pref{sec:two-point}, we present our main results for two-point BCO.
In \pref{sec:1point}, we extend our methods to one-point BLO.
In \pref{sec:extension}, we generalize our methods to more challenging environments, including dynamic regret and universal regret.
Finally, in \pref{sec:conclusion}, we conclude the paper.
Due to page limits, all proofs are deferred to appendices.


\section{Preliminaries}
\label{sec:preliminary}
In this section, we introduce the notation and assumptions, and briefly review the method of \citet{chiang2013beating}.

\subsection{Notations and Assumptions}
\label{subsec:notations}
\textbf{Notation.}~~  For any $N \in \mathbb{N}$, we define $[N]$ as $\{1, \ldots, N\}$. We represent the $i$-th coordinate of a bold vector $\v$ (or $\boldsymbol{v}$) by the corresponding regular-font symbol $v_i$, i.e., $\v$ (or $\boldsymbol{v}$) $= (v_1, \dots, v_d)^\top$. We use $\nabla_i f$ to denote the partial derivative of $f$ with respect to the $i$-th coordinate.
We use $\|\cdot\|$ for $\|\cdot\|_2$ by default.
We write $a \lesssim b$, or $a = \O(b)$, if there exists a constant $C < \infty$ such that $a \le Cb$. We use $\O(\cdot)$ to highlight the dependence on $d$, $T$, and \mbox{problem-dependent} quantities, while $\Ot(\cdot)$ omits logarithmic factors in $d$ and $T$. Throughout the paper, we treat the $\log\log T$ factor as a constant and omit it following~\citet{luo2015achieving}.

\begin{Assumption}[Boundedness]
    \label{ass:boundedness}
    The feasible domain $\X\subseteq \R^d$ is compact, convex, and satisfies $r\B\subseteq \X\subseteq R\B$, where $\B=\{\x\in \R^d \given \|\x\|\le 1\}$ is a unit ball.
\end{Assumption}
\begin{Assumption}[Lipschitzness]
    \label{ass:Lipschitzness}
    For any $\x,\x^\prime\in\X$ and all $t\in[T]$, $|f_t(\x)-f_t(\x^\prime)|\le G \|\x-\x^\prime\|$.
\end{Assumption}
\begin{Assumption}[Smoothness]
    \label{ass:Smoothness}
    For any $\x,\x^\prime\in \X$ and all $t\in[T]$, $\|\nabla f_t(\x)-\nabla f_t(\x^\prime)\|\le L \|\x-\x^\prime\|$.
\end{Assumption}
Assumptions~\ref{ass:boundedness} and~\ref{ass:Lipschitzness} are standard for BCO \citep{flaxman2004online,agarwal2010optimal,lattimore2025banditconvexoptimisation}.
Assumption~\ref{ass:Smoothness} is essential for first-order methods to achieve problem-dependent regret~\citep{srebro2010smoothness,chiang2012online}.

\subsection{A Review of \texorpdfstring{\citet{chiang2013beating}}{Chiang et al. (2013)}}
\label{subsec:chiang}
For full-information feedback, a standard technique for gradient-variation regret is Optimistic Online Gradient Descent (\OOGD) \citep{chiang2012online}. At round $t$, the learner uses an optimism term $M_{t}$, which serves as a predictive hint of the upcoming gradient $\nabla f_{t}(\x_{t})$. Based on this optimism, \OOGD proceeds with the following two-step updates:
\begin{equation*}
    \x_t = \Pi_{\X} \mbr{\xh_{t}-\eta_{t} M_{t}},\ \xh_{t+1} = \Pi_{\X} \mbr{\xh_t-\eta_t \nabla f_{t}(\x_{t})}
\end{equation*}
where $\eta_t > 0$ is a time-varying step size, \smash{$\xh_t$} and \smash{$\xh_{t+1}$} are internal decisions, and $\Pi_{\X}[\x] \define \argmin_{\y \in \X} \|\x - \y\|$ is the Euclidean projection onto the feasible domain $\X$.
The resulting regret depends on the cumulative prediction error \smash{$\sum_{t=1}^T \|\nabla f_t(\x_t) - M_t\|^2$}, which characterizes the accuracy of the prediction $M_t$.
A straightforward instantiation of $M_t$ is to set it as the preceding gradient $\nabla f_{t-1}(\x_{t-1})$.
Such a predictive choice is sufficient to attain the optimal gradient-variation regret~\citep{chiang2012online}.

For two-point BCO, where the learner only has access to function values rather than gradients, we define the corresponding cumulative prediction error as:
\begin{equation}
    \label{eq:VbT}
    \Vb_T \define \sum_{t=1}^T \|\g_t - \tgb_t\|^2,
\end{equation}
where $\g_t$ is the gradient estimator at round $t$ and $\tgb_t$ denotes the optimism constructed from historical information up to round $t-1$.
A direct choice in bandits would use the estimator $\g_t = c_t \u_t$ and set the optimism $\tgb_t$ to the preceding estimator $\g_{t-1}$.
Here, $\u_t$ denotes a random vector drawn from a specified distribution, and $c_t$ is an estimation constant chosen to ensure $\E[\g_t]\approx \nabla f_t(\x_t)$.
For example, in \mbox{two-point} BCO~\citep{agarwal2010optimal}, $\u_t$ is uniformly sampled from the unit sphere, and $c_t = \frac{d}{2\delta}(f_t(\x_t + \delta \u_t) - f_t(\x_t - \delta \u_t))$, where $\delta > 0$ is a small exploration parameter.
However, in this case, the gap $\|\g_t - \tgb_t\|$ becomes unmanageable, as the randomness of $\u_t$ and $\u_{t-1}$ causes severely misaligned consecutive estimators with high probability.

To bridge this gap, inspired by the gradient estimator in \citet{hazan2009betterBCO,hazan2011betterBCO}, \citet{chiang2013beating} introduced a novel gradient estimator and an optimism term that effectively address direction misalignment.
Specifically, at $t\in[T]$, the gradient estimator $\g_t$ and the optimism $\tgb_t$ are constructed as follows:
\begin{equation}
    \label{eq:chiangestimator}
    \begin{gathered}
        \g_t=d\left(v_t-\tilde{g}_{t, i_t}\right) \eb_{i_t}+\tilde{\g}_t,\\
        \tilde{\g}_{t+1}=\left(v_t-\tilde{g}_{t, i_t}\right) \eb_{i_t}+\tilde{\g}_t,
    \end{gathered}
\end{equation}
where $i_t$ is drawn uniformly from $[d]$, $\{\eb_1,\ldots,\eb_d\}$ is the standard basis of $\R^d$, and $v_t \define \frac{1}{2\delta} (f_t(\w_t + \delta \eb_{i_t}) - f_t(\w_t - \delta \eb_{i_t}))$ serves as an estimate of the directional derivative of $f_t$ at $\w_t$ along $\eb_{i_t}$.
Here, $\w_t$ is the center around which the query points $\x_t$ and $\x_t^\prime$ are sampled as $\x_t=\w_t+\delta\eb_{i_t}$ and $\x_t^\prime =\w_t-\delta\eb_{i_t}$, where $\delta > 0$ is a small exploration parameter.
By concentrating the difference on a single coordinate, $\g_t - \tgb_t = d (v_t - \gt_{t, i_t}) \eb_{i_t}$, \pref{eq:chiangestimator} yields a manageable difference between $\g_t$ and $\tgb_t$, which further leads to a controllable $\Vb_T$.
Leveraging this construction, \citet{chiang2013beating} integrated the estimator and optimism in \pref{eq:chiangestimator} into \OOGD. We restate their method in \pref{alg:chiang}.

Despite this innovative design, \citet{chiang2013beating} underestimated the non-consecutive nature of the gradient estimators.
Specifically, by choosing the estimators from \pref{eq:chiangestimator}, $\Vb_T$~\eqref{eq:VbT} exhibits the following structure:
\begin{equation}
    \label{eq:non-consecutive}
    \Vb_T = d^2\sum_{t=1}^T  (v_t-v_{\alpha_t})^2,
\end{equation}
where $\alpha_t$ is the largest integer such that $0 \leq \alpha_t < t$ and $i_{\alpha_t} = i_t$.
Intuitively, the $(v_t-v_{\alpha_t})$ term measures the gap between two directional derivative estimates from two iterations in which the sampled direction is the same, leading to a natural non-consecutive structure.
Essentially, this term is closely tied to the dimension dependence of the regret bound, as it captures the distance accumulated before the algorithm re-samples the same direction, given that it draws one direction out of $d$ at each round. In the next section, we present the analysis of \citet{chiang2013beating} for the essential quantity $\Vb_T$, its limitations, and our improved analysis.


\section{Our Method}
\label{sec:two-point}
In this section, we improve the analysis of the \emph{non-consecutive} structure defined in \pref{eq:non-consecutive}.
For the sake of emphasis, we refer to $\Vb_T$~\eqref{eq:non-consecutive} as the non-consecutive gradient variation in the remainder of the paper.
In \pref{sec:cvx-base}, we present an improved analysis of $\E[\Vb_T]$, which enables us to establish enhanced regret bounds for general convex functions.
In \pref{sec:scvx-base}, we provide a tight characterization of the maximal term within the expected non-consecutive gradient variation, i.e., \smash{$\max_{t\in[T]}\E[\|\g_t-\tgb_{t}\|^2]$}. This characterization, coupled with a stabilized step-size schedule, yields an improved regret bound in the strongly convex setting.
Finally, in \pref{subsec:F_T and W_T}, we show that the non-consecutive gradient variation yields other problem-dependent guarantees, such as gradient-variance and small-loss bounds.

\subsection{Improvement on Convex Case}
\label{sec:cvx-base}
In this part, we focus on bandit gradient-variation regret for convex functions and improve upon the result of \citet{chiang2013beating} by a factor of nearly $\sqrt{d}$, thereby closing the regret gap between the convex and linear settings in the $\Ot(\cdot)$-notation, i.e., regardless of the logarithmic factors in the dimension $d$.

\begin{algorithm}[t]
\caption{Algorithm of \citet{chiang2013beating}}
\label{alg:chiang}
\begin{algorithmic}[1]
\Require Step sizes $\{\eta_t\}_{t=1}^T$.
\State Let $\w_1 = \hat{\w}_1 = \mathbf{0}$ and $\tgb_1 = \mathbf{0}$. Set exploration parameter $\delta=\frac{1}{2d^2 L TR}$ and shrinkage parameter $\xi=\frac{\delta}{r}$.
\For{$t =1,2,\ldots,T$}
    \State Choose $i_t$ uniformly from $[d]$.
    \State Submit two query points $\x_t = \w_t + \delta \eb_{i_t}, \x_t^\prime = \w_t - \delta \eb_{i_t}$, and observe $f_t(\x_t)$ and $f_t(\x_t^\prime)$.
    \State Compute the gradient estimator $\g_t$ and the optimism $\tgb_{t+1}$ as in \pref{eq:chiangestimator}
    \State Update the iterate as follows:
    \begin{align*}
        \hat{\w}_{t+1} = {} & \Pi_{(1 - \xi)\mathcal{X}}\mbr{\hat{\w}_{t} - \eta_t \g_t},\\
        \w_{t+1} = {} & \Pi_{(1 - \xi)\mathcal{X}}\mbr{\hat{\w}_{t+1} - \eta_{t+1} \tgb_{t+1}}
    \end{align*}
\EndFor
\end{algorithmic}
\end{algorithm}

We first restate the decomposition of the non-consecutive gradient variation $\Vb_T$~\eqref{eq:non-consecutive} from \citet{chiang2013beating} in \pref{lem:decom of VbT}, with the proof deferred to \pref{app:lem0}.

\begin{Lemma}[Decomposition of $\Vb_T$ in \citet{chiang2013beating}]
\label{lem:decom of VbT}
Under Assumptions \ref{ass:boundedness}-\ref{ass:Smoothness}, for convex functions, \pref{alg:chiang} satisfies the following guarantee:
\begin{align}
 &\Vb_T\lesssim 2d^2\sumT\sum_{i=1}^{d}\rho_{t,i}\sbr{ \nabla_{i} f_{t}(\w_{t-1})-\nabla_{i} f_{t-1}(\w_{t-1})}^2\notag\\
 &+ 2d^2\sumT\sum_{i=1}^{d}\rho_{t,i}\sbr{ \nabla_{i} f_{t}(\w_t)-\nabla_{i} f_{t}(\w_{t-1})}^2.\label{eq:VT_decom}
\end{align}
Here, $\rho_{t,i}$ is the non-consecutive sampling gap defined as \smash{$\rho_{t,i} \define \tau_2 - \tau_1$}, where \smash{$\tau_1 \define\max\{ 0\le\tau < t\given i_\tau = i\}$} and \smash{$\tau_2\define\min \{ t\le\tau\le T\given i_\tau = i\}$}, with the convention that $\tau_2=T+1$ if no such $\tau$ exists.
\end{Lemma}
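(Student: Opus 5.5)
We start from the identity in \pref{eq:non-consecutive}, $\Vb_T = d^2\sumT (v_t - v_{\alpha_t})^2$, and replace each finite-difference estimate by a true partial derivative. Since each $f_t$ is $L$-smooth (Assumption~\ref{ass:Smoothness}), the central difference satisfies $|v_t - \nabla_{i_t} f_t(\w_t)| \le L\delta/2$. With $\delta = \frac{1}{2d^2LTR}$, the accumulated error over $T$ rounds, multiplied by $d^2$, is $\O(1)$. This is where the ``$\lesssim$'' absorbs lower-order terms. So it suffices to bound $d^2\sumT (\nabla_{i_t} f_t(\w_t) - \nabla_{i_t} f_{\alpha_t}(\w_{\alpha_t}))^2$, with the convention that the $\alpha_t=0$ term is bounded by $G^2$ and contributes $\O(d^3G^2)$ at most; I would either fold this into the constant or note that it is dominated.

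Next, I telescope the non-consecutive difference through the intermediate rounds. Write $\alpha = \alpha_t$ and $i = i_t$:
\[
\nabla_i f_t(\w_t) - \nabla_i f_\alpha(\w_\alpha) = \sum_{s=\alpha+1}^{t}\Big[\big(\nabla_i f_s(\w_{s-1}) - \nabla_i f_{s-1}(\w_{s-1})\big) + \big(\nabla_i f_s(\w_s) - \nabla_i f_s(\w_{s-1})\big)\Big].
\]
This sum has $2(t-\alpha)$ terms. By Cauchy--Schwarz,
\[
(\cdot)^2 \le 2(t-\alpha)\sum_{s=\alpha+1}^t \big[(\text{first})^2 + (\text{second})^2\big].
\]

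The key step is to swap the order of summation. For a fixed coordinate $i$, the intervals $(\alpha_t, t]$ over those $t$ with $i_t = i$ partition $\{1,\dots,T\}$, apart from the tail after the last sample of $i$. For each $s$ and $i$, there is exactly one such $t$, namely the first $t \ge s$ with $i_t = i$. Its weight $t - \alpha_t$ equals $\tau_2 - \tau_1 = \rho_{s,i}$ as defined in the lemma.

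Summing over $t$ therefore becomes $\sum_s \sum_{i=1}^d \rho_{s,i}[\cdots]^2$, which gives exactly the two terms in \pref{eq:VT_decom}, each with factor $2d^2$. The tail rounds after the last occurrence of $i$ do not appear, but including them only increases the right-hand side, which is harmless for an upper bound.

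I expect the main obstacle to be bookkeeping rather than depth. Three points need care: matching the indexing convention $\tau_2 = T+1$ in the definition of $\rho$, handling $\alpha_t = 0$ (where $\w_0$ and $f_0$ must be interpreted, for example $f_0 \equiv 0$ with $\w_0 = \w_1$), and checking that the smoothing-error cross terms are genuinely absorbed into the $\lesssim$. The constant $2$ versus $4$ from Cauchy--Schwarz is also swallowed by $\lesssim$.
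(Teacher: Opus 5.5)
Your proposal is correct and matches the paper's proof essentially step for step. Both pass from $v_t-v_{\alpha_t}$ to exact partial derivatives via property (viii) of \pref{lem:alpha} at an additive $\O(1)$ cost, telescope over $s\in(\alpha_t,t]$, and apply Cauchy--Schwarz. Both then swap the order of summation, using that each pair $(s,i)$ lies in at most one interval $(\alpha_t,t]$ with $i_t=i$, and for that interval $t-\alpha_t=\rho_{s,i}$.

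For the $\alpha_t=0$ rounds, commit to the dummy-round convention $f_0\equiv 0$. The paper uses it, and the $t=1$ summands of the statement already presuppose it; it lets the telescoping run uniformly from $s=1$. Drop the alternative of bounding those rounds crudely by $G^2$: it leaves an additive $d^3G^2$ that is neither a $d$-free constant nor dominated by the right-hand side in general.
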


Intuitively, $\rho_{t,i}$ quantifies the duration between the most recent sampling of coordinate $i$ before $t$ and its next sampling at or after $t$. In \pref{eq:VT_decom}, the first term captures gradient variation along the sampled directions and is therefore connected to $V_T$~\eqref{eq:VT}, while the second term essentially measures the algorithmic stability between $\w_{t-1}$ and $\w_t$, which will be cancelled by negative terms in the regret analysis.

Next, we focus on the first term in \pref{eq:VT_decom}, due to its close connection to $V_T$. To highlight the difference between our analysis and that of \citet{chiang2013beating}, we first revisit their analysis in the linear case, then explain why the same strategy suffers an additional $d$ factor in the convex case, and finally show how we overcome this issue.

In the linear setting, as analyzed by \citet{chiang2013beating}, the gradient difference $\nabla_i f_t(\x)-\nabla_i f_{t-1}(\x)$ is constant in $\x$ and is independent of the non-consecutive gap $\rho_{t,i}$. Taking expectation gives
\begin{align}
    &\E\mbr{\sum_{i=1}^{d}\rho_{t,i} (\nabla_i f_t(\w_{t-1})-\nabla_i f_{t-1}(\w_{t-1}))^2}\label{eq:mid-cvx}\\
    \le{}& \max_{i\in[d]}\E[\rho_{t,i}] \sup_{\x\in\X} \|\nabla f_t(\x) - \nabla f_{t-1}(\x)\|^2,\notag
\end{align}
where the inequality follows because linear functions have constant gradients. 
Following Lemma 5 of \citet{chiang2013beating}, we have $\E[\rho_{t,i}] \le 2d$ for all $i\in[d]$, yielding an $\O(d)$ dimension dependence.

Furthermore, when the same analysis of \citet{chiang2013beating} is applied to convex functions, this independence no longer holds, as both $\rho_{t,i}$ and the gradient difference share the randomness of the direction sampling sequence $\{i_s\}_{s=1}^t$. 
This interdependence complicates the analysis and leads to a coarse upper bound below:
\begin{equation*}
    \text{\pref{eq:mid-cvx}}\le \E\mbr{\sum_{i=1}^{d}\rho_{t,i}} \sup_{\x\in\X}\|\nabla f_t(\x)-\nabla f_{t-1}(\x)\|^2,
\end{equation*}
where the inequality is due to $w_i^2 \le \|\w\|^2$ for any vector $\w\in\R^d$. Since each $\E[\rho_{t,i}]$ is of order $\O(d)$, this analysis introduces an additional $d$ factor.

To address this challenge, we decouple the dependence between the sampling gap $\rho_{t,i}$ and the gradient difference by taking the supremum over all sampling gaps:
\begin{align*}
    \text{\pref{eq:mid-cvx}}&=\E\mbr{\sum_{i=1}^{d}\rho_{t,i} (\nabla_i f_t(\w_{t-1})-\nabla_i f_{t-1}(\w_{t-1}))^2}\\
    \le\E&\mbr{\max_{i\in[d]}\rho_{t,i} \sum_{i=1}^d \sbr{\nabla_i f_{t}(\w_{t-1})-\nabla_i f_{t-1}(\w_{t-1})}^2}\\
    \le\E&\mbr{\max_{i\in[d]} \rho_{t,i}} \sup_{\x\in\X}\|\nabla f_t(\x)-\nabla f_{t-1}(\x)\|^2.
\end{align*}
It remains to control the largest non-consecutive sampling gap $\max_{i\in[d]}\rho_{t,i}$.
Our technical contribution is to establish the connection between $\max_{i\in[d]}\rho_{t,i}$ and the Coupon Collector's Problem (CCP).
By doing this, we achieve a tighter $\E[\max_{i\in[d]}\rho_{t,i}]=\O(d\log d)$, which introduces only an extra $\log d$ factor compared with the analysis in the linear setting.
The classical CCP studies the waiting time needed to collect all $d$ coupon types when, at each draw, one coupon is sampled uniformly at random from the $d$ types.
A standard property of CCP is that this expected waiting time is $\O(d\log d)$~\citep{motwani1995randomized}.
In our setting, each coordinate sampling $i_t$ can be viewed as drawing one coupon from $[d]$.
Specifically, for a fixed time $t$, we decompose the maximal gap as
\begin{align}
    \max_{i\in[d]}\rho_{t,i}\le\max_{i\in[d]} (t-\tau_1)+ \max_{i\in[d]} (\tau_2-t+1),\label{eq:decom_rho}
\end{align}
where we recall that \smash{$\tau_1 \define\max\{ 0\le\tau < t\given i_\tau = i\}$} and \smash{$\tau_2\define\min \{ t\le\tau\le T\given i_\tau = i\}$}.
The first term is the number of steps needed to see all coordinates when searching backward from time $t$, and the second term is the number of steps needed to see all coordinates when searching forward from time $t$.
Since the coordinates are sampled independently and uniformly from $[d]$, the expectation of each term is dominated by a CCP waiting time.
Applying these upper bounds to both terms in \pref{eq:decom_rho}, we obtain the desired expected upper bound on $\max_{i\in[d]}\rho_{t,i}$. The detailed reduction is provided in \pref{app:rho}.

The aforementioned insight allows us to derive a refined analysis for the non-consecutive gradient variation $\Vb_T$, as formalized in \pref{lem:bandit-VT-correction-cvx}. The proof is in \pref{app:lem1}.
\begin{Lemma}
\label{lem:bandit-VT-correction-cvx}
Under Assumptions \ref{ass:boundedness}-\ref{ass:Smoothness}, for convex functions,  \pref{alg:chiang} satisfies the following guarantee:
\begin{align*}
    \E\mbr{\Vb_T} \le{}&  16d^3 L^2 \log(dT) \cdot \E\mbr{\sumT \|\w_t-\w_{t-1}\|^2} \\
    &+8d^3 V_T\log d + \O(1).
\end{align*}
\end{Lemma}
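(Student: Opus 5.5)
Starting from the decomposition in \pref{lem:decom of VbT}, we take expectations of both terms in \pref{eq:VT_decom} and treat them separately. In each term we decouple the random gap from the squared gradient difference by pulling out the largest gap:
\[
\sum_{i=1}^d \rho_{t,i} a_{t,i}^2 \le \Big(\max_{i\in[d]}\rho_{t,i}\Big)\,\|\mathbf{a}_t\|^2 .
\]

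\textbf{First term.} Here $\mathbf{a}_t=\nabla f_t(\w_{t-1})-\nabla f_{t-1}(\w_{t-1})$. We bound $\|\mathbf{a}_t\|^2$ deterministically by $\sup_{\x\in\X}\|\nabla f_t(\x)-\nabla f_{t-1}(\x)\|^2$. The factor in front is then only $\E[\max_i\rho_{t,i}]$. Next we split the maximal gap as in \pref{eq:decom_rho}, into a backward waiting time $\max_i (t-\tau_1)$ and a forward waiting time $\max_i(\tau_2-t+1)$.

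Because the $i_s$ are i.i.d. uniform on $[d]$, each of these is stochastically dominated by a coupon-collector time, which has mean $dH_d\le d(1+\log d)$. The backward time is truncated at $t$ (when $\tau_1=0$) and the forward time at $T+1-t$; truncation only decreases them. So $\E[\max_i\rho_{t,i}]\le 2dH_d\lesssim 4d\log d$ for $d\ge 2$. Multiplying by $2d^2$ and summing over $t$ gives the $8d^3V_T\log d$ term. Any constant slack, together with the $\O(\delta)$ bias terms hidden in $\lesssim$ (with $\delta=1/(2d^2LTR)$), is absorbed into $\O(1)$.

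\textbf{Second term.} Here $\mathbf{a}_t=\nabla f_t(\w_t)-\nabla f_t(\w_{t-1})$, and smoothness gives $\|\mathbf{a}_t\|^2\le L^2\|\w_t-\w_{t-1}\|^2$. This is the main obstacle: $\max_i\rho_{t,i}$ depends on future samples $i_t,\dots,i_T$, and $\w_t$ depends on $i_1,\dots,i_{t-1}$. So we cannot simply factor the expectation. The backward part is also correlated with $\w_t-\w_{t-1}$, which depends on $i_{t-1}$.

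To handle this, we switch from an expectation bound to a high-probability bound on the gap. For a fixed coordinate $i$, the probability that it is missing from a window of length $k$ is $(1-1/d)^k\le e^{-k/d}$. Taking $k=2d\log(dT)$ and union-bounding over $i\in[d]$ and $t\in[T]$ for both the forward and backward windows, the event
\[
\mathcal{E}=\Big\{\max_{t,i}\rho_{t,i}\le 4d\log(dT)\Big\}
\]
fails with probability at most $\O(1/T)$.

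On $\mathcal{E}$, the second term is at most $2d^2\cdot 4d\log(dT)\,L^2\sum_t\|\w_t-\w_{t-1}\|^2$. Tightening the window constant should bring the coefficient to $16d^3L^2\log(dT)$. On $\mathcal{E}^c$, we use the crude bounds $\rho_{t,i}\le T+1$, $\|\w_t-\w_{t-1}\|\le 2R$ and the Lipschitz bound on gradients. This gives $\Vb_T\le \mathrm{poly}(d,T)$, so its contribution $\Pr(\mathcal{E}^c)\cdot\mathrm{poly}$ is controlled if we choose the window as $c\,d\log(dT)$ with $c$ large enough to make the failure probability $\le 1/\mathrm{poly}(dT)$. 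That contribution is then $\O(1)$.

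Finally, we add the two contributions: the stability term gives the $16d^3L^2\log(dT)\,\E[\sum_t\|\w_t-\w_{t-1}\|^2]$ part, the variation term gives $8d^3V_T\log d$, and the remainder is $\O(1)$. This yields the stated bound.
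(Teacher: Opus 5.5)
Your proposal is correct and follows essentially the same route as the paper: bound $\sum_i\rho_{t,i}a_{t,i}^2$ by $\max_i\rho_{t,i}\|\mathbf{a}_t\|^2$, control $\E[\max_i\rho_{t,i}]$ through the backward/forward coupon-collector reduction for the variation term, and for the stability term cap all gaps simultaneously by $\bar\rho=\Theta(d\log(dT))$ via a union bound, paying only $\O(1)$ on the polynomially unlikely bad event through $\rho_{t,i}\le T+1$ and $\|\w_t-\w_{t-1}\|\le 2R$. The only difference is cosmetic: you cap the forward and backward waiting times separately, so a per-side window of $4d\log(dT)$ gives exactly the stated $16d^3L^2\log(dT)$. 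The paper instead directly bounds the probability of a run of $\lfloor 4d\log(dT)\rfloor$ consecutive misses of some coordinate, which even yields the sharper coefficient $8d^3L^2\log(dT)$.
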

By leveraging \pref{lem:bandit-VT-correction-cvx}, we achieve a tighter dimension dependence for convex functions in \pref{thm:cvx-base} below, with the proof deferred to \pref{app:cvx-base}.
\begin{Theorem}
    \label{thm:cvx-base}
    Under Assumptions \ref{ass:boundedness}-\ref{ass:Smoothness},  for convex functions, choosing $\eta_t=\frac{R}{\sqrt{1152d^3R^4L^2\log(dT)+\bar{V}_{t-1}}}$, \pref{alg:chiang} satisfies the following guarantee:
    \begin{align*}
        \E[\Reg_T]\define{}& \E\mbr{\sum_{t=1}^T \frac{1}{2} \sbr{f_t(\x_{t})+f_t(\x_{t}^\prime)}-\min_{\x\in \X}\sum_{t=1}^{T}f_t(\x)}\\
        \le{}& \Ot \big(\sqrt{\min\{d^3 V_T, dT+d^3\}}\big),
    \end{align*}
    where $V_T$ and $\Vb_t$ are defined in \pref{eq:VT} and \pref{eq:VbT}.
\end{Theorem}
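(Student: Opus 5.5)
The plan is to run the standard \OOGD regret analysis on the shrunk domain $(1-\xi)\X$ with the bandit estimator $\g_t$, take expectations, and then use \pref{lem:bandit-VT-correction-cvx} so that the stability terms cancel.

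\textbf{Step 1: reduce to linearized regret.} By smoothness, $\frac{1}{2}(f_t(\x_t)+f_t(\x_t^\prime)) \le f_t(\w_t)+\frac{L\delta^2}{2}$. Comparing $\x^\star$ with its shrunk version $(1-\xi)\x^\star$ costs $\O(\xi G R T)$. Both of these are $\O(1)$ with the choices of $\delta$ and $\xi$ in \pref{alg:chiang}.

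By convexity, $f_t(\w_t)-f_t(\u)\le\langle \nabla f_t(\w_t),\w_t-\u\rangle$. Conditioned on the past, $\E_{i_t}[\g_t]=\sum_i v_t^{(i)}\eb_i$. By smoothness, this differs from $\nabla f_t(\w_t)$ by at most $\sqrt d L\delta$ in norm, which again contributes only $\O(1)$ in total. It therefore suffices to bound $\E[\sum_t\langle \g_t,\w_t-\u\rangle]$ for a fixed comparator $\u\in(1-\xi)\X$.

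\textbf{Step 2: apply the standard \OOGD lemma.} With non-increasing step sizes and diameter $2R$, this gives
\[
\sum_t\langle \g_t,\w_t-\u\rangle \lesssim \frac{R^2}{\eta_T}+\sum_t\eta_t\|\g_t-\tgb_t\|^2-\sum_t\frac{1}{4\eta_t}\big(\|\w_t-\hat\w_t\|^2+\|\w_t-\hat\w_{t+1}\|^2\big).
\]
The negative terms dominate $\sum_t\frac{1}{8\eta_{t-1}}\|\w_t-\w_{t-1}\|^2$ up to index shifts, using monotonicity of $\eta_t$.

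Since $\eta_t$ uses $\Vb_{t-1}$ rather than $\Vb_t$, I bound $\|\g_t-\tgb_t\|^2$ crudely by $\O(d^2G^2)$ per round. The adaptive-step-size lemma then gives
\[
\sum_t \eta_t\|\g_t-\tgb_t\|^2\lesssim R\sqrt{\Vb_T}+\text{(lower-order terms)},
\]
which are absorbed by the constant $1152d^3R^4L^2\log(dT)$ in the denominator. Overall,
\[
\Reg\lesssim R\sqrt{1152d^3R^4L^2\log(dT)+\Vb_T}-\frac{\sqrt{1152d^3R^4L^2\log(dT)}}{8R}\sum_t\|\w_t-\w_{t-1}\|^2 .
\]
Here I used $\eta_t\le R/\sqrt{1152d^3R^4L^2\log(dT)}$, so that $1/\eta_t$ is bounded below.

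\textbf{Step 3 (the main obstacle): cancel the stability terms after taking expectation.} Jensen's inequality gives $\E\sqrt{\Vb_T}\le\sqrt{\E\Vb_T}$. I then plug in \pref{lem:bandit-VT-correction-cvx}, $\E\Vb_T\le 16d^3L^2\log(dT)S+8d^3V_T\log d+\O(1)$, where $S=\E\sum_t\|\w_t-\w_{t-1}\|^2$.

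Next, use $\sqrt{a+b}\le\sqrt a+\sqrt b$ and AM-GM:
\[
R\sqrt{16d^3L^2\log(dT)S}\le \frac{c}{2}+\frac{8R^2d^3L^2\log(dT)S}{c}, \qquad c=\frac{R\sqrt{1152d^3L^2\log(dT)}}{8}.
\]
With this choice of $c$, the second term is exactly cancelled by the negative stability term. The constant $1152$ is tuned precisely so that this works. What remains is
\[
\E[\Reg_T]\lesssim \sqrt{d^3\log(dT)}+\sqrt{d^3V_T\log d}=\Ot\big(\sqrt{d^3V_T}\big).
\]

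The delicate part is that the negative term sits inside the expectation while the positive term has passed through a square root. The order of operations is therefore essential: first apply Jensen's inequality, then the AM-GM split, and only then cancel.

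\textbf{Step 4: the $\sqrt{dT+d^3}$ branch.} Bound $\E\|\g_t-\tgb_t\|^2=d^2\E(v_t-\gt_{t,i_t})^2$. The coordinate $i_t$ is uniform and independent of the past, and $|v_t|,|\gt_{t,i}|\le G$ up to $\O(\delta L)$. This gives $\E\|\g_t-\tgb_t\|^2\le d^2\cdot\frac{1}{d}\sum_i\E(v^{(i)}_t-\gt_{t,i})^2\lesssim d\,\E\|\nabla f_t(\w_t)-\tgb_t\|^2$.

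It then remains to show $\|\tgb_t\|^2\lesssim G^2$. This holds because each coordinate of $\tgb_t$ is an old directional derivative estimate bounded by $G$. Coordinatewise, $\sum_i(\nabla_i f_t-\gt_{t,i})^2\lesssim\sum_i(\nabla_i f_t)^2+\sum_i\gt_{t,i}^2$, and since $\tgb_t$ collects partial derivatives of different past functions, the second sum is where I would be careful.

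Granting this, $\E\Vb_T\lesssim dT$, so Step 3 yields $\sqrt{dT+d^3\log(dT)}$. Taking the minimum of the two branches gives the claim.
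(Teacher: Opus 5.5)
Your Steps 1--3 follow the paper's proof essentially line by line. The ingredients are the same:
\begin{itemize}
\item the shrinkage reduction and the $\O(\sqrt d L\delta)$ bias bound;
\item the \OOGD decomposition and the adaptive-step-size lemma, with its $\O(\sqrt d)$ max term;
\item Jensen's inequality and \pref{lem:bandit-VT-correction-cvx};
\item an AM--GM split cancelled against the deterministic lower bound $1/\eta_{t-1}\ge RL\sqrt{1152d^3\log(dT)}$.
\end{itemize}
So the $\Ot(\sqrt{d^3V_T})$ branch is fine. A minor point: any constant multiple of $d^3R^4L^2\log(dT)$ in the step size would work there; $1152$ only fixes numerics, it is not tuned precisely.

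The gap is in Step 4, exactly where you write ``granting this.'' Your reduction $\E\|\g_t-\tgb_t\|^2\le 2d\,\E\|\mathbf q_t\|^2+2d\,\E\|\tgb_t\|^2$ is correct, where $\mathbf q_t$ is the central-difference vector at $\w_t$. But your justification for $\|\tgb_t\|^2\lesssim G^2$ does not work. That every coordinate is an old estimate bounded by $G$ only gives $\|\tgb_t\|^2\le dG^2$ (property (v) of \pref{lem:alpha}). That in turn gives $\E\Vb_T=\O(d^2T)$ and a $\sqrt{d^2T+d^3}$ bound, not the claimed $\sqrt{dT+d^3}$.

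Moreover, the bound is false pathwise. Take linear losses $f_s(\x)=G\langle\eb_{j_s},\x\rangle$ with $j_s$ cycling through $[d]$. With probability $d^{-d}$ the last $d$ samples satisfy $i_s=j_s$; then every stored coordinate equals $G$ and $\|\tgb_t\|^2=dG^2$.

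What holds is the bound in expectation, and proving it is the nontrivial step, since this is where the non-consecutive coupling enters. The argument goes as follows.
\begin{itemize}
\item On the event $E_{s,i}=\{i_s=i,\ i_{s+1},\dots,i_{t-1}\neq i\}$ you have $\gt_{t,i}=q_{s,i}$, and $\P(E_{s,i})=\frac1d(1-\frac1d)^{t-1-s}$.
\item $\w_s$ is determined by $i_1,\dots,i_{s-1}$, so $q_{s,i}$ is independent of $E_{s,i}$.
\item The dummy round $s=0$ contributes nothing, since $v_0=0$.
\end{itemize}
Hence
\[
\E\|\tgb_t\|^2=\sum_{s=1}^{t-1}\frac1d\Big(1-\frac1d\Big)^{t-1-s}\E\|\mathbf q_s\|^2\le\Big(G+\tfrac{\sqrt d L\delta}{2}\Big)^2 .
\]
Smoothness is needed here to get $\|\mathbf q_s\|\le\|\nabla f_s(\w_s)\|+\sqrt dL\delta/2$; Lipschitzness alone only gives $\|\mathbf q_s\|\le\sqrt d\,G$.

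This is the content of \pref{lem:exp_alphat} and property (iv) of \pref{lem:alpha}, since $\E[v_{\alpha_t}^2]=\frac1d\E\|\tgb_t\|^2$. The paper combines it into $\E\|\g_t-\tgb_t\|^2\le 8dG^2+\O(1/(d^2T^2))$. Without this step your second branch is not established.
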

Up to logarithmic factors, \pref{thm:cvx-base} effectively closes the regret gap between the convex and linear settings. 
Furthermore, while the dimension dependence in our gradient variation bound is not as tight as that in the minimax-optimal $\O(\sqrt{dT})$, our result performs better in \emph{benign} environments, e.g., when $V_T = o(T/d^2)$.
Meanwhile, our result offers an $\Ot(\sqrt{dT+d^3})$ worst-case safeguard, matching optimal regret up to an additive \smash{$\Ot(d^{3/2})$ term.}

\subsection{Improvement on Strongly Convex Case}
\label{sec:scvx-base}
In this part, we focus on bandit gradient-variation regret for strongly convex functions.
Our solution consists of two key components: a more stable step-size schedule and a tight characterization of the \emph{maximal expected variation}, $\max_{t\in[T]}\E[\|\g_t-\tgb_t\|^2]$.
To contextualize our improvements, we begin with a brief review of the problem-dependent learning rate by \citet{chiang2013beating}.

Specifically, \citet{chiang2013beating} chose a problem-dependent learning rate schedule as \smash{$\eta_t \approx \frac{1}{\lambda \Vb_{t-1}}$}, where $\Vb_{t-1}$ is defined in \pref{eq:VbT}.
This learning rate is not stable enough, as the randomness of the gradient estimator can perturb it when the function value varies dramatically, leading to large regret.
Moreover, the stochasticity in the step size makes the analysis challenging due to the correlation between the step size and the gradient estimator.

To tackle this issue, we adopt a more stable and deterministic learning rate schedule~\citep{chen2023optimistic}:
\begin{equation*}
    \eta_t=\frac{1}{\lambda t}.
\end{equation*}
Building upon this deterministic step size, we propose a tight analysis for the following maximal expected variation.
Below, we establish \pref{lem:str_alpha}, with the proof in \pref{app:g_t}.
\begin{Lemma}
    \label{lem:str_alpha}
    Under Assumptions \ref{ass:boundedness}-\ref{ass:Smoothness}, for convex functions, \pref{alg:chiang} satisfies, for any $t\in[T]$,
    \begin{equation*}
        \E\mbr{\left\|\g_t-\tgb_t\right\|^2}\le 8d G^2+\O\sbr{\frac{1}{d^2T^2}}.
    \end{equation*}
\end{Lemma}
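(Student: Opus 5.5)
The plan is to exploit the fact that $\g_t-\tgb_t=d\,(v_t-\gt_{t,i_t})\eb_{i_t}$ is supported on a single, uniformly random coordinate. Averaging over $i_t$ then cancels one factor of $d$ against the $d^2$ prefactor. A crude bound $|v_t|,|\gt_{t,i_t}|\le G$ would only give $4d^2G^2$, so this averaging is essential.

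\textbf{Step 1 (average over the current coordinate).} Let $\F_{t-1}$ be the $\sigma$-field generated by $i_1,\dots,i_{t-1}$. Then $\w_t$ and $\tgb_t$ are $\F_{t-1}$-measurable, and $i_t$ is uniform on $[d]$ and independent of $\F_{t-1}$. Write $v_t^{(i)}\define\frac{1}{2\delta}\sbr{f_t(\w_t+\delta\eb_i)-f_t(\w_t-\delta\eb_i)}$, so that $v_t=v_t^{(i_t)}$. Then
\begin{equation*}
\E\mbr{\|\g_t-\tgb_t\|^2\givenn \F_{t-1}}=d\sum_{i=1}^d\sbr{v_t^{(i)}-\gt_{t,i}}^2\le 2d\sum_{i=1}^d\sbr{v_t^{(i)}}^2+2d\,\|\tgb_t\|^2.
\end{equation*}
By the mean value theorem and Assumption~\ref{ass:Smoothness}, $|v_t^{(i)}-\nabla_i f_t(\w_t)|\le L\delta$. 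Using $(a+b)^2\le(1+\epsilon)a^2+(1+1/\epsilon)b^2$ together with $\|\nabla f_t(\w_t)\|\le G$ from Assumption~\ref{ass:Lipschitzness}, the first sum is at most $2G^2+\O(dL^2\delta^2)$. Since $\delta=\frac{1}{2d^2LTR}$, the error term contributes $\O(1/(d^2T^2))$ after multiplying by $d$; lower-order constants are absorbed into the slack between $4d G^2$ and $8dG^2$.

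\textbf{Step 2 (bound $\E\|\tgb_t\|^2$; the main obstacle).} Unrolling \pref{eq:chiangestimator}, $\gt_{t,i}=v_s$ where $s$ is the last round before $t$ with $i_s=i$, and $\gt_{t,i}=0$ if there is no such round. The difficulty is that different coordinates of $\tgb_t$ come from \emph{different} functions $f_s$ at different points $\w_s$. Deterministically, therefore, $\|\tgb_t\|^2$ can only be bounded by $dG^2$, which would lose the factor $d$ again. The fix is to take expectations round by round. Write
\begin{equation*}
\|\tgb_t\|^2=\sum_{s=1}^{t-1}\sum_{i=1}^d \mathbb{1}\{i_s=i\}\,\mathbb{1}\{i_{s'}\ne i,\ \forall s<s'<t\}\,\sbr{v_s^{(i)}}^2 .
\end{equation*}
The point $\w_s$, and hence $v_s^{(i)}$, is $\F_{s-1}$-measurable. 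The events $\{i_s=i\}$ and $\{i_{s'}\ne i,\ \forall s<s'<t\}$ are independent of $\F_{s-1}$, and they have probabilities $\frac1d$ and $(1-\frac1d)^{t-1-s}$ respectively. Hence
\begin{equation*}
\E\|\tgb_t\|^2=\sum_{s=1}^{t-1}\frac1d\Big(1-\frac1d\Big)^{t-1-s}\E\Big[\sum_{i=1}^d\sbr{v_s^{(i)}}^2\Big]\le \big(G^2+\O(dL^2\delta^2)\big)\sum_{k\ge0}\frac1d\Big(1-\frac1d\Big)^k .
\end{equation*}
The geometric series sums to at most $1$, so $\E\|\tgb_t\|^2\le G^2+\O(dL^2\delta^2)$. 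The bound on $\sum_i(v_s^{(i)})^2$ is the same one used in Step 1, applied to round $s$.

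\textbf{Step 3 (combine).} Taking total expectation in Step 1 and inserting Step 2 gives
\begin{equation*}
\E\|\g_t-\tgb_t\|^2\le 2dG^2+2dG^2+\O(d^2L^2\delta^2)\le 8dG^2+\O\sbr{\frac{1}{d^2T^2}}.
\end{equation*}
The extra room to $8dG^2$ covers the constants lost when separating the smoothing bias in Step 1. The only delicate point is the measurability and independence bookkeeping in Step 2. Everything else is routine, and the argument uses only Lipschitzness and smoothness, not convexity.
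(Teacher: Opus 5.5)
Your proposal is correct and is essentially the paper's argument. The paper bounds $\|\g_t-\tgb_t\|^2=d^2(v_t-v_{\alpha_t})^2\le 2d^2v_t^2+2d^2v_{\alpha_t}^2$ and averages over $i_t$ for the first term. It controls $\E[v_{\alpha_t}^2]$, which equals your $\frac{1}{d}\E\|\tgb_t\|^2$ since $v_{\alpha_t}=\gt_{t,i_t}$, by the same last-visit decomposition, with weights $\frac{1}{d^2}(1-\frac{1}{d})^{t-s-1}$ and the $\F_{s-1}$-measurability of $\w_s$ (\pref{lem:exp_alphat}); the only cosmetic difference is that the paper first replaces finite differences by gradients.

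One bookkeeping fix: your Step~2 bound $\E\|\tgb_t\|^2\le G^2+\O(dL^2\delta^2)$ does not follow from the coordinatewise bias bound. The cross term $2L\delta\|\nabla f_s(\w_s)\|_1$, scaled by $2d$, can be of order $1/(\sqrt{d}\,T)$ rather than $1/(d^2T^2)$. Taking $\epsilon=1$ in both steps instead gives $4dG^2+4dG^2+\O(d^2L^2\delta^2)=8dG^2+\O(1/(d^2T^2))$, so the conclusion stands.
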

By combining the deterministic step size and the tight analysis for the maximal expected variation, we achieve an improved regret guarantee in the strongly convex setting in \pref{thm:scvx-base}, with the proof deferred to \pref{app:scvx-base}.
\begin{Theorem}
    \label{thm:scvx-base}
    Under Assumptions \ref{ass:boundedness}-\ref{ass:Smoothness}, for $\lambda$-strongly convex functions, choosing \smash{$\eta_t=\frac{1}{\lambda t}$}, \pref{alg:chiang} enjoys
    \begin{equation*}
        \E[\Reg_T]\le \O\sbr{\frac{d}{\lambda}\log(dV_T)}.
    \end{equation*}
\end{Theorem}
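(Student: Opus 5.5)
We propose to prove \pref{thm:scvx-base} by combining the standard optimistic OGD analysis for strongly convex losses with \pref{lem:str_alpha} and \pref{lem:bandit-VT-correction-cvx}.

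\textbf{Step 1: relating $\E[\Reg_T]$ to a surrogate regret.} First, the two-point loss $\frac{1}{2}(f_t(\x_t)+f_t(\x_t'))$ differs from $f_t(\w_t)$ by $\O(L\delta^2)$ by smoothness. The comparator restricted to $(1-\xi)\X$ costs $\O(GR\xi T)$. With $\delta=\frac{1}{2d^2LTR}$, both terms are $\O(1)$. Next, since $\E[\g_t\mid\mathcal{F}_{t-1}]=\nabla f_t(\w_t)+\O(L\delta)$, $\lambda$-strong convexity gives
\begin{equation*}
\E\mbr{\sumT f_t(\w_t)-f_t(\x)}\le \E\mbr{\sumT \inner{\g_t}{\w_t-\x}-\frac{\lambda}{2}\|\w_t-\x\|^2}+\O(1).
\end{equation*}

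\textbf{Step 2: the optimistic OGD inequality.} For the two-step projection we use the standard bound
\begin{align*}
\inner{\g_t}{\w_t-\x}\le{}& \frac{\|\wh_t-\x\|^2-\|\wh_{t+1}-\x\|^2}{2\eta_t}+\eta_t\|\g_t-\tgb_t\|^2\\
&-\frac{1}{2\eta_t}\sbr{\|\w_t-\wh_t\|^2+\|\w_t-\wh_{t+1}\|^2}.
\end{align*}
With $\eta_t=\frac{1}{\lambda t}$, we have $\frac{1}{2\eta_t}-\frac{1}{2\eta_{t-1}}=\frac{\lambda}{2}$. We need this telescoping increment to be absorbed by the $-\frac{\lambda}{2}\|\w_t-\x\|^2$ term. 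Since the telescope is written in terms of $\wh_t$ rather than $\w_t$, we will use $\|\wh_t-\x\|^2\le 2\|\w_t-\x\|^2+2\|\w_t-\wh_t\|^2$. To make the constants close, we will run the argument with $\eta_t=\frac{c}{\lambda t}$ for a suitable constant $c$, or equivalently use only a $\lambda/4$ fraction of the strong convexity. The extra $\|\w_t-\wh_t\|^2$ term is then absorbed by the negative stability term. This leaves
\begin{equation*}
\E[\Reg_T]\lesssim \sumT \frac{1}{\lambda t}\E\|\g_t-\tgb_t\|^2-\sumT\frac{\lambda t}{4}\E\|\w_t-\w_{t-1}\|^2+\O(1).
\end{equation*}

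\textbf{Step 3: bounding the variation term (main obstacle).} We need a bound of order $\frac{d}{\lambda}\log(dV_T)$ rather than $\frac{d}{\lambda}\log T$. \pref{lem:str_alpha} gives $\E\|\g_t-\tgb_t\|^2\le 8dG^2+\O(1/(d^2T^2))$ for every $t$. By itself, this already yields $\O(\frac{dG^2}{\lambda}\log T)$, because $\E$ commutes with the deterministic $\eta_t$; this is exactly where the deterministic step size pays off. To replace $\log T$ by $\log(dV_T)$, we split at a threshold $t_0$. For $t\le t_0$ we use \pref{lem:str_alpha}, contributing $\frac{8dG^2}{\lambda}\log t_0$. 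For $t>t_0$ we use $\frac{1}{\lambda t}\le\frac{1}{\lambda t_0}$ together with the cumulative bound of \pref{lem:bandit-VT-correction-cvx} (the strongly convex case is covered, since such functions are convex). Since the argument used here is cumulative, it yields the Lemma 2 bound on the whole sum:
\begin{equation*}
\frac{1}{\lambda t_0}\E[\Vb_T]\le\frac{1}{\lambda t_0}\sbr{8d^3V_T\log d+16d^3L^2\log(dT)\,\E\sumT\|\w_t-\w_{t-1}\|^2}+\O(1).
\end{equation*}
The stability part must be cancelled by the negative term $\frac{\lambda t}{4}\|\w_t-\w_{t-1}\|^2$ in Step 2. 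This is the delicate point: it requires $\frac{\lambda t}{4}\gtrsim\frac{d^3L^2\log(dT)}{\lambda t_0}$, which holds for $t\ge t_0$ once $t_0\gtrsim d^{3/2}L\log^{1/2}(dT)/\lambda$. For the finitely many earlier rounds, we bound the stability term crudely by $\O(R^2)$ per round, and these rounds are already covered by the first regime. Choosing $t_0\asymp d^2V_T/G^2$, plus that burn-in, makes the tail term $\O(\frac{dG^2\log d}{\lambda})$. The head then contributes $\frac{d}{\lambda}\log(d^2V_T)=\O(\frac{d}{\lambda}\log(dV_T))$, with the burn-in contributing only lower-order logarithmic terms in $d$. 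If the cancellation with the stability term fails, the fallback is to keep the bound $\O(\frac{d}{\lambda}\log T)$ and take the minimum.
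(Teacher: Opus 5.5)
Your Step~3 takes a genuinely different route from the paper's, but the burn-in accounting as written does not give the claimed bound.

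\textbf{How the two routes differ.} The paper applies \pref{lem:sum_scvx} with $a_t=\E\|\g_t-\tgb_t\|^2$, so that $a_{\max}\le 8dG^2$ by \pref{lem:str_alpha}. It then substitutes \pref{lem:bandit-VT-correction-cvx} \emph{inside} the logarithm. The stability sum $S\define\E\sum_t\|\w_t-\w_{t-1}\|^2$ therefore appears only as $\frac{d}{\lambda}\log S$. A case split and $\ln(1+x)\le x$ absorb this into a negative term with the constant coefficient $\lambda/4$. You instead do by hand the head/tail split that \pref{lem:sum_scvx} encodes. As a result, $S$ enters \emph{linearly} with coefficient $16d^3L^2\log(dT)/(\lambda t_0)$, and it must be cancelled termwise by $\frac{\lambda t}{4}\|\w_t-\w_{t-1}\|^2$.

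\textbf{The failing step.} You apply \pref{lem:bandit-VT-correction-cvx} to the whole of $\E[\Vb_T]$. So that coefficient multiplies the stability term of every round, including $t<t_0$. Your condition $t_0\gtrsim d^{3/2}L\sqrt{\log(dT)}/\lambda$ secures cancellation only for $t\ge t_0$; it fails for all $t<64d^3L^2\log(dT)/(\lambda^2 t_0)$. Bounding those terms by $4R^2$ each costs about $64d^3L^2R^2\log(dT)/\lambda$ if you do it for all $t<t_0$. Even restricted to the rounds where cancellation really fails, the cost is of order $d^6L^4R^2\log^2(dT)/(\lambda^3t_0^2)$, which is $\Theta(d^3L^2R^2\log(dT)/\lambda)$ at your minimal burn-in. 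This additive $d^3\log(dT)/\lambda$ is not $\O(\frac{d}{\lambda}\log(dV_T))$; take $V_T=\O(1)$ to see it. Your remark that these rounds are ``already covered by the first regime'' conflates two things. The head handles the terms $\frac{1}{\lambda t}\E\|\g_t-\tgb_t\|^2$ for $t\le t_0$. But the stability terms of those same rounds re-enter through the whole-sum bound on $\E[\Vb_T]$ that you use for the tail.

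\textbf{The repair.} It is cheap, because $t_0$ enters the head only through $\log t_0$. Take $t_0\gtrsim\max\{d^2V_T/G^2,\,d^3L^2\log(dT)/\lambda^2\}$ with a large enough constant, so that $\lambda t/4\ge 16d^3L^2\log(dT)/(\lambda t_0)$ for every $t\ge 1$. The $\O(1)$ initial rounds whose negative term was spent in Step~2 then cost only $\O(\lambda R^2)$. The head becomes $\frac{8dG^2}{\lambda}(1+\log t_0)=\O(\frac{d}{\lambda}\log(dV_T))$, modulo the $\log\log T$ and $\lambda$-dependent terms that the paper also suppresses. Alternatively, replace the crude $4R^2$ by the deterministic bound $\|\w_t-\w_{t-1}\|\le\eta_{t-1}(\|\g_{t-1}\|+\|\tgb_{t-1}\|)+\eta_t\|\tgb_t\|=\O(dG/(\lambda t))$, from properties (v)--(vi) of \pref{lem:alpha}. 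This gives $S=\O(d^2G^2/\lambda^2)$ outright.

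With either fix, your argument is a valid and more explicit alternative. It avoids the paper's somewhat informal case analysis. The price is that it relies on the growing coefficient $1/(2\eta_t)=\lambda t/2$ and a burn-in polynomial in $d$ and $1/\lambda$; the paper's log-absorption needs neither. Your rescaling to $\eta_t=c/(\lambda t)$ in Step~2 departs from the stated step size. However, the paper's appeal to the OOGD lemma of Yan et al., which its appendix states for $2/(\lambda t)$, has the same constant looseness.
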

Compared to \smash{$\O\big(\frac{d^2}{\lambda}\log(dV_T)\big)$} of {\citet[Theorem 16]{chiang2013beating}}, \pref{thm:scvx-base} tightens the dimensional dependence from $d^2$ to $d$.
As a byproduct, our result also tightens the \emph{worst-case} bound for strongly convex functions, improving the \smash{$\O\big(\frac{d^2}{\lambda}\log T\big)$} of \citet{agarwal2010optimal} by a factor of $d$.
\begin{Corollary}
   \label{cor:scvx}
    With the same assumptions and step size as in \pref{thm:scvx-base}, \pref{alg:chiang} enjoys $\E[\Reg_T]\le \O(\frac{d}{\lambda}\log T)$.
\end{Corollary}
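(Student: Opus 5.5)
The corollary should follow from \pref{thm:scvx-base} by upper bounding $V_T$ by a quantity linear in $T$. I would not simply quote the final bound, since its $\O(\cdot)$ hides how $V_T$ enters. Instead I would reopen the chain of inequalities behind \pref{thm:scvx-base} and redo only the last step, which turns the cumulative prediction error into a logarithm.

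The regret analysis of \pref{alg:chiang} with $\eta_t=\frac{1}{\lambda t}$ should give, before any gradient-variation-specific bounding, a bound of the form
\begin{equation*}
\E[\Reg_T]\lesssim \sumT \eta_t\,\E\mbr{\|\g_t-\tgb_t\|^2} + \O(1).
\end{equation*}
Here the strong-convexity terms cancel the $\frac{1}{\eta_t}-\frac{1}{\eta_{t-1}}-\lambda$ telescoping. The $\O(1)$ collects the smoothing bias and shrinkage costs coming from $\delta=\frac{1}{2d^2LTR}$ and $\xi=\delta/r$.

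The key step is to plug in \pref{lem:str_alpha}, which holds for every $t$: $\E[\|\g_t-\tgb_t\|^2]\le 8dG^2+\O(\frac{1}{d^2T^2})$. The step size $\eta_t$ is deterministic, so the expectation passes through it. This gives
\begin{equation*}
\sumT \frac{1}{\lambda t}\sbr{8dG^2+\O\sbr{\tfrac{1}{d^2T^2}}}\le \frac{8dG^2}{\lambda}(1+\log T)+\O(1)=\O\sbr{\frac{d}{\lambda}\log T}.
\end{equation*}

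Alternatively, I could argue directly from \pref{thm:scvx-base}. By \pref{ass:Lipschitzness}, each term of $V_T$ is at most $(2G)^2$, so $V_T\le 4G^2T$ and $\log(dV_T)\le\log(4G^2dT)=\O(\log T+\log d)$. This route leaves an extra $\frac{d}{\lambda}\log d$ term. That term is dominated by $\frac{d}{\lambda}\log T$ whenever $T\ge d$, and for $T<d$ the trivial bound $\E[\Reg_T]\le 2GRT\lesssim d$ suffices.

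The main obstacle is verifying that the proof of \pref{thm:scvx-base} really passes through the intermediate bound $\sum_t\eta_t\E\|\g_t-\tgb_t\|^2$ and that all the negative stability terms can be discarded. I expect this to be routine, since \pref{lem:str_alpha} was introduced precisely to control this maximal expected variation.
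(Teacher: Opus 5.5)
Your proposal is correct and is essentially the paper's argument. The corollary is read off from the proof of \pref{thm:scvx-base}: there the regret is bounded by $2\sum_{t}\eta_t\E[\|\g_t-\tgb_t\|^2]+\O(1)$ via the decomposition in \pref{eq:scvx-omd} (dropping the negative stability term), and the per-round bound of \pref{lem:str_alpha} is then plugged in, which works because $\eta_t=\frac{1}{\lambda t}$ is deterministic. Summing $\sum_t \frac{1}{\lambda t}\le\frac{1+\log T}{\lambda}$ directly, as you do, is a slightly cleaner version of the paper's appeal to \pref{lem:sum_scvx} with $\E[\Vb_T]\le 8dG^2T+\O(1)$, and it avoids the stray $\log d$ inside the logarithm.
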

Notably, without smoothness, the same regret guarantee can be achieved by a simple algorithm coupled with a dedicated concentration-based analysis~\citep{shamir2017optimal}.
We defer the formal details and analysis to \pref{app:cor-scvx}.

\subsection{Implications to Small Loss and Gradient Variance}
\label{subsec:F_T and W_T}
In this part, we demonstrate that with careful analysis, the non-consecutive gradient variation naturally yields \mbox{gradient-variance} regret~\citep{hazan2009betterBCO,hazan2011betterBCO} and \mbox{small-loss} regret~\citep{srebro2010smoothness,orabona2012beyond}.

To start with, we present an additional smoothness assumption for small-loss bounds.
\begin{Assumption}[Appendix A of \citet{yan2024simple}]
    \label{ass:Smoothness++}
   Under the condition of $\|\nabla f_t(\x)\| \leq G$ for any $\x \in \X$ and $t \in[T]$, all online functions are $L$-smooth: $\|\nabla f_t(\x)-\nabla f_t(\y)\| \leq L\|\x-\y\|$ for any $t \in[T]$ and \smash{$\x, \y \in \X^+$}, where $\X^{+} \triangleq\{\x+\b \mid \x \in \X, \b \in G / L \cdot \B\}$ is a superset of $\X$.
\end{Assumption}
Without loss of generality, we assume $L\ge 1$ in \pref{ass:Smoothness++}, since any $L^\prime$-smooth function with $L^\prime \le L$ is also $L$-smooth. Then, we define the gradient variance $W_T$ as
\begin{equation}
    \label{eq:WT}
    W_T \define \sup_{\boldsymbol{\xi}_1,\ldots,\boldsymbol{\xi}_T \in \X} \bbr{\sumT \|\nabla f_t(\boldsymbol{\xi}_t) - \mub_T\|^2},
\end{equation}
where \smash{$\mub_T \define \frac{1}{T} \sumT \nabla f_t(\boldsymbol{\xi}_t)$} is the gradient mean.
We define the small loss $F_T$ as
\begin{equation}
    \label{eq:FT}
    F_T \define \min_{\x \in \X} \sumT f_t(\x) - \sumT \min_{\x \in \X^+} f_t(\x).
\end{equation}
We clarify that the small-loss definition here generalizes the standard one defined over non-negative functions \citep{srebro2010smoothness}.
Thus, it requires smoothness on a superset of the original domain $\X$, as shown in \pref{ass:Smoothness++}.

Due to space limitations, we only focus on how to obtain $F_T$ bounds from the non-consecutive gradient variation $\Vb_T$ here.
Specifically, we decompose $\Vb_T$ as follows:
\begin{align*}
    \E[\Vb_T]\lesssim{}& d^2\E\mbr{\sumT (\nabla_{i_t} f_t(\w_t)-\nabla_{i_t} f_{\alpha_t}(\w_{\alpha_t}))^2}\notag\\
    \le{}& 2d^2 \E\mbr{\sumT  \sbr{\nabla_{i_t} f_t(\w_t)^2+\nabla_{i_t} f_{\alpha_t}(\w_{\alpha_t})^2}}.
\end{align*}
The primary challenge lies in evaluating the expectation over \smash{$\nabla_{i_t} f_{\alpha_t}(\w_{\alpha_t})^2$}, which arises from the non-consecutive structure and the interdependence between $\alpha_t$ and $i_t$.
A simplistic way to handle the coupling between $i_t$ and $\alpha_t$ is to coarsely upper-bound the $i_t$-th entry using \smash{$w_{i_t}^2 \le \|\w\|^2$}. While this eliminates the need to take expectation over $i_t$, it results in a loose $\O(d\sqrt{F_T})$ bound. To address this, we provide a refined analysis leveraging the law of total expectation to establish \pref{lem:FT}. The proof is in \pref{app:FT_lem}.
\begin{Lemma}
    \label{lem:FT}
Under Assumptions~\ref{ass:boundedness}, \ref{ass:Lipschitzness}, \ref{ass:Smoothness++},  for convex functions, \pref{alg:chiang} enjoys
\begin{equation}
    \label{eq:FTlem}
    \E[\Vb_T]\lesssim 16dL\E\mbr{\sumT f_t(\w_t)-\sumT \min_{\x\in\X^+}f_t(\x)}.
\end{equation}
\end{Lemma}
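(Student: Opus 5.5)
The plan follows the decomposition displayed just before the statement. By the definition of $v_t$ and smoothness, $v_t = \nabla_{i_t} f_t(\w_t) + \O(L\delta)$; the choice $\delta = \frac{1}{2d^2LTR}$ makes all such bias contributions $\O(1)$ in total. Hence
\[
\E[\Vb_T] \lesssim 2d^2\,\E\mbr{\sumT \nabla_{i_t} f_t(\w_t)^2} + 2d^2\,\E\mbr{\sumT \nabla_{i_t} f_{\alpha_t}(\w_{\alpha_t})^2},
\]
with the convention that the $\alpha_t=0$ term is $0$, since $\tgb_1=\mathbf{0}$.

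For the first sum, note that $i_t$ is drawn uniformly and independently of $\w_t$, which is $\F_{t-1}$-measurable. Conditioning on $\F_{t-1}$ therefore gives $\E[\nabla_{i_t} f_t(\w_t)^2]=\frac1d\E\|\nabla f_t(\w_t)\|^2$. For the second sum, I would reorganize by the \emph{earlier} index. Each pair $(\alpha_t,t)$ with $\alpha_t=s\ge1$ corresponds to the next time $t>s$ at which coordinate $i_s$ is resampled. For each $s$ there is at most one such $t$, because $\alpha$ is injective on its nonzero values. Thus
\[
\sumT \nabla_{i_t} f_{\alpha_t}(\w_{\alpha_t})^2 \le \sum_{s=1}^T \nabla_{i_s} f_s(\w_s)^2.
\]
This is the law-of-total-expectation trick: we charge each term to the round $s$ where it was generated. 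The expectation of the right-hand side is again $\frac1d\E\sum_s\|\nabla f_s(\w_s)\|^2$, by the same conditioning argument.

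The main step, and the one I expect to be the obstacle, is then the self-bounding property $\|\nabla f_t(\w)\|^2 \le 2L\,(f_t(\w)-\min_{\x\in\X^+}f_t(\x))$ for $\w\in\X$. This is exactly where \pref{ass:Smoothness++} enters. The standard proof takes the point $\w - \frac1L\nabla f_t(\w)$, which lies in $\X^+$ because $\|\nabla f_t(\w)\|\le G$. Applying the descent lemma on the segment, which stays in $\X^+$ where $L$-smoothness holds, gives $f_t(\w-\frac1L\nabla f_t(\w))\le f_t(\w)-\frac1{2L}\|\nabla f_t(\w)\|^2$. That point's value is at least $\min_{\X^+}f_t$, which yields the claim. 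One needs $\w_t\in(1-\xi)\X\subseteq\X$, which holds by the projection step of \pref{alg:chiang}.

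Combining the pieces, $\E[\Vb_T]\lesssim 2d^2\cdot 2\cdot\frac1d\cdot 2L\,\E\sum_t(f_t(\w_t)-\min_{\X^+}f_t)$, up to the $\O(1)$ smoothing bias. This is $8dL$ times the target, within the constant $16dL$ stated in \pref{eq:FTlem}. The extra factor of $2$ in the statement absorbs the slack from replacing $v_t$ by the exact partial derivative, via $(a+b)^2\le 2a^2+2b^2$. The care needed is only in checking that the injectivity/reindexing argument is compatible with the expectation, which holds because each summand $\nabla_{i_s}f_s(\w_s)^2$ is handled before taking expectations.
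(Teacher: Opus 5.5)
Your proof is correct. Its skeleton matches the paper's: property (viii) of \pref{lem:alpha}, the split $(a-b)^2\le 2a^2+2b^2$, the identity $\E[\nabla_{i_t}f_t(\w_t)^2]=\frac1d\E\|\nabla f_t(\w_t)\|^2$, and the self-bounding property.

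You depart from the paper on the term it flags as the main difficulty, $\E[\sumT\nabla_{i_t}f_{\alpha_t}(\w_{\alpha_t})^2]$. The paper invokes \pref{lem:exp_alphat}. It expands over all realizations of $(i_t,\alpha_t)$ and uses the joint law $\P(i_t=i,\alpha_t=s)=\frac{1}{d^2}(1-\frac1d)^{t-s-1}$, together with the independence of $(i_s,\ldots,i_t)$ from $\w_s$. It then swaps the order of summation and sums a geometric series in $t$ to reach $\frac1d\sum_s\E\|\nabla f_s(\w_s)\|^2$.

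You reach the same quantity by a deterministic charging argument; despite your label, this is not a law-of-total-expectation step. Since $i_{\alpha_t}=i_t$, the summand is $\nabla_{i_s}f_s(\w_s)^2$ with $s=\alpha_t$. Since $t\mapsto\alpha_t$ is injective on $\{t:\alpha_t\ge 1\}$, the sum is bounded almost surely by $\sum_s\nabla_{i_s}f_s(\w_s)^2$. What remains is only the fact that $i_s$ is a fresh uniform draw independent of $\w_s$.

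Your route is shorter and avoids all distributional bookkeeping, but it only controls the cumulative sum. The paper's lemma gives a per-round bound on $\E[\inner{\nabla f_{\alpha_t}(\w_{\alpha_t})-\c}{\eb_{i_t}}^2]$ for any fixed $\c$. The paper reuses this for property (iv) of \pref{lem:alpha}, and hence for the $\max_t\E\|\g_t-\tgb_t\|^2$ bound behind the strongly convex results. There a pathwise per-round bound would give only $G^2$ instead of $G^2/d$. For \pref{lem:FT} only the sum matters, so your argument suffices.

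One correction to your constant bookkeeping. Your self-bounding constant $2L$, obtained from the descent lemma at $\w-\frac1L\nabla f_t(\w)\in\X^+$, is sharper than the $4L$ the paper cites. That, not the bias handling, is why you get $8dL$ instead of the stated $16dL$. In both arguments the finite-difference bias enters only as an additive $\O(1)$ absorbed by the $\lesssim$.
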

Note that the right-hand side of \pref{eq:FTlem} can be converted to the small-loss $F_T$ using standard techniques~\citep{srebro2010smoothness,orabona2012beyond}.

The analysis for gradient variance follows an analogous approach and is thus omitted here for brevity.
To conclude, by leveraging a careful analysis of non-consecutivity, $\Vb_T$ also yields gradient-variance and small-loss bounds. We present the corresponding bounds for linear, convex, and strongly convex functions in Theorems \ref{thm:WT} and \ref{thm:FT}. 
The proofs are deferred to Appendices \ref{app:WT} and \ref{app:FT}.
\begin{Theorem}
    \label{thm:WT}
    Under Assumptions \ref{ass:boundedness}-\ref{ass:Smoothness}, denote by $\Vb_t$ the \mbox{non-consecutive} gradient variation defined in \pref{eq:VbT}.
    \begin{itemize}[left=2pt, itemsep=-2pt, topsep=-3pt]
        \item \pref{alg:chiang} with step size \smash{$\eta_t=R/\sqrt{d^2+\bar{V}_{t-1}}$} enjoys $\O\sbr{\sqrt{dW_T}+d}$ regret for linear functions and $\O\sbr{d\sqrt{W_T+d}}$ regret for convex functions.
        \item \pref{alg:chiang} with step size $\eta_t=1/(\lambda t)$ enjoys $\O\sbr{\frac{d}{\lambda}\log(d W_T)}$ regret for $\lambda$-strongly convex functions.
    \end{itemize}
\end{Theorem}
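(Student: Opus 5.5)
The plan is to bound the expected non-consecutive gradient variation $\E[\Vb_T]$ in terms of $W_T$, with the right dimension dependence, and then plug this into the standard \OOGD regret analysis with the adaptive step size $\eta_t=R/\sqrt{d^2+\Vb_{t-1}}$ for linear and convex losses, or with $\eta_t=1/(\lambda t)$ for strongly convex losses.

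The starting point is $\Vb_T=d^2\sumT (v_t-v_{\alpha_t})^2$. Since $\delta$ is tiny and $f_t$ is smooth, $v_t=\nabla_{i_t}f_t(\w_t)+\O(L\delta)$. Fix any sequence $\boldsymbol{\xi}_t=\w_t$, which is admissible in the supremum defining $W_T$, and let $\mub$ be the corresponding mean. Then
\begin{equation*}
(v_t-v_{\alpha_t})^2\le 2\big(\nabla_{i_t}f_t(\w_t)-\mu_{i_t}\big)^2+2\big(\nabla_{i_t}f_{\alpha_t}(\w_{\alpha_t})-\mu_{i_t}\big)^2+\O(L^2\delta^2).
\end{equation*}
\begin{itemize}
\item \textbf{First term.} Because $i_t$ is uniform and independent of $\w_t$, the law of total expectation gives $\E[(\nabla_{i_t}f_t(\w_t)-\mu_{i_t})^2]=\frac1d\E\|\nabla f_t(\w_t)-\mub\|^2$.
\item \textbf{Second term.} This is the step I expect to be the main obstacle, since $\alpha_t$ depends on $i_t$. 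I would reindex by the earlier round: each round $s$ is the predecessor $\alpha_t$ of at most one later $t$, namely the next time coordinate $i_s$ is drawn. So $\sumT(\nabla_{i_t}f_{\alpha_t}(\w_{\alpha_t})-\mu_{i_t})^2\le\sum_{s}(\nabla_{i_s}f_s(\w_s)-\mu_{i_s})^2$, plus the $\alpha_t=0$ terms. There are at most $d$ of those, each $\O(G^2)$. Conditioning on $\w_s$ again gives a $\frac1d$ factor.
\end{itemize}
One caveat: $\mub$ depends on the whole random trajectory, which breaks the independence used above. To avoid this, I would instead compare both terms against a deterministic vector achieving (up to a factor 4) the sup. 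Using $\|a-\mub\|^2\le$ the variance bound and bounding by $\sup$ over deterministic sequences handles this, since $W_T$ is already a supremum over all sequences. Collecting terms gives $\E[\Vb_T]\lesssim d\,W_T+d^3G^2/d=\O(dW_T+d^2)$, where the additive $d^2$ comes from the initial terms.

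\textbf{Linear case.} With linear losses the \OOGD regret is $\O\big(R\,\E\sqrt{d^2+\Vb_T}\big)$, and no stability term is needed because gradients do not depend on $\w$. By Jensen this is $\O(\sqrt{dW_T+d^2})=\O(\sqrt{dW_T}+d)$.

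\textbf{Convex case.} Gradients now depend on $\w_t$, so I would not compare to $\mub$ at the same point. I would first use the \pref{lem:decom of VbT}-style split, paying an $L^2\|\w_t-\w_{t-1}\|^2$ stability term that is cancelled by the negative terms of \OOGD. The cost is the crude per-coordinate bound $w_i^2\le\|\w\|^2$ combined with $\E\rho=\O(d)$ as in \citet{chiang2013beating}, giving $\E[\Vb_T]\lesssim d^2(W_T+d)$ and hence regret $\O(d\sqrt{W_T+d})$.

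\textbf{Strongly convex case.} With $\eta_t=1/(\lambda t)$ the regret is $\O\big(\frac1\lambda\sumT\frac{1}{t}\E\|\g_t-\tgb_t\|^2\big)$. Each summand is at most $8dG^2$ by \pref{lem:str_alpha}, and the total is at most $\E[\Vb_T]\lesssim dW_T+d^2$. Splitting the sum at $t_0\approx W_T/G^2$ and bounding the two parts separately, as in the proof of \pref{thm:scvx-base}, yields $\O\big(\frac d\lambda\log(dW_T)\big)$.
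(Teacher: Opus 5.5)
\paragraph{Linear case, and where the gap is.}
Your linear case is sound and is essentially the paper's argument. Your reindexing (each $s\ge1$ equals $\alpha_t$ for at most one $t$) is a clean substitute for \pref{lem:exp_alphat}. One slip in the count: the rounds with $\alpha_t=0$ are first occurrences of \emph{distinct} coordinates, so together they contribute at most $\|\mub_T\|^2\le G^2$. Counting them as $d$ terms of size $G^2$ would give an additive $d^3$, not $d^2$.

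The genuine gap is the convex case: the \pref{lem:decom of VbT}-style split cannot deliver $\E[\Vb_T]\lesssim d^2(W_T+d)$, for three reasons.
\begin{itemize}
\item Its main term involves the consecutive differences $\nabla f_t(\w_{t-1})-\nabla f_{t-1}(\w_{t-1})$, which is a gradient-variation quantity. Converting it into $W_T$ requires centering two different admissible gradient sequences at a common point, which is exactly the claim you were trying to sidestep.
\item The Cauchy--Schwarz step along $(\alpha_t,t]$ multiplies the $d^2$ prefactor by a gap of order $d$. With your crude bound you get $d^2\,\E[\sum_i\rho_{t,i}]=\O(d^4)$, and even with the coupon-collector bound on $\max_i\rho_{t,i}$ only $d^3\log d$. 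It is never $d^2$.
\item The resulting stability term, of order $d^3L^2\log(dT)\sumT\|\w_t-\w_{t-1}\|^2$ (cf.\ \pref{lem:bandit-VT-correction-cvx}), cannot be absorbed under $\eta_t=R/\sqrt{d^2+\Vb_{t-1}}$, which only guarantees $1/\eta_{t-1}\ge d/R$. AM--GM then leaves an additive term of order $d^2\log(dT)$, which exceeds the target $\O(d^{3/2})$ when $W_T\lesssim d$. This is why \pref{thm:cvx-base} puts a $d^3\log(dT)$ offset in its step size.
\end{itemize}
Your strongly convex step also uses $\E[\Vb_T]\lesssim dW_T+d^2$ for non-linear losses, so it rests on the same unproven centering claim.

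\paragraph{The missing idea.}
For convex losses you can give up the $1/d$ from averaging over $i_t$ and argue pathwise, which makes the randomness of $\mub_T$ harmless.
\begin{itemize}
\item Bound the coordinate difference by the full norm: $\Vb_T\le d^2\sumT\|\nabla f_t(\w_t)-\nabla f_{\alpha_t}(\w_{\alpha_t})\|^2+\O(1)$.
\item Center both gradients at the realized mean $\mub_T=\frac1T\sumT\nabla f_t(\w_t)$, and reindex as you did.
\item The realized $\w_{1:T}\subseteq\X$ is admissible in the supremum defining $W_T$, so $\Vb_T\le 4d^2W_T+\O(d^3)$ on every sample path.
\end{itemize}
Plug this into $\E[\Reg_T]\le 6R\sqrt{d^2+\E[\Vb_T]}+\O(d)$. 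That inequality holds for convex losses after simply discarding the negative \OOGD term, so no stability analysis is needed, and it gives $\O(d\sqrt{W_T+d})$. Plugging the same bound into the logarithm of the strongly convex analysis, with the per-round $8dG^2$ of \pref{lem:str_alpha} as prefactor, gives $\O(\frac{d}{\lambda}\log(dW_T))$.

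\paragraph{Your deterministic-center fix.}
As written it is only asserted, but it is provable. Splice two admissible sequences $\x_{1:T},\x'_{1:T}$ using i.i.d.\ uniform signs; the spliced sequence is still admissible. Averaging its variance over the signs gives $\sumT\|\nabla f_t(\x_t)-\nabla f_t(\x'_t)\|^2\le\frac{4T}{T-1}W_T$. Hence, for $T\ge2$ and $\mathbf{c}$ the gradient mean along any fixed sequence, $\sumT\|\nabla f_t(\x_t)-\mathbf{c}\|^2\le 9W_T$ for every $\x_{1:T}\subseteq\X$. With this lemma your averaging argument goes through for convex losses. It yields $\E[\Vb_T]=\O(dW_T+d^2)$, i.e., the sharper bound $\O(\sqrt{dW_T}+d)$.
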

\begin{Theorem}
    \label{thm:FT}
    Under Assumptions \ref{ass:boundedness}, \ref{ass:Lipschitzness}, \ref{ass:Smoothness++}, denote by $\Vb_t$ the non-consecutive gradient variation defined in \pref{eq:VbT}.
    \begin{itemize}[left=2pt, itemsep=-2pt, topsep=-3pt]
        \item \pref{alg:chiang} with step size \smash{$\eta_t=R/\sqrt{d^2+\bar{V}_{t-1}}$} enjoys $\O\sbr{\sqrt{dF_T}+d}$ regret for convex and linear functions.
        \item \pref{alg:chiang} with step size $\eta_t=1/(\lambda t)$ enjoys $\O\sbr{\frac{d}{\lambda}\log(d F_T)}$ regret for $\lambda$-strongly convex functions.
    \end{itemize}
\end{Theorem}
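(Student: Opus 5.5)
The proof follows the standard three-stage template for optimistic methods: (i) an \OOGD regret bound in terms of $\Vb_T$, (ii) a small-loss bound on $\E[\Vb_T]$ from \pref{lem:FT}, and (iii) a self-bounding argument that turns the resulting implicit inequality into an explicit bound in $F_T$.

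\textbf{Step 1: regret in terms of $\Vb_T$.} First I reduce the two-point regret to the regret of the centers $\w_t$ against the surrogate losses $\inner{\g_t}{\cdot}$. By smoothness, $\frac{1}{2}(f_t(\x_t)+f_t(\x_t^\prime)) \le f_t(\w_t) + \frac{L\delta^2}{2}$. The estimator satisfies $\E_t[\g_t] = \nabla f_t(\w_t) + \O(L\delta)$, and the shrinkage costs $\O(\xi G R T)$. With $\delta=\frac{1}{2d^2LTR}$ and $\xi=\delta/r$, all three errors are $\O(1)$ in total. Applying the standard \OOGD analysis with the adaptive step size $\eta_t=R/\sqrt{d^2+\Vb_{t-1}}$ then gives
\[
\E[\Reg_T] \lesssim \E\Big[R\sqrt{d^2+\Vb_T}\Big] + dR + \O(1).
\]
Here I use that $\|\g_t-\tgb_t\|^2 = d^2(v_t-\gt_{t,i_t})^2 \le d^2 G^2$, so the first-round and step-size mismatch terms are $\O(d)$.

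\textbf{Step 2: bound $\E[\Vb_T]$ by losses.} Under \pref{ass:Smoothness++}, \pref{lem:FT} gives $\E[\Vb_T] \lesssim dL\,\E[\sumT f_t(\w_t)-\sumT\min_{\X^+} f_t]$. I split the right-hand side as
\[
\sumT f_t(\w_t)-\sumT\min_{\X^+}f_t = \underbrace{\Big(\sumT f_t(\w_t) - \min_{\x\in\X}\sumT f_t(\x)\Big)}_{\Reg_T^{\w}} + F_T .
\]
The $\w$-regret $\Reg_T^{\w}$ is at most $\Reg_T + \O(1)$ by Step 1.

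\textbf{Step 3: self-bounding.} Combining Steps 1 and 2 with Jensen's inequality gives
\[
\E[\Reg_T] \lesssim R\sqrt{d^2 + dL\big(\E[\Reg_T]+F_T\big)} + d .
\]
This holds for both linear and convex losses, since convexity was used only inside \pref{lem:FT} and in the first-order regret inequality. The inequality has the form $x \le a\sqrt{b+cx}+e$, which implies $x \lesssim a^2c + a\sqrt{b} + a\sqrt{cF_T} + e$ (with $F_T$ absorbed into $b$). Solving it yields $\E[\Reg_T] = \O(\sqrt{dF_T}+d)$, treating $R$, $L$, $G$ as constants.

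\textbf{Strongly convex part.} With $\eta_t = 1/(\lambda t)$, the \OOGD analysis for strongly convex losses gives
\[
\E[\Reg_T] \lesssim \sumT \frac{\E\|\g_t-\tgb_t\|^2}{\lambda t} + \O(1).
\]
I split the sum at a threshold $\tau$:
\begin{itemize}[left=2pt, itemsep=-2pt, topsep=-3pt]
  \item For $t\le\tau$, I use \pref{lem:str_alpha}, which gives a per-round bound of $8dG^2/(\lambda t)$ and hence $\O(\frac{d}{\lambda}\log\tau)$ in total.
  \item For $t>\tau$, I use $1/t\le1/\tau$, which gives $\E[\Vb_T]/(\lambda\tau)$. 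By Step 2 this is at most $dL(\E[\Reg_T]+F_T)/(\lambda \tau)$.
\end{itemize}
Choosing $\tau \asymp L(\E[\Reg_T]+F_T)+1$ makes the second part $\O(d/\lambda)$. This gives $\E[\Reg_T] \lesssim \frac{d}{\lambda}\log\big(\E[\Reg_T]+F_T\big)$. Since $x\le a\log x$ implies $x\lesssim a\log a$, the result is $\O(\frac{d}{\lambda}\log(dF_T))$, after absorbing the $\log(d/\lambda)$ term into the $\log(dF_T)$ factor as in \pref{thm:scvx-base}.

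\textbf{Main obstacle.} The delicate point is the self-bounding step in the convex case. The quantity $\sumT f_t(\w_t)$ inside \pref{lem:FT} must be converted to the algorithm's regret while keeping the dimension factor at $d$, not $d^2$. This requires the Step 1 reduction to lose only $\O(1)$, so that the coefficient in front of $\E[\Reg_T]$ under the square root stays at $dL$. It also requires that the estimator bias, which is $\O(L\delta)$ per coordinate, does not inflate $\Vb_T$.
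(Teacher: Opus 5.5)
Your proposal is correct and follows the paper's proof essentially step for step. Both use the adaptive-step-size \OOGD bound $\E[\Reg_T]\lesssim R\sqrt{d^2+\E[\Vb_T]}+\O(d)$, then \pref{lem:FT} with the split $F_T^{\w}=\Reg_T^{\w}+F_T$, then a self-bounding solve, which the paper delegates to \pref{lem:small-loss-sqrt} and \pref{lem:small-loss-log}; your threshold split in the strongly convex case is essentially a re-derivation of \pref{lem:sum_scvx}. One small fix: $\Reg_T^{\w}\le\Reg_T$ follows from convexity, since $\w_t$ is the midpoint of $\x_t,\x_t^\prime$, not from the inequality in your Step 1, which goes the other direction (also, $\|\g_t-\tgb_t\|^2\le 4d^2G^2$, not $d^2G^2$, which is harmless).
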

Up to an \emph{additive} $\O(d)$ term, our gradient-variance bound is optimal for linear functions, while our small-loss result achieves optimality for convex functions.
Notably, these two results recover the minimax optimal \smash{$\O(\sqrt{dT})$} regret when \smash{$T \ge d$}.
For strongly convex functions, the dimension dependencies of our problem-dependent bounds align with the best known results.


\section{One-Point Bandit Linear Optimization}
\label{sec:1point}
Beyond the two-point setup, we adapt our approach, which combines the estimator-and-optimism construction in \pref{eq:chiangestimator} with a refined analysis of non-consecutive sampling gaps, to the \emph{one-point} Bandit Linear Optimization (BLO) setting.
Specifically, in one-point BLO, at each round $t \in [T]$, the learner can query only a single point $\x_t \in \X$ and observes the function value
\smash{$f_t(\x_t)\define\inner{\ellb_t}{\x_t}$},
which is revealed by an oblivious environment.

For one-point BLO, there are partial results that combine optimistic online learning and variance-reduced gradient estimators to derive \emph{gradient-variance} regret~\citep{hazan2009stochastic,hazan2011betterBCO}.
Specifically, their methods update a sequence $\{\w_t\}_{t=1}^T$ and query \smash{$\x_t=\w_t+\epsilon_t \lambda_{t,i_t}^{-\frac12}\u_{t,i_t}$}.
Here, $i_t$ is drawn uniformly from $[d]$, and $\epsilon_t$ is sampled uniformly from \smash{$\{-1,+1\}$}. $\lambda_{t,i_t}$ and $\u_{t,i_t}$ denote the $i_t$-th eigenvalue and eigenvector of the Hessian $\nabla^2 \mathcal{R}(\w_t)$ for a barrier function $\mathcal{R}(\cdot)$. The gradient estimator then takes the form of:
\begin{equation}
    \label{eq:1pvariance}
    \smash{\g_t=d\inner{\ellb_t-\tgb_t}{\x_t}\epsilon_t\lambda_{t,i_t}^{\frac12}\u_{t,i_t}+\tgb_t,}
\end{equation}
where $\tgb_t$ is an unbiased estimator of the mean loss vector, constructed by maintaining a separate reservoir of historical observations for each coordinate.
While their method is effective for obtaining gradient-variance regret bounds for one-point BLO, extending it to gradient-variation regret remains a highly non-trivial open challenge.

Inspired by the construction in \pref{eq:chiangestimator}, we design a novel gradient estimator and establish the \emph{first} gradient-variation regret bound for one-point BLO over hyper-rectangular domains, which is formally defined below.
\begin{Assumption}
    \label{ass:hyper-rectangle}
The domain $\X \subset \R^d$ is a hyper-rectangle of the form \smash{$\X = \prod_{i=1}^d [a_i, b_i]$}, where $a_i < b_i$ for all $i \in [d]$.
\end{Assumption}
The novel gradient estimator is constructed as follows:
\begin{equation}
    \label{eq:chiangestimator1p}
    \begin{gathered}
        \tgb_t = {}  \frac{1}{2}\sum_{i=1}^d \lambda_{t,i}^{\frac{1}{2}} \sbr{r^{(+1)}_i-r^{(-1)}_i} \eb_{i},\\
        \g_t = {}  d(\inner{\ellb_t}{\x_t} -  z_t)\epsilon_t\lambda_{t,i_t}^{\frac{1}{2}}\eb_{i_t} + \tgb_t,
    \end{gathered}
\end{equation}
where $i_t,\epsilon_t$ share the same definition as in \pref{eq:1pvariance}.
Here, \smash{$\mathbf{r}^{(\pm 1)} \in \mathbb{R}^d$} are two auxiliary buffers storing historical function observations associated with the two perturbation signs.
Specifically, let $\alpha_t$ be the most recent round before $t$ at which the same coordinate-sign pair was sampled, i.e., the largest integer satisfying \smash{$0 \leq \alpha_t < t$, $i_{\alpha_t} = i_t$}, and \smash{$\epsilon_{\alpha_t} = \epsilon_t$}.
Then, \smash{$z_t \define r_{i_t}^{(\epsilon_t)}=\inner{\ellb_{\alpha_t}}{\x_{\alpha_t}}$} is the stored function value from the most recent occurrence of \smash{$(i_t, \epsilon_t)$}.
Finally, $\lambda_{t,i}$ denotes the $i$-th eigenvalue of the Hessian $\nabla^2 \mathcal{R}(\w_t)$, where $\mathcal{R}(\cdot)$ is the log-barrier function defined as \smash{$\mathcal{R}(\w) = -\sum_{i=1}^d \sbr{\log(w_i - a_i) + \log(b_i - w_i)}$}.

\begin{algorithm}[t]
    \caption{Gradient-Variation One-Point BLO}
    \label{alg:1p-BLO}
    \begin{algorithmic}[1]
        \Require Step size $\eta>0$
        \State \textbf{Initialization:} $\w_1=\mathbf{0}$, \smash{$\tgb_1 = \mathbf{0}$}, \smash{$G_0=\mathbf{0}$} and buffer vectors \smash{$\r^{(+1)} = \r^{(-1)} =\mathbf{0}\in \R^{d}$}
        \For{round $t \in [T]$}
            \State Compute $\tgb_t$ as in \eqref{eq:chiangestimator1p}
            \State Choose $i_t$ uniformly from $[d]$ and $\epsilon_t$ uniformly from \smash{$\{-1,+1\}$} and fetch \smash{$z_t=r^{(\epsilon_t)}_{i_t}$}
            \State Play action $\x_t = \w_t + \epsilon_t \lambda_{t,i_t}^{-1/2}\eb_{i_t}$
            \State Observe $v_t=\inner{\ellb_t}{\x_t}$ and compute $\g_t$ as in \eqref{eq:chiangestimator1p}
            \State Update buffer vector \smash{$r^{(\epsilon_t)}_{i_t}=v_t$}
            \State Calculate \smash{$G_t=G_{t-1}+\g_t$} and update
            \begin{equation*}
               \smash{\w_{t+1} = \argmin_{\w \in \X}\bbr{\eta \left\langle G_t+\tgb_{t+1},\w\right\rangle+\mathcal{R}(\w)}}
            \end{equation*}
        \EndFor
    \end{algorithmic}
\end{algorithm}
Using \pref{eq:chiangestimator1p}, we present \pref{alg:1p-BLO} with its regret guarantee in \pref{thm:1p-BLO}. The proof is deferred to \pref{app:1point}.
\begin{Theorem}
    \label{thm:1p-BLO}
    Under Assumptions~\ref{ass:boundedness}, \ref{ass:Lipschitzness}, \ref{ass:Smoothness}, \ref{ass:hyper-rectangle}, choosing $\eta=\frac{1}{16RGd^2\sqrt{V_T\ln(2dT)}}$, with \smash{$V_T\define\sumTT\|\ellb_t-\ellb_{t-1}\|^2$}, \pref{alg:1p-BLO} satisfies:
    \begin{equation*}
        \E[\Reg_T]\le \O \Big(d^{\frac{7}{2}}\sqrt{V_T\log^2 T\log(dT)}\Big).
    \end{equation*}
\end{Theorem}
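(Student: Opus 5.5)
We prove \pref{thm:1p-BLO} by the standard optimistic FTRL analysis with the self-concordant log-barrier, with the optimism $\tgb_t$ playing the role of the predictive hint.

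\textbf{Step 1: regret decomposition.} First we check the estimator. Conditioned on the history, $z_t$ and $\tgb_t$ are fixed, and $\E_{\epsilon_t}[\epsilon_t z_t]$ must be handled jointly with the buffer. Since $z_t = r^{(\epsilon_t)}_{i_t}$, we have $\E_{\epsilon_t}[-\epsilon_t z_t\lambda^{1/2}] = -\tfrac12\lambda^{1/2}(r^{(+1)}_{i_t}-r^{(-1)}_{i_t})$, which exactly cancels coordinate $i_t$ of $\tgb_t$ after the factor $d$ and the average over $i_t$. Also $\E[\epsilon_t\inner{\ellb_t}{\x_t}] = \lambda_{t,i_t}^{-1/2}\ell_{t,i_t}$. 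Together these give $\E_t[\g_t] = \ellb_t$, so $\E[\Reg_T]$ equals the expected regret of the $\w_t$ on the linear losses $\g_t$. The exploration term vanishes in expectation by linearity and the symmetry of $\epsilon_t$.

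We then apply the optimistic FTRL bound with barrier $\mathcal{R}$. Comparing against $(1-1/T)$-shrunk comparators, the penalty is $\mathcal{R}(\x^\star)/\eta \lesssim d\log T/\eta$. The stability term is $\eta\sumT \|\g_t-\tgb_t\|_{*,\w_t}^2$, valid provided $\eta\|\g_t-\tgb_t\|_{*,\w_t}\le 1/16$. Because $\nabla^2\mathcal{R}(\w_t)$ is diagonal on a hyper-rectangle, $\g_t-\tgb_t = d(v_t-z_t)\epsilon_t\lambda_{t,i_t}^{1/2}\eb_{i_t}$ has local dual norm exactly $d|v_t-z_t|$. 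This is precisely why Assumption~\ref{ass:hyper-rectangle} is needed: the eigenbasis is the fixed standard basis, so the buffers stay aligned across rounds.

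\textbf{Step 2: bound the non-consecutive variation.} We have $v_t-z_t = \inner{\ellb_t}{\x_t}-\inner{\ellb_{\alpha_t}}{\x_{\alpha_t}}$, with the same coordinate and sign. Write this as
\[
\inner{\ellb_t-\ellb_{\alpha_t}}{\x_{\alpha_t}}+\inner{\ellb_t}{\x_t-\x_{\alpha_t}}.
\]
The first piece is at most $R\sum_{s=\alpha_t+1}^{t}\|\ellb_s-\ellb_{s-1}\|$. Its square is at most $R^2(t-\alpha_t)\sum_s\|\ellb_s-\ellb_{s-1}\|^2$. Summing over $t$ and swapping sums, each increment is charged by the rounds whose gap covers it, as in \pref{lem:decom of VbT}. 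Now $2d$ coordinate–sign pairs act as coupons. Using the coupon-collector argument behind \pref{lem:bandit-VT-correction-cvx}, made high-probability (all gaps are $\lesssim d\log(dT)$ with probability $1-1/T$), this contributes $\O(d^2R^2\log(dT)\cdot d\log T\, V_T)$ after the extra $d^2$ factor.

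The second piece involves $\x_t-\x_{\alpha_t} = \w_t-\w_{\alpha_t}+\epsilon(\lambda_{t,i}^{-1/2}-\lambda_{\alpha_t,i}^{-1/2})\eb_i$. Controlling it is the main obstacle. A negative stability term of the form $\|\w_t-\w_{t-1}\|^2_{\w_t}/\eta$ only controls consecutive moves, whereas we need movement accumulated over a gap of length $\O(d\log T)$, including the change of the barrier scaling $\lambda^{-1/2}$. Our first attempt avoids cancellation altogether and uses the FTRL stability of self-concordant barriers: $\|\w_{t+1}-\w_t\|_{\w_t}\lesssim\eta\|\g_t-\tgb_{t+1}\|_{*}$, which is small under the chosen $\eta$. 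We then bound $|\inner{\ellb_t}{\x_t-\x_{\alpha_t}}|\lesssim G\,(t-\alpha_t)\max_s\|\w_{s+1}-\w_s\|$ and show this is dominated by the $V_T$ term for the given step size. This is where the extra $d$ and $\log T$ factors arise, giving $d^{7/2}$ and $\log^2T$.

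\textbf{Step 3: tuning.} Combining gives
\[
\E[\Reg_T]\lesssim \frac{d\log T}{\eta} + \eta\, d^5R^2\log T\log(dT)\,V_T + \text{lower order}.
\]
The failure event costs $\O(1)$. Plugging in the stated $\eta$ yields the claimed $\O(d^{7/2}\sqrt{V_T\log^2T\log(dT)})$.
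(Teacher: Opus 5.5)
There is a genuine gap in how you handle the movement piece $\inner{\ellb_t}{\x_t-\x_{\alpha_t}}$. It starts in Step 1, where you keep only $\eta\sumT\norm{\g_t-\tgb_t}_{*,\w_t}^2$ and discard the negative stability term of optimistic FTRL. Your replacement cannot give a $V_T$-dependent rate:
\begin{itemize}
\item Between rounds $t$ and $t+1$, the linear term of the FTRL objective changes by $\eta(\g_t+\tgb_{t+1}-\tgb_t)$, not by $\eta(\g_t-\tgb_{t+1})$.
\item Since $\tgb_{t+1}$ is a full estimate of the loss vector, the stability bound you invoke only gives $\norm{\w_{t+1}-\w_t}\lesssim\eta\,\mathrm{poly}(d)\,GR^2$. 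This does not shrink with $V_T$.
\item Inserting $(v_t-z_t)^2\lesssim G^2(t-\alpha_t)^2\max_s\norm{\w_{s+1}-\w_s}^2$ into $\eta d^2\sumT(v_t-z_t)^2$ and summing over all $T$ rounds produces a term of order $\eta^3T\cdot\mathrm{poly}(d,G,R,\log(dT))$.
\item With the prescribed $\eta\propto V_T^{-1/2}$, this behaves like $T/V_T^{3/2}$. For example, it is linear in $T$ when $V_T=\O(1)$.
\end{itemize}
So your claim that this term is dominated by the $V_T$ term is false, and Step 3 simply hides it under ``lower order''.

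Your premise that a negative term on consecutive moves cannot control movement accumulated over a gap is what needs correcting. Controlling it that way is exactly how the paper proceeds, in three steps:
\begin{itemize}
\item \emph{Barrier rescaling.} The map $x\mapsto\lambda_i(x)^{-1/2}$ is $1$-Lipschitz on $(a_i,b_i)$ (\pref{lem:Lipschitz}). Hence $\norm{\x_t-\x_{\alpha_t}}^2\le4\norm{\w_t-\w_{\alpha_t}}^2$.
\item \emph{Charging the gap to consecutive moves.} Cauchy--Schwarz gives $\norm{\w_t-\w_{\alpha_t}}^2\le(t-\alpha_t)\sum_{s=\alpha_t+1}^{t}\norm{\w_s-\w_{s-1}}^2$. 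Swap sums as in \pref{lem:decom of VbT}; each $s$ is covered by at most one $t$ per coordinate--sign pair. Use $\rho^{(\epsilon)}_{t,i}\le 8d\ln(2dT)$ for all $2d$ pairs, which holds with high probability. The movement contribution is then at most $128\eta d^4G^2\ln(2dT)\,\E[\sum_{t=2}^{T}\norm{\w_t-\w_{t-1}}^2]+\O(1)$.
\item \emph{Cancellation.} Keep the FTRL term $-\frac1\eta\sum_t\norm{\w_t-\w_{t+1}}_{\w_t}^2$. Combined with $\lambda_{t,i}\ge 4/(b_i-a_i)^2$, it is at most $-\frac{4}{\eta\max_i(b_i-a_i)^2}\E[\sum_{t=2}^{T}\norm{\w_t-\w_{t-1}}^2]$. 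This cancels the previous term once $\eta\lesssim 1/(d^2GR\sqrt{\ln(2dT)})$, which the stated $\eta$ is designed to ensure.
\end{itemize}

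With the movement term cancelled, the loss-variation piece only needs $\E[\rho^{(\epsilon)}_{t,i}]\le 4d$, because the increments $\norm{\ellb_s-\ellb_{s-1}}^2$ are deterministic for an oblivious adversary. This gives $16\eta d^4R^2V_T$. The final $d^{7/2}\log T$ then comes from the penalty $2d\log T/\eta$ together with $R=\O(\sqrt d)$. It does not come from an inflated variation term, as in your Step 3 accounting.
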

To the best of our knowledge, this is the \emph{first} gradient-variation regret bound for one-point BLO, albeit with the assumption of a hyper-rectangular feasible domain.
\begin{Remark}
    We adopt \pref{ass:hyper-rectangle} mainly for technical reasons.
    Specifically, since the Hessian of $\mathcal{R}(\cdot)$ remains diagonal for hyper-rectangular domains, its eigenvectors coincide with the standard basis vectors, which enables the optimistic term $\tgb_t$ to store historical data component-wise, making our analysis in \pref{sec:cvx-base} implementable.
\end{Remark}
\begin{Remark}
Note that \pref{alg:1p-BLO} is not directly implementable due to a causality issue: the estimator $\tgb_{t}$ depends on $\w_t$, which is in turn determined by $\tgb_{t}$.
However, we can determine $\tgb_{t}$ and $\w_t$ by solving $d$ independent equations involving strictly monotonic functions.
Consequently, $\w_t$ can be efficiently approximated via binary search to a precision of $1/T$ within $\O(\log T)$ iterations, incurring only a negligible additive $\O(1)$ term in the final regret.
We refer readers to \pref{app:practical-1p-BLO} for a practical algorithm.
\end{Remark}

\begin{Remark}
    Compared with the gradient-variance estimator of \citet{hazan2009stochastic,hazan2011betterBCO} that maintains reservoir-based historical observations to estimate the running mean of the loss vectors,
    our objective is different: gradient-variation regret concerns the cumulative differences between loss vectors, rather than their deviations from a running mean.
    Accordingly, \pref{alg:1p-BLO} stores the most recent function value for each coordinate-sign pair and uses it as a reference in our estimator~\eqref{eq:chiangestimator1p}. 
    This design yields the decomposition \smash{$\|\g_t-\tgb_t\|^2\lesssim \|\ellb_t-\ellb_{\alpha_t}\|^2+\|\x_t-\x_{\alpha_t}\|^2$}.
    The summation of the first term above gives the non-consecutive gradient variation, which can be further converted into gradient variation $V_T$, while the second term is handled through the stability analysis of the optimistic methods.
\end{Remark}
Finally, we stress that establishing \mbox{gradient-variation} regret for one-point BLO is pretty challenging, as it reduces to the open problem of obtaining squared \mbox{path-length} in multi-armed bandits~\citep{wei2018more}. 
As a first step, we demonstrate the versatility of our approach on \mbox{hyper-rectangular} domains, deferring the general case to future research.

\section{Extensions}
\label{sec:extension}
In this section, we demonstrate the effectiveness of our methods in more challenging tasks, including dynamic and universal regret minimization.

\subsection{Dynamic Regret Minimization}
\label{subsec:dynamic}
In this part, we extend our technique to the \emph{dynamic regret} setting~\citep{zinkevich2003online} in two-point BCO. Specifically, dynamic regret compares the learner's performance with a sequence of \mbox{time-varying} comparators \smash{$\{\v_t\}_{t=1}^T$} and is defined as follows:
\begin{equation}
    \label{eq:dreg}
    \DReg_T \define \frac{1}{2}\sumT (f_t(\x_{t,1})+f_t(\x_{t,2})) - \sumT f_t(\v_t).
\end{equation}
Ideally, dynamic regret guarantees scale with the path length $P_T \triangleq \sum_{t=2}^T \|\v_t - \v_{t-1}\|$, which captures the environmental non-stationarity and is unknown to the learner.
In the full-information setting, \citet{zhang2018adaptive} obtained the minimax-optimal dynamic regret bound \smash{$\O(\sqrt{T(1+P_T)})$}, which was later strengthened for smooth functions to the gradient-variation dynamic regret bound \smash{$\O(\sqrt{(1+V_T+P_T)(1+P_T)})$}~\citep{zhao2024adaptivity}. For two-point BCO, \citet{zhao2021bandit} first established a dynamic regret bound of \smash{$\O(d\sqrt{T(1+P_T)})$}, which was later sharpened to \smash{$\O(\sqrt{dT(1+P_T)})$} by \citet{he2025non}. However, dynamic gradient-variation regret remains unexplored in two-point BCO, where bandit feedback makes it difficult to exploit gradient variation while simultaneously tracking a time-varying comparator sequence.

Following \citet{zhao2020dynamic}, we adopt an online ensemble framework with a two-layer meta-base architecture to accommodate environmental non-stationarity, where multiple base learners explore the environment and a meta learner tracks the best base learner. To handle bandit feedback, we implement the base learners by running \pref{alg:chiang} with different configurations on carefully designed surrogate functions, while the meta learner uses \ohedge~\citep{rakhlin2013online} to aggregate them.

The key technical step is to use \pref{lem:bandit-VT-correction-cvx} to reduce the non-consecutive gradient variation $\Vb_T$~\eqref{eq:VbT} induced by bandit feedback to the standard gradient variation $V_T$~\eqref{eq:VT}, while allowing the ensemble analysis to adapt to the unknown path length $P_T$ in non-stationary environments. This leads to \emph{the first} gradient-variation dynamic regret guarantee with unknown path length in the two-point BCO setting. We state the resulting guarantee informally below and defer the complete version (\pref{alg:dynamic_regret}), the formal theorem, and the proof to \pref{app:dynamic-cvx}.
\begin{Theorem}[Informal]
    \label{thm:dynamic-cvx-informal}
    Under Assumptions~\ref{ass:boundedness}-\ref{ass:Smoothness}, for convex functions and oblivious comparators $\{\v_t\}_{t=1}^T$, \pref{alg:dynamic_regret} in \pref{app:dynamic-cvx} achieves the dynamic regret bound
    \begin{align*}
        \E[\DReg_T] \le \Ot\big(\sqrt{d^3(1+P_T+V_T)(1+P_T)}\big),
    \end{align*}
    without prior knowledge of the path length $P_T$.
\end{Theorem}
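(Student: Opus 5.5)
The proof follows the online-ensemble template of \citet{zhao2020dynamic,zhao2024adaptivity}, with bandit feedback handled by sharing the two-point queries of \pref{alg:chiang}.

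\textbf{Construction.} In each round the meta learner combines base decisions $\w_{t,j}$ into $\w_t=\sum_j p_{t,j}\w_{t,j}$ and queries $\x_{t,1},\x_{t,2}=\w_t\pm\delta\eb_{i_t}$. It builds a single estimator $\g_t$ and optimism $\tgb_t$ exactly as in \pref{eq:chiangestimator}. Every base learner $j$ runs the optimistic projected update of \pref{alg:chiang} on the linear surrogate $\inner{\g_t}{\w}$, with a fixed step size $\eta_j$ drawn from the geometric grid $\{2^{j}\eta_{\min}\}$ covering $[\eta_{\min},\eta_{\max}]$. This requires only $\O(\log T)$ learners and no knowledge of $P_T$. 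The meta learner runs \ohedge on the surrogate losses $\ell_{t,j}=\inner{\g_t}{\w_{t,j}}$ with optimism $m_{t,j}=\inner{\tgb_t}{\w_{t,j}}$. Its learning rate is tuned adaptively to $\sum_t\max_j(\ell_{t,j}-m_{t,j})^2\le 4R^2\|\g_t-\tgb_t\|^2$, plus a correction term in the style of \citet{zhao2024adaptivity}.

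\textbf{Regret decomposition.} By smoothness and the choice $\delta=\O(1/(d^2LTR))$, the two-point loss differs from $f_t(\w_t)$ by only $\O(1)$. Convexity together with the near-unbiasedness $\E[\g_t\mid\mathcal{F}_{t-1}]\approx\nabla f_t(\w_t)$ then gives
\[
\E[\DReg_T]\lesssim \E\Big[\sumT\inner{\g_t}{\w_t-\w_{t,j}}\Big]+\E\Big[\sumT\inner{\g_t}{\w_{t,j}-\v_t}\Big]+\O(1).
\]
The comparators are oblivious, and the shrinkage by $\xi$ costs $\O(1)$.

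\textbf{Bounding the two terms.} The meta regret of \ohedge is $\O\big(\sqrt{\log\log T\cdot(1+\Vb_T)}\big)$, minus a negative stability term $-\sum_t\|\p_t-\p_{t-1}\|_1^2$. The base regret of optimistic OGD with fixed $\eta_j$ against a moving comparator is at most
\[
\frac{R^2+RP_T}{\eta_j}+\eta_j\Vb_T-\frac{1}{\eta_j}\sum_t\|\w_{t,j}-\w_{t-1,j}\|^2 .
\]
Next I apply \pref{lem:bandit-VT-correction-cvx}:
\[
\E[\Vb_T]\le 16d^3L^2\log(dT)\,\E\Big[\sum_t\|\w_t-\w_{t-1}\|^2\Big]+8d^3V_T\log d+\O(1).
\]
The combined movement splits as
\[
\|\w_t-\w_{t-1}\|^2\lesssim \sum_j p_{t,j}\|\w_{t,j}-\w_{t-1,j}\|^2+R^2\|\p_t-\p_{t-1}\|_1^2 .
\]

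\textbf{Main obstacle.} The hard step is cancelling this movement term. A single base learner's negative term only controls its own movement, not the weighted sum over all $j$. I would follow \citet{zhao2024adaptivity}: add the correction $\lambda\|\w_{t,j}-\w_{t-1,j}\|^2$ with $\lambda\approx d^3L^2\log(dT)$ to the meta learner's losses. This turns the weighted sum into the meta learner's responsibility, at an extra cost of $\O(\lambda)$ times the meta stability. It also requires $\eta_{\max}\lesssim 1/\sqrt{d^3L^2\log(dT)}$, so that every base learner's own negative term dominates its contribution.

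\textbf{Conclusion.} After cancellation, $\E[\Vb_T]\lesssim d^3(V_T\log d+1)$. Choosing the grid index $j^\star$ with $\eta_{j^\star}\approx\sqrt{(R^2+RP_T)/\E[\Vb_T]}$, clipped at $\eta_{\max}$, yields
\[
\Ot\Big(\sqrt{d^3(1+V_T+P_T)(1+P_T)}\Big).
\]
The $P_T$ inside the first factor comes from the clipping at $\eta_{\max}$. This holds without prior knowledge of $P_T$, because the meta learner selects $j^\star$ implicitly.
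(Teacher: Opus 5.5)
Your plan follows the paper's proof. The paper runs \ohedge over $\O(\log T)$ copies of \pref{alg:chiang}, each with a fixed step size from a geometric grid capped at $\eta_{\max}\approx 1/(L\sqrt{d^3\log(dT)})$. Its meta losses $\inner{\g_t}{\w_{t,j}}$ and optimism $\inner{\tgb_t}{\w_{t,j}}$ both carry the correction $\lambda\|\w_{t,j}-\w_{t-1,j}\|^2$; putting it in both keeps $\ell_{t,j}-m_{t,j}$ unchanged. It then uses \pref{lem:bandit-VT-correction-cvx} to convert $\E[\Vb_T]$, splits $S_T^{\w}\le 2S_T^{\mix}+2R^2S_T^{\p}$, and cancels.

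\textbf{The scale of $\lambda$ is wrong.} The factor $16d^3L^2\log(dT)$ from \pref{lem:bandit-VT-correction-cvx} never enters the regret bare. In the base regret it is multiplied by $\eta_{j^\star}$. In the meta regret it sits inside $\sqrt{R^2d^3L^2\log(dT)\,\E[S_T^{\w}]}$, which AM-GM turns into $\O(R^2/\eta_{j^\star})+\O(d^3L^2\log(dT)\,\eta_{j^\star})\,\E[S_T^{\w}]$. So $\lambda$ only has to dominate $c\,d^3L^2\log(dT)\,\eta_{j^\star}$. On the other side, the correction injects $+\lambda S_{T,j^\star}^{\w}$ into the meta regret, and this must be absorbed by the best base learner's $-\frac{1}{4\eta_{j^\star}}S_{T,j^\star}^{\w}$, i.e.\ $\lambda\le 1/(4\eta_{j^\star})$. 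Since $j^\star$ is unknown, you need $c\,d^3L^2\log(dT)\,\eta_{\max}\le\lambda\le 1/(4\eta_{\max})$. This is feasible only for $\eta_{\max}\lesssim 1/(L\sqrt{d^3\log(dT)})$, and at that cap it forces $\lambda\approx L\sqrt{d^3\log(dT)}$. The paper takes $\gamma=5L\sqrt{d^3\log(dT)}$ and $\eta_{\max}=1/(20L\sqrt{d^3\log(dT)})$.

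Your $\lambda\approx d^3L^2\log(dT)$ exceeds $1/(4\eta_{\max})$ by a factor $\approx L\sqrt{d^3\log(dT)}$. Repairing that would force $\eta_{\max}\lesssim 1/(d^3L^2\log(dT))$. The clipping cost $(R^2+RP_T)/\eta_{\max}$ would then be $\approx d^3\log(dT)(1+P_T)$, which exceeds the claimed $\Ot(\sqrt{d^3}(1+P_T))$ whenever $V_T$ is small.

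\textbf{The meta learning rate needs a floor.} Your $\O(\sqrt{\log\log T\,(1+\Vb_T)})$ meta bound corresponds to an $\O(1)$ floor, which gives too weak a stability term. The $2R^2S_T^{\p}$ part of the movement carries coefficient $\approx R^2d^3L^2\log(dT)\,\eta_{j^\star}\approx R^2L\sqrt{d^3\log(dT)}$. The paper therefore sets $\varepsilon_t=\sqrt{\ln N/(C_0^2+\sum_{s<t}\|\ellb_s-\m_s\|_\infty^2)}$ with $C_0=16R^2L\sqrt{d^3\log(dT)\ln N}$. This yields the stability term $-\frac{C_0}{4\sqrt{\ln N}}S_T^{\p}$ at an additive price $5\sqrt{\ln N}\,C_0=\Ot(\sqrt{d^3})$. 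The same floor keeps the term $\frac{\sqrt{\ln N}}{C_0}\max_t\|\ellb_t-\m_t\|_\infty^2\le\frac{4d^2G^2R^2\sqrt{\ln N}}{C_0}$ from \pref{lem:ohedge} at $\O(\sqrt d)$ rather than $\O(d^2)$.

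With $\lambda$ and $C_0$ both at the $L\sqrt{d^3\log(dT)}$ scale, the rest of your argument goes through unchanged and gives the stated bound.
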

Compared with the \smash{$\O\big(\sqrt{(1+P_T+V_T)(1+P_T)}\big)$} bound~\citep{zhao2020dynamic} under full-information feedback, \pref{thm:dynamic-cvx-informal} achieves the same dependence on $P_T$ and $V_T$ in expectation, up to logarithmic factors. On the other hand, compared with the worst-case optimal \smash{$\O(\sqrt{dT(1+P_T)})$} bound~\citep{he2025non}, our result incurs an additional factor of $d$ in the dimension dependence. Whether this $\O(d)$ gap can be eliminated for gradient-variation dynamic regret remains an open problem.

\subsection{Universal Regret Minimization}
\label{subsec:universal}
In this part, we study \emph{universal regret}~\citep{zhao2025adaptivity} for two-point BCO, where the goal is to match the optimal performance for various function types and curvatures \emph{without} prior knowledge of these properties.
Specifically, for $\mathcal{F}_{\lin}$ (for linear functions), $\mathcal{F}_{\cvx}$ (for convex functions), and $\mathcal{F}^\lambda_{\scvx}$ (for $\lambda$-strongly convex functions), a universal online learning algorithm $\mathcal{A}$ aims to attain the following universal regret guarantee:
\begin{equation}
    \label{eq:universal-goal}
    \Reg_T(\A, \{f_t\}_{t=1}^T) \lesssim
    \begin{cases}
        \Reg_T(\A_\scvx, \F^\lambda_\scvx), \\[1mm]
        \Reg_T(\A_{\cvx}, \F_{\cvx}), \\[1mm]
        \Reg_T(\A_{\lin}, \F_{\lin}),  \\[1mm]
    \end{cases}
\end{equation}
where \smash{$\A_\scvx$, $\A_{\cvx}$, $\A_{\lin}$} are the (optimal) algorithms designed for $\F^\lambda_\scvx$, $\F_{\cvx}$, and $\F_{\lin}$, respectively. That is, $\A$ is supposed to have regret guarantees comparable to those of the optimal algorithms designed for the corresponding function classes. In OCO, universal regret has been extensively studied~\citep{van2016metagrad,wang2019adaptivity,zhang2022simple,yan2023universal}, and \citet{yan2024simple} achieved the optimal universal gradient-variation regret. However, universal gradient-variation regret remains unexplored in two-point BCO, where bandit feedback further complicates the adaptation to unknown curvature.

For adaptation to unknown curvature, we follow \citet{yan2024simple} and instantiate an online ensemble algorithm with \pref{alg:chiang} as base learners on carefully designed surrogate functions, while using \omlprod~\citep{wei2016tracking} as the meta learner on linearized losses; see \pref{alg:UniGrad-Bregman-1grad} for the complete algorithm.

To analyze this ensemble framework under bandit feedback, we further provide a new Bregman-divergence-based decomposition of the non-consecutive gradient variation $\Vb_T$~\eqref{eq:VbT} in \pref{lem:bandit-VT-bregman-cvx}, with the proof deferred to \pref{app:universal}. 
\begin{Lemma}
    \label{lem:bandit-VT-bregman-cvx}
    Under Assumptions \ref{ass:boundedness}, \ref{ass:Lipschitzness}, \ref{ass:Smoothness++}, for convex functions, \pref{alg:chiang} satisfies the following guarantee
    \begin{align*}
        \E\mbr{\Vb_T} \le{}&48d^3 L \log(dT)\cdot \E\mbr{\sumT \D_{f_t}(\ws, \w_t)}\\
        &+12d^3 V_T\log d + \O(1),
    \end{align*}
where $\D_\psi(\x, \y) \define \psi(\x) - \psi(\y) - \langle\nabla \psi(\y), \x - \y\rangle$ denotes the Bregman divergence induced by a strictly convex and differentiable function $\psi: \mathcal{X} \to \mathbb{R}$.
\end{Lemma}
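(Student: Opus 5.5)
We follow the proof of \pref{lem:bandit-VT-correction-cvx}, changing only how the stability term is handled. We use a Bregman divergence to a fixed comparator $\ws$ in place of the iterate movement $\|\w_t-\w_{t-1}\|^2$.

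The starting point is the non-consecutive structure $\Vb_T = d^2\sumT (v_t-v_{\alpha_t})^2$. Under smoothness, $v_t = \nabla_{i_t} f_t(\w_t) + \O(L\delta)$, so up to $\O(1)$ terms (recall $\delta = 1/(2d^2LTR)$) it suffices to bound $d^2\sumT(\nabla_{i_t} f_t(\w_t)-\nabla_{i_t} f_{\alpha_t}(\w_{\alpha_t}))^2$. Fix $\ws \in \arg\min_{\x\in\X}\sumT f_t(\x)$. We insert $\ws$ and split each difference into three pieces:
\[
\big(\nabla_{i_t} f_t(\w_t)-\nabla_{i_t} f_t(\ws)\big) + \big(\nabla_{i_t} f_t(\ws)-\nabla_{i_t} f_{\alpha_t}(\ws)\big) + \big(\nabla_{i_t} f_{\alpha_t}(\ws)-\nabla_{i_t} f_{\alpha_t}(\w_{\alpha_t})\big).
\]
Applying $(a+b+c)^2\le 3(a^2+b^2+c^2)$ gives the factor $3$, which accounts for $12 = 3\cdot 4$ and $48 = 3\cdot 16$ relative to \pref{lem:bandit-VT-correction-cvx}.

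For the middle term, we telescope $\nabla f_t(\ws)-\nabla f_{\alpha_t}(\ws)$ over the rounds $s\in(\alpha_t,t]$. By Cauchy--Schwarz, its square is at most $(t-\alpha_t)\sum_s(\nabla_{i_t}f_s(\ws)-\nabla_{i_t}f_{s-1}(\ws))^2$. Exchanging the order of summation gives $\sum_s\sum_i \rho_{s,i}(\cdot)^2$, as in \pref{lem:decom of VbT}. The coupling between $\rho_{s,i}$ and the gradient difference at the fixed point $\ws$ is handled exactly as before, by pulling out $\max_i \rho_{s,i}$. This actually decouples cleanly, because $\ws$ is deterministic given the oblivious losses. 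The Coupon Collector bound $\E[\max_i\rho_{s,i}] = \O(d\log d)$ then yields the $12d^3 V_T\log d$ term.

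For the first and third terms, we use co-coercivity of convex $L$-smooth functions, valid by \pref{ass:Smoothness++} since smoothness holds on the enlarged set $\X^+$:
\[
\|\nabla f_t(\w_t)-\nabla f_t(\ws)\|^2\le 2L\,\D_{f_t}(\ws,\w_t).
\]
The term $\nabla_{i_t}f_t(\w_t)-\nabla_{i_t}f_t(\ws)$ has coordinate $i_t$ chosen independently of $\w_t$ (because $\w_t$ is $\F_{t-1}$-measurable). Hence its conditional expectation is $\frac1d\|\cdot\|^2\le \frac{2L}{d}\D_{f_t}(\ws,\w_t)$, so this piece contributes only $\O(dL)\sum\D$.

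The main obstacle is the third term, indexed by the past round $\alpha_t$. Here $i_t$ and $\alpha_t$ are coupled, and one round $s$ may serve as $\alpha_t$ for several later $t$. We reindex by $s=\alpha_t$: round $s$ is charged once for each later $t$ whose first return to coordinate $i_s$ is $s$. Together with the initialization rounds $\alpha_t=0$, which are absorbed into $\O(1)$, this charge comes to at most one per $s$ plus boundary effects. However, the coordinate charged at round $s$ is $i_s$, and that coordinate is correlated with $\w_s$ only through the past. If the reindexing fails to give a clean one-to-one charging, the fallback is the crude bound $w_i^2\le\|\w\|^2$ together with the gap bound $\max_i\rho\le \O(d\log(dT))$, which holds with high probability $1-1/T$ by the CCP tail; the failure event contributes $\O(1)$ via boundedness. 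That fallback gives $d^2\cdot d\log(dT)\cdot 2L\,\D_{f_s}(\ws,\w_s)$ up to constants, which matches the stated $48d^3L\log(dT)$ coefficient. This suggests the crude route is the intended one, mirroring the $16d^3L^2\log(dT)$ term in \pref{lem:bandit-VT-correction-cvx}. Summing the three contributions and taking expectations completes the bound.
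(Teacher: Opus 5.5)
Your proof is correct and proves more than the statement, but it is organized differently from the paper's. The paper telescopes first. It applies the decomposition of \pref{lem:decom of VbT} to the whole non-consecutive difference, obtaining $d^2\sumT\sum_i\rho_{t,i}(\nabla_i f_t(\w_t)-\nabla_i f_{t-1}(\w_{t-1}))^2$, and only then inserts $\ws$ into each consecutive difference. The two Bregman pieces therefore inherit the random weights $\rho_{t,i}$, which are correlated with $\w_t$. The paper handles them as in \pref{lem:bandit-VT-correction-cvx}: the bad event $Q$ and the cap $\bar\rho=4d\log(dT)$, then $\sum_i(\cdot)_i^2=\|\cdot\|^2\le 2L\D_{f_t}(\ws,\w_t)$. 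That is exactly where $48d^3L\log(dT)=3d^2\cdot 2\cdot 4d\log(dT)\cdot 2L$ comes from.

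You insert $\ws$ before telescoping, so only the deterministic middle term involves $\rho$ (handled, as in the paper, by $\max_i\rho$ and the coupon-collector bound). The Bregman pieces become single-round quantities at a freshly drawn coordinate.

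Your hedging in the last paragraph is unnecessary. For $s\ge1$, $\alpha_t=s$ forces $t$ to be the next occurrence of $i_s$, so $t\mapsto\alpha_t$ is injective on $\{t:\alpha_t\ge1\}$, with $i_t=i_{\alpha_t}$. Rounds with $\alpha_t=0$ contribute exactly zero to the third term because $f_0\equiv0$. Moreover, $i_s$ is independent of $\w_s$, which is a function of $i_1,\ldots,i_{s-1}$ (independent, not merely ``correlated through the past''). The summand is nonnegative, so you may drop the future-dependent indicator that $s$ has a successor, which gives
\begin{equation*}
\E\mbr{\sumT\sbr{\nabla_{i_t}f_{\alpha_t}(\ws)-\nabla_{i_t}f_{\alpha_t}(\w_{\alpha_t})}^2}\le\frac{1}{d}\sum_{s=1}^T\E\mbr{\|\nabla f_s(\ws)-\nabla f_s(\w_s)\|^2}\le\frac{2L}{d}\sum_{s=1}^T\E\mbr{\D_{f_s}(\ws,\w_s)}.
\end{equation*}
Together with your first term, the Bregman coefficient is $12dL$. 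This is smaller than the stated one by a factor of order $d^2\log(dT)$, so the lemma follows a fortiori, with no high-probability argument.

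Your ``fallback'' has no counterpart in your own decomposition: no $\rho$ appears in your first or third term. Matching the stated coefficient is not evidence that the crude route is intended; that coefficient is an artifact of telescoping first. The paper's ordering buys uniformity with \pref{lem:bandit-VT-correction-cvx}, where the stability target $\|\w_t-\w_{t-1}\|^2$ is consecutive and telescoping is genuinely needed. Your ordering exploits the fact that the anchor $\ws$ is fixed.
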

Compared with $\|\w_t-\w_{t-1}\|^2$ in \pref{lem:bandit-VT-correction-cvx}, $\D_{f_t}(\ws, \w_t)$ in \pref{lem:bandit-VT-bregman-cvx} is more tractable for universal regret in the general convex case, because the corresponding negative term naturally appears in the linearization of the regret: $f_t(\w_t)-f_t(\ws)=\inner{\nabla f_t(\w_t)}{\w_t-\ws}-\D_{f_t}(\ws,\w_t).$

Using the online ensemble algorithm and the new decomposition of the non-consecutive gradient variation in \pref{lem:bandit-VT-bregman-cvx}, we establish \emph{the first} gradient-variation universal regret for linear, convex, and strongly convex functions in two-point BCO. We give an informal theorem below and defer the formal version (\pref{thm:2point-BCO-Bregman}) and its proof to \pref{app:universal}.
\begin{Theorem}[Informal]
    \label{thm:2point-BCO-Bregman-informal}
    Under Assumptions \ref{ass:boundedness}, \ref{ass:Lipschitzness}, \ref{ass:Smoothness++}, \pref{alg:UniGrad-Bregman-1grad} in \pref{app:universal} achieves the following gradient-variation universal regret:
        \begin{equation*}
        \E[\Reg_T] \le
        \begin{cases}
            \O\sbr{\frac{d}{\lambda}\log(dV_T)}, & \textnormal{($\lambda$-strongly convex)}, \\[2mm]
            \Ot\sbr{\sqrt{d^3V_T}+ d^3}, & \textnormal{(convex)},\\[2mm]
            \O\sbr{\sqrt{d^3V_T}},& \textnormal{(linear)}.
        \end{cases}
    \end{equation*}
\end{Theorem}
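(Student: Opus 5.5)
The plan is to follow the UniGrad-style ensemble analysis of \citet{yan2024simple}, but to charge every bandit-induced term to the non-consecutive variation $\Vb_T$, which is then controlled by \pref{lem:bandit-VT-bregman-cvx} or \pref{lem:str_alpha}.

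\textbf{Regret decomposition.} For any comparator $\ws$ I split the expected regret into three parts:
\begin{itemize}[left=2pt, itemsep=-2pt, topsep=-3pt]
    \item the smoothing error $\frac12(f_t(\x_t)+f_t(\x_t'))-f_t(\w_t)\le L\delta^2$, which is $\O(1)$ in total by the choice of $\delta$;
    \item the meta-regret of \omlprod on the linearized losses $\inner{\g_t}{\w_{t,i}}$;
    \item the base-regret of the best-configured base learner, i.e., \pref{alg:chiang} run on the appropriate surrogate.
\end{itemize}
By unbiasedness, $\E[\inner{\g_t}{\w_t-\ws}]\approx\E[\inner{\nabla f_t(\w_t)}{\w_t-\ws}]$ up to $\O(1)$. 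This linear term equals $f_t(\w_t)-f_t(\ws)+\D_{f_t}(\ws,\w_t)$ by the identity noted after \pref{lem:bandit-VT-bregman-cvx}. In the strongly convex case the Bregman term is at least $\frac\lambda2\|\w_t-\ws\|^2$.

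\textbf{Meta and base guarantees.} Because \omlprod uses optimism $\tgb_t$, it gives a second-order meta-regret $\Ot\big(\sqrt{\sum_t \inner{\g_t-\tgb_t}{\w_t-\w_{t,i}}^2}\big)$. I will bound this in two ways.
\begin{itemize}[left=2pt, itemsep=-2pt, topsep=-3pt]
    \item For curvature-free use, it is at most $\Ot(R\sqrt{\Vb_T})$.
    \item For strongly convex functions, I use the self-confident form $\sqrt{\sum_t\|\g_t-\tgb_t\|^2\|\w_t-\w_{t,i}\|^2}$. Its expectation is controlled by $\max_t\E\|\g_t-\tgb_t\|^2\le 8dG^2$ (\pref{lem:str_alpha}). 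AM–GM then pays it with a negative $\lambda\|\w_t-\w_{t,i}\|^2$ term at cost $\O(\frac{d}{\lambda}\log T)$.
\end{itemize}
For the base learners:
\begin{itemize}[left=2pt, itemsep=-2pt, topsep=-3pt]
    \item The strongly convex learners with $\eta_t=1/(\lambda t)$ on a grid of $\lambda$ follow \pref{thm:scvx-base}, giving $\O(\frac d\lambda\log(dV_T))$.
    \item The convex and linear learners with the self-tuned $\eta_t$ of \pref{thm:cvx-base} give $\Ot(\sqrt{\Vb_T})$ plus negative stability terms.
\end{itemize}
To cover the unknown modulus, I discretize $\lambda$ geometrically. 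This costs only a $\log\log T$ factor in the meta-regret, which the paper treats as a constant.

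\textbf{Converting $\Vb_T$ and closing.} In the convex case, all positive terms are of the form $\Ot(\sqrt{\E\Vb_T})$. Applying \pref{lem:bandit-VT-bregman-cvx} gives
\[
\sqrt{\E\Vb_T}\le\sqrt{48d^3L\log(dT)\,\E\textstyle\sum_t\D_{f_t}(\ws,\w_t)}+\sqrt{12d^3V_T\log d}+\O(1).
\]
By AM–GM, the first part is at most $\frac12\E\sum_t\D_{f_t}(\ws,\w_t)+\Ot(d^3)$. The negative Bregman term from the linearization cancels it, leaving $\Ot(\sqrt{d^3V_T}+d^3)$. For linear functions $\D_{f_t}\equiv0$, so the additive $d^3$ disappears and $\O(\sqrt{d^3V_T})$ remains.

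\textbf{Expected main obstacle.} The hardest step is coordinating the cancellations. The single negative Bregman budget, together with the base-level stability terms $\|\w_t-\w_{t-1}\|^2$, must absorb both the meta-level and base-level $\Vb_T$ contributions. At the same time the meta learner must produce a $\frac{d}{\lambda}$ rather than $\frac{d^2}{\lambda}$ rate, which forces using the per-round bound of \pref{lem:str_alpha} rather than the cruder \pref{lem:bandit-VT-bregman-cvx}. I would handle this with a Bregman-based surrogate and constants split evenly among the three uses.
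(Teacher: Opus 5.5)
Your convex and linear cases follow the paper's route: optimistic \omlprod on $\inner{\g_t-\tgb_t}{\w_t-\w_{t,i}}$, then \pref{lem:bandit-VT-bregman-cvx}, then AM--GM against the negative Bregman term from linearization. The strongly convex meta-regret step, however, fails as written. After Cauchy--Schwarz you need $\E\bigl[\|\g_t-\tgb_t\|^2\|\w_t-\w_{t,i}\|^2\bigr]\lesssim d\,\E\|\w_t-\w_{t,i}\|^2$, and \pref{lem:str_alpha} does not give this. That lemma bounds only the unconditional mean of $\|\g_t-\tgb_t\|^2$, while both factors depend on the past coordinate draws. Condition on the history before $i_t$ is drawn, which fixes $\w_t$, $\w_{t,i}$ and $\tgb_t$, and let $q_{t,j}$ be the central difference of $f_t$ at $\w_t$ along $\eb_j$. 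Then $\E_t\|\g_t-\tgb_t\|^2=d\sum_{j=1}^d(q_{t,j}-\gt_{t,j})^2$. Since $\|\tgb_t\|^2$ is only bounded by $dG^2$ pathwise (property (v) of \pref{lem:alpha}), this can be of order $d^2G^2$. Your route therefore justifies only an $\O(d^2/\lambda)$ meta-regret, which is exactly the dependence the theorem is meant to improve.

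The missing idea is \pref{lem:meta-gradient-inner-product}: do not apply Cauchy--Schwarz to the bandit part. Since $\g_t=\tgb_t+d(q_{t,i_t}-\gt_{t,i_t})\eb_{i_t}$, the $d^2$ factor multiplies only the single coordinate $(\w_t-\w_{t,i})_{i_t}^2$. Averaging over $i_t$ then gives $d\sum_j(q_{t,j}-\gt_{t,j})^2(\w_t-\w_{t,i})_j^2\le 4dG^2\|\w_t-\w_{t,i}\|^2$. The paper takes $m_{t,i}=0$ for the strongly convex experts and gets $10dG^2$, applying Cauchy--Schwarz only to the $\tgb_t$ part. Also, the AM--GM cost must come out as $\O(d/\lambda)$ up to $\log\log T$ factors. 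The $\O(\frac{d}{\lambda}\log T)$ you state would by itself reduce the result to the worst-case rate.

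The strongly convex base regret also cannot simply ``follow \pref{thm:scvx-base}''. In the ensemble, $\g_t$ is built at the combined point $\w_t$, so \pref{lem:bandit-VT-correction-cvx} produces $\sum_t\|\w_t-\w_{t-1}\|^2$. The $\is$-th learner's own stability terms in $\w_{t,\is}$ do not control this, and bounding it crudely inside the logarithm brings back $\log T$. The paper handles it as follows:
\begin{itemize}
\item It runs the $\lambda_i$-learners on $\hsc_{t,i}(\x)=\inner{\g_t}{\x}+\frac{\lambda_i}{4}\|\x-\w_t\|^2$ with optimism $\tgb_t+\frac{\lambda_\is}{2}(\w_{t-1,\is}-\w_{t-1})$.
\item It bounds the surrogate variation by \pref{lem:bandit-VT-bregman-cvx} plus $\lambda_\is^2\|\w_{t,\is}-\w_t\|^2$ terms, and extracts the Bregman sum from the logarithm.
\item It splits strong convexity as $\frac12\D_{f_t}(\ws,\w_t)+\frac{\lambda}{4}\|\w_t-\ws\|^2$. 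The first half cancels the extracted Bregman sum. The second supplies, via the surrogate, the $-\frac{\lambda_\is}{4}\|\w_t-\w_{t,\is}\|^2$ that absorbs both the meta AM--GM term and the surrogate's $\lambda_\is^2$ terms.
\end{itemize}
Your closing paragraph gestures at this, but this is where the real work lies.

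A minor point: for linear losses, \pref{lem:bandit-VT-bregman-cvx} leaves a $\sqrt{\log d}$ factor. The stated $\O(\sqrt{d^3V_T})$ needs the linear-specific bound $\E[\Vb_T]\le 2d^3V_T+\O(1)$ of \pref{lem:bandit-VT-bregman-lin}.
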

Compared with the curvature-aware bounds, Theorem~\ref{thm:2point-BCO-Bregman-informal} matches: \rom{1} the \smash{$\O\big(\sqrt{d^3V_T}\big)$} bound in the linear setting~\citep{chiang2013beating};
\rom{2} the \smash{$\O\big(\frac{d}{\lambda}\log(dV_T)\big)$} regret in the strongly convex case in \pref{thm:scvx-base};
and \rom{3} the \smash{$\Ot(\sqrt{d^3V_T})$} regret in the general convex case in \pref{thm:cvx-base}, up to an additive $\Ot(d^3)$ term. As a byproduct, we also derive the \emph{first} worst-case universal regret bounds for linear, convex, and strongly convex losses in two-point BCO, with the formal statement deferred to \pref{thm:2point-BCO-worst}.


\section{Conclusion}
\label{sec:conclusion}
In this work, we investigate gradient-variation regret in two-point BCO.
By providing a refined analysis of the non-consecutive structure, we achieve $\Ot(d^{\frac{3}{2}}\sqrt{V_T})$ and $\O(\frac{d}{\lambda}\log(dV_T))$ for convex and $\lambda$-strongly convex functions, improving the best known results by factors of almost $\sqrt{d}$ and $d$, respectively.
We also establish the first \mbox{gradient-variance} and \mbox{small-loss} bounds for two-point BCO, including the first problem-dependent guarantees that recover the minimax optimal $\O(\sqrt{dT})$ regret.
Furthermore, we extend our techniques to \mbox{one-point} BLO, achieving the first \mbox{gradient-variation} regret bound $\Ot(d^{\frac{7}{2}}\sqrt{V_T})$ for \mbox{hyper-rectangular} domains.
We finally validate the effectiveness of our results in more challenging tasks such as dynamic and universal regret minimization.

Several important directions remain open.
First, extending our guarantees to adaptive environments is an important direction.
Second, it remains unknown whether optimal worst-case regret bounds can be attained for exp-concave functions, another important class in online learning.
Finally, our results may also facilitate acceleration for smooth zeroth-order optimization~\citep{nesterov2017random}.

\newpage

\bibliography{online}
\bibliographystyle{icml2026}

\newpage
\onecolumn
\appendix

\section{Properties of Gradient Estimator}
\label{app:g_t}
In this section, we analyze the properties of the gradient estimator in~\pref{eq:chiangestimator}.

Recall that $\alpha_t$ is the largest integer satisfying $0 \leq \alpha_t < t$ and $i_{\alpha_t} = i_t$, where $i_t$ is the direction chosen at round $t$. In addition, recall that $v_t$ is defined as $v_{t} \define \frac{1}{2\delta}(f_t(\w_t+\delta \eb_{i_t}) - f_t(\w_t-\delta \eb_{i_t}))$. To ensure $\alpha_t$ is well-defined for the first occurrence of each direction, we introduce a dummy round $0$ in which every direction $i \in [d]$ is considered sampled. We set $f_0 \equiv 0$, so that $v_0=0$ and $\nabla f_0(\w_0)=\mathbf{0}$ for any dummy point $\w_0$.
\begin{Lemma}
    \label{lem:alpha}
    Under Assumptions \ref{ass:boundedness}-\ref{ass:Smoothness}, choosing the exploration parameter $\delta = \frac{1}{2d^2 L TR}$, the gradient estimator and the optimism defined in \pref{eq:chiangestimator} satisfy the following properties
    \begin{alignat*}{2}
        & \text{\rom{1}} \ \tilde{g}_{t, i_t}=\tilde{g}_{\alpha_t+1, i_t}={v}_{\alpha_t} \qquad \qquad  \qquad \qquad&& \text{\rom{2}} \ |v_{t}|\le G \\
        & \text{\rom{3}} \ \E[v_{t}^2]\le \frac{2G^2}{d}+\O\sbr{\tfrac{1}{d^4T^2}} && \text{\rom{4}} \ \E[{v}_{\alpha_t}^2]\le \frac{2G^2}{d}+\O\sbr{\tfrac{1}{d^4T^2}} \\
        & \text{\rom{5}} \ \|\tgb_{t}\|^2\le dG^2 && \text{\rom{6}} \ \|\g_t\|^2\le 10d^2G^2 \\
        & \text{\rom{7}} \ \E\mbr{\|\g_t\|^2}\le 15dG^2+\O\sbr{\tfrac{1}{d^2T^2}} && \text{\rom{8}} \ \left\|\g_t-\tgb_t\right\|^2\le d^2(\nabla_{i_t} f_{t}(\w_t)-\nabla_{i_t} f_{\alpha_t}(\w_{\alpha_t}))^2+\O\sbr{\tfrac{1}{T}} \\
        & \text{\rom{9}} \ \norm{\g_t-\tgb_t}^2\le 4d^2G^2. && \text{\rom{10}} \ \E\mbr{\left\|\g_t-\tgb_t\right\|^2}\le 8dG^2+\O\sbr{\tfrac{1}{d^2T^2}}\ \text{(\pref{lem:str_alpha} in \pref{sec:scvx-base})} \\
        & \text{\rom{11}} \ \|\E_t[\g_t]-\nabla f_t(\w_t)\|\le \frac{\sqrt{d}L\delta}{2}=\O\sbr{\tfrac{1}{d^{3/2}T}} &&
    \end{alignat*}
\end{Lemma}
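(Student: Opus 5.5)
The plan is to prove the eleven items in an order where each later item reuses earlier ones. Two facts are used throughout. First, conditioned on $\F_{t-1}$, the history up to round $t-1$, both $\w_t$ and $\tgb_t$ are fixed, while $i_t$ is uniform on $[d]$ and independent of them. Second, a smoothness estimate on the finite difference. By the mean value theorem, $v_t=\nabla_{i_t} f_t(\w_t+s\delta\eb_{i_t})$ for some $s\in[-1,1]$. Integrating \pref{ass:Smoothness} along the segment then gives $|v_t-\nabla_{i_t} f_t(\w_t)|\le L\delta/2$.

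\rom{1} follows by induction on $t$. By \pref{eq:chiangestimator}, coordinate $i$ of $\tgb$ changes only in a round where $i$ is sampled, and in that round it is overwritten by $v$. So $\gt_{t,i_t}$ is frozen at the value $v_{\alpha_t}$ set in round $\alpha_t$; the dummy round $0$ gives $v_0=0=\gt_{1,i}$. For \rom{2}, the shrinkage $\xi=\delta/r$ together with $r\B\subseteq\X$ keeps $\w_t\pm\delta\eb_{i_t}$ inside $\X$. \pref{ass:Lipschitzness} then gives $|v_t|\le G$. Since every coordinate of $\tgb_t$ is some past $v_s$, \rom{5} follows.

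For \rom{3}, I will bound $v_t^2\le 2(\nabla_{i_t}f_t(\w_t))^2+2(L\delta/2)^2$ and take $\E_{t-1}$ over the uniform $i_t$. This gives $\frac{2}{d}\|\nabla f_t(\w_t)\|^2+\O(L^2\delta^2)\le \frac{2G^2}{d}+\O(\frac{1}{d^4T^2})$, using $\delta=\frac{1}{2d^2LTR}$.

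\rom{4} is the main obstacle, because $\alpha_t$ is coupled with $i_t$. I plan to write
\[
\E[v_{\alpha_t}^2]=\sum_{s=0}^{t-1}\E\big[v_s^2\,\mathbb{1}\{\alpha_t=s\}\big].
\]
The event $\{\alpha_t=s\}$ equals $\{i_{s+1},\dots,i_{t-1}\neq i_s,\ i_t=i_s\}$. Conditioned on $\F_s$, which includes $i_s$ and hence $v_s$, this event depends only on future independent uniform draws. Its probability is $\frac1d(1-\frac1d)^{t-1-s}$ whatever the value of $i_s$. So each summand factorizes as $\frac1d(1-\frac1d)^{t-1-s}\E[v_s^2]$. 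Applying \rom{3} to each $\E[v_s^2]$ (the $s=0$ term vanishes since $v_0=0$) and using that these weights are probabilities summing to at most $1$ yields \rom{4}. The same symmetry argument gives $\E\|\tgb_t\|^2=\sum_i\E[v_{\alpha_t(i)}^2]=d\,\E[v_{\alpha_t}^2]\le 2G^2+o(1)$, where $\alpha_t(i)$ is the last occurrence of $i$ before $t$; this holds because $i_t$ is uniform and independent of the $\alpha_t(i)$'s.

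The remaining items are then routine.
\begin{itemize}
\item By \rom{1}, $\g_t-\tgb_t=d(v_t-v_{\alpha_t})\eb_{i_t}$. \rom{2} then gives \rom{9} directly.
\item For \rom{10}, bound $(v_t-v_{\alpha_t})^2\le 2v_t^2+2v_{\alpha_t}^2$ and apply \rom{3} and \rom{4}.
\item \rom{6} follows from $\|\g_t\|^2\le 2\|\g_t-\tgb_t\|^2+2\|\tgb_t\|^2$ together with \rom{5} and \rom{9}.
\item For \rom{7}, expand $\|\g_t\|^2=d^2(v_t-v_{\alpha_t})^2+2d(v_t-v_{\alpha_t})v_{\alpha_t}+\|\tgb_t\|^2$. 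Bound the cross term by $d(v_t^2+v_{\alpha_t}^2)$, use $\E\|\tgb_t\|^2\lesssim 2G^2$, and the constants collect to at most $15dG^2$.
\item \rom{8} replaces each $v$ by the corresponding partial derivative, incurring error $L\delta/2$. The cross terms are $\O(d^2GL\delta)=\O(1/T)$.
\item For \rom{11}, compute $\E_{t-1}[\g_t]=\sum_i\frac1d\, d\,(v^{(i)}-\gt_{t,i})\eb_i+\tgb_t=\sum_i v^{(i)}\eb_i$. Here $v^{(i)}$ is the finite difference that would be observed if direction $i$ were sampled. Each coordinate is within $L\delta/2$ of $\nabla_i f_t(\w_t)$, so the error vector has norm at most $\sqrt d L\delta/2$.
\end{itemize}
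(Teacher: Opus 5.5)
Your proposal is correct and follows the paper's proof item by item. The only real variation is in \textit{(iv)}: you condition on $\F_s$ and factor $\E[v_s^2\,\I(\alpha_t=s)]=\frac1d(1-\frac1d)^{t-1-s}\E[v_s^2]$, which lets you reuse \textit{(iii)} directly. The paper instead first replaces $v_{\alpha_t}$ by $\nabla_{i_t} f_{\alpha_t}(\w_{\alpha_t})$ via the central-difference bias, then applies \pref{lem:exp_alphat} with $\P(i_t=i,\alpha_t=s)=\frac{1}{d^2}(1-\frac1d)^{t-s-1}$; both rest on the same independence-of-future-draws fact. In \textit{(vii)} you use the sharper $\E\|\tgb_t\|^2=d\,\E[v_{\alpha_t}^2]\lesssim 2G^2$, whereas the paper uses the deterministic bound \textit{(v)}, and both give at most $15dG^2$.
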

\begin{proof}
To prove property \rom{1}, we first provide the intuition behind the construction of $\tilde{\mathbf{g}}_t$. At each round $t$, only the $i_t$-th coordinate of the optimism $\tilde{\mathbf{g}}_{t+1}$ is updated using the new directional derivative estimate $v_t$, while all other coordinates remain unchanged. Consequently, for any coordinate $i$, $\tilde{g}_{t,i}$ always stores the most recent directional derivative estimate $v_s$ from the last round $s < t$ where $i$ was selected. Specifically, for the coordinate $i_t$ chosen at round $t$, its latest update occurred at round $\alpha_t$, which implies $\tilde{g}_{t,i_t} = v_{\alpha_t}$. Furthermore, since the $i_t$-th direction is not sampled between rounds $\alpha_t+1$ and $t-1$, this coordinate stays unchanged, ensuring $\tilde{g}_{t, i_t} = \tilde{g}_{\alpha_t+1, i_t}$.

    For property \rom{2}, by $G$-Lipschitzness, we have
    \begin{equation*}
        |v_{t}| = \left|\frac{1}{2\delta}(f_t(\w_t+\delta \eb_{i_t}) - f_t(\w_t-\delta \eb_{i_t}))\right| \le G.
    \end{equation*}
    For property \rom{3}, by $L$-smoothness, for any $i\in[d]$, $t\in[T]$, and $\x\in(1-\xi)\X$, where $\xi=\delta/r$, we have
    \begin{equation}
        \label{eq:smoothness1}
        \begin{gathered}
        \sbr{f_t(\x+\delta\eb_{i})-f_t(\x)}+\sbr{f_t(\x)-f_t(\x-\delta\eb_{i})}\le 2\inner{\nabla f_t(\x)}{\delta\eb_{i}}+L\|\delta\eb_{i}\|^2,\\
        \sbr{f_t(\x+\delta\eb_{i})-f_t(\x)}+\sbr{f_t(\x)-f_t(\x-\delta\eb_{i})}\ge 2\inner{\nabla f_t(\x)}{\delta\eb_{i}}-L\|\delta\eb_{i}\|^2,
        \end{gathered}
    \end{equation}
    Therefore, the central-difference bias satisfies
    \begin{equation}
        \label{eq:central-bias}
        \left|\frac{f_t(\x+\delta\eb_i)-f_t(\x-\delta\eb_i)}{2\delta}-\nabla_i f_t(\x)\right|\le\frac{L\delta}{2}.
    \end{equation}
    By \pref{eq:smoothness1} and the inequality $(a+b)^2\le 2a^2+2b^2$, we have
    \begin{equation*}
        \E\mbr{v_{t}^2}=\E\mbr{\sbr{\frac{1}{2\delta}\sbr{f_t(\w_t+\delta \eb_{i_t}) - f_t(\w_t-\delta \eb_{i_t})}}^2}
        \le 2\E\mbr{\inner{\nabla f_t(\w_t)}{\eb_{i_t}}^2}+ \frac{L^2}{2}\delta^2\le\frac{2G^2}{d}+\O\sbr{\frac{1}{d^4T^2}}.
    \end{equation*}
    For property \rom{4}, by \pref{eq:central-bias}, we have
    \begin{equation*}
        \E\mbr{v_{\alpha_t}^2}=\E\mbr{\sbr{\frac{1}{2\delta}\sbr{f_{\alpha_t}(\w_{\alpha_t}+\delta \eb_{i_t}) - f_{\alpha_t}(\w_{\alpha_t}-\delta \eb_{i_t})}}^2}
        \le 2\E\mbr{\inner{\nabla f_{\alpha_t}(\w_{\alpha_t})}{\eb_{i_t}}^2}+ \frac{L^2}{2}\delta^2.
    \end{equation*}
    Denote $I_t=(i_1,\ldots,i_t)$ for any $t\in[T]$. Taking the expectation over $I_t$ and choosing $\c=\mathbf{0}$ in \pref{lem:exp_alphat}, we have
    \begin{align*}
        \E\mbr{\inner{\nabla f_{\alpha_t}(\w_{\alpha_t})}{ \eb_{i_t}}^2}
        = {} & \sum_{s=1}^{t-1} \frac{1}{d^2} \left( 1 - \frac{1}{d} \right)^{t-s-1}\E_{I_{s-1}}\mbr{ \|\nabla f_s(\w_s)\|^2}
        \le \frac{G^2}{d^2} \sum_{s=1}^{t-1} \left( 1 - \frac{1}{d} \right)^{t-s-1} \\
        ={}& \frac{G^2}{d^2} \sum_{k=0}^{t-2} \left( 1 - \frac{1}{d} \right)^k \le \frac{G^2}{d^2} \cdot \frac{1 - \left( 1 - \frac{1}{d} \right)^{t-1}}{1 - \left( 1 - \frac{1}{d} \right)}
        \le \frac{G^2}{d} \left[ 1 - \left( 1 - \frac{1}{d} \right)^{t-1} \right] \le \frac{G^2}{d},
    \end{align*}
    where the first inequality is by the Lipschitz assumption.

    For property \rom{5}, since for each $i\in [d]$, there exists $0\le s_i\le t-1$ such that $\gt_{t,i}=v_{s_i}$. Thus, we have
    \begin{equation*}
        \|\tgb_{t}\|^2 = \sum_{i=1}^d \gt_{t,i}^2 = \sum_{i=1}^d v_{s_i}^2 \le dG^2,
    \end{equation*}
    where the last inequality holds due to property \rom{2}.

    For property \rom{6}, we have
    \begin{equation*}
        \|\g_t\|^2 = \|d(v_t-v_{\alpha_t}) \eb_{i_t} + \tgb_t\|^2 \le 2d^2(v_t-v_{\alpha_t})^2 + 2\|\tgb_t\|^2 \le 10d^2G^2,
    \end{equation*}
    where the last inequality holds due to property \rom{2} and property \rom{5}.

    For property \rom{7}, we have
    \begin{equation*}
        \E\mbr{\|\g_t\|^2} = \E\mbr{\|d(v_t-v_{\alpha_t}) \eb_{i_t} + \tgb_t\|^2} \le 3d^2\E\mbr{v_t^2} + 3d^2\E\mbr{v_{\alpha_t}^2} + 3\E\mbr{\|\tgb_t\|^2}
        \le 15dG^2+\O\sbr{\frac{1}{d^2T^2}},
    \end{equation*}
    where the last inequality holds due to property \rom{3}, property \rom{4}, and property \rom{5}.

    For property \rom{8}, we have
    \begin{equation*}
        \|\g_t-\tgb_t\|^2 = \|d(v_t-v_{\alpha_t}) \eb_{i_t}\|^2=d^2(v_t-v_{\alpha_t})^2.
    \end{equation*}
    Define $\epsilon=|v_t-\inner{\nabla f_t(\w_t)}{\eb_{i_t}}|+|v_{\alpha_t}-\inner{\nabla f_{\alpha_t}(\w_{\alpha_t})}{\eb_{i_t}}|$. By \pref{eq:central-bias}, we have $\epsilon\le L \delta$. Thus, we have
    \begin{align*}
        &(v_t-v_{\alpha_t})^2\le (|\nabla_{i_t} f_{t}(\w_t)-\nabla_{i_t} f_{\alpha_t}(\w_{\alpha_t})|+\epsilon)^2\\
        \le{}& \sbr{\nabla_{i_t} f_{t}(\w_t)-\nabla_{i_t} f_{\alpha_t}(\w_{\alpha_t})}^2+4\epsilon G+\epsilon^2\le \sbr{\nabla_{i_t} f_{t}(\w_t)-\nabla_{i_t} f_{\alpha_t}(\w_{\alpha_t})}^2+\O\sbr{\frac{1}{d^2T}}.
    \end{align*}
    For property \rom{9}, we have
    \begin{equation*}
        \|\g_t-\tgb_t\|^2 = \|d(v_t-v_{\alpha_t}) \eb_{i_t}\|^2
        \le4d^2G^2,
    \end{equation*}
    where the last inequality holds due to property \rom{2}.

    For property \rom{10}, we have
    \begin{equation*}
        \E\mbr{\|\g_t-\tgb_t\|^2} = \E\mbr{\|d(v_t-v_{\alpha_t}) \eb_{i_t}\|^2} \le 2d^2\E\mbr{v_t^2} + 2d^2\E\mbr{v_{\alpha_t}^2} \le 8dG^2+\O\sbr{\frac{1}{d^2T^2}},
    \end{equation*}
    where the last inequality holds due to property \rom{3} and property \rom{4}.

    For property \rom{11}, by \pref{eq:central-bias}, we have for $i\in[d]$
    \begin{align*}
        \abs{\frac{1}{2\delta}\sbr{f_t(\w_t+\delta\eb_i)-f_t(\w_t-\delta\eb_i)}-\nabla_i f_t(\w_t)}\le L\delta/2.
    \end{align*}
 Since $\w_t$ and $\tgb_t$ are fixed under $\E_t[\cdot]$, we have
    \begin{align*}
        \E_t[\g_t]={}& \E_t\mbr{d(v_t-\gt_{t,i_t}) \eb_{i_t} + \tgb_t} = d\E_t\mbr{v_t \eb_{i_t}},
    \end{align*}
    where the first step uses $d\E_t[\gt_{t,i_t}\eb_{i_t}]=\tgb_t$ because $i_t$ is uniformly sampled from $[d]$. Thus, we have
    \begin{align*}
        \|\E_t[\g_t]-\nabla f_t(\w_t)\| \le \sqrt{d} \max_{i\in[d]} \abs{\frac{1}{2\delta}\sbr{f_t(\w_t+\delta\eb_i)-f_t(\w_t-\delta\eb_i)}-\nabla_i f_t(\w_t)} 
        \le \frac{\sqrt{d}L\delta}{2}=\O\sbr{\frac{1}{d^{3/2}T}}.
    \end{align*}
\end{proof}

Finally, we prove a useful technical lemma that was used throughout the proof.
\begin{Lemma}
\label{lem:exp_alphat}
    With the same assumptions and parameters as in \pref{lem:alpha}, for any $t\in[T]$, $i\in[d]$ and constant vector $\c\in \R^d$, \pref{alg:chiang} ensures that
    \begin{equation*}
        \E\mbr{\inner{\nabla f_{\alpha_t}(\w_{\alpha_t})-\c }{\eb_{i_t}}^2}\le \sum_{s=1}^{t-1} \frac{1}{d^2} \left( 1 - \frac{1}{d} \right)^{t-s-1}\E_{I_{s-1}}\mbr{ \|\nabla f_s(\w_s)-\c\|^2}+\frac{1}{d}\left(1-\frac{1}{d}\right)^{t-1} \|\c\|^2,
    \end{equation*}
    where $I_{s-1} \define (i_1,\ldots,i_{s-1})$ is the history of previously selected coordinates.
\end{Lemma}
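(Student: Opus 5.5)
We prove \pref{lem:exp_alphat} by conditioning on the event $\{\alpha_t = s\}$ for each $s\in\{0,1,\ldots,t-1\}$, using the dummy round $0$ for the case where $i_t$ has not appeared before $t$. The key structural fact is the following. The pair $(\w_s, f_s)$ depends only on $I_{s-1}=(i_1,\ldots,i_{s-1})$, because $\w_s$ is computed from the estimators up to round $s-1$ and the environment is oblivious. In contrast, the event $\{\alpha_t=s,\ i_t=i\}$ is determined by the later draws $i_s,\ldots,i_t$: it says $i_s=i$, $i_t=i$, and $i_r\neq i$ for $s<r<t$. Since the coordinates are drawn i.i.d.\ uniformly from $[d]$, the event is independent of $I_{s-1}$, and for $1\le s\le t-1$ its probability is $\frac{1}{d^2}(1-\frac1d)^{t-s-1}$.

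With this in hand, I would expand the expectation as
\begin{equation*}
\E\mbr{\inner{\nabla f_{\alpha_t}(\w_{\alpha_t})-\c}{\eb_{i_t}}^2}=\sum_{s=0}^{t-1}\sum_{i=1}^d \E\mbr{\mathbb{1}\{\alpha_t=s,i_t=i\}\,(\nabla_i f_s(\w_s)-c_i)^2}.
\end{equation*}
For $s\ge1$, independence factors each summand as $\frac{1}{d^2}(1-\frac1d)^{t-s-1}\,\E_{I_{s-1}}[(\nabla_i f_s(\w_s)-c_i)^2]$. Summing over $i$ turns the coordinate squares into $\|\nabla f_s(\w_s)-\c\|^2$, which gives the first term of the bound exactly; in fact this part is an equality.

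For $s=0$, the event is that $i$ is never drawn in rounds $1,\ldots,t-1$ and $i_t=i$. Its probability is $\frac1d(1-\frac1d)^{t-1}$. On this event we have $\nabla f_0=\mathbf 0$, so the summand equals $c_i^2$. Summing over $i$ gives $\frac1d(1-\frac1d)^{t-1}\|\c\|^2$, which is the second term.

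The main point needing care is the independence claim. We must check that $\w_s$ depends on $i_1,\ldots,i_{s-1}$ only, and not on $i_s$. This holds because in \pref{alg:chiang} the iterate $\w_s$ is formed from $\hat\w_s$ and $\tgb_s$, both of which are built from rounds before $s$. Since $\alpha_t\in\{0,\ldots,t-1\}$ exhausts all cases, the decomposition is exact, and the inequality in the statement follows (it is actually an equality).
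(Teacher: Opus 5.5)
Your proposal is correct and follows essentially the same argument as the paper: decompose over the joint event $\{i_t=i,\ \alpha_t=s\}$, use that $\w_s$ depends only on $I_{s-1}$ while this event depends only on $i_s,\ldots,i_t$, plug in the probability $\frac{1}{d^2}(1-\frac{1}{d})^{t-s-1}$ for $s\ge 1$, and handle $s=0$ through the dummy round with $\nabla f_0(\w_0)=\mathbf{0}$. Your remark that the bound in fact holds with equality also matches the paper's derivation, which is a chain of equalities.
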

\begin{proof}
    Denoting by $\I(\cdot)$ the indicator function, we have
\begin{align*}
    &\E\mbr{\inner{\nabla f_{\alpha_t}(\w_{\alpha_t})-\c}{ \eb_{i_t}}^2}=\E_{I_t}\mbr{\sum_{s=0}^{t-1} \sum_{i=1}^d \I(i_t=i, \alpha_t=s) \left\langle \nabla f_s(\w_s)-\c, \eb_i \right\rangle^2}\\
={}&\sum_{s=0}^{t-1} \sum_{i=1}^d\E_{I_t}\mbr{\I(i_t=i, \alpha_t=s) \inner{\nabla f_s(\w_s)-\c}{ \eb_i}^2} \\
={}&\sum_{s=0}^{t-1} \sum_{i=1}^d\E_{I_{s-1}}\mbr{\E_{i_s,\ldots,i_t}\mbr{\I(i_t=i, \alpha_t=s) \inner{\nabla f_s(\w_s)-\c}{ \eb_i}^2\given I_{s-1}}} \\
={} &\sum_{s=0}^{t-1} \sum_{i=1}^d \P(i_t=i, \alpha_t=s) \E_{I_{s-1}}\mbr{\left\langle \nabla f_s(\w_s)-\c, \eb_i \right\rangle^2}\\
={}& \sum_{s=1}^{t-1} \sum_{i=1}^d \left[ \frac{1}{d^2} \left( 1 - \frac{1}{d} \right)^{t-s-1} \right]\E_{I_{s-1}}\mbr{ \left\langle \nabla f_s(\w_s)-\c, \eb_i \right\rangle^2}+  \frac{1}{d}\left(1-\frac{1}{d}\right)^{t-1} \|\nabla f_0(\w_0)-\c\|^2\\
= {}&\sum_{s=1}^{t-1} \frac{1}{d^2} \left( 1 - \frac{1}{d} \right)^{t-s-1}\E_{I_{s-1}} \mbr{ \sum_{i=1}^d \left\langle \nabla f_s(\w_s)-\c, \eb_i \right\rangle^2 }+\frac{1}{d}\left(1-\frac{1}{d}\right)^{t-1} \|\c\|^2,
\end{align*}
    where the first step uses the law of total expectation to sum over all possible realizations of the random coordinate $i_t$ and the last update time $\alpha_t$, the third step uses the tower property $\E[\cdot] = \E_{I_{s-1}}[\E[\cdot \given I_{s-1}]]$, the fourth step follows because $\I(i_t=i, \alpha_t=s)$ only depends on $i_s,\ldots,i_t$ and, given $s$, $\w_s$ only depends on $I_{s-1}$, the fifth step substitutes the joint probability $\P(i_t=i, \alpha_t=s) = \frac{1}{d^2}(1-\frac{1}{d})^{t-s-1}$ for $s>0$, which is derived from the independent uniform sampling of coordinates, and the last step uses $\nabla f_0(\w_0)\define\mathbf{0}$.
\end{proof}

\section{Analysis of Non-Consecutive Sampling Gap}
\label{app:rho}

In this section, we analyze the stochastic behavior of the non-consecutive sampling gap $\rho_{t,i}$. At each time step $t \in [T]$, a sample $i_t$ is drawn independently and uniformly from $[d]$. Then, the gap $\rho_{t,i}$ is defined as:
\begin{equation}
    \label{eq:rho}
    \rho_{t,i} \define \tau_2 - \tau_1\text{, where } \tau_1 \define \max\{ 0\le\tau < t\given i_\tau = i\}, \text{ and } \tau_2\define \min \{ t\le\tau\le T \given i_\tau = i\}.
\end{equation}
For completeness, we set $\tau_2 = T+1$ if coordinate $i$ is never sampled at or after time $t$ and $\tau_1=0$ if coordinate $i$ is never sampled before time $t$. Intuitively, $\rho_{t,i}$ quantifies the duration between the most recent sampling of coordinate $i$ before time $t$ and its next sampling at or after time $t$.

We now establish the connection between $\rho_{t,i}$, the geometric distribution, and the Coupon Collector's Problem (CCP).
\paragraph{Relation to Geometric Distribution.}
To start, we analyze $\rho_{t,i}+1$ by decomposing it into two components, $t - \tau_1$ and $\tau_2 - t+1$, which represent the waiting times of two independent sampling processes:
 \begin{itemize}
    \item  \textbf{Backward Search:} Starting from time $t$, we search backward through the sequence $X_{t-1}, X_{t-2}, \dots$ until coordinate $i$ is first encountered at index $\tau_1$. The search stops exactly at this hit, making $t - \tau_1$ the search duration.
    \item \textbf{Forward Search:} Similarly, we search forward from time $t$ through $X_t, X_{t+1}, \dots$ until the next occurrence of $i$ at index $\tau_2$. The forward duration is $\tau_2 - t+1$.
 \end{itemize}

Since each $i_\tau$ is sampled uniformly from $[d]$, the waiting times $t - \tau_1$ and $\tau_2 - t + 1$ are independent truncated geometric random variables with success probability $p = 1/d$. Specifically, $t-\tau_1$ is truncated at $t$, and $\tau_2-t+1$ is truncated at $T-t+2$, with the last value corresponding to no future occurrence. Consequently, the analysis of $\rho_{t,i}$ reduces to studying the sum of two independent geometric random variables subject to boundary constraints.

\paragraph{Relation to Coupon Collector's Problem.}

While the geometric distribution characterizes the gap for a fixed coordinate $i$, the term $\max_{i \in [d]} \rho_{t,i}$ is naturally captured by the Coupon Collector's Problem. In the standard CCP, a collector seeks to obtain all $d$ distinct types of coupons, where each draw yields type $i \in [d]$ uniformly at random with probability $1/d$. The primary object of interest is the \emph{stopping time}, defined as the total number of draws required to complete the collection.

In our context, $\max_{i \in [d]} (t - \tau_1)$ corresponds to the time required to observe every coordinate at least once when searching backward from $t$. Similarly, $\max_{i \in [d]} (\tau_2 - t+1)$ represents the time required to encounter every coordinate when searching forward. Thus, both terms behave as the waiting time in a standard CCP with $d$ bins truncated by the available time horizon.

The following lemma summarizes the key properties of these distributions:
\begin{Lemma}
    \label{lem:Geo}
Consider a process where in each round $t=1, 2, \ldots$, a number $X_t$ is sampled uniformly and independently from $[n]$.
    \begin{itemize}
        \item  For any specific $i \in [n]$, let $T_i = \min \{t \given X_t = i\}$ be the first time $i$ is sampled. Then $\E[T_i] = n$.
        \item Let $T_{\text{all}} = \max_{i \in [n]} T_i$ be the time when the last remaining number in $[n]$ is sampled (i.e., the time to collect all numbers). Then $\E[T_{\text{all}}] \le 2n \log  n$ (for $n \ge 3$).
    \end{itemize}
\end{Lemma}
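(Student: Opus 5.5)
The first claim follows because $T_i$ is geometric with success probability $p=1/n$. Concretely, $\P(T_i=k)=(1-1/n)^{k-1}\frac{1}{n}$ for $k\ge1$. We then compute $\E[T_i]=\sum_{k\ge1}\P(T_i\ge k)=\sum_{k\ge1}(1-1/n)^{k-1}=n$ via the tail-sum formula, which avoids any differentiation of series.

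For the second claim, we would use the classical phase decomposition of the coupon collector's waiting time. Let $Z_j$ denote the number of draws needed to go from having $j-1$ distinct values to having $j$ distinct values, for $j=1,\ldots,n$. Then $T_{\text{all}}=\sum_{j=1}^n Z_j$. In phase $j$ each draw independently yields a new value with probability $p_j=(n-j+1)/n$, since the draws are i.i.d. and uniform. Each $Z_j$ is therefore geometric with mean $n/(n-j+1)$, by the same tail-sum argument as in the first bullet. Linearity of expectation, which needs no independence between phases, gives
\[
\E[T_{\text{all}}]=\sum_{j=1}^n \frac{n}{n-j+1}=n H_n,\qquad H_n\define\sum_{k=1}^n \frac1k .
\]

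It remains to bound the harmonic number. The standard integral comparison gives $H_n\le 1+\int_1^n \frac{dx}{x}=1+\log n$. Thus $\E[T_{\text{all}}]\le n(1+\log n)$. This is at most $2n\log n$ whenever $\log n\ge1$, i.e. for $n\ge e$, which covers all integers $n\ge3$ and explains the restriction in the statement.

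There is no real obstacle here. The only points needing care are that $\log$ denotes the natural logarithm, so that the threshold $n\ge3$ works, and that the phase variables $Z_j$ are well defined as stopping-time increments. The second point is immediate from the i.i.d. uniform sampling, because the strong Markov property makes each phase a fresh sequence of Bernoulli trials with success probability $p_j$.
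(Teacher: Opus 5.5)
Your proposal is correct and follows the paper's proof essentially step for step. Both use the geometric distribution for $\E[T_i]=n$, decompose $T_{\text{all}}$ into phases with geometric waiting times of mean $n/(n-j+1)$, apply linearity of expectation to get $nH_n$, and bound $H_n\le 1+\log n$. Your explicit check that $n(1+\log n)\le 2n\log n$ requires $\log n\ge 1$ (natural log) justifies the $n\ge 3$ restriction, which the paper leaves implicit.
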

By \pref{lem:Geo}, setting $n=d$, we have directly the following two properties:
\begin{gather}
    \E[\rho_{t,i}]\le2 \E[T_i]-1\le 2d. \label{eq:1rho}\\
    \E\mbr{\max_{i\in[d]}\rho_{t,i}}\le2 \E\mbr{\max_{i\in[d]}T_i}-1\le 4d\log  d. \label{eq:max_rho}
\end{gather}
Finally, we provide the proof of \pref{lem:Geo}.
\begin{proof}
To start with, we prove $\E[T_i] = n$ using the properties of the geometric distribution.

Fix a specific target number $i \in [n]$. In any round $t$, the probability of sampling $i$ is $p = \frac{1}{n}$. Since the samples $X_t$ are drawn independently, the waiting time $T_i$ for the first occurrence of $i$ follows a geometric distribution with success probability $p$.
The expectation of a geometric random variable with parameter $p$ is $1/p$. Therefore:
\begin{equation*}
    \E[T_i] = \frac{1}{p} = \frac{1}{1/n} = n.
\end{equation*}
Next, we prove the second property, which is equivalent to the classical \textit{Coupon Collector's Problem}. We decompose the total waiting time $T_{\text{all}}$ into phases. Let $S_k$ denote the number of \textit{additional} rounds required to collect the $k$-th distinct number, given that $k-1$ distinct numbers have already been collected. The total time is the sum of the times in each phase:
\begin{equation*}
    T_{\text{all}} = \sum_{k=1}^{n} S_k.
\end{equation*}
Consider the phase where $k-1$ distinct numbers have already been found. There are $n - (k-1)$ "unseen" numbers remaining in the pool of size $n$. The probability $p_k$ of drawing a \textit{new} number in any single trial during this phase is:
\begin{equation*}
    p_k = \frac{n - (k - 1)}{n}.
\end{equation*}
The number of trials $S_k$ in this phase follows a geometric distribution with parameter $p_k$. Thus, its expected value is:
\begin{equation*}
    \E[S_k] = \frac{1}{p_k} = \frac{n}{n - k + 1}.
\end{equation*}
By the linearity of expectation, the total expected time is:
\begin{equation}
    \E[T_{\text{all}}] = \sum_{k=1}^{n} \E[S_k] = \sum_{k=1}^{n} \frac{n}{n - k + 1} = n \sum_{j=1}^{n} \frac{1}{j} \label{eq:harmonic_sum},
\end{equation}
where the last step follows from the change of index $j = n - k + 1$. The sum $H_n = \sum_{j=1}^n \frac{1}{j}$ is the $n$-th Harmonic number, which satisfies the bound $H_n \le \log  n + 1$ for all $n \ge 1$. Substituting this into Eq.~\eqref{eq:harmonic_sum}:
\begin{equation*}
    \E[T_{\text{all}}] \le n (\log  n + 1) = n \log  n + n\le 2n\log n,
\end{equation*}
which completes the proof.
\end{proof}


\section{Omitted Proofs in \pref{sec:two-point}}
\label{app:base-proof}
In this section, we provide the detailed proofs for the results presented in \pref{sec:two-point}.

\subsection{Proof of \pref{lem:decom of VbT}}
\label{app:lem0}
\begin{proof}
By property \rom{8} of \pref{lem:alpha}, we have
\begin{equation}
    \Vb_T=\sum_{t=1}^T \|\g_t-\tgb_t\|^2\le d^2\sum_{t=1}^T(\nabla_{i_t} f_{t}(\w_t)-\nabla_{i_t} f_{\alpha_t}(\w_{\alpha_t}))^2+\O(1).\label{eq:lem0eq1}
\end{equation}
Then, we bound $(\nabla_{i_t} f_{t}(\w_t)-\nabla_{i_t} f_{\alpha_t}(\w_{\alpha_t}))^2$ as follows:
\begin{align*}
    &(\nabla_{i_t} f_{t}(\w_t)-\nabla_{i_t} f_{\alpha_t}(\w_{\alpha_t}))^2=\sbr{\sum_{s=\alpha_t+1}^{t} \nabla_{i_t} f_{s}(\w_s)-\nabla_{i_t} f_{s-1}(\w_{s-1})}^2\\
    \le {}&(t-\alpha_t)\sum_{s=\alpha_t+1}^{t}\sbr{ \nabla_{i_t} f_{s}(\w_s)-\nabla_{i_t} f_{s-1}(\w_{s-1})}^2
    \le \sum_{s=\alpha_t+1}^{t}\rho_{s,i_t}\sbr{ \nabla_{i_t} f_{s}(\w_s)-\nabla_{i_t} f_{s-1}(\w_{s-1})}^2\\
    \le {}& 2\sum_{s=\alpha_t+1}^{t}\rho_{s,i_t}\sbr{ \nabla_{i_t} f_{s}(\w_s)-\nabla_{i_t} f_s(\w_{s-1})}^2+2\sum_{s=\alpha_t+1}^{t}\rho_{s,i_t}\sbr{ \nabla_{i_t} f_s(\w_{s-1})-\nabla_{i_t} f_{s-1}(\w_{s-1})}^2,
\end{align*}
where the second step is by the Cauchy-Schwarz inequality and the third step uses $t-\alpha_t\le \rho_{s,i_t}$ for $\alpha_t< s\le t$ according to the definition of the non-consecutive sampling gap $\rho_{s,i}$ in \pref{eq:rho}.

Summing over $t=1,\ldots,T$, we have
\begin{align}
    &\sumT(\nabla_{i_t} f_{t}(\w_t)-\nabla_{i_t} f_{\alpha_t}(\w_{\alpha_t}))^2\notag\\
    \le{}& 2\sumT\sum_{s=\alpha_t+1}^{t}\rho_{s,i_t}\sbr{\sbr{ \nabla_{i_t} f_{s}(\w_s)-\nabla_{i_t} f_s(\w_{s-1})}^2+\sbr{ \nabla_{i_t} f_s(\w_{s-1})-\nabla_{i_t} f_{s-1}(\w_{s-1})}^2} \notag\\
={}& 2\sum_{i=1}^d\sum_{\substack{t:i_t=i \\ 1 \le t \le T}}\sum_{s=\alpha_t+1}^{t}\rho_{s,i}\sbr{\sbr{ \nabla_{i} f_{s}(\w_s)-\nabla_{i} f_s(\w_{s-1})}^2+\sbr{ \nabla_{i} f_s(\w_{s-1})-\nabla_{i} f_{s-1}(\w_{s-1})}^2}\notag \\
\le{}& 2\sum_{i=1}^{d}\sum_{s=1}^T\rho_{s,i}\sbr{ \nabla_{i} f_s(\w_{s-1})-\nabla_{i} f_{s-1}(\w_{s-1})}^2+2\sum_{i=1}^{d}\sum_{s=1}^T\rho_{s,i}\sbr{ \nabla_{i} f_s(\w_s)-\nabla_{i} f_s(\w_{s-1})}^2\notag\\
={}& 2\sumT\sum_{i=1}^{d}\rho_{t,i}\sbr{ \nabla_{i} f_t(\w_{t-1})-\nabla_{i} f_{t-1}(\w_{t-1})}^2+2\sumT\sum_{i=1}^{d}\rho_{t,i}\sbr{ \nabla_{i} f_{t}(\w_t)-\nabla_{i} f_t(\w_{t-1})}^2,\label{eq:lem0decom}
\end{align}
where the second step follows from changing the order of summation, based on the observation that for each fixed $s$ and $i$, there is at most one $t$ such that $i_t=i$ and $s\in(\alpha_t,t]$, and the last step is obtained by changing the order of summation over $i$ and $s$, followed by relabeling the index $s$ as $t$.

Plugging \pref{eq:lem0decom} back to \pref{eq:lem0eq1}, we finish the proof.
\end{proof}

\subsection{Proof of \pref{lem:bandit-VT-correction-cvx}}
\label{app:lem1}
\begin{proof}
Following the proof of \pref{lem:decom of VbT}, we have
\begin{align*}
 &\sumT(\nabla_{i_t} f_{t}(\w_t)-\nabla_{i_t} f_{\alpha_t}(\w_{\alpha_t}))^2\\\le{}& 2\underbrace{\sumT\sum_{i=1}^{d}\rho_{t,i}\sbr{ \nabla_{i} f_t(\w_{t-1})-\nabla_{i} f_{t-1}(\w_{t-1})}^2}_{\term{a}}
 +2\underbrace{\sumT\sum_{i=1}^{d}\rho_{t,i}\sbr{ \nabla_{i} f_{t}(\w_t)-\nabla_{i} f_t(\w_{t-1})}^2}_{\term{b}}
\end{align*}

For \term{a}, we have
\begin{align}
    &\E[\term{a}]
    \le \E\mbr{\sum_{t=1}^{T}\sum_{i=1}^{d}\sbr{\max_{i\in[d]}\rho_{t,i}}\sbr{ \nabla_{i} f_t(\w_{t-1})-\nabla_{i} f_{t-1}(\w_{t-1})}^2} \notag\\
    ={}&\E\mbr{\sum_{t=1}^{T}\sbr{\max_{i\in[d]}\rho_{t,i}}\|\nabla f_t(\w_{t-1})-\nabla f_{t-1}(\w_{t-1})\|^2}
    \le \E\mbr{\sum_{t=1}^{T}\sbr{\max_{i\in[d]}\rho_{t,i}}\sup_{\x\in\X}\|\nabla f_{t}(\x)-\nabla f_{t-1}(\x)\|^2}\notag\\
    =&\sum_{t=1}^{T}\E\mbr{\max_{i\in[d]}\rho_{t,i}}\sup_{\x\in\X}\|\nabla f_{t}(\x)-\nabla f_{t-1}(\x)\|^2
    \le4d\log  d V_T.
    \label{eq:lem1 a}
\end{align}
where the first and third steps are by taking the maximum, the second step is obtained by summing across all coordinates $i \in [d]$, and the last step follows from the property in \pref{eq:max_rho}.

For \term{b}, we have
\begin{align}
    \E[\term{b}]
    \le{}4dL^2\log(dT)\E\mbr{\sum_{t=1}^{T}\|\w_t-\w_{t-1}\|^2}+\O(1).
    \label{eq:lem1 b}
\end{align}
The proof of \pref{eq:lem1 b} is as follows. Let $\bar{\rho}=4d\log(dT)$ and let $Q$ denote the bad event that $\rho_{t, i}>\bar{\rho}$ for some $t \in[T]$ and $i \in[d]$. If $Q$ occurs, then for some coordinate $i$, there is a block of at least $\lfloor\bar{\rho}\rfloor$ consecutive rounds in which coordinate $i$ is not sampled. Therefore, by a union bound,
\begin{equation}
    \label{eq:high_prob_rho}
    \mathbb{P}(Q) \leq d(T+1)\left(1-\frac{1}{d}\right)^{\lfloor\bar{\rho}\rfloor}
    \le d(T+1)\exp\left(-\frac{\bar{\rho}-1}{d}\right)
    \leq \frac{4}{d^3T^3}.
\end{equation}
Using the smoothness of $f_t$ and the deterministic bound $\rho_{t,i}\le T+1$, we have
\begin{align*}
    \E[\term{b}\I(Q)]
    \le{}& \mathbb{P}(Q)\cdot (T+1)\sup_{\w_0,\ldots,\w_T\in\X}
\sum_{t=1}^T \|\nabla f_t(\w_t)-\nabla f_t(\w_{t-1})\|^2\\
    \le{}& \frac{4}{d^3T^3}\cdot (T+1)\cdot 4L^2R^2T\le \O(1),
\end{align*}
where $\I(\cdot)$ denotes the indicator function and the last step uses $\|\w_t-\w_{t-1}\|\le 2R$.

When $\neg Q$ occurs, we have $\rho_{t,i}\le\bar{\rho}$ for all $t\in[T]$ and $i\in[d]$. Therefore,
\begin{align*}
    \term{b}\I(\neg Q)
    \le{}& \bar{\rho}\sum_{t=1}^{T}\sum_{i=1}^{d}\sbr{ \nabla_{i} f_{t}(\w_t)-\nabla_{i} f_t(\w_{t-1})}^2\\
    \le{}& 4dL^2\log(dT)\sum_{t=1}^{T}\|\w_t-\w_{t-1}\|^2.
\end{align*}
Combining the above two inequalities for the events $Q$ and $\neg Q$, we obtain \pref{eq:lem1 b}.
Finally, combining \pref{eq:lem1 a} and \pref{eq:lem1 b} with \pref{eq:lem0eq1}, we complete the proof.
\end{proof}

\subsection{Proof of \pref{thm:cvx-base}}
\label{app:cvx-base}
In this part, we provide the proof of \pref{thm:cvx-base}.
To begin with, we first restate a useful lemma from \citet{chiang2013beating} for self-containedness.
For simplicity, we denote $\w^\star\in\argmin_{\w\in(1-\xi)\X} \sum_{t=1}^T f_t(\w)$.
\begin{Lemma}[{Lemma 2 of \citet{chiang2013beating}}]
    \label{lem:domain_transition}
    Under Assumptions \ref{ass:boundedness}-\ref{ass:Lipschitzness}, for convex functions, \pref{alg:chiang} enjoys
    \begin{equation*}
        \sum_{t=1}^T \frac{1}{2} \sbr{f_t(\x_{t})+f_t(\x_{t}^\prime)}-\min_{\x\in \X}\sum_{t=1}^{T}f_t(\x) \le \sum_{t=1}^{T} f_t(\w_t)-\sum_{t=1}^{T} f_t(\w^\star)+\O(1).
    \end{equation*}
\end{Lemma}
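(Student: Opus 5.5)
The plan is to compare the queried points with the centers $\w_t$, and the comparator over $\X$ with one over the shrunk domain $(1-\xi)\X$. Each comparison uses only Lipschitzness (\pref{ass:Lipschitzness}). The choices $\delta=\frac{1}{2d^2LTR}$ and $\xi=\delta/r$ then make the accumulated errors $\O(1)$.

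\textbf{Step 1 (feasibility of queries).} By construction $\w_t\in(1-\xi)\X$. Since $r\B\subseteq\X$, we have $\delta\eb_{i_t}=\xi\cdot(r\eb_{i_t})\in\xi\X$. Convexity of $\X$ gives $(1-\xi)\X+\xi\X\subseteq\X$, so $\x_t,\x_t^\prime\in\X$ and Lipschitzness applies to them.

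\textbf{Step 2 (query loss vs.\ center loss).} Since $\|\x_t-\w_t\|=\|\x_t^\prime-\w_t\|=\delta$, Lipschitzness gives $\frac12\big(f_t(\x_t)+f_t(\x_t^\prime)\big)\le f_t(\w_t)+G\delta$. Summing over $t$ adds at most
\[
GT\delta=\frac{G}{2d^2LR}=\O(1).
\]

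\textbf{Step 3 (comparator shrinkage).} Let $\x^\star\in\argmin_{\x\in\X}\sum_{t=1}^T f_t(\x)$. Then $(1-\xi)\x^\star\in(1-\xi)\X$, so by the optimality of $\w^\star$ over $(1-\xi)\X$,
\[
\sum_{t=1}^T f_t(\w^\star)\le\sum_{t=1}^T f_t\big((1-\xi)\x^\star\big).
\]
Both $\x^\star$ and $(1-\xi)\x^\star$ lie in $\X$, since $0\in\X$ and $\X$ is convex. Lipschitzness and $\|\x^\star\|\le R$ then give
\[
\sum_{t=1}^T f_t\big((1-\xi)\x^\star\big)\le\sum_{t=1}^T f_t(\x^\star)+GT\xi R .
\]
The error term is $GT\xi R=\frac{GTR\delta}{r}=\frac{G}{2d^2Lr}=\O(1)$.

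\textbf{Conclusion.} Combining Steps 2 and 3 yields the claimed inequality, with an additive constant of order $G/(d^2L)\cdot(1/R+1/r)$. There is no real obstacle. The only point needing care is Step 1, the feasibility of the perturbed points, which relies on $r\B\subseteq\X$ and the convex-combination argument. Convexity of the $f_t$ is not needed for this lemma.
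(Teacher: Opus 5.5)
Your proof is correct and is the standard argument behind this lemma; the paper gives no proof of its own and only cites Lemma 2 of Chiang et al.\ (2013). As you do, Lipschitzness bounds the query-versus-center gap by $G\delta$ per round and the comparator shrinkage by $G\xi R$ per round, and $\delta=\frac{1}{2d^2LTR}$, $\xi=\delta/r$ make both sums $\O(1)$. The only tacit requirement, which the paper shares, is $\xi\le 1$ (i.e., $2d^2LTRr\ge 1$); your convex-combination steps in Steps 1 and 3 rely on it.
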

\begin{proof}[of \pref{thm:cvx-base}]
    First, using the standard analysis for \OOGD~\citep{chiang2012online}, we have
    \begin{align}
    \E[\Reg_T]\le {} & \E\mbr{\sum_{t=1}^T f_t(\w_t)-\sum_{t=1}^T f_t(\w^\star)}+\O(1) \tag*{(by \pref{lem:domain_transition})}\\
    \le{}& \E\mbr{\sum_{t=1}^{T}\inner{\nabla f_t(\w_t)}{\w_t-\w^\star}}+\O(1)\le \E\mbr{\sumT\inner{\g_t}{\w_t-\w^\star}}+\O(1)\notag\\
    \le{}&\underbrace{\E\mbr{\sum_{t=1}^T \eta_t\left\|\mathbf{g}_t-\tgb_t\right\|^2}}_{\term{a}}
    +\underbrace{\E\mbr{\sum_{t=1}^T \frac{1}{2\eta_t}\sbr{\|\w^\star-\widehat{\w}_t\|^2-\|\w^\star-\widehat{\w}_{t+1}\|^2}}}_{\term{b}}\notag\\
    & \qquad -\underbrace{\E\mbr{\sum_{t=1}^T \frac{1}{2\eta_t}\sbr{\|\widehat{\w}_{t+1}-\w_t\|^2+\|\w_t-\widehat{\w}_t\|^2}}}_{\term{c}}+\O(1),\label{eq:OMD_framework}
\end{align}
where the second step is by convexity and the third step is by property~\rom{11} in \pref{lem:alpha}.

\paragraph{Problem-Dependent Regret.}
For \term{a}, we have the following bound:
\begin{align}
    \term{a}\le{}& R\E\mbr{\sum_{t=1}^T \frac{\|\g_t-\tgb_t\|^2}{\sqrt{1152d^3R^4L^2\log(dT)+\bar{V}_{t-1}}}}\notag \\
    \le{}& 4R\E\mbr{\sqrt{1152d^3R^4L^2\log(dT)+\sumT \|\g_t-\tgb_t\|^2}+\max_{t\in[T]}\frac{\|\g_t-\tgb_t\|^2}{\sqrt{1152d^3R^4L^2\log(dT)}}}\notag\\
    \le{}&4R\E\mbr{\sqrt{1152d^3R^4L^2\log(dT)+\Vb_T}}+\O\sbr{\sqrt{d}}, \tag{by \pref{lem:alpha}}
\end{align}
where the first step is by the definition of $\eta_t$ and the second inequality is by choosing $a_s=\frac{\|\g_s-\tgb_s\|^2}{1152d^3R^4L^2\log(dT)}$ in \pref{lem:sum_cvx}.

For \term{b}, we have
\begin{align*}
    \term{b}\le {}& \E\mbr{\frac{\|\ws-\wh_1\|^2}{2\eta_1}+\sum_{t=2}^T \sbr{\frac{1}{2\eta_t}-\frac{1}{2\eta_{t-1}}}\|\ws-\wh_t\|^2}\le \E\mbr{\frac{2R^2}{2\eta_1}+\sum_{t=2}^T \sbr{\frac{2R^2}{\eta_t}-\frac{2R^2}{\eta_{t-1}}}} \\
    \le{}& \E\mbr{\frac{2R^2}{\eta_1}+\frac{2R^2}{\eta_T}}= 2R\E\mbr{\sqrt{1152d^3R^4L^2\log(dT)+\Vb_T}}+\O(\sqrt{d^3\log(dT)}),
\end{align*}
where the first step is by changing the order of summation and the second step is by the boundedness.

For \term{c}, we have
\begin{equation*}
    \term{c}\ge \E\mbr{\sumTT \frac{1}{2\eta_{t-1}}\|\w_t-\wh_t\|^2+\|\w_{t-1}-\wh_t\|^2}\ge \E\mbr{\sum_{t=2}^T \frac{1}{4\eta_{t-1}}\|\w_t-\w_{t-1}\|^2}.
\end{equation*}
Thus, we have
\begin{align}
    \label{eq:reg_bound_mid}
    \E[\Reg_T]\le{}& \E\mbr{6R\sqrt{1152d^3R^4L^2\log(dT)+\Vb_T}-\sum_{t=2}^T \frac{1}{4\eta_{t-1}}\|\w_t-\w_{t-1}\|^2}+\O\sbr{\sqrt{d^3\log(dT)}}\notag\\
    \le{}&6R\sqrt{1152d^3R^4L^2\log(dT)+\E\mbr{\Vb_T}}-\E\mbr{\sum_{t=2}^T \frac{1}{4\eta_{t-1}}\|\w_t-\w_{t-1}\|^2}+\O\sbr{\sqrt{d^3\log(dT)}}.
\end{align}
where the last inequality is by Jensen's inequality.

Using \pref{lem:bandit-VT-correction-cvx}, we have
\begin{align*}
    \sqrt{1152d^3R^4L^2\log(dT)+\E\mbr{\Vb_T}}
    &\le\sqrt{1152d^3R^4L^2\log(dT)+2d^2\E\mbr{\sum_{t=1}^{T}\sum_{i=1}^{d}\rho_{t,i}\sbr{ \nabla_{i} f_t(\w_{t-1})-\nabla_{i} f_{t-1}(\w_{t-1})}^2}}\\
    &\qquad\qquad+\sqrt{2d^2\E\mbr{\sum_{t=1}^{T}\sum_{i=1}^{d}\rho_{t,i}\sbr{ \nabla_{i} f_{t}(\w_t)-\nabla_{i} f_t(\w_{t-1})}^2}}\\
    \le{}&\sqrt{1152d^3R^4L^2\log(dT)+8d^3 V_T\log d}+\sqrt{8d^3L^2\log(dT)\E\mbr{\sum_{t=1}^{T}\|\w_t-\w_{t-1}\|^2}}+\O(1).
\end{align*}

Plugging it back to \eqref{eq:reg_bound_mid}, we have
\begin{align*}
  \E[\Reg_T]\le {} & \E\mbr{6R\sqrt{1152d^3R^4L^2\log(dT)+\Vb_T}-\sum_{t=2}^T \frac{1}{4\eta_{t-1}}\|\w_t-\w_{t-1}\|^2}+\O\sbr{\sqrt{d^3\log(dT)}}\\
  \le{}&6R\sqrt{1152d^3\sbr{R^4L^2\log(dT)+8V_T\log d}}+6R\sqrt{8d^3L^2\log(dT)\E\mbr{\sum_{t=1}^{T}\|\w_t-\w_{t-1}\|^2}}\\
  &\qquad\qquad -\frac{RL\sqrt{1152d^3\log(dT)}}{4}\E\mbr{\sum_{t=2}^T \|\w_t-\w_{t-1}\|^2} +\O\sbr{\sqrt{d^3\log(dT)}}\\
  \le{}&\O\sbr{\sqrt{d^3\log(dT)+d^3 V_T\log d}},
\end{align*}
where the last inequality is by AM-GM inequality.
\paragraph{Problem-Independent Regret.}
Using property \rom{10} of \pref{lem:alpha}, we have $\E\mbr{\Vb_T}\le 8dG^2T+\O(1)$.
Plugging it back to \eqref{eq:reg_bound_mid},  we have
\begin{align*}
\E[\Reg_T]\le6R\sqrt{1152d^3R^4L^2\log(dT)+8dG^2T}+\O\sbr{\sqrt{d}}=\O\sbr{\sqrt{d^3\log(dT)+dT}},
\end{align*}
which completes the proof.
\end{proof}

\subsection{Proof of \pref{thm:scvx-base}}
\label{app:scvx-base}
\begin{proof}
    To start, by the standard \OOGD analysis, e.g., {\citet[Lemma 12]{yan2023universal}}, we have
    \begin{align}
        \E[\Reg_T]\le {} & \E\mbr{\sum_{t=1}^T f_t(\w_t)-\sum_{t=1}^T f_t(\ws)}+\O(1)
        \le \E\mbr{\sum_{t=1}^{T}\inner{\nabla f_t(\w_t)}{\w_t-\ws}-\frac{\lambda}{2} \|\w_t-\ws\|^2}+\O(1) \notag\\
        \le{}& \E\mbr{\sumT\inner{\g_t}{\w_t-\ws}-\frac{\lambda}{2} \|\w_t-\ws\|^2}+\O(1) \notag\\
        \le{}& 2\underbrace{\E\mbr{\sum_{t=1}^T \eta_t\left\|\g_t-\tgb_t\right\|^2}}_{\term{a}}
        -\underbrace{\E\mbr{\sum_{t=1}^T \frac{1}{2\eta_t}\sbr{\|\wh_{t+1}-\w_t\|^2+\|\w_t-\wh_t\|^2}}}_{\term{b}}+\O(1), \label{eq:scvx-omd}
    \end{align}
  where the third inequality is by property~\rom{11} in \pref{lem:alpha}.

    For \term{a}, we have
    \begin{align}
        \term{a} \le {} & \O\sbr{\frac{\max_{t\in[T]}\E\mbr{\|\g_t-\tgb_t\|^2}}{\lambda}\log \sbr{\lambda\E\mbr{\sum_{t=1}^T\|\g_t-\tgb_t\|^2}}}\label{eq:reg_bound_mid_scvx}\\
        \le{}& \O\sbr{\frac{d}{\lambda}\log \sbr{\lambda\E[\Vb_T]}}\notag\\
        \le{}&\O\sbr{\frac{d}{\lambda}\log \sbr{\lambda d^3\log d V_T +\lambda d^3 \log(dT)\E\mbr{\sumT \|\w_t-\w_{t-1}\|^2 }}}, \notag
    \end{align}
    where the first step is by choosing $a_t=\E\mbr{\|\g_t-\tgb_t\|^2}$ in \pref{lem:sum_scvx} for any $t\in[T]$, the second step is by property~\rom{10} in \pref{lem:alpha}, and the last step is by \pref{lem:bandit-VT-correction-cvx}.

    Consequently, we obtain the following regret bound:
    \begin{equation*}
        \E[\Reg_T]\le \O\sbr{\frac{d}{\lambda}\log \sbr{\lambda d^3 V_T\log d +\lambda d^3 \log(dT)\E\mbr{\sumT \|\w_t-\w_{t-1}\|^2 }}}-\term{b}.
    \end{equation*}
    Following the analysis in \citet[Theorem 16]{chiang2013beating}, we analyze two cases: \rom{1} $\E[\sumT \|\w_t-\w_{t-1}\|^2]=\O(1)$; and \rom{2} $\E[\sumT \|\w_t-\w_{t-1}\|^2]=\omega(1)$. For the first case,
    \begin{equation*}
        \E[\Reg_T]\le \O\sbr{\frac{d}{\lambda}\log \sbr{\lambda d^3\log d V_T+\lambda d^3 \log(dT)}}=\O\sbr{\frac{d}{\lambda}\log \sbr{dV_T}},
    \end{equation*}
    where we omit the $\O(\log \log T)$ term.
    For the second case, we have
    \begin{align*}
       \E[\Reg_T]\le{}& \O\sbr{\frac{d}{\lambda}\log \sbr{\lambda d^3\sbr{V_T\log d+\log(dT)}}}+\O\sbr{\frac{d}{\lambda}\log \sbr{\E\mbr{\sumT \|\w_t-\w_{t-1}\|^2}}}-\term{b}\\
       \le{}& \O\sbr{\frac{d}{\lambda}\log \sbr{dV_T}}+\O\sbr{\frac{d}{\lambda}\log \sbr{\E\mbr{\sumT \|\w_t-\w_{t-1}\|^2}}}-\frac{\lambda}{4}\E\mbr{\sumT \|\w_t-\w_{t-1}\|^2}\\
       \le{}& \O\sbr{\frac{d}{\lambda}\log \sbr{dV_T}},
    \end{align*}
where the first and third inequalities use $\ln(1+x)\le x$ for $x>0$, and we omit the $\O(\log \log T)$ term.
\end{proof}

\subsection{Omitted Details in \pref{sec:scvx-base}}
\label{app:cor-scvx}
To start, we first present \pref{alg:scvx-T} for strongly convex functions.
\begin{algorithm}[t]
    \caption{Expected Gradient Descent with Two Queries}
    \label{alg:scvx-T}
    \begin{algorithmic}[1]
    \Require Learning rates $\eta_t$, exploration parameter $\delta$ and shrinkage coefficient $\xi$
    \State $\w_1 \gets 0$
    \For{$t = 1, \ldots, T$}
        \State Pick a unit vector $\u_t$ uniformly from the unit sphere $\partial \mathbb{B}$
        \State Observe $f_t(\w_t + \delta \u_t)$ and $f_t(\w_t - \delta \u_t)$
        \State Set $\g_t \gets \frac{d}{2\delta}(f_t(\w_t + \delta \u_t ) - f_t(\w_t - \delta \u_t))\u_t$
        \State Update $\w_{t+1} \gets \Pi_{(1-\xi)\X}(\w_t - \eta_t \g_t)$
    \EndFor
    \end{algorithmic}
\end{algorithm}

\begin{Theorem}
    \label{thm:T-strcvx}
    Under Assumptions \ref{ass:boundedness}-\ref{ass:Lipschitzness}, by setting $\eta_t=\frac{1}{\lambda t}$, $\delta=\frac{1}{2d^2 TR}$,  and $\xi=\frac{\delta}{r}$, \pref{alg:scvx-T} satisfies:
    \begin{equation*}
        \E[\Reg_T]\le \O\sbr{\frac{d}{\lambda}\log  T}.
    \end{equation*}
\end{Theorem}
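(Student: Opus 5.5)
The plan follows the classical analysis of online gradient descent for strongly convex losses, with $\g_t$ as a noisy and slightly biased surrogate for $\nabla f_t(\w_t)$. The key observation is that the two-point spherical estimator has second moment $\O(dG^2)$ rather than $\O(d^2G^2)$. This is exactly what produces the $d/\lambda$ factor instead of $d^2/\lambda$.

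\textbf{Step 1: reduction to the centers.} First I will reduce the regret on the queried points to regret on the centers $\w_t$ against $\ws\in\argmin_{\w\in(1-\xi)\X}\sumT f_t(\w)$. This mirrors \pref{lem:domain_transition}. Lipschitzness gives $\frac12(f_t(\w_t+\delta\u_t)+f_t(\w_t-\delta\u_t))\le f_t(\w_t)+G\delta$. Shrinking the domain by $\xi=\delta/r$ costs at most $G\xi R T$. With $\delta=\frac{1}{2d^2TR}$, both costs are $\O(1)$. This step needs no smoothness.

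\textbf{Step 2: unbiasedness for a smoothed function.} Next I will use the standard fact that $\E_t[\g_t]=\nabla \hat f_t(\w_t)$, where $\hat f_t(\x)=\E_{\v\sim\B}[f_t(\x+\delta\v)]$ is the ball-smoothed function. This follows from the symmetric two-point estimator together with the divergence-theorem identity of \citet{flaxman2004online}. Two properties of $\hat f_t$ matter here:
\begin{itemize}
\item it is $\lambda$-strongly convex;
\item it satisfies $|\hat f_t-f_t|\le G\delta$.
\end{itemize}
So it suffices to bound the regret of $\w_t$ on $\{\hat f_t\}$ against $\ws$, at an additive $\O(1)$ cost.

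\textbf{Step 3: one-step OGD analysis.} Strong convexity of $\hat f_t$ and nonexpansiveness of the projection give
\[
\hat f_t(\w_t)-\hat f_t(\ws)\le \inner{\nabla\hat f_t(\w_t)}{\w_t-\ws}-\tfrac{\lambda}{2}\|\w_t-\ws\|^2 .
\]
Taking conditional expectations and using the projected update,
\[
\E\mbr{\inner{\g_t}{\w_t-\ws}}\le \E\mbr{\frac{\|\w_t-\ws\|^2-\|\w_{t+1}-\ws\|^2}{2\eta_t}+\frac{\eta_t}{2}\|\g_t\|^2}.
\]
Summing over $t$ with $\eta_t=1/(\lambda t)$ makes the telescoping coefficients $\frac{1}{2\eta_t}-\frac{1}{2\eta_{t-1}}-\frac{\lambda}{2}$ vanish. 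This leaves
\[
\sumT \frac{\E\|\g_t\|^2}{2\lambda t}.
\]

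\textbf{Step 4: the second-moment bound (main obstacle).} The crux is to show $\E\|\g_t\|^2\le \O(dG^2)$ without smoothness. The crude bound $\|\g_t\|\le dG$ only yields $d^2/\lambda$.

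Write $\|\g_t\|^2=\frac{d^2}{4\delta^2}\big(h(\u_t)-h(-\u_t)\big)^2$, where $h(\u)=f_t(\w_t+\delta\u)$ is $G\delta$-Lipschitz on the sphere. Since $\u_t$ is uniform, the symmetric difference satisfies
\[
\E\big(h(\u)-h(-\u)\big)^2\le 4\,\E\big(h(\u)-\E h\big)^2 .
\]
The right side is controlled by concentration of measure on the sphere: a Poincaré-type inequality for Lipschitz functions gives variance $\O(G^2\delta^2/d)$. Hence
\[
\E\|\g_t\|^2\le \frac{d^2}{4\delta^2}\cdot\O\!\left(\frac{G^2\delta^2}{d}\right)=\O(dG^2).
\]
This is exactly the argument of \citet{shamir2017optimal}, which I would invoke. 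Plugging it into Step 3 gives
\[
\sumT\frac{\O(dG^2)}{\lambda t}=\O\!\left(\frac{d}{\lambda}\log T\right),
\]
and adding the $\O(1)$ costs from Steps 1 and 2 completes the bound.
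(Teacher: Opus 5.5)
Your proposal is correct and follows essentially the same route as the paper. Both bound the regret of projected gradient descent with $\eta_t = 1/(\lambda t)$ by a sum of $\eta_t \E\|\g_t\|^2$, then use the concentration-on-the-sphere argument of \citet{shamir2017optimal} to get $\E\|\g_t\|^2 = \O(dG^2)$. The differences favor your version. Your smoothed surrogate $\hat f_t$ handles the estimator's bias without any smoothness, whereas the paper reuses \pref{eq:scvx-omd}, whose bias step cites property~(xi) of \pref{lem:alpha}, a smoothness-based bound proved for the coordinate estimator rather than the spherical one. Your direct Poincar\'e variance bound and harmonic sum also replace the paper's fourth-moment/Cauchy--Schwarz tail integration and its appeal to \pref{lem:sum_scvx}.
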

Before the proof, we first state the following technical concentration inequality.
\begin{Lemma}[Theorem 5.1.4 of \citet{vershynin2018high}]
    \label{lem:concentration_ball}
    Let $X$ be uniformly distributed on the unit sphere $\partial \mathbb{B}$, and let $g: \mathbb{R}^d \rightarrow \mathbb{R}$ be $G$-Lipschitz with respect to the Euclidean norm. Then for all $t \geq 0$,
    \begin{equation*}
        \mathbb{P}(|g(X)-\mathbb{E}[g(X)]|>t) \leq 2\cdot \exp(-c^{\prime} d t^2 / G^2).
    \end{equation*}
    \end{Lemma}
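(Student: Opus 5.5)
By homogeneity we may take $G=1$: replace $g$ by $g/G$ and $t$ by $t/G$. Note also that $g$ only enters through its restriction to $\partial\B$, and that restriction is $1$-Lipschitz in the Euclidean (chordal) metric, hence also with respect to the geodesic metric, which dominates it. I would follow the classical route through Lévy's spherical isoperimetric inequality: first prove concentration around a median $M$ of $g(X)$, then replace the median by the mean.

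\textbf{Step 1 (isoperimetric blow-up).} For a measurable $A\subseteq\partial\B$ and $s>0$, let $A_s\define\{\x\in\partial\B \given \mathrm{dist}(\x,A)\le s\}$ be its $s$-neighbourhood. The claim is that if $\P(X\in A)\ge 1/2$, then $\P(X\in A_s)\ge 1-2\exp(-c\,d\,s^2)$. I would derive this from Lévy's isoperimetric inequality, which says that among all sets of a given measure, spherical caps minimize the measure of their $s$-neighbourhood. It then suffices to treat $A$ equal to a hemisphere $\{x_1\le 0\}$. In that case $A_s\supseteq\{x_1\le \sin s\}$, and the complement has measure $\P(X_1>\sin s)$. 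This tail can be bounded by $\exp(-c\,d\,s^2)$, either by writing the density of $X_1$ as proportional to $(1-x^2)^{(d-3)/2}$ and integrating, or by a volume comparison of the cap with a ball of radius $\sqrt{1-\sin^2 s}$. This is the main obstacle. Isoperimetry itself I would cite as a black box; the remaining work is a careful but routine cap computation.

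\textbf{Step 2 (concentration around the median).} Let $M$ be a median of $g(X)$ and set $A=\{\x\in\partial\B \given g(\x)\le M\}$, so that $\P(X\in A)\ge 1/2$. Because $g$ is $1$-Lipschitz, $A_t\subseteq\{g\le M+t\}$. Step 1 then gives $\P(g(X)>M+t)\le 2e^{-c d t^2}$. Applying the same argument to $-g$ gives the lower tail, and together $\P(|g(X)-M|>t)\le 4e^{-c d t^2}$.

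\textbf{Step 3 (median to mean).} Integrate the tail from Step 2:
\[
|\E g(X)-M|\le \E|g(X)-M|=\int_0^\infty \P(|g(X)-M|>u)\,du \le C/\sqrt{d}.
\]
Now split into two cases. If $t\ge 2C/\sqrt d$, then $|g(X)-\E g(X)|>t$ forces $|g(X)-M|>t/2$, so the probability is at most $4e^{-c d t^2/4}$. If $t<2C/\sqrt d$, the right-hand side $2\exp(-c' d t^2)$ is bounded below by a positive constant once $c'$ is small enough, so we can arrange it to be at least $1$ and the bound holds trivially.

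Finally, shrink $c'$ to absorb the leading constant $4$ into $2$; this uses the same device as the small-$t$ case. Undoing the scaling $t\mapsto t/G$ yields $2\exp(-c' d t^2/G^2)$.
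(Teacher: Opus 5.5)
Your proof is correct. The paper gives no argument of its own and imports Theorem 5.1.4 of Vershynin; your route (L\'evy isoperimetry to blow up sets of measure at least $1/2$, concentration about a median of $g(X)$, then recentering at the mean) is essentially the proof given there, rescaled from $\sqrt{d}\,\partial\B$ to the unit sphere. The differences are cosmetic: Vershynin bounds the hemisphere neighbourhood via sub-gaussianity of a single coordinate of the uniform vector rather than a direct cap computation, and he does the median-to-mean step with a $\psi_2$-norm centering lemma rather than your two-case argument.
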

\begin{proof}[of \pref{thm:T-strcvx}]
For non-smooth functions, by simply choosing $\tgb_t = \mathbf{0}$ in \pref{eq:scvx-omd}, we have
\begin{equation*}
    \E[\Reg_T] \le 2 \E\mbr{\sum_{t=1}^T \eta_t \|\g_t\|^2} +\O(1).
\end{equation*}
In the following, we first upper bound $\E[\|\g_t\|^2]$. Following the analysis in \citet{shamir2017optimal}, we have
\begin{align*}
    \E\mbr{\|\g_t\|^2} = {} & \frac{d^2}{4\delta^2}\E\mbr{\|(f_t(\w_t+\delta\u_t)-f_t(\w_t-\delta\u_t))\u_t\|^2}
    \le \frac{d^2}{\delta^2}\E\mbr{\|(f_t(\w_t+\delta\u_t)-\E\mbr{f_t(\w_t+\delta\u_t)})\u_t\|^2}\\
    \le{}&\frac{d^2}{\delta^2}\underbrace{\sqrt{\E\mbr{(f_t(\w_t+\delta\u_t)-\E\mbr{f_t(\w_t+\delta\u_t)})^4}}}_{\term{a}}\underbrace{\sqrt{\E\mbr{\|\u_t\|^4}}}_{\term{b}}.
\end{align*}
    For \term{a}, by the Lipschitzness of $f_t$, setting $g_t(\u)=f_t(\w_t+\delta \u)$, by \pref{lem:concentration_ball}, we have
    \begin{align*}
    & \sqrt{\mathbb{E}\left[(g_t(\u_t)-\mathbb{E}[g_t(\u_t)])^4\right]}=\sqrt{\int_{t=0}^{\infty} \mathbb{P}\left((g_t(\u_t)-\mathbb{E}[g_t(\u_t)])^4>t\right) d t} \\
    & =\sqrt{\int_{t=0}^{\infty} \mathbb{P}(|g_t(\u_t)-\mathbb{E}[g_t(\u_t)]|>\sqrt[4]{t}) d t} \leq \sqrt{\int_{t=0}^{\infty} 2 \exp \left(-\frac{c^{\prime} d \sqrt{t}}{\delta^2 G^2}\right) d t}=\sqrt{2 \frac{\delta^4 G^4}{\left(c^{\prime} d\right)^2}}.
    \end{align*}
    For \term{b}, since $\|\u_t\|^4=1$, we have $\E\left[\|\g_t\|^2\right]\le CdG^2$, where $C$ is a constant independent of $d,T$.

    With this upper bound, using \pref{lem:sum_scvx} with $a_t=\E[\|\g_t\|^2]$, we have
\begin{equation*}
    \E[\Reg_T] \le \O\sbr{ \frac{d}{\lambda}\log\sbr{\E\mbr{\sum_{t=1}^T \|\g_t\|^2}}} \le \O\sbr{\frac{d}{\lambda}\log  T}.
\end{equation*}
\end{proof}

\subsection{Proof of \pref{lem:FT}}
\label{app:FT_lem}
\begin{proof}
To begin with, using property~\rom{8} in \pref{lem:alpha}, it holds that
\begin{align}
\E\mbr{\Vb_T}={}&\E\mbr{\sumT \|\g_t-\tgb_t\|^2}\le d^2\E\mbr{\sumT(\nabla_{i_t} f_{t}(\w_t)-\nabla_{i_t} f_{\alpha_t}(\w_{\alpha_t}))^2}+\O(1)\notag\\
    \le{}&2d^2\underbrace{\E\mbr{\sumT\inner{\nabla f_t(\w_t)}{\eb_{i_t}}^2}}_{\term{a}}+2d^2\underbrace{\E\mbr{\sumT\inner{\nabla f_{\alpha_t}(\w_{\alpha_t})}{\eb_{i_t}}^2}}_{\term{b}}+\O(1).\label{eq:F_T1}
\end{align}
For \term{a}, we have
\begin{align}
    \label{eq:thm4eq1}
    \term{a}=\frac{1}{d}\E\mbr{\sumT\|\nabla f_t(\w_t)\|^2}\le \frac{4L}{d} \E\mbr{F_T^\w},
\end{align}
where $F_T^\w\triangleq \sumT f_t(\w_t)-\sumT \min_{\x\in\X^+}f_t(\x)$ and the inequality is by the self-bounding property $\|\nabla f(\x)\|_2^2 \le 4 L (f(\x)  - \min_{\x \in \X^+} f(\x))$ for any $L$-smooth function $f:\X^{+} \rightarrow \R$ and any $\x \in \X$~\citep{yan2024simple}.

For \term{b}, choosing $\c=\mathbf{0}$ in \pref{lem:exp_alphat}, we have
\begin{align}
    \term{b}\le{}&\sumT \sum_{s=1}^{t-1} \frac{1}{d^2} \left( 1 - \frac{1}{d} \right)^{t-s-1}\E_{I_{s-1}}\mbr{ \|\nabla f_s(\w_s)\|^2}\notag\\
    ={}& \sum_{s=1}^{T-1} \E_{I_{s-1}}\mbr{ \|\nabla f_s(\w_s)\|^2 } \sum_{t=s+1}^{T} \frac{1}{d^2} \left( 1 - \frac{1}{d} \right)^{t-s-1}
    \le \frac{1}{d} \sum_{s=1}^{T-1} \E_{I_{s-1}}\mbr{ \|\nabla f_s(\w_s)\|^2}\le
    \frac{4L}{d}\E\mbr{F_T^\w},\label{eq:thm4eq2}
\end{align}
where the last step is also by the self-bounding property.

Plugging \pref{eq:thm4eq1} and \pref{eq:thm4eq2} back to \eqref{eq:F_T1}, we have
\begin{equation}
    \label{eq:F_T}
   \E\mbr{\Vb_T}\le 16dL \E\mbr{F_T^\w}+\O(1).
\end{equation}
\end{proof}

\subsection{Proof of \pref{thm:WT}}
\label{app:WT}
\begin{proof}
We start from the regret decomposition in \pref{eq:OMD_framework}.
For \term{a}, we have the following bound:
\begin{align*}
    \term{a}\le{}& R\E\mbr{\sumT \frac{\|\g_t-\tgb_t\|^2}{\sqrt{d^2+\bar{V}_{t-1}}}}
    \le 4R\E\mbr{\sqrt{d^2+\sumT \|\g_t-\tgb_t\|^2}+\max_{t\in[T]}\frac{\|\g_t-\tgb_t\|^2}{d}}\notag\\
    \le{}&4R\E\mbr{\sqrt{d^2+\Vb_T}}+\O\sbr{d},
\end{align*}
where the second inequality is by choosing $a_s=\frac{\|\g_s-\tgb_s\|^2}{d^2}$ in \pref{lem:sum_cvx}.

For \term{b}, we have
\begin{equation*}
    \term{b}\le \E\mbr{\frac{2R^2}{\eta_1}+\frac{2R^2}{\eta_T}}= 2R\E\mbr{\sqrt{d^2+\Vb_T}}+\O(d).
\end{equation*}
Thus, we have
\begin{equation}
    \label{eq:empirical_VT for WT}
    \E[\Reg_T]\le6R\sqrt{d^2+\E\mbr{\Vb_T}}+\O\sbr{d}.
\end{equation}
\paragraph{Linear Functions.}
In the linear setting, the gradient of $f_t(\cdot)$ is a constant vector and can be denoted by $\ellb_t$ for simplicity.
For the gradient-variance bound, we have the following decomposition:
\begin{align}
    \E\mbr{\Vb_T} = {}& \E\mbr{\sumT \|\g_t-\tgb_t\|^2}\le d^2\E\mbr{\sumT(\ell_{t,i_t}-\ell_{\alpha_t,i_t})^2}+\O(1)\notag\\
    \le{}&2d^2\underbrace{\E\mbr{\sumT(\ell_{t,i_t}-\mu_{T,i_t})^2}}_{\term{c}}+2d^2\underbrace{\E\mbr{\sumT(\ell_{\alpha_t,i_t}-\mu_{T,i_t})^2}}_{\term{d}}+\O(1).\notag
    \end{align}
For \term{c}, we have
\begin{align}
    \label{eq:thm3eq1}
    \sumT\E\mbr{(\ell_{t,i_t}-\mu_{T,i_t})^2}=\frac{1}{d}\sumT\|\ellb_t-\mub_T\|^2=\frac{W_T}{d}.
\end{align}
For \term{d}, since for linear functions, $\mub_T$ is a constant vector. Choosing $\c=\mub_T$ in \pref{lem:exp_alphat}, we have
\begin{align}
    &\E\mbr{\sumT(\ell_{\alpha_t,i_t}-\mu_{T,i_t})^2}
    \le \sumT \sum_{s=1}^{t-1} \frac{1}{d^2} \left( 1 - \frac{1}{d} \right)^{t-s-1}\E_{I_{s-1}}\mbr{ \|\ellb_s-\mub_T\|^2}+ \frac{1}{d}\sum_{t=1}^T\left(1-\frac{1}{d}\right)^{t-1} \|\mub_T\|^2\notag\\
    \le{}& \sum_{s=1}^{T-1} \E_{I_{s-1}}\mbr{ \|\ellb_s-\mub_T\|^2} \sum_{t=s+1}^{T} \frac{1}{d^2} \left( 1 - \frac{1}{d} \right)^{t-s-1}+\|\mub_T\|^2
    \le \frac{1}{d} \sbr{\sum_{s=1}^{T-1}  \|\ellb_s-\mub_T\|^2}+G^2\le\frac{W_T}{d}+G^2,\label{eq:thm3eq2}
\end{align}
where the second inequality is because for linear functions, $\ellb_t$ is deterministic for $t \in [T]$ and $\|\ellb_0 - \mub_T\|\le G$.

Plugging \pref{eq:thm3eq1} and \pref{eq:thm3eq2} back, we have
\begin{align}
     \E\mbr{\Vb_T}\le4d W_T+\O(d^2).\label{eq:W_T_lin}
\end{align}
Plugging \eqref{eq:W_T_lin} back to \eqref{eq:empirical_VT for WT}, we have
\begin{equation*}
    \E[\Reg_T]\le6R\sqrt{d^2+4dW_T+\O(d^2)}+\O\sbr{d}\le \O\sbr{\sqrt{dW_T}+d}.
\end{equation*}
\paragraph{Convex Functions.}
By property \rom{8} of \pref{lem:alpha}, we have the following decomposition:
\begin{align}
     \Vb_T={}&\sumT \|\g_t-\tgb_t\|^2\le d^2\sumT(\nabla f_{t,i_t}(\w_t)-\nabla f_{\alpha_t,i_t}(\w_{\alpha_t}))^2+\O(1)\notag\\
     \le{}&d^2\sumT\|\nabla f_{t}(\w_t)-\nabla f_{\alpha_t}(\w_{\alpha_t})\|^2+\O(1)\notag\\
     \le{}&2d^2\sumT\sbr{\|\nabla f_{t}(\w_t)-\mub_T\|^2+\|\nabla f_{\alpha_t}(\w_{\alpha_t})-\mub_T\|^2}+\O(1)\notag\\
     \le{}& 4d^2\sumT \|\nabla f_t(\w_t)-\mub_T\|^2+ d^3\|\mub_T\|^2+\O(1)\notag\\
     \le{}&4d^2\sup_{\{\w_1,\ldots,\w_T\} \in (1-\xi)\X} \bbr{\sumT \|\nabla f_t(\w_t) - \mub_T\|^2} +\O(d^3) \le4d^2W_T+\O(d^3),\label{eq:W_T}
\end{align}
where $\mub_T\define \frac{1}{T} \sumT \nabla f_t(\w_t)$. The third step is by $\inner{\x}{\eb_i}^2\le\|\x\|^2$ for any $\x$. The fifth step follows by observing that for each $s \ge 1$, there exists at most a single index $t$ such that $\alpha_t = s$, while $\alpha_t = 0$ occurs no more than $d$ times.

Plugging \eqref{eq:W_T} back to \eqref{eq:empirical_VT for WT}, we have
\begin{equation*}
    \E[\Reg_T]\le6R\sqrt{d^2+4d^2W_T+\O(d^3)}+\O\sbr{d}\le \O\sbr{d\sqrt{W_T+d}}.
\end{equation*}
\paragraph{Strongly Convex Functions.}
By plugging \pref{eq:W_T} into the bound in \pref{eq:reg_bound_mid_scvx}, we directly have
\begin{equation*}
    \E[\Reg_T]\le\O\sbr{\frac{d}{\lambda}\log \sbr{dW_T}},
\end{equation*}
which completes the proof.
\end{proof}

\subsection{Proof of \pref{thm:FT}}
\label{app:FT}
\begin{proof}
\paragraph{Convex Functions.}
We start from the regret decomposition in \pref{eq:empirical_VT for WT}.
Plugging \eqref{eq:F_T} back to \eqref{eq:empirical_VT for WT}, we have
\begin{align*}
    \E[\Reg_T]\le{}&6R\sqrt{d^2+16dL \E\mbr{F_T^\w}+\O(1)}+\O\sbr{d}\\
    \le{}&\O\sbr{\sqrt{d\E\mbr{F_T^\w}}+d}\le\O\sbr{\sqrt{dF_T}+d},
\end{align*}
where the last step uses \pref{lem:small-loss-sqrt} by choosing $a=\O(d)$ and $b$ as a constant independent of $d,T$ and setting
\begin{equation*}
    x=\sumT f_t(\w_t)-\sumT\min_{\x\in\X^+}f_t(\x) \text{ and } y=\min_{\x\in\X}\sumT f_t(\x)-\sumT\min_{\x\in\X^+}f_t(\x).
\end{equation*}
\paragraph{Strongly Convex Functions.}
Plugging \eqref{eq:F_T} into the bound in \pref{eq:reg_bound_mid_scvx}, we directly have
\begin{equation*}
    \E[\Reg_T]\le{} \O\sbr{\frac{d}{\lambda}\log \sbr{d\E\mbr{F_T^\w}}}\le\O\sbr{\frac{d}{\lambda}\log \sbr{dF_T}},
\end{equation*}
where the last step uses \pref{lem:small-loss-log} by choosing $a=\O(d)$ and $b,c$ as constants independent of $d,T$ and setting
\begin{equation*}
    x=\sumT f_t(\w_t)-\sumT\min_{\x\in\X^+}f_t(\x) \text{ and } y=\min_{\x\in\X}\sumT f_t(\x)-\sumT\min_{\x\in\X^+}f_t(\x).
\end{equation*}
\end{proof}


\section{Omitted Details in \pref{sec:1point}}
\label{app:1point}
In this section, we provide the omitted details in \pref{sec:1point}.
Specifically, we first present a practical version of \pref{alg:1p-BLO} in \pref{app:practical-1p-BLO}, and then prove its gradient-variation regret in \pref{app:1p-BLO}.
Without loss of generality, we assume $a_i\le0< b_i$ for all $i \in [d]$.

\subsection{A Practical Version of Algorithm~\ref{alg:1p-BLO}}
\label{app:practical-1p-BLO}
To start, we show the relationship between $w_{t+1,i}$ and $\gt_{t+1,i}$.
\begin{gather}
    \eta \sum_{s=1}^{t} g_{s,i}+\eta\gt_{t+1,i}+\frac{1}{b_i-w_{t+1,i}}-\frac{1}{w_{t+1,i}-a_i}=0,\label{eq:E1}\\
    \gt_{t+1,i}=\frac{1}{2}\lambda_{t+1,i}^{\frac{1}{2}}(r^{(+1)}_i-r^{(-1)}_i),\label{eq:E2}\\
    \lambda_{t+1,i}=\frac{1}{\sbr{b_i-w_{t+1,i}}^2}+\frac{1}{\sbr{w_{t+1,i}-a_i}^2}.\label{eq:E3}
\end{gather}
Let $f_i(x) = -\log(b_i - x) - \log(x - a_i)$ be the univariate log-barrier.
By combining \pref{eq:E1}-\pref{eq:E3}, we can show that for each $i \in [d]$, $w_{t+1,i}$ satisfies the optimality condition $F_{t,i}(x) = 0$, where $F_{t,i}$ is defined as:
\begin{equation}
    \label{eq:F_i}
    F_{t,i}(x)\triangleq\eta \left(\sum_{s=1}^{t}g_{s,i}+\frac{1}{2}\sqrt{f_i^{\prime\prime}(x)}(r^{(+1)}_i-r^{(-1)}_i)\right)+f_i^\prime(x).
\end{equation}
The existence and uniqueness of the solution $w_{t+1,i} \in (a_i, b_i)$ are established in \pref{lem:solve_x}.
\begin{Lemma}
\label{lem:solve_x}
Assume that for each dimension $i \in [d]$, the step size $\eta$ and the buffer constants $c_i = \frac{1}{2}(r^{(+1)}_i - r^{(-1)}_i)$ satisfy the condition $|\eta c_i| < 1$.
Then $F_{t,i}(x)$ is strictly monotone and the equation $F_{t,i}(x) = 0$ has a unique solution $w_{t+1,i}$ in the open interval $(a_i, b_i)$.
\end{Lemma}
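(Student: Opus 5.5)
We need to prove that $F_{t,i}(x)=\eta S + \eta c_i\sqrt{f_i''(x)} + f_i'(x)$ is strictly monotone on $(a_i,b_i)$ with limits $\mp\infty$ at the endpoints, where $S=\sum_{s\le t} g_{s,i}$ is a constant. Existence and uniqueness of the root then follow from the intermediate value theorem.

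First we compute the derivatives of the univariate barrier. Write $u=b_i-x>0$ and $v=x-a_i>0$. Then
\begin{align*}
f_i'(x)&=\frac{1}{u}-\frac{1}{v}, & f_i''(x)&=\frac{1}{u^2}+\frac{1}{v^2}, & f_i'''(x)&=\frac{2}{u^3}-\frac{2}{v^3}.
\end{align*}
Hence
\begin{equation*}
F_{t,i}'(x)=f_i''(x)+\eta c_i\frac{f_i'''(x)}{2\sqrt{f_i''(x)}}.
\end{equation*}
Strict increase therefore reduces to the self-concordance-type inequality $|f_i'''(x)|\le 2 f_i''(x)^{3/2}$. For the log-barrier this holds by direct computation:
\begin{equation*}
\Bigl|\frac{2}{u^3}-\frac{2}{v^3}\Bigr|\le \frac{2}{u^3}+\frac{2}{v^3}\le 2\Bigl(\frac{1}{u^2}+\frac{1}{v^2}\Bigr)^{3/2}.
\end{equation*}
The last step uses $p^{3}+q^{3}\le (p^2+q^2)^{3/2}$ for $p,q\ge0$, which is the monotonicity of $\ell_p$ norms, $\|\cdot\|_3\le\|\cdot\|_2$. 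Plugging this in gives
\begin{equation*}
F_{t,i}'(x)\ge f_i''(x)\bigl(1-|\eta c_i|\bigr)>0
\end{equation*}
under the hypothesis $|\eta c_i|<1$. So $F_{t,i}$ is strictly increasing.

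Next we check the boundary behavior. As $x\to b_i^-$, we have $u\to0$, so $f_i'(x)\sim 1/u\to+\infty$, while $\sqrt{f_i''(x)}\sim 1/u$. Thus
\begin{equation*}
F_{t,i}(x)\approx \frac{1+\eta c_i}{u}+O(1)\to+\infty,
\end{equation*}
since $1+\eta c_i>0$. Symmetrically, as $x\to a_i^+$, $F_{t,i}(x)\approx -(1-\eta c_i)/v\to-\infty$, using $1-\eta c_i>0$.

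$F_{t,i}$ is continuous on the open interval, so the intermediate value theorem gives a root, and strict monotonicity makes it unique. The only delicate points are the cubic-versus-square norm inequality and tracking the sign of $1\pm\eta c_i$ in the limits. Both are handled by the assumption $|\eta c_i|<1$, so I do not expect a real obstacle.
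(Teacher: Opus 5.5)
Your proof is correct and follows essentially the same route as the paper. Existence comes from the boundary limits $\mp\infty$ (driven by $1\mp\eta c_i>0$) together with the intermediate value theorem, and uniqueness comes from positivity of $F_{t,i}'$ via the bound $|f_i'''|\le 2(f_i'')^{3/2}$. The only cosmetic difference is that you justify this bound explicitly through $\|\cdot\|_3\le\|\cdot\|_2$, whereas the paper rewrites it as $|h(\rho)|<1$ with $h(\rho)=(1-\rho^3)/(1+\rho^2)^{3/2}$ and $\rho=(b_i-x)/(x-a_i)$, and simply asserts this without verification.
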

\begin{proof}
For simplicity, let $L(x) = x - a_i$ and $R(x) = b_i - x$.
We can express the derivatives of $f_i(x)$ as:
\begin{equation*}
    f_i^\prime(x) = \frac{1}{R(x)} - \frac{1}{L(x)}, \qquad \text{and} \qquad
    f_i^{\prime\prime}(x) = \frac{1}{L(x)^2} + \frac{1}{R(x)^2}.
\end{equation*}
Let $S_{t,i} = \sum_{s=1}^{t} g_{s,i}$ and $c_i = \frac{1}{2}(r^{(+1)}_i - r^{(-1)}_i)$.
We rewrite \eqref{eq:F_i} as:
\begin{equation*}
    F_{t,i}(x) = \eta S_{t,i} + \eta c_i \sqrt{f_i^{\prime\prime}(x)} + f_i^\prime(x)= \eta S_{t,i} + \frac{\eta c_i \sqrt{L(x)^2 + R(x)^2} + L(x) - R(x)}{L(x)R(x)}.
\end{equation*}

\textbf{Existence:}
We evaluate the limits of $F_{t,i}(x)$ as $x$ approaches the boundaries.
For the limit $x \to a_i^+$, we have $L(x) \to 0^+$ and $R(x) \to b_i - a_i$.
The numerator of the fraction satisfies:
\begin{equation*}
    \lim_{x \to a_i^+} \left( \eta c_i \sqrt{L(x)^2 + R(x)^2} + L(x) - R(x) \right) = \eta c_i (b_i - a_i) + 0 - (b_i - a_i) = (\eta c_i - 1)(b_i - a_i).
\end{equation*}
Since $|\eta c_i| < 1$, it follows that $(\eta c_i - 1) < 0$.
As the denominator $L(x)R(x) \to 0^+$, we have:
\begin{align*}
    \lim_{x \to a_i^+} F_{t,i}(x) = \eta S_{t,i} + \frac{(\eta c_i - 1)(b_i - a_i)}{0^+} = -\infty.
\end{align*}
Similarly, for the limit $x \to b_i^-$, we have $R(x) \to 0^+$ and $L(x) \to b_i - a_i$.
The numerator satisfies:
\begin{equation*}
    \lim_{x \to b_i^-} \left( \eta c_i \sqrt{L(x)^2 + R(x)^2} + L(x) - R(x) \right) = \eta c_i (b_i - a_i) + (b_i - a_i) - 0 = (\eta c_i + 1)(b_i - a_i).
\end{equation*}
Since $\eta c_i + 1 > 0$, and the denominator $L(x)R(x) \to 0^+$, we have:
\begin{align*}
    \lim_{x \to b_i^-} F_{t,i}(x) = \eta S_{t,i} + \frac{(\eta c_i + 1)(b_i - a_i)}{0^+} = +\infty.
\end{align*}
By the Intermediate Value Theorem, there exists at least one $x^* \in (a_i, b_i)$ such that $F_{t,i}(x^*) = 0$.

\textbf{Uniqueness:}
We show that $F_{t,i}(x)$ is strictly monotonically increasing by examining its derivative:
\begin{equation*}
    F_{t,i}^\prime(x) = f_i^{\prime\prime}(x) + \eta c_i \frac{f_i^{\prime\prime\prime}(x)}{2\sqrt{f_i^{\prime\prime}(x)}} = f_i^{\prime\prime}(x) \left( 1 + \eta c_i \frac{f_i^{\prime\prime\prime}(x)}{2(f_i^{\prime\prime}(x))^{3/2}} \right).
\end{equation*}
Substituting $f_i^{\prime\prime\prime}(x) = 2(R(x)^{-3} - L(x)^{-3})$, and letting $\rho = R(x)/L(x) \in (0, \infty)$, the ratio term becomes:
\begin{equation*}
    \left| \frac{f_i^{\prime\prime\prime}(x)}{2(f_i^{\prime\prime}(x))^{3/2}} \right| = \left| \frac{R^{-3} - L^{-3}}{(L^{-2} + R^{-2})^{3/2}} \right| = \left| \frac{1 - \rho^3}{(1 + \rho^2)^{3/2}} \right|.
\end{equation*}
Let $h(\rho) = \frac{1 - \rho^3}{(1 + \rho^2)^{3/2}}$.
It can be verified that $|h(\rho)| < 1$ for all $\rho > 0$.
Therefore,
\begin{equation*}
    F_{t,i}^\prime(x) > f_i^{\prime\prime}(x) (1 - |\eta c_i|).
\end{equation*}
Since $|\eta c_i| < 1$ and $f_i^{\prime\prime}(x) > 0$, we have $F_{t,i}^\prime(x) > 0$ for $x \in (a_i, b_i)$.
The strict monotonicity implies the unique solution.
\end{proof}

With the theoretical guarantee, we present a practical version for \pref{alg:1p-BLO} in \pref{alg:practical}.
\begin{algorithm}[!t]
\caption{Practical Version of \pref{alg:1p-BLO}}
\label{alg:practical}
\begin{algorithmic}
    \Require Step size $\eta>0$
    \State Let $\w_1=\mathbf{0}$, $\tgb_1 = \mathbf{0}$.
    Let buffer vectors $\r^{(+1)}=\r^{(-1)}=\mathbf{0}$.
\For{round $t \in [T]$}
    \State Choose $i_t$ uniformly from $[d]$ and $\epsilon_t$ uniformly from $\{-1,+1\}$ and fetch $z_t=r^{(\epsilon_t)}_{i_t}$
    \State Play action $\x_t = \w_t + \epsilon_t \lambda_{t,i_t}^{-1/2}\eb_{i_t}$
    \State Observe partial information $v_t=\inner{\ellb_t}{\x_t}$
    \State Compute $\g_t$ as in \pref{eq:chiangestimator1p}
    \State Update buffer vector $r^{(\epsilon_t)}_{i_t}=v_t$
    \State Construct $F_{t,i}$ as in \pref{eq:F_i} and solve the following system of equations for $\w_{t+1}$
    \begin{equation*}
    F_{t,i}(w_{t+1,i})=0,\  \forall i\in [d]
    \end{equation*}
    \State Compute $\tgb_{t+1}$ as in \pref{eq:chiangestimator1p}
\EndFor
\end{algorithmic}
\end{algorithm}

\subsection{Proof of \pref{thm:1p-BLO}}
\label{app:1p-BLO}
Before the proof, we introduce \pref{lem:Lipschitz} that will be used in the proof.
\begin{Lemma}
    \label{lem:Lipschitz}
    Let $\lambda_{i}(x)=\frac{1}{(b_i-x)^2}+\frac{1}{(x-a_i)^2}$.
    Then, for any $x,y\in (a_i,b_i)$, we have $\left|\lambda_{i}^{-\frac{1}{2}}(x)-\lambda_{i}^{-\frac{1}{2}}(y)\right|\le |x-y|$.
\end{Lemma}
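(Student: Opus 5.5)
The map $\phi(x)\define\lambda_i^{-1/2}(x)$ is differentiable on the open interval $(a_i,b_i)$, so by the mean value theorem it suffices to show $|\phi'(x)|\le 1$ for every $x\in(a_i,b_i)$. As in the proof of \pref{lem:solve_x}, write $L=x-a_i>0$ and $R=b_i-x>0$. Then $\lambda_i(x)=L^{-2}+R^{-2}=\frac{L^2+R^2}{L^2R^2}$, so
\begin{equation*}
    \phi(x)=\frac{LR}{\sqrt{L^2+R^2}}.
\end{equation*}

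Next I differentiate in $x$, using $L'=1$ and $R'=-1$. This gives
\begin{equation*}
    \phi'(x)=\frac{(R-L)(L^2+R^2)-LR(L-R)}{(L^2+R^2)^{3/2}}=\frac{(R-L)(L^2+LR+R^2)}{(L^2+R^2)^{3/2}}.
\end{equation*}
Alternatively, $\phi'=-\tfrac12\lambda_i^{-3/2}\lambda_i'$ with $\lambda_i'=2(R^{-3}-L^{-3})$, which yields $|\phi'|=\bigl|\frac{R^{-3}-L^{-3}}{(L^{-2}+R^{-2})^{3/2}}\bigr|=|h(\rho)|$. Here $\rho=R/L$ and $h$ is exactly the function already introduced in \pref{lem:solve_x}.

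The only real step is showing $|h(\rho)|\le 1$, which \pref{lem:solve_x} asserts but does not verify. For this I use the homogeneous form: I need $|R-L|\,(L^2+LR+R^2)\le (L^2+R^2)^{3/2}$. Assume $R\ge L$ without loss of generality, and normalize $L^2+R^2=1$. Let $s=LR\in[0,1/2]$. Then $(R-L)^2=1-2s$, so the claim becomes $(1-2s)(1+s)^2\le 1$. Expanding gives $(1-2s)(1+s)^2=1-3s^2-2s^3\le 1$, which holds for all $s\ge 0$.

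Hence $|\phi'|\le 1$ throughout the interval, with equality only in the degenerate limit $s=0$. Applying the mean value theorem on $[x,y]\subset(a_i,b_i)$ then gives the stated $1$-Lipschitz bound. I expect no obstacle beyond this algebraic inequality.
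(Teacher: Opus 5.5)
Your proof is correct and follows the same route as the paper: write $\lambda_i^{-1/2}$ in terms of $x-a_i$ and $b_i-x$, show the derivative $\frac{R^3-L^3}{(L^2+R^2)^{3/2}}$ has absolute value at most $1$, and conclude by the mean value theorem. Your normalization check $(1-2s)(1+s)^2=1-3s^2-2s^3\le 1$ supplies the verification of this bound, which the paper only asserts; a shorter justification is $|R^3-L^3|\le\max(L,R)^3\le(L^2+R^2)^{3/2}$.
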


\begin{proof}[of \pref{thm:1p-BLO}]
First, we denote $\|\a\|_{\w_t}=\sqrt{\a^\top\nabla^2 \Rc(\w_t)\a}$ and $\|\a\|_{{\w_t}^{-1}}=\sqrt{\a^\top\nabla^{-2} \Rc (\w_t)\a}$ for any $\a\in \R^d$.

To start, we observe that $\E_{i_t,\epsilon_t}[\x_t]=\w_t$ and
\begin{align*}
    \E_{i_t,\epsilon_t}\mbr{\g_t} = {} & \E_{i_t,\epsilon_t}\mbr{d\sbr{\inner{\ellb_t}{\x_t}-\inner{\ellb_{\alpha_t}}{\x_{\alpha_t}}}\epsilon_t\lambda_{t,i_t}^{\frac{1}{2}}\eb_{i_t}}+\tgb_t \tag{by definition of $\g_t$}\\
    ={}&d\E_{i_t,\epsilon_t}\mbr{\inner{\ellb_t}{\w_t+\epsilon_t\lambda_{t,i_t}^{-\frac{1}{2}}\eb_{i_t}}\epsilon_t\lambda_{t,i_t}^{\frac{1}{2}}\eb_{i_t}}-d\E_{i_t,\epsilon_t}\mbr{\epsilon_t\lambda_{t,i_t}^{\frac{1}{2}}\inner{\ellb_{\alpha_t}}{\x_{\alpha_t}}{\eb_{i_t}}}+\tgb_t\\
    =&d\E_{i_t,\epsilon_t}\mbr{\inner{\ellb_t}{\w_t+\epsilon_t\lambda_{t,i_t}^{-\frac{1}{2}}\eb_{i_t}}\epsilon_t\lambda_{t,i_t}^{\frac{1}{2}}\eb_{i_t}}-\frac{d}{2}\E_{i_t,\epsilon_t}\mbr{\epsilon_t \lambda_{t,i_t}^{\frac{1}{2}} \sbr{\inner{\ellb_{\alpha_t}}{\x_{\alpha_t}}-\inner{\ellb_{\beta_t}}{\x_{\beta_t}}}{\eb_{i_t}}}+\tgb_t\\
    =&d\E_{i_t,\epsilon_t}\mbr{\inner{\ellb_t}{\w_t+\epsilon_t\lambda_{t,i_t}^{-\frac{1}{2}}\eb_{i_t}}\epsilon_t\lambda_{t,i_t}^{\frac{1}{2}}\eb_{i_t}}-d\E_{i_t}\mbr{\gt_{t,i_t}\eb_{i_t}}+\tgb_t \tag{by definition of $\tgb_t$}\\
    =&d\E_{i_t,\epsilon_t}\mbr{\inner{\ellb_t}{\w_t+\epsilon_t\lambda_{t,i_t}^{-\frac{1}{2}}\eb_{i_t}}\epsilon_t\lambda_{t,i_t}^{\frac{1}{2}}\eb_{i_t}}
    = d\E_{i_t}\mbr{\ell_{t,i_t}\eb_{i_t}}=\ellb_t,
\end{align*}
where we define $\beta_t$ as the largest integer such that $0 \leq \beta_t < t$, $i_{\beta_t} = i_t$, and $\epsilon_{\beta_t} =-\epsilon_t$.  The third and sixth steps are by the symmetry of $\epsilon_t$, and both the fifth and the last step follow from $d\E[v_{i}\eb_i]=\sum_{i=1}^{d}v_i\eb_i=\v$, for any $\v \in \R^d$.
Then, we have
\begin{align*}
    \E\mbr{\inner{\ellb_t}{\x_t}}=\E\mbr{\inner{\ellb_t}{\E_{i_t,\epsilon_t}[\x_t]}}=\E\mbr{\inner{\E_{i_t,\epsilon_t}[\g_t]}{\w_t}}=\E\mbr{\inner{\g_t}{\w_t}}.
\end{align*}
Thus, by the FTRL analysis framework, e.g., Theorem 7.47 of \citet{orabona2019modern}, we have
\begin{align}
    \E[\Reg_T] = {} & \E\mbr{\sumT \inner{\ellb_t}{\x_t-\v}}+\E\mbr{\sumT \inner{\ellb_t}{\v-\xs}}
    \le \E\mbr{\sumT  \inner{\g_t}{\w_t-\v}}+\O(1)\notag \\
    \le{}&\underbrace{\E\mbr{\eta\sumT  \left\|\g_t-\tgb_t\right\|^2_{\w_t^{-1}}}}_{\term{a}}
    +\underbrace{\E\mbr{\frac{\mathcal{R}(\v)}{\eta}}}_{\term{b}} - \underbrace{\E\mbr{\frac{1}{\eta}\sumT  \|\w_t-\w_{t+1}\|_{\w_t}^2}}_{\term{c}}+\O(1),\label{eq:OMD_framework1q}
\end{align}
where $\x^\star\in\argmin_{\x\in\X} \sumT \inner{\ellb_t}{\x}$ and  $\v \define\frac{T-1}{T}\xs+\frac{1}{T}\x^\dagger$ where $\x^\dagger\in \X$ satisfies $\nabla \mathcal{R}(\x^\dagger)= \mathbf{0}$.

For \term{a}, we have the following bound:
\begin{equation}
    \label{eq:s_mid}
    \text{\term{a}}= \eta d^2\E\mbr{\sumT (v_t-z_t)^2\lambda_{t,i_t}\eb_{i_t}^\top\nabla^{-2} \mathcal{R}(\w_t)\eb_{i_t}} \le \eta d^2\E\mbr{\sumT (v_t-z_t)^2}
\end{equation}
For $(v_t-z_t)^2$, we have the following decomposition:
\begin{align*}
    (v_t-z_t)^2= {} & \sbr{\inner{\ellb_t}{\x_t}-\inner{\ellb_{\alpha_t}}{\x_{\alpha_t}}}^2
    \le 2\inner{\ellb_t}{\x_t-\x_{\alpha_t}}^2+2\inner{\ellb_t-\ellb_{\alpha_t}}{\x_{\alpha_t}}^2\notag\\
    \le{}& 2G^2\norm{\w_t+\epsilon_t\lambda_{t,i_t}^{-\frac{1}{2}}\eb_{i_t}-\w_{\alpha_t}-\epsilon_t\lambda_{\alpha_t,i_t}^{-\frac{1}{2}}\eb_{i_t}}^2+2R^2\norm{\ellb_t-\ellb_{\alpha_t}}^2 \tag*{(by $\|\ellb_t\| \le G$)}\\
    \le{}& 4G^2\norm{\w_t-\w_{\alpha_t}}^2+4G^2\norm{\lambda_{t,i_t}^{-\frac{1}{2}}-\lambda_{\alpha_t,i_t}^{-\frac{1}{2}}}^2+2R^2\norm{\ellb_t-\ellb_{\alpha_t}}^2\notag\\
    \le{}&8G^2\norm{\w_t-\w_{\alpha_t}}^2+2R^2\norm{\ellb_t-\ellb_{\alpha_t}}^2 \\
    \le{}& 2(t-\alpha_t)\sbr{4G^2\sum_{s=\alpha_t+1}^{t}\|\w_s-\w_{s-1}\|^2+R^2\sum_{s=\alpha_t+1}^{t}\norm{\ellb_s-\ellb_{s-1}}^2}\notag,
\end{align*}
where the third inequality is by \pref{lem:Lipschitz} and the last step is by the triangle inequality and Cauchy-Schwarz inequality.

Plugging the above inequality into \pref{eq:s_mid}, we have
\begin{align}
    \term{a} \le {} & \eta d^2\E\mbr{\sumT\sum_{\epsilon\in\{-1,+1\}}\sum_{i=1}^{d}\rho_{t,i}^{(\epsilon)}\sbr{8G^2\|\w_t-\w_{t-1}\|^2+2R^2\norm{\ellb_t-\ellb_{t-1}}^2}}\notag\\
    \le{}& 16\eta d^4 R^2\sumT \|\ellb_t-\ellb_{t-1}\|^2+128\eta d^4G^2\ln(2dT)\E\mbr{\sum_{t=2}^{T}\|\w_t-\w_{t-1}\|^2}+\O(1),\label{eq:termS}
\end{align}
where in this case, we define the non-consecutive sampling gap $\rho_{t,i}^\epsilon$ as
\begin{equation*}
    \rho_{t,i}^{(\epsilon)} \define \tau_2 - \tau_1\text{, where } \tau_1 \define \max\{ 0\le\tau < t \given i_\tau = i,\epsilon_\tau = \epsilon\}, \text{ and } \tau_2\define \min \{ t\le\tau\le T+1 \given i_\tau = i,\epsilon_\tau = \epsilon\}.
\end{equation*}
The last inequality follows from setting $n=2d$ in \pref{lem:Geo} and noting that $\rho_{t,i}^{(\epsilon)} \leq 8d \ln(2dT)$ simultaneously for all $(i,\epsilon)$ and $t$ with probability at least $1 - \O(1/(d^3T^3))$. The proof for this high-probability bound is analogous to the one provided in \pref{app:cvx-base} and we omit it here.

For \term{b}, we have
\begin{align}
    \label{eq:termb}
    \term{b} = \frac{\Rc(\v)}{\eta} \le \frac{2d\log T+\O(d)}{\eta}.
\end{align}
For \term{c}, we have
\begin{align}
    \term{c} = {} & \frac{1}{\eta}\E\mbr{\sum_{t=2}^T\|\w_t-\w_{t-1}\|_{\w_{t-1}}^2} = \frac{1}{\eta}\E\mbr{\sum_{t=2}^T\sum_{i=1}^d \lambda_{t-1,i}(\w_{t,i}-\w_{t-1,i})^2} \notag\\
    \ge {} & \frac{4}{\max_i (b_i-a_i)^2\eta}\E\mbr{\sum_{t=2}^T\|\w_t-\w_{t-1}\|^2}, \label{eq:termc}
\end{align}
where the last step is by $\lambda_{t-1,i}=\frac{1}{(w_{t-1,i}-a_i)^2}+\frac{1}{(b_i-w_{t-1,i})^2}\geq \frac{4}{(b_i-a_i)^2}$.

Plugging \eqref{eq:termS}, \eqref{eq:termb}, and \eqref{eq:termc} into \eqref{eq:OMD_framework1q}, we have
\begin{align*}
    \E[\Reg_T]\le {} & 16\eta d^4 R^2\sumT \|\ellb_t-\ellb_{t-1}\|^2+\frac{2d\log T+\O(d)}{\eta}+\O(1)\notag\\
    &+\E\mbr{\sbr{128\eta d^4G^2\ln(2dT)- \frac{4}{\eta\max_i (b_i-a_i)^2}}\sum_{t=2}^T \|\w_t-\w_{t-1}\|^2}\le\O\sbr{d^{\frac{7}{2}}\sqrt{V_T\log^2 T\log(dT)}},
\end{align*}
where $V_T=\sumTT \|\ellb_t-\ellb_{t-1}\|^2$ and the last step is by choosing $\eta=\frac{1}{16RGd^2\sqrt{V_T \log(2dT)}}$, using $R=\O(\sqrt{d})$.
\end{proof}
Finally, for completeness, we prove that $\x_t\in \X$ by \pref{lem:xt_in_X}.
\begin{Lemma}
\label{lem:xt_in_X}
Under Assumptions~\ref{ass:boundedness}-\ref{ass:Smoothness}, assume $\X=\prod_{i=1}^d [a_i, b_i]$, where $a_i \le 0 \le b_i$ for all $i \in [d]$. Choosing $\eta=\frac{1}{16RGd^2\sqrt{V_T\ln(2dT)}}$, for any $t$, $\x_t$ played by \pref{alg:practical} is in the interior of $\mathcal{X}$.
\end{Lemma}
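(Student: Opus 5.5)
The goal is to show, by induction on $t$, that every iterate $\w_t$ produced by \pref{alg:practical} lies strictly inside $\X$, and that each perturbation stays inside too. Concretely, $\x_t=\w_t+\epsilon_t\lambda_{t,i_t}^{-1/2}\eb_{i_t}$ should satisfy $a_{i_t}<x_{t,i_t}<b_{i_t}$.

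\textbf{Step 1 (the perturbed point stays in the box).} This part is pure geometry and needs no induction. For any $x\in(a_i,b_i)$ we have
\[
\lambda_i(x)=\frac{1}{(b_i-x)^2}+\frac{1}{(x-a_i)^2}>\frac{1}{\min\{b_i-x,\,x-a_i\}^2},
\]
so $\lambda_i^{-1/2}(x)<\min\{b_i-x,\,x-a_i\}$. Hence, whenever $w_{t,i_t}\in(a_{i_t},b_{i_t})$, both $w_{t,i_t}+\lambda_{t,i_t}^{-1/2}$ and $w_{t,i_t}-\lambda_{t,i_t}^{-1/2}$ lie strictly inside $(a_{i_t},b_{i_t})$. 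The remaining coordinates of $\x_t$ equal those of $\w_t$. It therefore suffices to show $\w_t\in\operatorname{int}\X$ for all $t$.

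\textbf{Step 2 (induction via \pref{lem:solve_x}).} For the base case, $\w_1=\mathbf 0$ is interior because $a_i\le 0<b_i$. If $a_i=0$ is allowed, I would instead initialize at the barrier minimizer $\x^\dagger$; I would note this WLOG shift, consistent with the paper's convention $a_i\le0<b_i$. For the inductive step, \pref{alg:practical} defines $w_{t+1,i}$ as the root of $F_{t,i}$. By \pref{lem:solve_x}, this root exists, is unique, and lies in the open interval $(a_i,b_i)$, provided $|\eta c_i|<1$ with $c_i=\frac12(r^{(+1)}_i-r^{(-1)}_i)$. So the whole claim reduces to verifying this step-size condition at every round.

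\textbf{Step 3 (the step-size condition; the main obstacle).} Each buffer entry is either $0$ or a past observation $v_s=\inner{\ellb_s}{\x_s}$, where $\x_s\in\X$ by the induction hypothesis and Step 1. Using $\|\ellb_s\|\le G$ and $\X\subseteq R\B$, we get $|v_s|\le GR$, so $|c_i|\le GR$. With $\eta=\frac{1}{16RGd^2\sqrt{V_T\ln(2dT)}}$ this gives
\[
|\eta c_i|\le \frac{1}{16d^2\sqrt{V_T\ln(2dT)}}<1,
\]
as long as $V_T\ln(2dT)\ge 1/(256d^4)$. I expect this degenerate-regime issue to be the only delicate point. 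If $V_T$ is tiny, the choice of $\eta$ must be read with $V_T$ replaced by $\max\{V_T,1\}$, which is the standard convention and does not change the stated regret order. I would state this explicitly. The logical order is to carry the joint invariant ``$\w_s$ interior and $\x_s\in\X$ for all $s\le t$'' through the induction, so that the bound on the buffers is available exactly when \pref{lem:solve_x} is invoked.
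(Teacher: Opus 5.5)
Your proposal is correct. Its core is the paper's proof, but it adds an induction that the paper leaves implicit.

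\textbf{Where it matches the paper.} Your Step 1 is exactly the paper's argument. The paper reads off $\lambda_{t,i_t}^{-1/2}<\min(b_{i_t}-w_{t,i_t},\,w_{t,i_t}-a_{i_t})$ from the barrier Hessian, concludes for either sign of $\epsilon_t$, and stops there. It takes $\w_t\in\operatorname{int}\X$ for granted and never uses the prescribed $\eta$.

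\textbf{What your Steps 2--3 add.} They supply the implicit part: an induction that invokes \pref{lem:solve_x} and checks its hypothesis $|\eta c_i|<1$ via the buffer bound $|c_i|\le GR$. This is the only place the stated step size actually matters.

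\textbf{Refinements.}
\begin{itemize}
\item $F_{t,i}$ is only defined on $(a_i,b_i)$, so any root is automatically interior. What \pref{lem:solve_x} really buys you is existence of the root, i.e.\ that the update is well defined, rather than interiority itself.
\item Your worry about $a_i=0$ is moot. Assumption~\ref{ass:boundedness} gives $r\B\subseteq\X$, hence $a_i\le -r<0<r\le b_i$, and $\w_1=\mathbf{0}$ is interior without changing the initialization.
\item Your small-$V_T$ caveat is legitimate, and the paper does not address it. If $16d^2\sqrt{V_T\ln(2dT)}\le 1$, the condition $|\eta c_i|<1$ can fail. For instance, $\eta c_i>1$ sends $F_{t,i}\to+\infty$ at both endpoints, so a root need not exist. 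Some floor such as $\max\{V_T,1\}$ in the step size is needed for \pref{alg:practical} to be well defined. Note also that you need the strict inequality $V_T\ln(2dT)>1/(256d^4)$, not $\ge$.
\end{itemize}
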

\begin{proof}[of \pref{lem:xt_in_X}]
    Recall that for any $t$, $\x_t = \w_t + \epsilon_t \lambda_{t,i_t}^{-1/2}\mathbf{e}_{i_t}$.

    Since the update is coordinate-wise, we only need to verify $a_{i_t} < x_{t, i_t} < b_{i_t}$.
    The $i_t$-th eigenvalue of the log-barrier Hessian at $\w_t$ is $\lambda_{t, i_t} = (b_{i_t} - w_{t, i_t})^{-2} + (w_{t, i_t} - a_{i_t})^{-2}$.
    This definition implies:
    \begin{equation*}
        \lambda_{t, i_t}^{-1/2} < \min(b_{i_t} - w_{t, i_t}, w_{t, i_t} - a_{i_t}).
    \end{equation*}
    Consequently, for either choice of $\epsilon_t \in \{\pm 1\}$, if $\epsilon_t = 1$, $x_{t, i_t} = w_{t, i_t} + \lambda_{t, i_t}^{-1/2} < w_{t, i_t} + (b_{i_t} - w_{t, i_t}) = b_{i_t}$.
    If $\epsilon_t = -1$, \smash{$x_{t, i_t} = w_{t, i_t} - \lambda_{t, i_t}^{-1/2} > w_{t, i_t} - (w_{t, i_t} - a_{i_t}) = a_{i_t}$}.
    Since $x_{t, i_t}$ is strictly within the boundaries, $\x_t$ remains in the interior of $\X$, which finishes the proof.
\end{proof}
Finally, for the sake of completeness, we provide the proof of \pref{lem:Lipschitz}.
\begin{proof}[of \pref{lem:Lipschitz}]
    Let $g(x) = \frac{1}{\sqrt{\lambda_i(x)}} = \left( \frac{1}{(x -a_i)^2} + \frac{1}{(b_i - x)^2} \right)^{-1/2}$.
To prove that $g(x)$ is 1-Lipschitz, it suffices to show that $|g'(x)| \le 1$ for all $x \in (a_i, b_i)$.

Let $u = x-a_i$ and $v = b_i-x$.
Note that since $x \in (a_i, b_i)$, both $u > 0$ and $v > 0$.
The function can be rewritten as:
\begin{equation*}
    g(x) = (u^{-2} + v^{-2})^{-1/2}
\end{equation*}
Taking the derivative with respect to $x$ (noting that $\frac{du}{dx} = 1$ and $\frac{dv}{dx} = -1$):
\begin{equation*}
    |g'(x)| = \left|\frac{u^{-3} - v^{-3}}{(u^{-2} + v^{-2})^{3/2}}\right| \le 1,
\end{equation*}
where the last step follows from the fact that $u, v > 0$.
\end{proof}


\section{Omitted Proofs in \pref{sec:extension}}
\label{app:extension-proof}
In this section, we provide the details for the results presented in \pref{sec:extension}, including dynamic regret and universal regret. Specifically, for both dynamic regret and universal regret, we leverage an online ensemble framework with a two-layer meta-base architecture to mitigate environmental uncertainty. In this setup, a diverse set of base learners is deployed to explore the environment, while a meta-algorithm adaptively tracks the best-performing individual.

\subsection{Omitted Details in \pref{subsec:dynamic}}
\label{app:dynamic-cvx}
In this part, we provide the omitted details for our applications in dynamic regret.
Building upon the online ensemble framework of \citet{zhao2024adaptivity}, we introduce new surrogate loss functions to tackle the information limitations in the bandit setting.
Specifically, we design a two-layer online ensemble structure, with the meta-algorithm employing \ohedge~\citep{hazan2016introduction} to aggregate $N$ base learners.
Each base learner is instantiated as \pref{alg:chiang} with a distinct step size from a predefined pool and the $i$-th base learner outputs $\w_{t,i}$ at round $t$.
In the following, we elaborate on the online scheduling strategy, base learners, and architecture of the meta-algorithm.

\paragraph{Online Scheduling Strategy and Base Learners.}
To ensure thorough exploration, we construct a candidate step-size pool $\mathcal{H}$, where each base learner is instantiated with a unique step size from this pool. Specifically, the set of candidate step sizes $\mathcal{H}$ is defined as
\begin{equation}
    \label{eq:dynamic pool}
    \mathcal{H}=\left\{\eta_i=\min\bbr{\frac{1}{20L\sqrt{d^3\log(dT)}}, \sqrt{\frac{R^2}{d^3 T\log d}} \cdot 2^{i-1}} \givenn i \in[N]\right\},
\end{equation}
where $N = \left\lceil \log_2\left(1+\frac{\sqrt{T\log d}}{16L\sqrt{\log(dT)}}\right)\right\rceil + 1=\O (\log T)$ is the number of base learners. For each $i \in [N]$, we instantiate the base learner $\mathcal{B}_i$ via \pref{alg:chiang} with a fixed step size $\eta_i \in \mathcal{H}$.
Specifically, \pref{alg:chiang} can be interpreted as an instance of Optimistic Online Gradient Descent (\OOGD) \citep{rakhlin2013optimization}, where the gradient $\nabla f_t(\mathbf{x}_t)$ and the optimistic hint $M_t$ are substituted with the estimators $\mathbf{g}_t$ and $\tilde{\mathbf{g}}_t$, respectively.
Recall that \OOGD updates the decision as follows:
\begin{equation}
    \label{eq:OOGD}
    \x_t = \Pi_{\X} \mbr{\xh_{t}-\eta_{t} M_{t}},\ \xh_{t+1} = \Pi_{\X} \mbr{\xh_t-\eta_t \nabla f_{t}(\x_{t})}
\end{equation}
where $\eta_t > 0$ is a time-varying step size, $\xh_t$ and $\xh_{t+1}$ are internal decisions, and $\Pi_{\X}[\x] \define \argmin_{\y \in \X} \|\x - \y\|_2$ is the Euclidean projection onto the feasible domain $\X$.
We then characterize the dynamic regret guarantee of \OOGD below.
\begin{Lemma}[\citet{zhao2024adaptivity}]
    \label{lem:OOGD-dynamic}
    Under Assumptions \ref{ass:boundedness}-\ref{ass:Smoothness},
    if the loss functions are convex, \OOGD with $\eta\le \frac{1}{4L}$, enjoys the following bound: for any $\{\v_t\}_{t=1}^T$
    \begin{equation*}
       \sumT f_t(\w_t)-\sumT f_t(\v_t) \le \eta \sumT\left\|\nabla f_t(\w_t)-M_t\right\|_2^2+\frac{4}{\eta}\left(R^2+R P_T\right)-\frac{1}{4 \eta} \sum_{t=2}^T\left\|\w_t-\w_{t-1}\right\|_2^2,
    \end{equation*}
where $P_T\define\sumTT\norm{\v_t-\v_{t-1}}$.
\end{Lemma}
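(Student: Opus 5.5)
The plan is the standard optimistic-gradient-descent argument for time-varying comparators, run with a fixed step size $\eta \le \frac{1}{4L}$. Write $\w_t$ for the played point and $\wh_t$ for the internal decision, so that $\w_t = \Pi_{\X}[\wh_t - \eta M_t]$ and $\wh_{t+1} = \Pi_{\X}[\wh_t - \eta \nabla f_t(\w_t)]$. By convexity, $f_t(\w_t) - f_t(\v_t) \le \inner{\nabla f_t(\w_t)}{\w_t - \v_t}$. I would split the right-hand side as
\begin{equation*}
\inner{\nabla f_t(\w_t)-M_t}{\w_t-\wh_{t+1}} + \inner{M_t}{\w_t-\wh_{t+1}} + \inner{\nabla f_t(\w_t)}{\wh_{t+1}-\v_t}.
\end{equation*}

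The three pieces are handled as follows.
\begin{itemize}
\item \emph{Last two pieces.} Apply the projection (three-point) inequality to both updates. The update for $\wh_{t+1}$ with comparator $\v_t$ gives $\eta\inner{\nabla f_t(\w_t)}{\wh_{t+1}-\v_t}\le \frac12(\|\v_t-\wh_t\|^2-\|\v_t-\wh_{t+1}\|^2-\|\wh_{t+1}-\wh_t\|^2)$. The update for $\w_t$ with comparator $\wh_{t+1}$ gives $\eta\inner{M_t}{\w_t-\wh_{t+1}}\le\frac12(\|\wh_{t+1}-\wh_t\|^2-\|\wh_{t+1}-\w_t\|^2-\|\w_t-\wh_t\|^2)$. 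Adding them cancels the $\|\wh_{t+1}-\wh_t\|^2$ terms.
\item \emph{First piece.} Use the stability of projection, $\|\w_t-\wh_{t+1}\|\le\eta\|\nabla f_t(\w_t)-M_t\|$, to bound it by $\eta\|\nabla f_t(\w_t)-M_t\|^2$. (Alternatively, Young's inequality gives the same bound with a small constant, while retaining part of $\|\w_t-\wh_{t+1}\|^2$.)
\end{itemize}

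Summing over $t$ yields three groups of terms.
\begin{itemize}
\item \emph{Telescoping term with moving comparators.} Write
\begin{equation*}
\|\v_t-\wh_{t+1}\|^2-\|\v_{t+1}-\wh_{t+1}\|^2 = \inner{\v_t-\v_{t+1}}{\v_t+\v_{t+1}-2\wh_{t+1}} \le 4R\|\v_t-\v_{t+1}\|,
\end{equation*}
using that all points lie in $R\B$. After dividing by $2\eta$, this gives $\frac{1}{2\eta}(4R^2 + 4RP_T) \le \frac{4}{\eta}(R^2+RP_T)$.
\item \emph{Gradient error term.} This is $\eta\sum_t\|\nabla f_t(\w_t)-M_t\|^2$, exactly as in the statement.
\item \emph{Negative stability terms.} These are $-\frac{1}{2\eta}\sum_t(\|\wh_{t+1}-\w_t\|^2+\|\w_t-\wh_t\|^2)$. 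Pairing $\|\w_{t-1}-\wh_t\|^2$ with $\|\w_t-\wh_t\|^2$ and using $\|a+b\|^2\le 2\|a\|^2+2\|b\|^2$ gives at least $\frac{1}{4\eta}\sum_{t\ge2}\|\w_t-\w_{t-1}\|^2$.
\end{itemize}

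The main obstacle is bookkeeping: each negative term must be used only once. If Young's inequality is used on the first piece, some of $\|\w_t-\wh_{t+1}\|^2$ is consumed there. In that case I would keep the constants loose enough that $\frac{1}{4\eta}$ survives, for instance by using the projection-stability bound instead.

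Smoothness, via the condition $\eta\le\frac1{4L}$, is not needed for this inequality itself. It is needed only when $M_t$ is taken to be the previous gradient, in order to absorb $L^2\|\w_t-\w_{t-1}\|^2$ into the negative term. So I would state the lemma as proven for any hint $M_t$.
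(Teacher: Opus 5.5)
Your proof is correct. It is the standard optimistic-OGD dynamic-regret argument from the cited source; the paper gives no proof of its own, but its proof of \pref{thm:cvx-base} uses the same decomposition: projection stability for the $\eta\|\nabla f_t(\w_t)-M_t\|^2$ term, telescoping, and the pairing $\|\w_t-\w_{t-1}\|^2\le 2\|\w_t-\wh_t\|^2+2\|\wh_t-\w_{t-1}\|^2$. Your version even gives the sharper constant $\frac{2}{\eta}(R^2+RP_T)$ and, as you note, does not use smoothness, which is how the paper applies the lemma with $\g_t,\tgb_t$ in place of $\nabla f_t(\w_t),M_t$. One cosmetic slip: after re-indexing the telescoping sum, the cross term is $\|\v_{t+1}-\wh_{t+1}\|^2-\|\v_t-\wh_{t+1}\|^2$, the negative of what you wrote. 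Cauchy--Schwarz bounds it by $4R\|\v_{t+1}-\v_t\|$ in absolute value, so nothing changes.
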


\paragraph{Meta-algorithm: \ohedge.}
Formally, \ohedge performs the following update: at round $t+1$, given the expert losses $\ellb_t \in \mathbb{R}^N$ and an optimistic vector $\m_{t+1} \in \mathbb{R}^N$, the learner computes the weight vector $\p_{t+1} \in \Delta_N$:
\begin{equation}
\label{eq:ohedge}
p_{t+1, i} \propto \exp \left(-\varepsilon_t\left(\sum_{s=1}^t \ell_{s, i}+m_{t+1, i}\right)\right), \quad \forall i \in[N] .
\end{equation}
Here, $\Delta_N$ represents an $N$-dimensional simplex, and $\varepsilon_t>0$ is the learning rate of the meta-algorithm.
The optimism $\m_{t+1} \in \mathbb{R}^N$ can be interpreted as an optimistic guess of the loss of round $t+1$, and we thus incorporate it into the cumulative loss for update.
\ohedge enjoys the following regret guarantee:
\begin{Lemma}
\label{lem:ohedge}
The regret of \ohedge~\eqref{eq:ohedge} with learning rate
\begin{equation*}
    \varepsilon_t=\sqrt{\frac{\ln N}{C_0^2+\sum_{s=1}^{t}\left\|\ellb_s-\m_s\right\|_{\infty}^2}},
\end{equation*}
 to any expert $i \in[N]$ is at most
\begin{equation*}
    \sumT \inner{\ellb_t}{\p_t - \eb_i} \le 5\sqrt{\ln N\sbr{C_0^2+\sumT \left\|\ellb_t-\m_t\right\|_{\infty}^2}} -\sumT \frac{C_0}{4 \sqrt{\ln N}}\left\|\p_t-\p_{t+1}\right\|_1^2+\frac{\sqrt{\ln N}}{C_0}\max_{t\in[T]}\|\ellb_t-\m_t\|^2_\infty,
\end{equation*}
where $C_0$ is a positive constant.
\end{Lemma}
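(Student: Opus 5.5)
This is the standard optimistic-Hedge regret bound, so I would prove it through the usual one-step analysis of optimistic FTRL with the negative-entropy regularizer, combined with the self-confident learning-rate lemma. Write $L_t=\sum_{s\le t}\ellb_s$. Introduce the auxiliary "non-optimistic" iterates $\hat{\p}_{t+1}\propto\exp(-\varepsilon_t L_t)$, so that the actual play is $\p_{t+1}\propto\exp(-\varepsilon_t(L_t+\m_{t+1}))$.

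\textbf{Step 1: per-round decomposition.} For a fixed expert $i$, split
\[
\inner{\ellb_t}{\p_t-\eb_i}=\inner{\ellb_t-\m_t}{\p_t-\hat{\p}_{t+1}}+\inner{\m_t}{\p_t-\hat{\p}_{t+1}}+\inner{\ellb_t}{\hat{\p}_{t+1}-\eb_i}.
\]
The second and third terms are handled by the standard optimistic-FTRL (be-the-leader) argument with the time-varying regularizer $\psi_t(\p)=\frac{1}{\varepsilon_t}\sum_j p_j\ln p_j$. Because $\varepsilon_t$ is nonincreasing, the sum of these two terms over $t$ is at most
\[
\frac{\ln N}{\varepsilon_T}-\sum_t\frac{1}{2\varepsilon_{t-1}}\sbr{\|\p_t-\hat{\p}_t\|_1^2+\|\p_t-\hat{\p}_{t+1}\|_1^2},
\]
using the $1$-strong convexity of negative entropy in $\|\cdot\|_1$ (Pinsker). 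Some care is needed at $t=1$, where one uses $\varepsilon_0=\sqrt{\ln N}/C_0$.

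\textbf{Step 2: the stability term.} Bound the first term by Hölder and AM-GM:
\[
\|\ellb_t-\m_t\|_\infty\|\p_t-\hat{\p}_{t+1}\|_1\le \varepsilon_{t-1}\|\ellb_t-\m_t\|_\infty^2+\frac{1}{4\varepsilon_{t-1}}\|\p_t-\hat{\p}_{t+1}\|_1^2.
\]
Half of the negative terms absorbs the second summand. The remaining negatives give $-\frac{1}{4\varepsilon_{t-1}}\sbr{\|\p_t-\hat{\p}_t\|_1^2+\|\p_t-\hat{\p}_{t+1}\|_1^2}$. After re-indexing and applying the triangle inequality $\|\p_{t}-\p_{t+1}\|_1^2\le 2\|\p_t-\hat{\p}_{t+1}\|_1^2+2\|\hat{\p}_{t+1}-\p_{t+1}\|_1^2$, these become at most $-\frac{1}{8\varepsilon_{t}}\|\p_t-\p_{t+1}\|_1^2$. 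Since $\varepsilon_t\le\sqrt{\ln N}/C_0$, this is at most $-\frac{C_0}{8\sqrt{\ln N}}\|\p_t-\p_{t+1}\|_1^2$. Matching the stated constant $C_0/4$ requires tightening the Pinsker/AM-GM constants; I would redo the accounting with the Bregman three-point identity rather than Pinsker twice, and I expect this bookkeeping to be fiddly but routine.

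\textbf{Step 3: summing the learning rates.} Write $a_s=\|\ellb_s-\m_s\|_\infty^2$. The main obstacle is that $\varepsilon_{t-1}$ multiplies $a_t$, which is not yet included in $\varepsilon_{t-1}$, so the usual self-confident bound $\sum_t a_t/\sqrt{C_0^2+\sum_{s\le t}a_s}\le 2\sqrt{C_0^2+\sum_t a_t}$ does not apply directly. I would handle this as follows:
\[
\varepsilon_{t-1}a_t\le\sqrt{\ln N}\,\frac{a_t}{\sqrt{C_0^2+\sum_{s<t}a_s}}\le\sqrt{\ln N}\sbr{\frac{a_t}{\sqrt{C_0^2+\sum_{s\le t}a_s}}+\frac{a_t}{C_0}\cdot\frac{a_t}{\sqrt{C_0^2+\sum_{s\le t}a_s}}}.
\]
Alternatively, one can split according to whether $a_t$ is at most $C_0^2+\sum_{s<t}a_s$; in the complementary case the term is bounded directly by $\sqrt{\ln N}\max_t a_t/C_0$, and this case can occur only logarithmically often. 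Either route yields
\[
2\sqrt{\ln N\sbr{C_0^2+\sum_t a_t}}+\frac{\sqrt{\ln N}}{C_0}\max_t a_t.
\]
Adding $\ln N/\varepsilon_T=\sqrt{\ln N\sbr{C_0^2+\sum_t a_t}}$ from Step 1, and allowing slack for the constant factors, gives the claimed leading coefficient $5$ together with the additive $\frac{\sqrt{\ln N}}{C_0}\max_t\|\ellb_t-\m_t\|_\infty^2$ term.
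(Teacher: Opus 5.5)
Your Step 1 is sound. It is the two-sequence analysis, and it survives time-varying $\varepsilon_t$ with the $\ln N$-shifted entropy. However, this route cannot deliver the stated coefficient $\frac{C_0}{4\sqrt{\ln N}}$, and the ``fiddly but routine'' tightening you defer does not exist.

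The loss is not in Pinsker. It comes from recombining $x_t=\|\hat{\p}_{t+1}-\p_t\|_1$ and $y_{t+1}=\|\p_{t+1}-\hat{\p}_{t+1}\|_1$ through $\|\p_t-\p_{t+1}\|_1\le x_t+y_{t+1}$:
\begin{itemize}
\item After H\"older/AM-GM with weight $c\,\varepsilon_{t-1}$ on $\|\ellb_t-\m_t\|_\infty^2$, the coefficient left on $x_t^2$ is $\alpha/\varepsilon_{t-1}$ with $\alpha=\frac12-\frac{1}{4c}<\frac12$.
\item The partner $y_{t+1}^2$ carries $\frac{1}{2\varepsilon_t}$, which in the worst case is no better than $\frac{1}{2\varepsilon_{t-1}}$.
\item The best constant these terms give is $\alpha x^2+\frac12 y^2\ge\frac{\alpha}{2\alpha+1}(x+y)^2$, and $\frac{\alpha}{2\alpha+1}<\frac14$ for every finite $c$. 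At $c=1$ you get $\frac16$.
\item Raising $c$ to approach $\frac14$ multiplies the $\frac{\sqrt{\ln N}}{C_0}\max_t\|\ellb_t-\m_t\|_\infty^2$ term by $c$.
\item The round-$T$ term has no partner $y_{T+1}$ at all.
\end{itemize}
The missing ingredient is the one-sequence optimistic-FTRL bound on the \emph{played} iterates, which the paper imports as \pref{lem:ohedge_orabana}. Telescope $H_t(\p)=\psi_t(\p)+\inner{\sum_{s<t}\ellb_s+\m_t}{\p}$ and use its $\frac{1}{\varepsilon_{t-1}}$-strong convexity at the minimizer $\p_t$; this gives
\[
\sumT\inner{\ellb_t}{\p_t-\eb_i}\le\frac{\ln N}{\varepsilon_T}+\sumT\inner{\ellb_t-\m_t}{\p_t-\p_{t+1}}-\sumT\frac{1}{2\varepsilon_{t-1}}\|\p_t-\p_{t+1}\|_1^2 .
\]
Your Step 2 applied to this leaves exactly $-\frac{1}{4\varepsilon_{t-1}}\|\p_t-\p_{t+1}\|_1^2\le-\frac{C_0}{4\sqrt{\ln N}}\|\p_t-\p_{t+1}\|_1^2$.

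Step 3 is also not established as written. The inequality in your first route is not scale-invariant: at $t=1$ with $a_1=C_0^2$ it reads $C_0\le\frac{C_0}{\sqrt2}(1+C_0)$, which fails for $C_0<\sqrt2-1$. Even when it holds, its second piece sums to order $\frac{\max_t a_t}{C_0}\sqrt{C_0^2+\sum_t a_t}$ (take $a_t\equiv a$ small), not $\frac{\max_t a_t}{C_0}$. Your second route puts a factor $(1-2^{-1/2})^{-1}$, or a logarithm, in front of the max term.

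The paper instead applies \pref{lem:sum_cvx} to $a_t/C_0^2$. This gives $\sumT\varepsilon_{t-1}a_t\le 4\sqrt{\ln N(C_0^2+\sumT a_t)}+\frac{\sqrt{\ln N}}{C_0}\max_t a_t$. Adding $\frac{\ln N}{\varepsilon_T}=\sqrt{\ln N(C_0^2+\sumT a_t)}$ yields exactly the stated coefficients $5$ and $1$.

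Your target $2\sqrt{\cdot}+\frac{\max_t a_t}{C_0}$ is in fact true, via $\frac{a_t}{\sqrt{S_{t-1}}}=2(\sqrt{S_t}-\sqrt{S_{t-1}})+\frac{(\sqrt{S_t}-\sqrt{S_{t-1}})^2}{\sqrt{S_{t-1}}}$ with $S_t=C_0^2+\sum_{s\le t}a_s$. But that extra slack in the leading coefficient cannot rescue the stability constant, for the reason above.
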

To tackle the information limitations, we set the aforementioned loss $\ellb_t$ and the optimism $\m_t$ as follows: for $i\in[N]$
\begin{equation}
    \label{eq:dynamic-loss-optimism}
    \ell_{t,i} \define \inner{\g_t}{\w_{t,i}}+\gamma\|\w_{t,i}-\w_{t-1,i}\|^2,\quad
    m_{t,i} \define \inner{\tgb_t}{\w_{t,i}}+\gamma\|\w_{t,i}-\w_{t-1,i}\|^2,
\end{equation}
where $\gamma>0$ is a correction parameter, $\g_t$ and $\tgb_t$ are defined in \pref{eq:chiangestimator} and $\w_{t,i}$ denotes the decision of the $i$-th base learner at round $t$.

\paragraph{Overall Algorithm and Dynamic Regret Analysis.}
Integrating the aforementioned meta-base structure, we propose \pref{alg:dynamic_regret} for gradient-variation dynamic regret minimization in two-point BCO, which attains the performance guarantee established in \pref{thm:dynamic-cvx}.
\begin{algorithm}[!t]
    \caption{Dynamic Gradient-Variation Regret Minimization with Two-Point Feedback}
    \label{alg:dynamic_regret}
    \begin{algorithmic}[1]
        \State \textbf{Input:} Base learner number $N$
        \State \textbf{Initialize:} $\mathcal{A}$ --- meta learner running \ohedge~\eqref{eq:ohedge};
        \Statex \hskip\algorithmicindent $\{\Bc_i\}_{i \in [N]}$ --- the $i$-th base learner runs \pref{alg:chiang} with step size in \pref{eq:dynamic pool}
        \For{$t = 1$ \textbf{to} $T$}
         \State Choose $i_t$ uniformly from $[d]$.
       \State Submit $\x_t = \w_t + \delta \eb_{i_t}$, and $\x_t^\prime = \w_t - \delta \eb_{i_t}$, where $\w_t = \sumN p_{t,i} \w_{t,i}$
       \State Observe partial information $f_t(\x_t)$ and $f_t(\x_t^\prime)$
       \State Construct the gradient estimator $\g_t$ and the optimism $\tgb_t$ as in \pref{eq:chiangestimator}
            \State $\mathcal{A}$ updates with $\ell_{t,i}$ and $m_{t,i}$ as constructed in~\pref{eq:dynamic-loss-optimism} to get $\p_{t+1}$
            \State $\Bc_i$ updates to $\w_{t+1,i}$ with gradient estimator $\g_t$ and optimism $\tgb_t$
        \EndFor
    \end{algorithmic}
\end{algorithm}

\begin{Theorem}
    \label{thm:dynamic-cvx}
    Under  Assumptions \ref{ass:boundedness}-\ref{ass:Smoothness}, with
    \begin{equation*}
        \varepsilon_t=\sqrt{\frac{\ln N}{C_0^2+\sum_{s=1}^{t-1}\left\|\ellb_s-\m_s\right\|_{\infty}^2}},\quad
        C_0=16R^2 L\sqrt{d^3\log(dT)\ln N},\quad
        \gamma=5L\sqrt{d^3\log(dT)},
    \end{equation*}
    for any time-varying comparators $\v_1,\ldots,\v_T\in\X$, \pref{alg:dynamic_regret} achieves the following dynamic regret:
    \begin{equation*}
        \E\big[\DReg_T\big]\le\Ot\sbr{\sqrt{d^3(1+P_T+ V_T)(1+P_T)}}.
    \end{equation*}
\end{Theorem}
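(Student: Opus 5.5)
The plan is the standard meta-base decomposition. Fix the comparator sequence $\{\v_t\}$. By \pref{lem:domain_transition}, applied to the combined decision $\w_t=\sumN p_{t,i}\w_{t,i}$ (the shrinkage argument is unchanged), it suffices to bound $\E[\sumT f_t(\w_t)-f_t(\v_t)]$ up to an $\O(1)$ term. Convexity and property \rom{11} of \pref{lem:alpha} reduce this to $\E[\sumT\inner{\g_t}{\w_t-\v_t}]$. I would split it as
\[
\textstyle\sumT \inner{\g_t}{\w_t-\w_{t,i}} \;+\; \sumT\inner{\g_t}{\w_{t,i}-\v_t}
\]
for an arbitrary base index $i$ (meta-regret plus base-regret), chosen at the end.

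\textbf{Meta-regret.} Write $\inner{\g_t}{\w_t-\w_{t,i}}=\inner{\p_t-\eb_i}{\ellb_t}-\gamma\sumN p_{t,j}\|\w_{t,j}-\w_{t-1,j}\|^2+\gamma\|\w_{t,i}-\w_{t-1,i}\|^2$, using the surrogate loss \pref{eq:dynamic-loss-optimism}. Since $\ell_{t,j}-m_{t,j}=\inner{\g_t-\tgb_t}{\w_{t,j}}$, we get $\|\ellb_t-\m_t\|_\infty^2\le R^2\|\g_t-\tgb_t\|^2$. \pref{lem:ohedge} then gives
\[
\textstyle\O\big(\sqrt{\ln N(C_0^2+R^2\Vb_T)}\big)-\frac{C_0}{4\sqrt{\ln N}}\sum_t\|\p_t-\p_{t+1}\|_1^2+\O(d^2R^2\sqrt{\ln N}/C_0),
\]
where the last term uses property \rom{9}.

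\textbf{Base-regret.} For base $i$, I would use \pref{lem:OOGD-dynamic} in its estimator form: the linear loss is $\g_t$ and the hint is $\tgb_t$, and the OOGD analysis is purely linear. This gives $\eta_i\Vb_T+\frac{4}{\eta_i}(R^2+RP_T)-\frac{1}{4\eta_i}\sum\|\w_{t,i}-\w_{t-1,i}\|^2$. Taking expectation requires that the comparators be oblivious; it does not require $\g_t$ to be unbiased. Since $\eta_i\le 1/(20L\sqrt{d^3\log(dT)})$, we have $\frac1{4\eta_i}\ge\gamma$, so the base negative term absorbs the $+\gamma\|\w_{t,i}-\w_{t-1,i}\|^2$ from the meta part.

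\textbf{Main obstacle.} The hard step is converting $\E[\Vb_T]$ into $V_T$. By \pref{lem:bandit-VT-correction-cvx} applied to the combined iterates $\w_t$,
\[
\E[\Vb_T]\le 8d^3V_T\log d+16d^3L^2\log(dT)\,\E\textstyle\sum\|\w_t-\w_{t-1}\|^2+\O(1).
\]
I would check that this lemma's proof only uses $\w_t$ being the query center, and not the OOGD structure. Then I would bound
\[
\|\w_t-\w_{t-1}\|^2\le 2\textstyle\sumN p_{t,j}\|\w_{t,j}-\w_{t-1,j}\|^2+2R^2\|\p_t-\p_{t-1}\|_1^2 .
\]
The first piece is cancelled by the $-\gamma\sum_j p_{t,j}\|\cdot\|^2$ meta term. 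The second is cancelled by the \ohedge stability term. Both cancellations require the coefficients $(\eta_i+R\sqrt{\ln N}/\text{scale})\cdot 16d^3L^2\log(dT)$ to stay below $\gamma$ and $C_0/(4\sqrt{\ln N})$. The choices $\gamma=5L\sqrt{d^3\log(dT)}$ and $C_0=16R^2L\sqrt{d^3\log(dT)\ln N}$ are tuned for exactly this. For the square-root meta term I would use AM-GM: $\sqrt{a+bX}\le\sqrt a+\frac{b}{4c}+cX$, with $c$ chosen so that $cX$ is absorbed.

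\textbf{Tuning.} What remains is
\[
\Ot\big(\eta_i d^3V_T+(R^2+RP_T)/\eta_i+\sqrt{d^3(1+V_T)}+d^{3/2}\big).
\]
The ideal $\eta^\star=\sqrt{(R^2+RP_T)/(d^3V_T)}$, capped at $1/(20L\sqrt{d^3\log(dT)})$, lies within a factor of 2 of some element of $\mathcal H$ (or at the cap or the floor $\sqrt{R^2/(d^3T\log d)}$, handled since $V_T\lesssim T$). This gives $\Ot(\sqrt{d^3(1+P_T+V_T)(1+P_T)})$, and the $\ln N=\O(\log\log T)$ factors are treated as constants.
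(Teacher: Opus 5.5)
Your proposal is correct and takes essentially the same route as the paper. The shared steps are: the meta/base split, with \ohedge run on the $\gamma$-corrected surrogate losses and \OOGD run with $(\g_t,\tgb_t)$; \pref{lem:bandit-VT-correction-cvx} applied to the combined query centers $\w_t$; the two-layer bound $S_T^{\w}\le 2S_T^{\mix}+2R^2S_T^{\p}$, absorbed by the $\gamma$- and $C_0$-terms; and the grid over $\mathcal{H}$. The remaining differences are cosmetic: you use a fixed AM-GM parameter $c\asymp L\sqrt{d^3\log(dT)}$ for the square-root meta term, paying an additive $\Ot(d^{3/2})$ where the paper uses $\eta_\is$ and pays $\O(R^2\ln N/\eta_\is)$, and you treat the grid-floor case explicitly, which the paper leaves implicit.
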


\begin{proof}[of \pref{thm:dynamic-cvx}]
To begin with, we decompose the regret as follows:
\begin{equation*}
    \E\mbr{\DReg_T}=\underbrace{\E\mbr{\sumT\left\langle\mathbf{g}_t, \w_t-\w_{t, i^{\star}}\right\rangle}}_{\meta} + \underbrace{\E\mbr{\sumT\left\langle\mathbf{g}_t, \w_{t, i^{\star}}-\mathbf{v}_t\right\rangle}}_{\base}+\O(1).
\end{equation*}
Denote by
\begin{equation*}
    S_{T,i}^\w=\sumTT\norm{\w_{t,i}-\w_{t-1,i}}^2,\ S_{T}^{\p}=\sumTT\norm{\p_{t}-\p_{t-1}}_1^2,
    S_T^\w=\sumTT\norm{\w_t-\w_{t-1}}^2,\ S_T^{\mix}=\sumTT \sumN p_{t,i}\|\w_{t,i}-\w_{t-1,i}\|^2.
\end{equation*}
Then, by {\citet[Lemma 6]{yan2023universal}}, we have
\begin{equation}
    \label{eq:2-layer decompose}
    S_T^\w\le2S_T^{\mix}+2R^2 S_T^{\p}.
\end{equation}
\paragraph{Meta Regret Analysis.}For the meta regret, by \pref{lem:ohedge}, we have
\begin{align*}
    &\meta = \E\mbr{\sumT \inner{\ellb_t}{\p_t-\eb_\is}}+\gamma\E\mbr{S_{T,\is}^\w}-\gamma\E\mbr{S_T^{\mix}} \\
    \le{}&5\sqrt{\ln N\sbr{C_0^2+\E\mbr{\sumT \left\|\ellb_t-\m_t\right\|_{\infty}^2}}}+\gamma \E\mbr{S_{T,\is}^\w}-\gamma\E\mbr{S_T^{\mix}} -\frac{C_0}{4 \sqrt{\ln N}} \E\mbr{S_T^{\p}}+\frac{\sqrt{\ln N}}{C_0}\E\mbr{\max_{t\in[T]}\|\ellb_t-\m_t\|^2_\infty}\\
    \le{}&5\sqrt{\ln N\sbr{C_0^2+R^2\E\mbr{\sumT \left\|\g_t-\tgb_t\right\|_2^2}}}+\gamma\E\mbr{S_{T,\is}^\w}-\gamma\E\mbr{S_T^{\mix}}-\frac{C_0}{4 \sqrt{\ln N}} \E\mbr{S_T^{\p}}+\frac{4d^2G^2R^2\sqrt{\ln N}}{C_0},
\end{align*}
where the second step is by \pref{lem:ohedge}, and the last step follows from the definition of $\ellb_t$ and $\m_t$ in \pref{eq:dynamic-loss-optimism}, property~\rom{9} of \pref{lem:alpha}, and the boundedness $\|\x\|\le R$.
\paragraph{Base Regret Analysis.}For the base regret, substituting $\g_t$ and $\tgb_t$  into $\nabla f_t(\x_t)$ and $M_t$ in \pref{lem:OOGD-dynamic}, we have
\begin{equation*}
    \base \le \eta_\is\E\mbr{\sumT\left\|\g_t-\tgb_t\right\|_2^2}+\frac{4}{\eta_\is}\left(R^2+R P_T\right)-\frac{1}{4 \eta_\is} \E\mbr{S_{T,\is}^\w},
\end{equation*}
where $P_T=\sumTT\norm{\v_t-\v_{t-1}}$.
\paragraph{Overall Regret Analysis.}
Combining the meta regret and base regret, we have
\begin{align*}
    &\E\mbr{\DReg_T} \le 5\sqrt{\ln N\sbr{C_0^2+R^2\E\mbr{\sumT \left\|\g_t-\tgb_t\right\|_2^2}}}+\eta_\is \E\mbr{\sumT \|\g_t-\tgb_t\|_2^2}+\frac{4}{\eta_\is}(R^2+R P_T)\\
    & \qquad +\sbr{\gamma - \frac{1}{4 \eta_\is}} \E\mbr{S_{T,\is}^\w}-\gamma \E\mbr{S_T^{\mix}}-\frac{C_0}{4 \sqrt{\ln N}} \E\mbr{S_T^{\p}}+\O\sbr{\frac{4d^2G^2R^2\sqrt{\ln N}}{C_0}}\\
    \le{}& 5\sqrt{\ln N\sbr{C_0^2+8d^3R^2V_T\log d+16d^3L^2R^2\log(dT) \E\mbr{S_{T}^\w}}}\\
    & \qquad +8d^3\eta_\is V_T\log d+ 16d^3L^2\log(dT)\eta_{\is}\E\mbr{S_{T}^\w}+\frac{4}{\eta_\is}\left(R^2+R P_T\right)\\
    & \qquad +\sbr{\gamma - \frac{1}{4 \eta_\is}} \E\mbr{S_{T,\is}^\w}-\gamma \E\mbr{S_T^{\mix}}-\frac{C_0}{4 \sqrt{\ln N}} \E\mbr{S_T^{\p}}+\O\sbr{\frac{4d^2G^2R^2\sqrt{\ln N}}{C_0}}\\
      \le{}& 5\sqrt{\ln N\sbr{C_0^2+8d^3R^2V_T\log d}}+20\sqrt{\ln N}\sqrt{d^3L^2R^2\log(dT) \E\mbr{S_{T}^\w}}\\
    & \qquad +8d^3\eta_\is V_T\log d+ 16d^3L^2\log(dT)\eta_{\is}\E\mbr{S_{T}^\w}+\frac{4}{\eta_\is}\left(R^2+R P_T\right)\\
    & \qquad +\sbr{\gamma - \frac{1}{4 \eta_\is}} \E\mbr{S_{T,\is}^\w}-\gamma \E\mbr{S_T^{\mix}}-\frac{C_0}{4 \sqrt{\ln N}} \E\mbr{S_T^{\p}}+\O\sbr{\frac{4d^2G^2R^2\sqrt{\ln N}}{C_0}}\\
    \le{}&\O\sbr{\frac{4d^2G^2R^2\sqrt{\ln N}}{C_0}+\sqrt{d^3 V_T\ln N\log d}}+8d^3\eta_\is V_T\log d+\frac{20R^2\ln N}{\eta_\is}\\
    & \qquad +\sbr{16d^3L^2\log(dT)\eta_{\is}+20d^3L^2\log(dT)\eta_\is} \E\mbr{S_{T}^\w}+\frac{4}{\eta_\is}\left(R^2+R P_T\right)\\
    & \qquad +\sbr{\gamma - \frac{1}{4 \eta_\is}} \E\mbr{S_{T,\is}^\w}-\gamma \E\mbr{S_T^{\mix}}-\frac{C_0}{4 \sqrt{\ln N}} \E\mbr{S_T^{\p}}\\
    \le{}&\O\sbr{\frac{4d^2G^2R^2\sqrt{\ln N}}{C_0}+\sqrt{d^3\ln N\sbr{ V_T\log d+\ln N\log(dT)}}}+8d^3\eta_\is V_T\log d\\
    & \qquad +\sbr{32d^3L^2\log(dT)\eta_{\is}+40d^3L^2\log(dT)\eta_\is-\gamma} \E\mbr{S_{T}^{\mix}}\\
    & \qquad +\sbr{32d^3L^2R^2\log(dT)\eta_{\is}+40d^3L^2R^2\log(dT)\eta_\is-\frac{C_0}{4 \sqrt{\ln N}}}\E\mbr{S_T^{\p}}\\
    & \qquad +\sbr{\gamma - \frac{1}{4 \eta_\is}} \E\mbr{S_{T,\is}^\w}+\frac{4}{\eta_\is}\left(R^2+R P_T\right) \\
    \le{}&\O\sbr{\sqrt{d^3\sbr{ V_T\log d+\log(dT)}}+\frac{R^2+RP_T}{\eta_\is}+d^3 \eta_\is V_T\log d},
\end{align*}
where the second step is by \pref{lem:bandit-VT-correction-cvx}, the third step is by AM-GM inequality, the fourth step is by \pref{eq:2-layer decompose}, and the last step is by choosing
\begin{equation*}
   \gamma=5L\sqrt{d^3\log(dT)}, \ \eta_\is\le\frac{1}{20L\sqrt{d^3\log(dT)}},\text{ and } C_0=16R^2L\sqrt{d^3\log(dT)\ln N}.
\end{equation*}
Then, we define $\eta^\dagger=\min\bbr{\frac{1}{20L\sqrt{d^3\log(dT)}},\sqrt{\frac{R^2+RP_T}{d^3V_T\log d}}}$ and choose $\eta_\is\le \eta^\dagger\le 2\eta_\is$.
Thus, we have
\begin{align*}
    \E\mbr{\DReg_T}\le{}& \O\sbr{\sqrt{d^3\sbr{ V_T\log d+\log(dT)}}+\sqrt{d^3 V_T P_T\log d}+\sqrt{d^3\log(dT)}\sbr{R^2+RP_T}}\\
    \le{}& \O\sbr{\sqrt{d^3(\log(dT)+\log(dT) P_T+ V_T\log d)(1+P_T)}},
\end{align*}
which finishes the proof.
\end{proof}
Finally, for the sake of completeness, we provide the proof of \pref{lem:ohedge}.
\begin{proof}[of \pref{lem:ohedge}]
    First, we introduce a useful lemma about the regret guarantee of \ohedge.
    \begin{Lemma}[Theorem 7.47 of \citet{orabona2019modern}]
        \label{lem:ohedge_orabana}
        The regret of \ohedge~\eqref{eq:ohedge} with a decreasing learning rate $\varepsilon_t>0$ to any expert $i \in[N]$ satisfies
        \begin{align*}
        \sumT \inner{\p_t}{\ellb_t}-\sumT  \ell_{t, i} \le \max_{\p \in \Delta_N} \psi_{T+1}(\p)+\sumT \inner{\ellb_t-\m_t}{\p_t-\p_{t+1}}-\sumT  \frac{1}{2 \varepsilon_{t-1}}\left\|\p_t-\p_{t+1}\right\|_1^2,
        \end{align*}
    where $\psi_{t+1}(\p)=\frac{1}{\epsilon_t}\sbr{\sum_{i=1}^{N} p_i\log p_i+\ln N}$.
    \end{Lemma}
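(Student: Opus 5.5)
The plan is to derive this directly from the generic optimistic FTRL analysis. We view \ohedge~\eqref{eq:ohedge} as optimistic FTRL on $\Delta_N$ with time-varying regularizer $\psi_t(\p)=\frac{1}{\varepsilon_{t-1}}\big(\sum_{i=1}^N p_i\log p_i+\ln N\big)$. With the convention that $\psi_{t+1}$ uses $\varepsilon_t$, the update reads $\p_t=\argmin_{\p\in\Delta_N}\{F_t(\p)+\inner{\m_t}{\p}\}$, where $F_t(\p)\define\psi_t(\p)+\sum_{s<t}\inner{\ellb_s}{\p}$. Indeed, the softmax form of \eqref{eq:ohedge} is exactly the minimizer of a linear function plus scaled negative entropy over the simplex.

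Two structural facts about $\psi_t$ drive everything. First, $\psi_t\ge 0$ on $\Delta_N$, since negative entropy is at least $-\ln N$, with equality at the uniform vector. Second, by Pinsker's inequality, $\psi_t$ is $\frac{1}{\varepsilon_{t-1}}$-strongly convex with respect to $\|\cdot\|_1$.

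First, fix a comparator $\u=\eb_i$ and write the standard FTRL telescoping identity:
\begin{align*}
\sumT\inner{\ellb_t}{\p_t-\u} &= \psi_{T+1}(\u)-\min_{\p}\psi_1(\p)+\sumT\big[F_t(\p_t)-F_{t+1}(\p_{t+1})+\inner{\ellb_t}{\p_t}\big]\\
&\quad +F_{T+1}(\p_{T+1})-F_{T+1}(\u).
\end{align*}
This is verified by expanding the telescoping sum. We then discard and bound the boundary pieces as follows.
\begin{itemize}
\item $-\min\psi_1\le 0$, because $\psi_1\ge 0$.
\item $\psi_{T+1}(\u)\le\max_{\p\in\Delta_N}\psi_{T+1}(\p)$.
\item $F_{T+1}(\p_{T+1})-F_{T+1}(\u)$ is handled by absorbing the final optimism term. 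We take $\p_{T+1}$ to minimize $F_{T+1}$ (equivalently, set $\m_{T+1}=\mathbf{0}$), so this difference is $\le 0$.
\end{itemize}

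Second, we bound each summand by inserting the optimism. Write
\begin{align*}
F_t(\p_t)+\inner{\ellb_t}{\p_t}-F_{t+1}(\p_{t+1}) &= \big[F_t(\p_t)+\inner{\m_t}{\p_t}\big]-\big[F_t(\p_{t+1})+\inner{\m_t}{\p_{t+1}}\big]\\
&\quad +\inner{\ellb_t-\m_t}{\p_t-\p_{t+1}}+\psi_t(\p_{t+1})-\psi_{t+1}(\p_{t+1}).
\end{align*}
The three pieces are controlled separately.
\begin{itemize}
\item The first bracket difference is at most $-\frac{1}{2\varepsilon_{t-1}}\|\p_t-\p_{t+1}\|_1^2$. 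This follows because $\p_t$ minimizes the $\frac{1}{\varepsilon_{t-1}}$-strongly convex function $F_t+\inner{\m_t}{\cdot}$ over $\Delta_N$, combined with the first-order optimality condition.
\item The middle term is kept exactly as it appears in the statement.
\item The last difference is $\le 0$, since $\frac{1}{\varepsilon_t}\ge\frac{1}{\varepsilon_{t-1}}$ (the learning rate is decreasing) and the bracket in $\psi$ is nonnegative.
\end{itemize}
Summing over $t$ gives the claimed bound.

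The only delicate point is bookkeeping: matching the indices of $\varepsilon$ to those in $\psi$ and in the negative stability term, and making sure the strong-convexity modulus used at round $t$ is $1/\varepsilon_{t-1}$, i.e., the learning rate actually employed to compute $\p_t$. The rest is routine.
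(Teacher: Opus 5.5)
Your argument is correct and is the standard optimistic-FTRL analysis (telescoping, inserting the optimism, strong convexity of $F_t+\inner{\m_t}{\cdot}$ at its minimizer, monotone regularizers) that underlies Theorem 7.47 of \citet{orabona2019modern}, which the paper cites without proof.

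There are two small points. First, with a nonconstant $\m_1$ your telescoping relation is an inequality rather than an identity: the right-hand side exceeds the left by $\psi_1(\p_1)-\min_{\p}\psi_1(\p)\ge 0$. This is harmless because it points the right way. Second, your convention $\m_{T+1}=\mathbf{0}$ is needed and not merely convenient. For an adversarial $\m_{T+1}$ the stated bound can fail. For example, take $N=2$, $T=1$, $\varepsilon_0=\varepsilon_1=1$, $\m_1=\mathbf{0}$, $\ellb_1=(0,1)$, $i=1$, and $\m_2=(M,0)$ with $M\to\infty$. Then the left side equals $\frac12$, while the right side tends to $\ln 2-1$.
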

By \pref{lem:ohedge_orabana}, we have
\begin{align*}
&\sumT \left\langle \p_t, \ellb_t\right\rangle-\sumT  \ell_{t, i}\le
\frac{\ln N}{\varepsilon_T}
+\sum_{t=1}^T\varepsilon_{t-1} \|\ellb_t-\m_t\|_\infty^2 -\sum_{t=1}^T\frac{1}{4\varepsilon_{t-1}}\|\p_t-\p_{t+1}\|_1^2\\
\le{}&5\sqrt{\ln N\sbr{C_0^2+\sumT  \left\|\ellb_t-\m_t\right\|_{\infty}^2}}-\sumT  \frac{C_0}{4 \sqrt{\ln N}}\left\|\p_t-\p_{t+1}\right\|_1^2+\frac{\sqrt{\ln N}}{C_0}\max_{t\in[T]}\|\ellb_t-\m_t\|^2_\infty.
\end{align*}
where the last step is by \pref{lem:sum_cvx}.
\end{proof}

\subsection{Omitted Details in \pref{subsec:universal}}
\label{app:universal}
In this part, we provide the omitted details for our applications in universal regret.
Following the online ensemble framework in \citet{yan2024simple}, we design new surrogate loss functions to accommodate different types of functions, including strongly convex, convex, and linear functions.
Specifically, we instantiate multiple base learners for strongly convex functions with varying strong convexity parameters from a predefined pool, along with one base learner each for convex and linear functions.
The meta-algorithm aggregates the decisions of all base learners using the \omlprod algorithm~\citep{wei2016tracking}.

We first provide a decomposition of the non-consecutive gradient variation $\Vb_T$~\eqref{eq:VbT} for linear functions, analogous to that in \pref{lem:bandit-VT-bregman-cvx}.
\begin{Lemma}[Lemmas 7,9 of \citet{chiang2013beating}]
\label{lem:bandit-VT-bregman-lin}
    Under Assumptions \ref{ass:boundedness}-\ref{ass:Smoothness}, for linear functions, \pref{alg:chiang} satisfies:
    \begin{equation*}
        \E\mbr{\Vb_T} \le 2d^3V_T + \O(1).
    \end{equation*}
\end{Lemma}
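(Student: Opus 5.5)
For linear losses $f_t(\x)=\inner{\ellb_t}{\x}$ the gradient does not depend on the point, so the stability term that appeared in \pref{lem:decom of VbT} should vanish identically. The plan is to bound $\E[\Vb_T]$ directly through the non-consecutive structure, using the independence of the sampling gap and the (now deterministic) gradient differences.

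First, I apply property \rom{8} of \pref{lem:alpha}, or rather its exact form. For linear functions the central difference is exact, so $v_t=\ell_{t,i_t}$ with no bias term. This gives
\begin{equation*}
\Vb_T = d^2\sumT(\ell_{t,i_t}-\ell_{\alpha_t,i_t})^2,
\end{equation*}
with the dummy convention $\ellb_0=\mathbf{0}$.

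Next, I follow the telescoping argument in the proof of \pref{lem:decom of VbT}. I write $\ell_{t,i_t}-\ell_{\alpha_t,i_t}=\sum_{s=\alpha_t+1}^{t}(\ell_{s,i_t}-\ell_{s-1,i_t})$ and apply Cauchy--Schwarz with $t-\alpha_t\le\rho_{s,i_t}$. Exchanging the order of summation, I obtain the pathwise bound
\begin{equation*}
\Vb_T\le d^2\sumT\sum_{i=1}^d \rho_{t,i}(\ell_{t,i}-\ell_{t-1,i})^2 .
\end{equation*}
No factor $2$ is needed here, because the term measuring movement of $\w_t$ is absent.

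Finally, I take expectations. Because the environment is oblivious and the losses are linear, the differences $(\ell_{t,i}-\ell_{t-1,i})^2$ are deterministic, so $\E[\rho_{t,i}(\ell_{t,i}-\ell_{t-1,i})^2]=\E[\rho_{t,i}](\ell_{t,i}-\ell_{t-1,i})^2$. Then \pref{eq:1rho} gives $\E[\rho_{t,i}]\le 2d$, and hence
\begin{equation*}
\E[\Vb_T]\le 2d^3\sum_{t}\|\ellb_t-\ellb_{t-1}\|^2 .
\end{equation*}
The $t\ge2$ terms form exactly $V_T$, since $\sup_{\x}\|\nabla f_t(\x)-\nabla f_{t-1}(\x)\|=\|\ellb_t-\ellb_{t-1}\|$.

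The main obstacle is the boundary contribution: the $t=1$ term $\|\ellb_1-\ellb_0\|^2=\|\ellb_1\|^2\le G^2$, together with the first visits with $\alpha_t=0$. These bring a $d^3G^2$-type term rather than the claimed $\O(1)$. I would handle them as \citet{chiang2013beating} do, namely by initializing $\tgb_1$ (or the dummy round) so that it matches the first observations. Alternatively, I would count first visits separately: there are at most $d$ of them, each contributing at most $d^2G^2$, so the extra term is $\O(d^3)$, which I would absorb into the $\O(\cdot)$ constant as the cited lemma implicitly does.
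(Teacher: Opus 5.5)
Your argument is correct and follows the paper's route. The linear-case reasoning in \pref{sec:cvx-base} (after \citet{chiang2013beating}) rests on the same two facts you use: the coordinate differences $(\ell_{t,i}-\ell_{t-1,i})^2$ are deterministic and therefore decoupled from $\rho_{t,i}$, and $\E[\rho_{t,i}]\le 2d$. Your observation that the stability term vanishes, so the factor $2$ in \pref{lem:decom of VbT} is never incurred, is what gives the constant $2d^3$ rather than $4d^3$.

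The one point to repair is the boundary term. Your pathwise bound leaves only the $t=1$ summand. Here $\tau_1=0$ and $\tau_2$ is the truncated first sampling time of $i$, so its expectation is $d^2\sum_{i}\E[\rho_{1,i}]\,\ell_{1,i}^2\le d^3\|\ellb_1\|^2\le d^3G^2$. What you have actually proved is therefore $\E[\Vb_T]\le 2d^3V_T+d^3G^2$.

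You cannot remove this term by re-initializing $\tgb_1$, since \pref{alg:chiang} fixes $\tgb_1=\mathbf{0}$. No argument can reduce it to a $d$-independent $\O(1)$ under the paper's literal definition of $V_T$ (sum from $t=2$). For constant losses $\ellb_t\equiv\ellb$ one has $V_T=0$, while $\Vb_T=d^2\sum\ell_i^2$ over the coordinates sampled by round $T$. Hence $\E[\Vb_T]=d^2\bigl(1-(1-1/d)^T\bigr)\|\ellb\|^2=\Omega(d^2\|\ellb\|^2)$ once $T\ge d$.

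The stated $\O(1)$ is correct only under the convention of \citet{chiang2013beating}: $f_0\equiv 0$, and the $t=1$ term $\sup_{\x}\|\nabla f_1(\x)-\nabla f_0(\x)\|^2$ is counted in $V_T$. The paper uses this convention implicitly: term (a) in the proof of \pref{lem:bandit-VT-correction-cvx} sums from $t=1$ but is bounded by $V_T$. Under that convention your argument gives $\E[\Vb_T]\le 2d^3V_T$ with no additive term. You should state this explicitly rather than absorb $\O(d^3)$ into the constant.

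The discrepancy is harmless where the lemma is used (the linear case of \pref{thm:2point-BCO-Bregman}). There an extra $d^3G^2$ under a square root adds only $\O(d^{3/2})$.
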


Next, similar to the dynamic regret analysis in \pref{app:dynamic-cvx}, we elaborate on the online scheduling strategy, the base learners, and the meta-algorithm in the following.
\paragraph{Online Scheduling Strategy and Base Learners.}
To tackle different types of functions, we design multiple base learners as instances of \pref{alg:chiang} with distinct step size schedules.
To address the unknown strong convexity $\lambda$, we employ a set of learners with candidate parameters from a predefined pool, in addition to single base learners for linear and convex function classes.
Specifically, for the nondegenerate case of $\lambda \in [1/T, 1]$, we can discretize the unknown $\lambda$ into a candidate pool $\H^{\scvx}$ using an exponential grid, defined as
\begin{equation}
    \label{eq:candidate-pool}
\H^{\scvx} \triangleq \left\{ \frac{1}{T}, \frac{2}{T}, \frac{2^2}{T}, \cdots, \frac{2^{N_\scvx-1}}{T} \right\},
\end{equation}
where $N_\scvx = \ceil{\log_2 T} + 1 = \O(\log T)$ is the number of candidates.
It can be proved that the discretized candidate pool $\H^{\scvx}$ can approximate the continuous value of $\lambda$ with only constant errors.
In total, there are $N \define 2 + \abs{\H^{\scvx}} = 2+N_\scvx = \O(\log T)$ base learners, where the first $N_\scvx$ base learners correspond to strongly convex functions with different guessed strong convexity coefficients from $\H^{\scvx}$, and the last two base learners correspond to convex and linear functions, respectively.
The best base learner is the one with the right guess of the curvature type and the closest guess of the curvature coefficient.
For example, suppose the online functions are $\lambda$-strongly convex (while this is unknown to the online learner), then the right guessed coefficient of the best base learner (indexed by $\is$) satisfies $\lambda_\is \le \lambda \le 2 \lambda_\is$.

To handle limited information, we devise surrogate loss functions for the various base learner categories.
This allows us to treat each base learner as an \OOGD \eqref{eq:OOGD} instance operating on the constructed surrogate feedback at each round $t$.
Specifically, the configurations of base learners (surrogate loss functions and \OOGD step sizes) are summarized in \pref{table:base-learners}.
\begin{table}[!t]
    \centering
    \caption{Summary of Base Learner Configurations.}
    \label{table:base-learners}
    \renewcommand{\arraystretch}{1.3}
    \begin{tabular}{c|c|c|c}
    \hline

    \hline
     & \textbf{Strongly Convex ($\lambda_i$)} & \textbf{Convex} & \textbf{Linear} \\ \hline
    \textbf{Surrogate Loss} & $\hsc_{t,i}(\x) \define \inner{\g_t}{\x} + \frac{\lambda_i}{4}\|\x - \w_t\|^2$ & $h^{\cvx}_t(\x) \define \inner{\g_t}{\x}$ & $h^{\lin}_t(\x) \define \inner{\g_t}{\x}$ \\ \hline
    \rule{0pt}{5mm}\textbf{\OOGD Step Size} & $\eta_t = \frac{4}{\lambda_i t}$ & $\eta_t = \frac{2R}{\sqrt{d^2+\sum_{s=1}^{t-1}\|\mathbf{g}_s-\tilde{\mathbf{g}}_s\|^2}}$ & $\eta_t = \frac{2R}{\sqrt{d^2+\sum_{s=1}^{t-1}\|\mathbf{g}_s-\tilde{\mathbf{g}}_s\|^2}}$ \\[2mm]
    \hline

    \hline
    \end{tabular}
\end{table}

Here, we present the regret guarantees of \OOGD for convex and strongly convex functions, which will be utilized in our analysis of universal regret.
\begin{Lemma}[Lemma 10 of \citet{yan2023universal}]
    \label{lem:cvx-base}
    Under Assumptions \ref{ass:boundedness}-\ref{ass:Smoothness},
    if the loss functions are convex, \OOGD~\eqref{eq:OOGD} with $\eta_t =2R / \sqrt{\kappa^2+A_{t-1}}$, where $\Bottomcoef$ is a parameter to be specified, enjoys the following regret guarantee: for any $\v\in\X$
    \begin{equation*}
        \sumT f_t(\x_t) - \sumT f_t(\v) \le 10R \sqrt{\kappa^2 + A_T} + \Bottomcoef R-\frac{\Bottomcoef}{4} \sumTT \|\x_t - \x_{t-1}\|^2 + \frac{1}{\kappa}\O\sbr{\max_{t \in [T]} \|\nabla f_t(\x_t) - M_t\|^2},
    \end{equation*}
    where $A_t\define  \sum_{s=1}^t \|\nabla f_s(\x_s) - M_s\|^2$ for any $t\in[T]$.
\end{Lemma}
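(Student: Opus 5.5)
The statement is Lemma~\ref{lem:cvx-base}, the regret of \OOGD with the adaptive step size $\eta_t = 2R/\sqrt{\kappa^2 + A_{t-1}}$. I would prove it with the standard optimistic OMD single-round analysis, followed by a telescoping argument adapted to the time-varying step size.

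\textbf{Per-round inequality.} Write $\g_t=\nabla f_t(\x_t)$. Convexity gives $f_t(\x_t)-f_t(\v)\le\inner{\g_t}{\x_t-\v}$. I split this as
\[
\inner{\g_t-M_t}{\x_t-\xh_{t+1}}+\inner{M_t}{\x_t-\xh_{t+1}}+\inner{\g_t}{\xh_{t+1}-\v}.
\]
The last two terms are handled by the projection (three-point) inequality applied to each of the two \OOGD steps. This yields, per round,
\[
\inner{\g_t}{\x_t-\v}\le \eta_t\|\g_t-M_t\|^2+\frac{1}{2\eta_t}\big(\|\v-\xh_t\|^2-\|\v-\xh_{t+1}\|^2\big)-\frac{1}{2\eta_t}\big(\|\xh_{t+1}-\x_t\|^2+\|\x_t-\xh_t\|^2\big).
\]
The first term comes from Cauchy--Schwarz combined with the $\frac{1}{\eta_t}$-Lipschitzness of projection, which gives $\|\x_t-\xh_{t+1}\|\le\eta_t\|\g_t-M_t\|$.

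\textbf{Summing the three pieces.}
\begin{itemize}
\item \emph{Distance terms.} Since $\eta_t$ is nonincreasing, $\|\v-\xh_t\|\le 2R$, and the sum telescopes, they total at most $\frac{4R^2}{2\eta_T}\le R\sqrt{\kappa^2+A_T}$.
\item \emph{Stability terms.} The step-size sum $\sum_t \eta_t\|\g_t-M_t\|^2$ has a lag: $\eta_t$ uses $A_{t-1}$ rather than $A_t$. I would invoke the standard lemma (\pref{lem:sum_cvx}), i.e. $\sum_t a_t/\sqrt{\kappa^2+\sum_{s<t}a_s}\le 4\sqrt{\kappa^2+\sum_t a_t}+\max_t a_t/\kappa$. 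This produces $8R\sqrt{\kappa^2+A_T}$ plus an $\O(\max_t\|\g_t-M_t\|^2/\kappa)$ term.
\item \emph{Negative terms.} Re-index so that round $t-1$ contributes $\|\xh_t-\x_{t-1}\|^2$ and round $t$ contributes $\|\x_t-\xh_t\|^2$, both weighted by $\frac{1}{2\eta_{t-1}}\le\frac{1}{2\eta_t}$. The triangle inequality then gives $-\frac{1}{4\eta_{t-1}}\|\x_t-\x_{t-1}\|^2$. Because $\eta_{t-1}\le 2R/\kappa$, this is at most $-\frac{\kappa}{8R}\|\x_t-\x_{t-1}\|^2$.
\end{itemize}

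\textbf{Constants.} The stated coefficient $\frac{\kappa}{4}$ would follow either by absorbing the factor $2R$ into the normalization or by taking $R\ge 1/2$. I would track this convention when writing out the constants, since the claim absorbs such factors anyway. The additive $\kappa R$ covers the boundary round $t=1$ and the leftover $\frac{1}{\eta_1}$ term, which is at most $\kappa/(2R)\cdot 4R^2$.

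\textbf{Main obstacle.} The main obstacle is the mismatch between $\eta_t$ and $A_t$ in the stability sum. I would resolve it through the $\max/\kappa$ term of \pref{lem:sum_cvx}.
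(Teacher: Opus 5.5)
The paper states this lemma as an import from \citet{yan2023universal} without proof. Your route is exactly the argument the paper itself runs in the proof of \pref{thm:cvx-base}: the per-round optimistic-OGD inequality, telescoping with nonincreasing $\eta_t$, and the rescaled \pref{lem:sum_cvx} for the lag between $\eta_t$ and $A_t$. Your computations check out:
\begin{itemize}
\item the distance terms give $R\sqrt{\kappa^2+A_{T-1}}$;
\item the stability terms give $8R\sqrt{\kappa^2+A_T}+\frac{2R}{\kappa}\max_t\norm{\nabla f_t(\x_t)-M_t}^2$;
\item the negative terms give $-\frac{1}{4\eta_{t-1}}\norm{\x_t-\x_{t-1}}^2\le-\frac{\kappa}{8R}\norm{\x_t-\x_{t-1}}^2$.
\end{itemize}
Two small corrections. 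First, $\norm{\x_t-\xh_{t+1}}\le\eta_t\norm{\nabla f_t(\x_t)-M_t}$ follows from nonexpansiveness of $\Pi_{\X}$, not a ``$\frac{1}{\eta_t}$-Lipschitz'' property. Second, there is no leftover $\frac{1}{\eta_1}$ term, because your telescoping already absorbs $\norm{\v-\xh_1}^2/(2\eta_1)$. So the additive $\kappa R$ is pure slack; your count $\frac{\kappa}{2R}\cdot 4R^2=2\kappa R$ would not even fit inside it.

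The genuine error is the last step, where you match $-\frac{\kappa}{8R}$ to the stated $-\frac{\kappa}{4}$. Taking $R\ge\frac12$ goes the wrong way: $\frac{\kappa}{8R}\ge\frac{\kappa}{4}$ holds only when $R\le\frac12$. In fact no argument can give $\frac{\kappa}{4}$ for general $R$. That term is quadratic in the scale of the domain, while every other term on the right-hand side is linear in it.

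Here is a concrete failure. Take $d=1$, $\X=[-R,R]$, $f_t(x)=Gx$, $\xh_1=0$, $M_1=0$, $M_t=G$ for $t\ge2$, and $\kappa=G$. Then $A_T=G^2$ and $\xh_2=\Pi_{\X}[-2R]=-R$, so $x_1=0$ and $x_t=-R$ for all $t\ge2$. The regret against $v=-R$ is $GR$. The claimed right-hand side is at most $cGR-\frac{G}{4}R^2$ for an absolute constant $c$, even letting the $\O(\cdot)$ absorb the factor $2R$ your derivation produces. This is negative for large $R$.

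So what your proof actually establishes, and what is true, is the bound with $-\frac{\kappa}{8R}\sumTT\norm{\x_t-\x_{t-1}}^2$, i.e.\ $-\frac{\kappa}{4D}$ with diameter $D=2R$. The statement's $\frac{\kappa}{4}$ is valid only under the normalization $2R\le 1$, and you should say so rather than claim it follows from $R\ge\frac12$. This does not affect the paper. It drops the negative term when invoking the lemma in the universal-regret proof, and it derives $-\frac{1}{4\eta_{t-1}}\norm{\w_t-\w_{t-1}}^2$ directly in \pref{thm:cvx-base}.
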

\begin{Lemma}[Lemma 12 of \citet{yan2023universal}]
    \label{lem:scvx-base}
    Under Assumptions \ref{ass:boundedness}-\ref{ass:Smoothness}, if the loss functions are $\lambda$-strongly convex, \OOGD~\eqref{eq:OOGD} with $\eta_t = 2/(\lambda t)$ enjoys the following regret guarantee: for any $\v\in\X$
    \begin{equation*}
        \sumT f_t(\x_t) - \sumT f_t(\v)  \le 2 \sumT \eta_t \|\nabla f_t(\x_t) - M_t\|^2 + \O(1).
    \end{equation*}
\end{Lemma}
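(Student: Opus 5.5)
The plan is to run the standard one-step analysis of optimistic OGD and use the quadratic term from strong convexity to pay for the growth of $1/\eta_t$, which is why $\eta_t=2/(\lambda t)$ is chosen. Fix $\v\in\X$ and write $\nabla_t \define \nabla f_t(\x_t)$. By $\lambda$-strong convexity,
\begin{equation*}
f_t(\x_t)-f_t(\v)\le \inner{\nabla_t}{\x_t-\v}-\frac{\lambda}{2}\|\x_t-\v\|^2 .
\end{equation*}
I then split the linear term as
\begin{equation*}
\inner{\nabla_t}{\x_t-\v}=\inner{\nabla_t-M_t}{\x_t-\xh_{t+1}}+\inner{M_t}{\x_t-\xh_{t+1}}+\inner{\nabla_t}{\xh_{t+1}-\v}.
\end{equation*}

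Each piece is handled with the projection (three-point) inequality for the two updates in \eqref{eq:OOGD}.
\begin{itemize}
\item Since $\x_t$ is the projection of $\xh_t-\eta_t M_t$ and $\xh_{t+1}\in\X$, we get $\inner{M_t}{\x_t-\xh_{t+1}}\le \frac{1}{2\eta_t}\big(\|\xh_t-\xh_{t+1}\|^2-\|\x_t-\xh_{t+1}\|^2-\|\xh_t-\x_t\|^2\big)$.
\item Since $\xh_{t+1}$ is the projection of $\xh_t-\eta_t\nabla_t$, we get $\inner{\nabla_t}{\xh_{t+1}-\v}\le \frac{1}{2\eta_t}\big(\|\xh_t-\v\|^2-\|\xh_{t+1}-\v\|^2-\|\xh_{t+1}-\xh_t\|^2\big)$.
\item By Cauchy--Schwarz and AM--GM, $\inner{\nabla_t-M_t}{\x_t-\xh_{t+1}}\le \eta_t\|\nabla_t-M_t\|^2+\frac{1}{4\eta_t}\|\x_t-\xh_{t+1}\|^2$.
\end{itemize}
Adding the three bounds, the $\|\xh_t-\xh_{t+1}\|^2$ terms cancel and the $\|\x_t-\xh_{t+1}\|^2$ term is absorbed. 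This leaves
\begin{equation*}
\eta_t\|\nabla_t-M_t\|^2+\frac{1}{2\eta_t}\big(\|\xh_t-\v\|^2-\|\xh_{t+1}-\v\|^2\big)-\frac{1}{2\eta_t}\|\xh_t-\x_t\|^2-\frac{\lambda}{2}\|\x_t-\v\|^2 .
\end{equation*}

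Summing over $t$ and telescoping (Abel summation) bounds the distance terms by
\begin{equation*}
\frac{\|\xh_1-\v\|^2}{2\eta_1}+\sum_{t\ge2}\Big(\frac{1}{2\eta_t}-\frac{1}{2\eta_{t-1}}\Big)\|\xh_t-\v\|^2 .
\end{equation*}
With $\eta_t=2/(\lambda t)$ each coefficient is exactly $\lambda/4$, and the first term is at most $\lambda R^2=\O(1)$.

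The main obstacle is that this $\lambda/4$ multiplies $\|\xh_t-\v\|^2$, whereas strong convexity supplies $-\frac{\lambda}{2}\|\x_t-\v\|^2$ at the other point $\x_t$. I would bridge the two with
\begin{equation*}
\frac{\lambda}{4}\|\xh_t-\v\|^2\le \frac{\lambda}{2}\|\x_t-\v\|^2+\frac{\lambda}{2}\|\x_t-\xh_t\|^2 .
\end{equation*}
The first term on the right is cancelled by the strong-convexity term. The second is dominated by the leftover negative term $-\frac{1}{2\eta_t}\|\xh_t-\x_t\|^2=-\frac{\lambda t}{4}\|\xh_t-\x_t\|^2$ whenever $t\ge 2$, so every round $t\ge2$ contributes nothing beyond $\eta_t\|\nabla_t-M_t\|^2$.

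Finally I collect the boundary contributions. The $t=1$ terms of $-\frac{\lambda}{2}\|\x_1-\v\|^2$ and $-\frac{1}{2\eta_1}\|\xh_1-\x_1\|^2$ are nonpositive and can be dropped. The initial term $\|\xh_1-\v\|^2/(2\eta_1)$ is $\O(\lambda R^2)$, which is $\O(1)$ in the regime $\lambda\le1$ considered here. This yields
\begin{equation*}
\sumT f_t(\x_t)-\sumT f_t(\v)\le \sumT\eta_t\|\nabla_t-M_t\|^2+\O(1),
\end{equation*}
which is stronger than the claim, so the stated factor $2$ is slack.
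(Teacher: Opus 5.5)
Your proof is correct and follows the standard argument behind the cited Lemma~12 of \citet{yan2023universal}; the paper imports that lemma without reproving it. The ingredients are the same: three-point projection inequalities for both OOGD steps, the increments $\frac{1}{2\eta_t}-\frac{1}{2\eta_{t-1}}=\frac{\lambda}{4}$, and the bridge $\|\xh_t-\v\|^2\le 2\|\x_t-\v\|^2+2\|\x_t-\xh_t\|^2$, whose second term is absorbed by $-\frac{1}{2\eta_t}\|\xh_t-\x_t\|^2$ for $t\ge 2$. This does give coefficient $1$ with additive $\lambda R^2=\O(1)$, and discarding the leftover negative stability terms is harmless for the statement as written.
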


\paragraph{Meta-algorithm: \omlprod.}
We introduce the \omlprod algorithm as our meta algorithm to combine the decisions of all base learners.

Specifically, the weight vector $\p_{t+1} \in \Delta_N$ is updated as follows: $\forall i \in [N]$,
\begin{equation}
    \label{eq:omlprod}
    \begin{gathered}
        \forall t \ge 1, W_{t,i} = \sbr{W_{t-1,i} \cdot \exp\sbr{\epsilon_{t-1,i} r_{t,i} - \epsilon_{t-1,i}^2 (r_{t,i} - m_{t,i})^2}}^{\frac{\epsilon_{t,i}}{\epsilon_{t-1,i}}}, \text{ and } W_{0,i} = \tfrac{1}{N},\\
        p_{t+1,i} \propto \epsilon_{t,i} \cdot \exp(\epsilon_{t,i} m_{t+1,i}) \cdot W_{t,i},
    \end{gathered}
\end{equation}
where $W_{t,i}$ and $\epsilon_{t,i}$ denote the potential variable and learning rate for the $i$-th base learner, respectively.
The feedback loss vector $\ellb_t \in \R^N$ is configured as
$\ell_{t,i} \define \frac{1}{2\sqrt{10}dGR}\inner{\g_t}{\w_{t,i}} + \frac12 \in [0,1]$, where by property \rom{6} of \pref{lem:alpha}, $\|\g_t\|\le \sqrt{10}dG$.
$r_{t,i} = \inner{\p_t}{\ellb_t} - \ell_{t,i}$ measures the instantaneous regret.
The optimistic vector $\m_t \in \R^{N}$ is designed as\footnote{While $\w_t$ depends on $\m_{t,i}$, the update only requires the scalar $\langle \tgb_t, \w_t \rangle$, which can be obtained by solving the one-dimensional fixed-point problem $\langle \tgb_t, \w_t(z) \rangle = z$. Here, the function $\w_t(z)$ is induced by the dependency chain $\w_t(p_{t,i}(\m_{t,i}(z)))$. For a comprehensive derivation, see \citet{wei2016tracking}}
\begin{equation}
    \label{eq:Bregman-optimism}
      m_{t,i} = 0 \mbox{ for } i \in [N_{\scvx}], \mbox{ and } m_{t,i} = \tfrac{1}{2\sqrt{10}dGR}\inner{\tgb_t}{\w_t - \w_{t,i}} \mbox{ for } i \in \{N-1,N\},
\end{equation}
where the last two entries correspond to the convex and linear base learners, respectively.

For any $i\in[N]$, the learning rate $\epsilon_{t,i}$ is set as
\begin{equation}
    \label{eq:omlprod-lr}
    \epsilon_{t,i} = \min \bbr{\frac{1}{8},\ \sqrt{\frac{\log N}{\sum_{s=1}^t (r_{s,i} - m_{s,i})^2}}}.
\end{equation}

Here, we present the guarantee of \omlprod, which will be utilized in our analysis of universal regret.
\begin{Lemma}[Theorem 3.4 of \citet{wei2016tracking}]
        \label{lem:optimistic-mlprod}
        Denote by $\p_t \in \Delta_N$ the algorithm's weights, $\ellb_t \in [0,1]^N$ the loss vector, and $m_{t,i}$ the optimism.
        With the learning rate in \eqref{eq:omlprod-lr}, the regret of \omlprod~\eqref{eq:omlprod} with respect to any expert $i \in [N]$ satisfies
        \begin{equation*}
            \sumT \inner{\ellb_t}{\p_t - \eb_i} \le C_1 \sqrt{1 + \sumT (r_{t,i} - m_{t,i})^2} + C_2,
        \end{equation*}
        where $\eb_i$ is the $\ith$ standard basis vector, $C_1 = \sqrt{\log N} + \log (1 + \frac{N}{e} (1 + \log (T+1))) / \sqrt{\log N}$, and $C_2 = \frac{1}{4}(\log N + \log (1 + \frac{N}{e} (1 + \log (T+1)))) + 2 \sqrt{\log N} + 16 \log N$.
    \end{Lemma}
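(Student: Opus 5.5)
We plan to follow the potential-based analysis of \mlprod and its optimistic variant, taking the potential $\Phi_t \define \sum_{i=1}^N W_{t,i}$ with $\Phi_0 = 1$. Throughout we use that $\ell_{t,i}\in[0,1]$ gives $|r_{t,i}|\le 1$, and that the optimism in \pref{eq:Bregman-optimism} also lies in $[-1,1]$. Hence $|r_{t,i}-m_{t,i}|\le 2$, and together with $\epsilon_{t,i}\le 1/8$ this yields $|\epsilon_{t-1,i}(r_{t,i}-m_{t,i})|\le 1/4$. The proof has three parts: show the potential stays small, read off a lower bound on $\log W_{T,i}$, and tune the adaptive learning rate.

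\textbf{Step 1 (one-step potential bound before re-powering).} Let $\tilde W_{t,i} \define W_{t-1,i}\exp(\epsilon_{t-1,i}r_{t,i}-\epsilon_{t-1,i}^2(r_{t,i}-m_{t,i})^2)$. Write $\epsilon=\epsilon_{t-1,i}$, $r=r_{t,i}$, $m=m_{t,i}$. Split the exponent as $\epsilon m + [\epsilon(r-m) - \epsilon^2(r-m)^2]$. The inequality $e^{x-x^2}\le 1+x$ holds for $x\ge -1/2$, so
\[
\tilde W_{t,i}\le W_{t-1,i}\,e^{\epsilon m}\bigl(1+\epsilon(r-m)\bigr).
\]
Summing over $i$ gives two contributions:
\begin{itemize}
\item The first is $\sum_i \epsilon_{t-1,i}e^{\epsilon_{t-1,i}m_{t,i}}W_{t-1,i}\,r_{t,i}$. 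It vanishes, because $p_{t,i}$ is proportional to exactly these coefficients and $\sum_i p_{t,i}r_{t,i}=\inner{\p_t}{\ellb_t}-\inner{\p_t}{\ellb_t}=0$.
\item The second is $\sum_i W_{t-1,i}e^{y_i}(1-y_i)$ with $y_i=\epsilon_{t-1,i}m_{t,i}$. It is at most $\Phi_{t-1}$, by the elementary inequality $e^{y}(1-y)\le 1$.
\end{itemize}
Hence $\sum_i\tilde W_{t,i}\le\Phi_{t-1}$.

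\textbf{Step 2 (re-powering with decreasing rates).} By construction $W_{t,i}=\tilde W_{t,i}^{\alpha}$ with $\alpha=\epsilon_{t,i}/\epsilon_{t-1,i}\in(0,1]$. We will use the inequality $x^{\alpha}\le x+\frac{1-\alpha}{e}\cdot\frac{1}{\alpha}$ for $x\ge0$, which is the device used for \mlprod. It gives
\[
\Phi_t\le\Phi_{t-1}+\frac1e\sum_i\Bigl(\frac{\epsilon_{t-1,i}}{\epsilon_{t,i}}-1\Bigr).
\]
Summing over $t$ and using $\frac{\epsilon_{t-1}}{\epsilon_t}-1\le\log\frac{\epsilon_{t-1}}{\epsilon_t}$ up to constants, together with the fact that the ratio $\epsilon_{0,i}/\epsilon_{T,i}$ is at most polynomial in $T$ (the denominators $\sum_s (r_{s,i}-m_{s,i})^2\le 4T$), we expect
\[
\Phi_T\le 1+\frac{N}{e}\bigl(1+\log(T+1)\bigr).
\]
This is exactly the quantity appearing inside $C_1,C_2$. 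This step is the main technical obstacle: the telescoping of the rate ratios and the precise form of the power inequality must be checked carefully. If the direct bound fails, we would instead follow the argument of Gaillard et al.\ verbatim, handling each $i$ separately through $\sum_t(\epsilon_{t-1,i}-\epsilon_{t,i})/\epsilon_{t,i}$.

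\textbf{Step 3 (lower bound and tuning).} Since $W_{T,i}\le\Phi_T$, unrolling the definition gives
\[
\frac{1}{\epsilon_{T,i}}\log W_{T,i}\ge -\frac{\log N}{\epsilon_{T,i}}+\sum_t r_{t,i}-\sum_t\epsilon_{t-1,i}(r_{t,i}-m_{t,i})^2 .
\]
Here the successive powers $\epsilon_{t,i}/\epsilon_{t-1,i}$ telescope, so each round's exponent is normalized by the current rate. Rearranging,
\[
\sum_t r_{t,i}\le\frac{\log N+\log\Phi_T}{\epsilon_{T,i}}+\sum_t\epsilon_{t-1,i}(r_{t,i}-m_{t,i})^2 .
\]
With $\epsilon_{t,i}=\min\{1/8,\sqrt{\log N/\sum_{s\le t}(r_{s,i}-m_{s,i})^2}\}$, the second term is controlled via \pref{lem:sum_cvx}-type self-bounding sums by $\O(\sqrt{\log N\sum_t(r_{t,i}-m_{t,i})^2})$. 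The first term yields $(\sqrt{\log N}+\log\Phi_T/\sqrt{\log N})\sqrt{\sum_t(r_{t,i}-m_{t,i})^2}$. Rounds where the cap $1/8$ is active, together with the index mismatch $\epsilon_{t-1}$ versus $\epsilon_t$, contribute only the additive constant $C_2=\frac14(\log N+\log\Phi_T)+2\sqrt{\log N}+16\log N$. Since $\sum_t r_{t,i}=\sum_t\inner{\ellb_t}{\p_t-\eb_i}$, plugging in the bound on $\Phi_T$ gives the claimed $C_1\sqrt{1+\sum_t(r_{t,i}-m_{t,i})^2}+C_2$.
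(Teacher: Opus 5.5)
The paper gives no proof of this lemma; it imports it from \citet{wei2016tracking}. Your outline is the argument behind that result: the Adapt-ML-Prod potential analysis of \citet{gaillard2014prod}, with the exponent split as $\epsilon m+[\epsilon(r-m)-\epsilon^2(r-m)^2]$. Step~1, the re-powering inequality $x^\alpha\le x+\frac{1-\alpha}{\alpha e}$, and the unrolling of $\frac{1}{\epsilon_{t,i}}\log W_{t,i}$ in Step~3 are all correct.

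\textbf{Gap in Step~2.} The problem is how you sum the rate ratios. The inequality $\frac{\epsilon_{t-1}}{\epsilon_t}-1\le\log\frac{\epsilon_{t-1}}{\epsilon_t}$ points the wrong way, since $x-1\ge\log x$. A global bound on $\epsilon_{0,i}/\epsilon_{T,i}$ cannot fix this: if a single ratio $x_t$ is large, $\sum_t(x_t-1)$ can exceed $\log\prod_t x_t$ by an arbitrary amount. You need a \emph{per-step} bound on the ratio, which the cap and $(r_{t,i}-m_{t,i})^2\le 4$ supply.

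Set $S_{t,i}\define\sum_{s\le t}(r_{s,i}-m_{s,i})^2$ and $\bar S_{t,i}\define\max\{S_{t,i},64\log N\}$. Then $\epsilon_{t,i}=\sqrt{\log N/\bar S_{t,i}}$ for all $t\ge 0$. Since $\bar S_{t,i}-\bar S_{t-1,i}\le 4$ and $\bar S_{t-1,i}\ge 64\log N$, the increment $u_t\define\bar S_{t,i}/\bar S_{t-1,i}-1$ lies in $[0,1/(16\log N)]\subseteq[0,1]$. Hence $\frac{\epsilon_{t-1,i}}{\epsilon_{t,i}}-1=\sqrt{1+u_t}-1\le\frac{u_t}{2}\le\log(1+u_t)$. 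This telescopes to $\log(\bar S_{T,i}/\bar S_{0,i})\le\log(1+T)$, giving $\Phi_T\le 1+\frac{N}{e}\log(1+T)$, which is what you wanted.

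\textbf{Overclaim in Step~3.} Your closing claim, that everything assembles into exactly the stated $C_1,C_2$, does not follow from your own bookkeeping. Because $1/\epsilon_{T,i}=\max\{8,\sqrt{S_{T,i}/\log N}\}$, the term $(\log N+\log\Phi_T)/\epsilon_{T,i}$ is only bounded by $C_1\sqrt{S_{T,i}}$ plus an additive $8(\log N+\log\Phi_T)$, not $\frac14(\log N+\log\Phi_T)$. The second-order term $\sum_t\epsilon_{t-1,i}(r_{t,i}-m_{t,i})^2$, with $\epsilon_{t-1,i}\approx\sqrt{\log N/S_{t-1,i}}$, adds roughly $2\sqrt{\log N\,S_{T,i}}$, plus $8\log N+\frac12$ from capped rounds. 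The stated $C_1=\sqrt{\log N}+\log\Phi_T/\sqrt{\log N}$ has no room for that. Keeping $\log N/\epsilon_{0,i}=8\log N$, instead of weakening it to $\log N/\epsilon_{T,i}$, moves one $\sqrt{\log N}$ into the additive part, but the factor from the second-order sum remains.

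So your argument proves a bound of the same shape, $\bigl(c\sqrt{\log N}+\log\Phi_T/\sqrt{\log N}\bigr)\sqrt{1+S_{T,i}}+\O(\log N+\log\Phi_T)$, with an absolute constant $c$ of about $2$ to $3$. You should either state those constants or take the literal ones from the cited source. For the paper this is immaterial, since $C_1,C_2$ are only used as $\O(\log\log T)$ quantities.
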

Since the number of base learners is $N = \O(\log T)$, the constants $C_1$ and $C_2$ are of order $\O(\log \log T)$ and can be ignored.

\paragraph{Overall Algorithm and Universal Regret Analysis.}
Integrating the above components, we propose \pref{alg:UniGrad-Bregman-1grad} for universal regret minimization in two-point BCO, which attains the performance guarantee established in \pref{thm:2point-BCO-Bregman}.
\begin{Theorem}
    \label{thm:2point-BCO-Bregman}
    Under Assumptions \ref{ass:boundedness}, \ref{ass:Lipschitzness}, \ref{ass:Smoothness++}, Algorithm~\ref{alg:UniGrad-Bregman-1grad} achieves the following universal regret:
        \begin{equation*}
        \E[\Reg_T] \le
        \begin{cases}
            \O\sbr{\frac{d}{\lambda}\log(dV_T)}, & \textnormal{($\lambda$-strongly convex case)}, \\[2mm]
            \Ot\sbr{\sqrt{d^3V_T}+ d^3}, & \textnormal{(convex case)},\\[2mm]
            \O\sbr{\sqrt{d^3V_T}},& \textnormal{(linear case)}.
        \end{cases}
    \end{equation*}
\end{Theorem}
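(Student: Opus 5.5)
Let $\ws$ be the comparator for the realized function class (possibly shrunk to $(1-\xi)\X$). Using \pref{lem:domain_transition} and property \rom{11} of \pref{lem:alpha}, I would first reduce $\E[\Reg_T]$ to $\E[\sumT f_t(\w_t)-f_t(\ws)]+\O(1)$. I would then split this into a meta regret, handled by \omlprod (\pref{lem:optimistic-mlprod}), and a base regret of the best base learner $\is$, handled by \pref{lem:scvx-base} or \pref{lem:cvx-base} applied to the surrogate losses of \pref{table:base-learners}. Each case needs a different linearization.

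In the $\lambda$-strongly convex case, with $\lambda_\is\le\lambda\le2\lambda_\is$, I would write
$f_t(\w_t)-f_t(\ws)\le\inner{\nabla f_t(\w_t)}{\w_t-\ws}-\frac{\lambda}{2}\|\w_t-\ws\|^2$
and replace $\nabla f_t$ by $\g_t$ in expectation. This gives $\E[\hsc_{t,\is}(\w_t)-\hsc_{t,\is}(\w_{t,\is})]$ plus $\E[\hsc_{t,\is}(\w_{t,\is})-\hsc_{t,\is}(\ws)]$, together with leftover negative terms $-\frac{\lambda_\is}{4}\|\w_t-\w_{t,\is}\|^2$. The meta part equals $2\sqrt{10}dGR\sum_t r_{t,\is}$ up to those quadratics. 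Since $m_{t,\is}=0$, \pref{lem:optimistic-mlprod} yields $\O(dGR\sqrt{\sum_t r_{t,\is}^2})$ with $r_{t,\is}^2\lesssim\inner{\g_t}{\w_t-\w_{t,\is}}^2/(d^2G^2R^2)$. I would take expectations before squaring inside the root and use $\E_t\|\g_t\|^2\lesssim dG^2$ (property \rom{7}). The meta regret then becomes $\O(\sqrt{d\,\E\sum_t\|\w_t-\w_{t,\is}\|^2})$, and AM-GM against the $\lambda_\is$ negative term absorbs it into $\O(\frac{d}{\lambda})$. For the base part, \pref{lem:scvx-base} with $\eta_t=4/(\lambda_\is t)$ is exactly the analysis of \pref{thm:scvx-base}: combine \pref{lem:str_alpha} and \pref{lem:bandit-VT-correction-cvx} to get $\O(\frac{d}{\lambda}\log(dV_T))$. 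The stability term $\sum\|\w_{t,\is}-\w_{t-1,\is}\|^2$ is cancelled by the $\OOGD$ negative terms, as in that proof.

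The linear case follows the same template with $\hl_t$. The meta term uses the optimism $m_{t,i}$ of \pref{eq:Bregman-optimism}, so that $r_{t,\is}-m_{t,\is}\propto\inner{\g_t-\tgb_t}{\w_t-\w_{t,\is}}$ and the meta regret is $\O(R\sqrt{\E\Vb_T})$. For the base regret, \pref{lem:cvx-base} with $\kappa=d$ gives $\O(R\sqrt{d^2+\E\Vb_T})$. Both are then closed by \pref{lem:bandit-VT-bregman-lin}, giving $\O(\sqrt{d^3V_T})$.

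The convex case is the main obstacle. Here $\Vb_T$ involves the stability of the combined iterates $\w_t$, not those of the base learner, so the negative term $-\frac{\kappa}{4}\sum\|\w_{t,\is}-\w_{t-1,\is}\|^2$ from \pref{lem:cvx-base} cannot cancel it. Instead I would keep the Bregman term in the linearization,
$f_t(\w_t)-f_t(\ws)=\inner{\nabla f_t(\w_t)}{\w_t-\ws}-\D_{f_t}(\ws,\w_t)$,
and bound both the meta and base regrets by $\O(R\sqrt{d^2+\E\Vb_T})$, using Jensen to move the expectation inside the root. Then \pref{lem:bandit-VT-bregman-cvx} gives $\E\Vb_T\le 48d^3L\log(dT)\,\E\sum_t\D_{f_t}(\ws,\w_t)+12d^3V_T\log d+\O(1)$. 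AM-GM in the form $R\sqrt{48d^3L\log(dT)X}\le \frac12 X+\O(d^3R^2L\log(dT))$ lets $-\E\sum_t\D_{f_t}$ absorb the stability part. This leaves $\Ot(\sqrt{d^3V_T}+d^3)$. The additive $d^3$ comes precisely from this AM-GM step, which explains the extra term in the statement.
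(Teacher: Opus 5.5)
Your linear and convex cases match the paper's proof. There you use the optimism $m_{t,i}$ so that $r_{t,\is}-m_{t,\is}\propto\inner{\g_t-\tgb_t}{\w_t-\w_{t,\is}}$, then \pref{lem:bandit-VT-bregman-lin}; in the convex case you keep $-\D_{f_t}(\ws,\w_t)$ and close with \pref{lem:bandit-VT-bregman-cvx} and AM-GM. The strongly convex case, however, has two genuine gaps.

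\textbf{Meta step.} Property (vii) of \pref{lem:alpha} is only an unconditional bound. Your Cauchy--Schwarz step $\inner{\g_t}{\w_t-\w_{t,\is}}^2\le\|\g_t\|^2\|\w_t-\w_{t,\is}\|^2$ involves a history-dependent factor $\|\w_t-\w_{t,\is}\|^2$, so it needs $\E_t\|\g_t\|^2\lesssim dG^2$ almost surely, and that fails. With $\mathbf{q}_t$ the vector of central differences at $\w_t$, one has $\E_t\|\g_t\|^2=d\|\mathbf{q}_t-\tgb_t\|^2+2\inner{\mathbf{q}_t}{\tgb_t}-\|\tgb_t\|^2$. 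Since $\|\tgb_t\|^2$ can reach $dG^2$, this is of order $d^2G^2$. Your route therefore gives an $\O(d^2/\lambda)$ meta regret, losing exactly the factor $d$ the theorem is about. What is needed is \pref{lem:meta-gradient-inner-product}. Write $\g_t=\tgb_t+d(q_{t,i_t}-\tilde{g}_{t,i_t})\eb_{i_t}$ and use that the fresh randomness sits on a single coordinate. Then $\E_t[d^2(q_{t,i_t}-\tilde{g}_{t,i_t})^2\inner{\eb_{i_t}}{\w_t-\w_{t,\is}}^2]=d\sum_j(q_{t,j}-\tilde{g}_{t,j})^2\inner{\eb_j}{\w_t-\w_{t,\is}}^2\le4dG^2\|\w_t-\w_{t,\is}\|^2$, so a maximum over coordinates replaces a sum.

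\textbf{Base step.} Applied to \pref{alg:UniGrad-Bregman-1grad}, \pref{lem:bandit-VT-correction-cvx} controls $\E[\Vb_T]$ through $\sum_t\|\w_t-\w_{t-1}\|^2$ for the \emph{combined} iterates $\w_t=\sum_ip_{t,i}\w_{t,i}$, where $\g_t$ is formed. It does not control it through $\sum_t\|\w_{t,\is}-\w_{t-1,\is}\|^2$. This is precisely the obstruction you identify in the convex case, and it does not disappear here. Moreover, \pref{lem:scvx-base} as stated provides no negative stability term at all. You also ignore the extra component $\frac{\lambda_\is}{2}(\w_{t,\is}-\w_t)$ of $\nabla\hsc_{t,\is}(\w_{t,\is})$, which \pref{lem:str_alpha} does not cover. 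Bounding the uncancelled stability crudely by $4R^2T$ only gives $\O(\frac{d}{\lambda}\log(dT))$.

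The paper avoids this by spending only half of the curvature:
\begin{itemize}
\item It keeps $-\frac12\D_{f_t}(\ws,\w_t)$ in the decomposition; the other half still supplies $-\frac{\lambda_\is}{4}\|\w_t-\ws\|^2$.
\item It uses the optimism $M_t=\tgb_t+\frac{\lambda_\is}{2}(\w_{t-1,\is}-\w_{t-1})$.
\item It bounds the surrogate prediction error via \pref{lem:bandit-VT-bregman-cvx} in terms of $V_T$, $\sum_t\D_{f_t}(\ws,\w_t)$ and $\lambda_\is^2\sum_t\|\w_{t,\is}-\w_t\|^2$.
\item Since these terms sit inside a logarithm, it trades them via $\log(1+x)\le x$ against $-\frac12\D_{f_t}$ and the leftover meta term $-\frac{\lambda_\is}{4}\|\w_t-\w_{t,\is}\|^2$.
\end{itemize}
To keep your stability lemma instead, you would have to split $\|\w_t-\w_{t-1}\|$ through $\w_{t,\is}$ and $\w_{t-1,\is}$, and re-derive the base learner's negative stability term.
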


\begin{algorithm}[!t]
    \caption{Universal Algorithm for Gradient-Variation Regret in Two-Point BCO}
    \label{alg:UniGrad-Bregman-1grad}
    \begin{algorithmic}[1]
    \Require Base learner configurations $\{\Bc_i\}_{i \in [N]}\define \{\Bc_i^\scvx\}_{ i\in [N_\scvx]} \cup \Bc^{\cvx}\cup\Bc^{\lin}$
        \State \textbf{Initialize}: $\M$~---~meta learner \omlprod with $W_{0,i} = \frac{1}{N}$ for all $i \in [N]$ \\
        \makebox[0.5cm]{} $\{\Bc_i\}_{i \in [N]}$~---~base learners as specified in \pref{app:universal}
    \For{$t=1$ {\bfseries to} $T$}
       \State Choose $i_t$ uniformly from $[d]$.
       \State Submit $\x_t = \w_t + \delta \eb_{i_t}$, and $\x_t^\prime = \w_t - \delta \eb_{i_t}$, where $\w_t = \sumN p_{t,i} \w_{t,i}$
       \State Observe partial information $f_t(\x_t)$ and $f_t(\x_t^\prime)$
       \State Construct the gradient estimator $\g_t$ and the optimism $\tgb_t$ as in \eqref{eq:chiangestimator}
       \State $\{\Bc_i^\scvx\}_{i \in [N_{\scvx}]}$, $\Bc^{\cvx}$, and $\Bc^\lin$ update their own decisions to $\{\w_{t+1,i}\}_{i=1}^N$ using surrogate losses of $\{\hsc_{t,i}(\cdot)\}_{\lambda_i \in \H^{\scvx}}$, $h_t^{\cvx}(\cdot)$, and $h_t^{\lin}(\cdot)$ in \pref{table:base-learners} \label{line:bregman-1grad}
       \State Calculate $\m_{t+1}$~\eqref{eq:Bregman-optimism} and $\r_t$ using $\{\w_{t,i}\}_{i=1}^N$, $\w_t$, $\g_t$, $\tgb_t$, and $\{\w_{t+1,i}\}_{i=1}^N$, send them to $\M$, and obtain $\p_{t+1} $
    \EndFor
    \end{algorithmic}
\end{algorithm}

\begin{Remark}
In contrast to the full-information setting, where universal guarantees are commonly studied for convex, exp-concave, and strongly convex functions, we consider linear, convex, and strongly convex functions under two-point bandit feedback. On the one hand, linear functions constitute a fundamental function class that has been extensively studied in the bandit literature. On the other hand, exp-concave functions are substantially more challenging under bandit feedback, as exploiting their curvature structure typically requires gradient information. Indeed, obtaining worst-case regret guarantees for exp-concave functions with two-point feedback remains an open problem.
\end{Remark}
We first present a key lemma that is crucial for analyzing the meta-regret of \omlprod.
\begin{Lemma}
    \label{lem:meta-gradient-inner-product}
    Under Assumptions \ref{ass:boundedness}-\ref{ass:Smoothness}, with the same exploration parameter as in \pref{lem:alpha}, for the best strongly convex base learner indexed by $\is$ in \pref{alg:UniGrad-Bregman-1grad}, we have
    \begin{equation*}
        \E\mbr{\inner{\g_t}{\w_t-\w_{t,\is}}^2}
        \le 10dG^2\E\mbr{\norm{\w_t-\w_{t,\is}}^2}.
    \end{equation*}
\end{Lemma}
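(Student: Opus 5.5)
The plan is to condition on the history $I_{t-1}=(i_1,\ldots,i_{t-1})$ and use the fact that, given it, the vector $\u \define \w_t-\w_{t,\is}$ and the optimism $\tgb_t$ are deterministic. All randomness in $\g_t$ then comes from the single fresh coordinate $i_t$, which is uniform on $[d]$. The constant $10$ will come out as $8+2$.

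First I would verify measurability. The base learner $\w_{t,\is}$ is produced by \OOGD from $\g_1,\ldots,\g_{t-1}$ and $\tgb_t$. The weights $\p_t$ of \omlprod depend on $\ellb_1,\ldots,\ellb_{t-1}$ and $\m_t$. The optimism $\m_t$ involves only $\tgb_t$ and the base decisions, through the fixed point described in the footnote. Hence $\w_t=\sumN p_{t,i}\w_{t,i}$, $\tgb_t$ and $\u$ are all functions of $I_{t-1}$ alone, and $i_t$ is independent of them. This bookkeeping is the only delicate point, because $\w_t$ is defined implicitly through $\m_t$. I would handle it by noting that the fixed-point equation is posed entirely in terms of round-$(t-1)$ information and $\tgb_t$, which is itself determined by $I_{t-1}$.

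Next I would expand the estimator~\eqref{eq:chiangestimator} using property~\rom{1} of \pref{lem:alpha}:
\[
\inner{\g_t}{\u} = d\,(v_t-\gt_{t,i_t})\,u_{i_t} + \inner{\tgb_t}{\u}.
\]
Applying $(a+b)^2\le 2a^2+2b^2$ gives
\[
\E_t\big[\inner{\g_t}{\u}^2\big]\le 2d^2\,\E_t\big[(v_t-\gt_{t,i_t})^2u_{i_t}^2\big]+2\inner{\tgb_t}{\u}^2 .
\]
For the first term, $|v_t|\le G$ by property~\rom{2}, and $|\gt_{t,i}|\le G$ since each coordinate of $\tgb_t$ stores some past $v_s$. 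Hence $(v_t-\gt_{t,i_t})^2\le 4G^2$ holds pointwise, even though $v_t$ depends on $i_t$. Since $i_t$ is uniform, $\E_t[u_{i_t}^2]=\|\u\|^2/d$, so the first term is at most $2d^2\cdot 4G^2\cdot\|\u\|^2/d=8dG^2\|\u\|^2$.

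For the second term, Cauchy--Schwarz together with property~\rom{5} ($\|\tgb_t\|^2\le dG^2$) gives $2\inner{\tgb_t}{\u}^2\le 2dG^2\|\u\|^2$. Adding the two bounds yields $\E_t[\inner{\g_t}{\u}^2]\le 10dG^2\|\u\|^2$. Taking total expectation over $I_{t-1}$ completes the argument.
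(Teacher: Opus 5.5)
Your proposal is correct and follows the paper's proof essentially step for step. You condition on the history before $i_t$ is drawn, so that $\w_t-\w_{t,\is}$ and $\tgb_t$ are fixed. You then split $\inner{\g_t}{\w_t-\w_{t,\is}}$ into the $\tgb_t$ part and the single-coordinate part $d(v_t-\gt_{t,i_t})\inner{\eb_{i_t}}{\w_t-\w_{t,\is}}$ via $(a+b)^2\le 2a^2+2b^2$. Finally you bound these by $2dG^2\|\w_t-\w_{t,\is}\|^2$ (property (v) of \pref{lem:alpha}) and $8dG^2\|\w_t-\w_{t,\is}\|^2$ (pointwise $|v_t|,|\gt_{t,j}|\le G$ plus uniformity of $i_t$), respectively.

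The only differences are cosmetic. Your measurability remark about the implicit fixed point defining $\w_t$ spells out what the paper merely asserts. Citing property (i) is unnecessary, since the decomposition follows directly from the definition of $\g_t$ in \pref{eq:chiangestimator}.
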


\begin{proof}
We first give different regret decompositions, then analyze the meta regret, and finally provide the proofs for different kinds of loss functions.
\paragraph{Regret Decomposition.}
For \emph{$\lambda$-strongly convex} functions, we have the following decomposition:
    \begin{align}
        \E[\Reg_T] \le {} & \underbrace{\E\mbr{\sumT \inner{\g_t}{\w_t - \w_{t,\is}} - \frac{\lambda_\is}{4} \sumT \|\w_t - \w_{t,\is}\|^2}}_{\meta} \tag*{(by $\lambda_\is \le \lambda \le 2 \lambda_\is$)}\\
        & \qquad + \underbrace{\E\mbr{\sumT \hsc_{t,\is}(\w_{t,\is}) - \sumT \hsc_{t,\is}(\ws)}}_{\base} - \frac{1}{2}\E\mbr{\sumT \D_{f_t}(\ws, \w_t)}+\O(1), \label{eq:thm9 sc-reg-de}
    \end{align}
    due to the definition of the surrogate $\hsc_{t,i}(\x) \define \inner{\g_t}{\x} + \frac{\lambda_i}{4} \|\x - \w_t\|^2$, where $\lambda_i \in \H^\scvx$ in~\eqref{eq:candidate-pool}.
    For \emph{convex} functions, we decompose the regret as
    \begin{align}
        &\E[\Reg_T] \le \E\mbr{\sumT \inner{\g_t}{\w_t - \ws} - \sumT \D_{f_t}(\ws, \w_t)}+\O(1)\notag\\
        &= \underbrace{\E\mbr{\sumT \inner{\g_t}{\w_t - \w_{t,\is}}}}_{\meta} + \underbrace{\E\mbr{\sumT \hc_{t,\is}(\w_{t,\is}) - \sumT \hc_{t,\is}(\ws)}}_{\base} - \E\mbr{\sumT \D_{f_t}(\ws, \w_t)}+\O(1), \label{eq:thm9 cvx-reg-de}
    \end{align}
    where $\hc_{t,i}(\x) \define \inner{\g_t}{\x}$.

    For \emph{linear} functions, since $\D_{f_t}(\ws, \w_t) = 0$ for $t\in[T]$, we decompose the regret as
    \begin{align}
    \E[\Reg_T] \le \underbrace{\E\mbr{\sumT \inner{\g_t}{\w_t - \w_{t,\is}}}}_{\meta} + \underbrace{\E\mbr{\sumT \hl_{t,\is}(\w_{t,\is}) - \sumT \hl_{t,\is}(\ws)}}_{\base} +\O(1), \label{eq:thm9 lin-reg-de}
    \end{align}
 where $\hl_{t,i}(\x) \define \inner{\g_t}{\x}$.

    \paragraph{Meta Regret Analysis.}
    For \emph{$\lambda$-strongly convex} functions, we have
    \begin{align}
        \meta = {} & \E\mbr{4\sqrt{10}dGR\sumT \inner{\ellb_t}{\p_t - \eb_\is} - \frac{\lambda_\is}{4} \sumT \|\w_t - \w_{t,\is}\|^2}\notag \\
        \le{}& C_1 \E\mbr{\sqrt{160d^2G^2R^2 + \sumT \inner{\g_t}{\w_t - \w_{t,\is}}^2}} - \frac{\lambda_\is}{4} \E\mbr{\sumT \|\w_t - \w_{t,\is}\|^2} + 2\sqrt{10}dGRC_2  \notag\\
        \le{}&\O(d)+C_1\sqrt{\sumT 10dG^2\E\mbr{\|\w_t-\w_{t,\is}\|^2}}- \frac{\lambda_\is}{4} \E\mbr{\sumT \|\w_t - \w_{t,\is}\|^2}\notag\\
        \le {} & \O(d) + \sbr{\frac{10C_1 G^2}{C_3} - \frac{\lambda_\is}{4}} \E\mbr{\sumT \|\w_t - \w_{t,\is}\|^2}, \label{eq:thm9 scvx-meta}
    \end{align}
    where the first step is by definition, the second step is by \pref{lem:optimistic-mlprod}, the third step is by \pref{lem:meta-gradient-inner-product}, and the last step  is due to AM-GM inequality.
    $C_3$ is a $d$-\emph{independent} constant to be specified.

    For \emph{convex} functions, we bound the meta regret by
    \begin{align}
        \meta \le {} & C_1 \sqrt{160d^2G^2R^2 + \E\mbr{\sumT \inner{\g_t - \tgb_t}{\w_t - \w_{t,\is}}^2}} + 2\sqrt{10}dGR C_2\notag\\
        \le {}&C_1 \sqrt{160d^2G^2R^2 + 48d^3 R^2V_T\log d + 192d^3LR^2\log(dT) \E\mbr{\sumT \D_{f_t}(\ws, \w_t)}} + 2\sqrt{10}dGR C_2\notag\\
        \le {} & \O(\sqrt{d^3 V_T\log d}) + C_1\sqrt{192d^3LR^2\log(dT) \E\mbr{\sumT \D_{f_t}(\ws, \w_t)}} \notag\\
        \le{}& \O(\sqrt{d^3 V_T\log d}) + \O(C_4 d^3 \log(dT) ) + \frac{C_1}{2C_4} \E\mbr{\sumT \D_{f_t}(\ws,\w_t)},
        \label{eq:thm9 cvx-meta VT}
  \end{align}
    where the first step is by \pref{lem:optimistic-mlprod} and Jensen's inequality, the second step is by \pref{lem:bandit-VT-bregman-cvx} and $\|\w_t-\w_{t,\is}\|\le 2R$, the third step uses $\sqrt{a+b} \le \sqrt{a} + \sqrt{b}$ for any $a,b \ge 0$, the last step uses AM-GM inequality: $\sqrt{ab} \le \frac{ax}{2} + \frac{b}{2x}$ for any $a,b,x > 0$.
Note that the $d,T$-independent constant $C_4$ is used to ensure that the Bregman divergence term is canceled and will be specified in the end.

    For \emph{linear} functions, we bound the meta regret by
    \begin{align}
      \meta \le {} & C_1 \sqrt{160d^2G^2R^2 + \E\mbr{\sumT \inner{\g_t - \tgb_t}{\w_t - \w_{t,\is}}^2}} + 2\sqrt{10}dGR C_2\notag\\
      \le {}&C_1 \sqrt{160d^2G^2R^2 + 8d^3 R^2V_T} + 2\sqrt{10}dGR C_2
      \le  \O(\sqrt{d^3 V_T}),
      \label{eq:thm9 lin-meta VT}
  \end{align}
where the second step is by \pref{lem:bandit-VT-bregman-lin}.
\paragraph{Base Regret Analysis.}
    In this part, we first provide different decompositions of the non-consecutive gradient variation defined on surrogates for strongly convex and convex functions, respectively, and then analyze the base regret in the corresponding cases.

    For \emph{$\lambda$-strongly convex} functions, we apply \pref{lem:scvx-base} to surrogate loss function sequence $\{\hsc_{t,\is}\}_{t=1}^T$.
    By choosing the optimism $M_t$ as $\tgb_t + \frac{\lambda_\is}{2} (\w_{t-1,\is} - \w_{t-1})$, we bound the non-consecutive gradient variation on surrogates, i.e., $\Vb_{T,\is}^{\scvx} \define \sumTT \|\nabla \hsc_{t,\is}(\w_{t,\is}) -M_t\|^2$, by
\begin{align*}
        \E[\Vb_{T,\is}^{\scvx}] = {} & \E\mbr{\sumTT \norm{\sbr{\g_t + \frac{\lambda_\is}{2} (\w_{t,\is} - \w_t)} - \sbr{\tgb_t + \frac{\lambda_\is}{2} (\w_{t-1,\is} - \w_{t-1})}}^2} \notag\\
        \le {} & \E\mbr{3\sumTT \|\g_t - \tgb_t\|^2 + 3 \sumTT \norm{\frac{\lambda_\is}{2} (\w_{t,\is} - \w_t)}^2 + 3 \sumTT \norm{\frac{\lambda_\is}{2} (\w_{t-1,\is} - \w_{t-1})}^2}\notag\\
        \le {} & 36 d^3V_T \log d + 144d^3L\log(dT)\E\mbr{\sumT \D_{f_t}(\ws, \w_t)}+ 2 \lambda_\is^2 \E\mbr{\sumTT \norm{\w_{t,\is} - \w_t}^2},
    \end{align*}
    where the first step uses the definition of $\nabla \hsc_{t,i}(\x) = \g_t + \frac{\lambda_i}{2} (\x - \w_t)$ and the last step is due to \pref{lem:bandit-VT-bregman-cvx}.

    For \emph{convex} and \emph{linear} functions, the non-consecutive gradient variation $\Vb^{\cvx}_{T,\is} \define \sumTT \|\g_t- \tgb_t\|^2$ and $\Vb^{\lin}_{T,\is} \define \sumTT \|\g_t- \tgb_t\|^2$ can be bounded via \pref{lem:bandit-VT-bregman-cvx} and \pref{lem:bandit-VT-bregman-lin}, respectively.

    To conclude, for different curvature types, we provide correspondingly different analyses of the non-consecutive gradient variation on surrogates:
    \begin{equation*}
    \E[\Vb_{T,\is}^{\{\scvx,\cvx, \lin\}}] \le
    \begin{cases}
        36 d^3 V_T\log d + 144d^3L\log(dT)\E\mbr{\sum\limits_{t=1}^T \D_{f_t}(\ws, \w_t)}+ 2 \lambda_\is^2 \E\mbr{\sum\limits_{t=2}^T \norm{\w_{t,\is} - \w_t}^2},
        & \text{($\lambda$-strongly convex)}, \\[5mm]
        12d^3 V_T\log d + 48d^3 L \log(dT) \E\mbr{\sum\limits_{t=1}^T \D_{f_t}(\ws, \w_t)}, & \text{(convex)},\\[5mm]
        2d^3 V_T, & \text{(linear)}.
    \end{cases}
    \end{equation*}
    In the following, we analyze the base regret for different curvature types.

    For \emph{$\lambda$-strongly convex} functions, we have for any $t \ge 1$,
    \begin{align*}
       &\E\mbr{\|\nabla \hsc_t(\w_{t,\is})-M_t\|^2}=\E\mbr{\left\|\sbr{\g_t + \frac{\lambda_\is}{2} (\w_{t,\is} - \w_t)} - \sbr{\tgb_t + \frac{\lambda_\is}{2} (\w_{t-1,\is} - \w_{t-1})}\right\|^2}\\
       \le{}&3 \E\mbr{\|\g_t - \tgb_t\|^2} + 3 \E\mbr{\left\|\frac{\lambda_\is}{2} (\w_{t,\is} - \w_t)\right\|^2} + 3 \E\mbr{\left\|\frac{\lambda_\is}{2} (\w_{t-1,\is} - \w_{t-1})\right\|^2}\\
       \le{}& 24dG^2+8R^2 \lambda_\is^2+\O\sbr{\frac{1}{T}},
    \end{align*}
    where the last step is by \pref{lem:alpha} and the boundedness of $\X$.
    We denote $C_5= 24dG^2 + 8R^2 \lambda_\is^2+\O\sbr{\frac{1}{T}}$ for simplicity.

    Thus, by \pref{lem:scvx-base}, the $\is$-th base learner guarantees:
    \begin{align}
      &\base \le 2 \E\mbr{\sumT \eta_t \|\nabla \hsc_t(\w_{t,\is}) - M_t\|^2 }+ \O(1)
      \le \frac{C_5}{\lambda_\is} \log \sbr{1 + \lambda_\is \E\mbr{\Vb_{T,\is}^{\scvx}}} \notag\\
      \le {} & \frac{C_5}{\lambda_\is} \log \sbr{1 + 36 d^3\lambda_{\is} V_T\log d + 144d^3L\lambda_{\is}\log(dT)\E\mbr{\sumT \D_{f_t}(\ws, \w_t)}+ 2 \lambda_\is^3 \E\mbr{\sumTT \norm{\w_{t,\is} - \w_t}^2}}\notag\\
      \le{}& \frac{C_5}{\lambda_\is} \log \sbr{C_6\sbr{ 1+ 36 d^3\lambda_{\is} V_T\log d + 144d^3L\lambda_{\is}\log(dT)\E\mbr{\sumT \D_{f_t}(\ws, \w_t)}}}+ \frac{\lambda_\is}{8}  \E\mbr{\sumTT \norm{\w_{t,\is} - \w_t}^2}\notag \\
      \le {} & \O\sbr{\frac{d}{\lambda_\is}\log \sbr{V_T+\log(dT)}} + \frac{1}{2}\E\mbr{\sumT \D_{f_t}(\ws, \w_t)} + \frac{\lambda_\is}{8} \E\mbr{\sumT \norm{\w_{t,\is} - \w_t}^2},\label{eq:thm9 scvx base VT}
    \end{align}
    where the first inequality is by \pref{lem:sum_scvx}, the fourth step is by $C_6=16C_5$, $\ln(1+x)\le x$ for $x\geq0$, and $\lambda_{\is}\le\lambda\le1$.
    The last step is by the following analysis.
    If $\E\mbr{\sumT \D_{f_t}(\ws, \w_t)}\le1$, then we  obtain
    \begin{align*}
    \log \sbr{C_6\sbr{1 + 36d^3\lambda_{\is} V_T\log d + 144d^3L\lambda_{\is}\log(dT)\E\mbr{\sumT \D_{f_t}(\ws, \w_t)}}}\le \O(\log \sbr{dV_T+d\log(dT)}).
    \end{align*}

     Otherwise, we have
  \begin{align*}
    &\log \sbr{C_6\sbr{ 1 + 36 d^3\lambda_{\is} V_T\log d + 144d^3L\lambda_{\is}\log(dT)\E\mbr{\sumT \D_{f_t}(\ws, \w_t)}}}\\
    \le{}& \log C_6+\log\sbr{ 1+\sbr{36 d^3V_T\log d + 144d^3L\log(dT) \lambda_{\is}}\E\mbr{\sumT \D_{f_t}(\ws, \w_t)}} \\
    \le{}&\O(\log \sbr{dV_T+d\log(dT)})+ \frac{1}{2}\E\mbr{\sumT \D_{f_t}(\ws, \w_t)},
  \end{align*}
where the first inequality is due to $\E\mbr{\sumT \D_{f_t}(\ws, \w_t)}>1$ and the last inequality is by $\ln (1+x)\le x$ for $x\ge0$.

    For \emph{convex} functions, we apply \pref{lem:cvx-base} to surrogate loss function sequence $\{\hc_{t,\is}\}_{t=1}^T$.
    Choosing $M_t=\tgb_t$ and $\kappa=d$,
    \begin{align}
        & \base \le 10R \sqrt{d^2 + \E[\Vb_{T,\is}^\cvx] } + \O(d R)+\frac{1}{d}\O\sbr{\max_{t \in [T]} \|\g_t - \tgb_t\|^2}
        \le 10R \sqrt{d^2 + \E[\Vb_{T,\is}^\cvx] } +\O\sbr{d}\notag\\
        \le{}& \O(\sqrt{d^2+d^3V_T\log d}) + 10R \sqrt{48d^3L\log(dT) \E\mbr{\sumT \D_{f_t}(\ws, \w_t)}} +\O\sbr{d}\tag*{(by \pref{lem:bandit-VT-bregman-cvx})}\\
        \le {} & \O(\sqrt{d^3V_T\log d}) + \O(d^3\log(dT)) + \frac{10RL}{2C_7} \E\mbr{\sumT \D_{f_t}(\ws, \w_t)},\label{eq:thm9 cvx base VT}
    \end{align}
    where the first step is by Jensen's inequality, the second step is by \pref{lem:alpha}, and the last step uses AM-GM inequality.
    $C_7$ is a $d$-independent constant to be specified.

    For \emph{linear} functions, we apply \pref{lem:cvx-base} to surrogate loss function sequence $\{\hl_{t,\is}\}_{t=1}^T$.
    Choosing $M_t=\tgb_t$ and $\kappa=d$,
    \begin{align}
        \base \le {} & 10R \sqrt{d^2 + \E[\Vb_{T,\is}^\lin] } + \O(d R)+\frac{1}{d}\O\sbr{\max_{t \in [T]} \|\g_t - \tgb_t\|^2} \notag\\
        \le{}& 10R \sqrt{d^2 + \E[\Vb_{T,\is}^\lin] } +\O\sbr{d}\le \O\sbr{\sqrt{d^3 V_T}},\label{eq:thm9 lin base VT}
    \end{align}
where the first step is by Jensen's inequality.

\paragraph{Overall Regret Analysis.}
    For \emph{$\lambda$-strongly convex} functions, plugging \pref{eq:thm9 scvx-meta} and \pref{eq:thm9 scvx base VT} into \pref{eq:thm9 sc-reg-de} and choosing $C_3={160C_1G^2}/{\lambda}$, we obtain
    \begin{align*}
        \E[\Reg_T]\le{}& \O\sbr{\frac{d}{\lambda}\log \sbr{V_T+\log(dT)}}+\O(d)+\sbr{\frac{1}{2}-\frac{1}{2}} \E\mbr{\sumT \D_{f_t}(\ws, \w_t)}\\
        &+\sbr{\frac{10C_1 G^2}{C_3} +\frac{\lambda_\is}{8} - \frac{\lambda_\is}{4}}\E\mbr{\sumT \norm{\w_t - \w_{t,\is}}^2}
        \le\O\sbr{\frac{d}{\lambda}\log \sbr{V_T+\log(dT)}}.
    \end{align*}
    For \emph{convex} functions, plugging \pref{eq:thm9 cvx-meta VT} and \pref{eq:thm9 cvx base VT} into \pref{eq:thm9 cvx-reg-de}, we obtain
    \begin{align*}
        \E[\Reg_T]\le{}& \O(\sqrt{d^3 V_T\log d})+\O(d^3\log(dT))\\
        &\qquad+\sbr{\frac{C_1}{2C_4}+\frac{10RL}{2C_7}-1} \E\mbr{\sumT \D_{f_t}(\ws, \w_t)}\le  \O\sbr{\sqrt{d^3V_T\log d}+d^3\log(dT)},
    \end{align*}
    by choosing $C_4=C_1$ and $C_7=10RL$.

    For \emph{linear} functions, plugging \pref{eq:thm9 lin-meta VT} and \pref{eq:thm9 lin base VT} into \pref{eq:thm9 lin-reg-de}, we obtain
    \begin{align*}
        \E[\Reg_T]\le{}& \O(\sqrt{d^3 V_T}).
    \end{align*}
    Note that the constants $C_3, C_4, C_6,C_7$ only exist in analysis and thus can be chosen arbitrarily, which finishes the proof.
\end{proof}

Finally, we present the proofs of Lemmas~\pref{lem:bandit-VT-bregman-cvx} and~\pref{lem:meta-gradient-inner-product} for completeness.

\begin{proof}[of \pref{lem:bandit-VT-bregman-cvx}]
    By \pref{lem:alpha}, we have
 \begin{align}
        &\Vb_T=\sumT \|\g_t - \tgb_t\|^2\le d^2\sumT\sum_{i=1}^d \rho_{t,i} (\nabla_{i} f_t(\w_t) - \nabla_{i} f_{t-1}(\w_{t-1}))^2 + \O(1)\notag\\
        \le {} &3d^2\sumT\sum_{i=1}^d \rho_{t,i} (\nabla_{i} f_t(\w_t) - \nabla_{i} f_{t}(\w^\star))^2 +3d^2\sumT\sum_{i=1}^d \rho_{t,i} (\nabla_{i} f_t(\w^\star) - \nabla_{i} f_{t-1}(\w^\star))^2\notag\\
        &+3d^2\sumT\sum_{i=1}^d \rho_{t,i} (\nabla_{i} f_{t-1}(\w^\star) - \nabla_{i} f_{t-1}(\w_{t-1}))^2+ \O(1),\label{eq:correction eq1}
    \end{align}
    where $\rho_{t, i}$ is defined in \pref{eq:rho} and the second step is due to Cauchy-Schwarz inequality.
    Taking expectation over the randomness of $\seq{\eb_{i_t}}$, we bound the second term in \pref{eq:correction eq1} by
    \begin{equation}
        \E\mbr{\sumT\sum_{i=1}^d \rho_{t,i} (\nabla_{i} f_t(\w^\star) - \nabla_{i} f_{t-1}(\w^\star))^2}\le \E\mbr{\sumT \max_{i\in[d]}[\rho_{t,i}] \norm{\nabla f_t(\w^\star) - \nabla f_{t-1}(\w^\star)}^2}\le 4d\log d V_T,\label{eq:correction eq2}
    \end{equation}
where the second step is by \pref{eq:max_rho}.

The analyses for the first and third terms in \pref{eq:correction eq1} are similar to the counterpart in the proof of \pref{lem:bandit-VT-correction-cvx}.
Let $Q$ be the same bad event as in \pref{eq:high_prob_rho}, with $\bar{\rho}=4d\log(dT)$.
We can upper-bound the expectation of the first term in \pref{eq:correction eq1} by
\begin{align}
    &\E\mbr{\sumT\sum_{i=1}^d \rho_{t,i} (\nabla_{i} f_t(\w_t) - \nabla_{i} f_{t}(\w^\star))^2}\notag\\
    \le{}& \E\mbr{\sumT\sum_{i=1}^d \rho_{t,i} (\nabla_{i} f_t(\w_t) - \nabla_{i} f_{t}(\w^\star))^2\I(Q)}+ \E\mbr{\sumT\sum_{i=1}^d \rho_{t,i} (\nabla_{i} f_t(\w_t) - \nabla_{i} f_{t}(\w^\star))^2\I(\neg Q)}\notag\\
    \le{}& \O(1)+4d\log(dT) \E\mbr{\sumT \|\nabla f_t(\w_t)-\nabla f_t(\w^\star)\|^2}
    \le\O(1)+ 8dL\log(dT) \E\mbr{\sumT \D_{f_t}(\w^\star,\w_t)},\label{eq:correction eq3}
\end{align}
where the third step uses $\rho_{t,i}\le T+1$ on $Q$, $\rho_{t,i}\le \bar{\rho}$ on $\neg Q$, \pref{eq:high_prob_rho}, and the boundedness of $\X$, and the last step is by $\|\nabla f_t(\x) - \nabla f_t(\y)\|^2 \le 2L \D_{f_t}(\y,\x)$ for any $\x,\y\in\X^+$~\citep{yan2024simple}.
Similarly, we can upper-bound the expectation of the third term of \pref{eq:correction eq1} by
\begin{align}
    \E\mbr{\sumT\sum_{i=1}^d \rho_{t,i} (\nabla_{i} f_{t-1}(\w_{t-1}) - \nabla_{i} f_{t-1}(\w^\star))^2}
    \le\O(1)+ 8dL\log(dT) \E\mbr{\sumT \D_{f_t}(\w^\star,\w_t)}.\label{eq:correction eq4}
\end{align}
Plugging \pref{eq:correction eq2}, \pref{eq:correction eq3}, and \pref{eq:correction eq4} back into \pref{eq:correction eq1}, we finish the proof.
\end{proof}
\begin{proof}[of \pref{lem:meta-gradient-inner-product}]
    Let $\E_t[\cdot]$ denote the conditional expectation given the history before sampling $i_t$ at round $t$.
    Since $\w_t$, $\w_{t,\is}$, and $\tgb_t$ are determined before sampling $i_t$, they are fixed under $\E_t[\cdot]$.

    For each $j\in[d]$, define the directional difference
    \begin{equation*}
        q_{t,j}\define \frac{1}{2\delta}\sbr{f_t(\w_t+\delta\eb_j)-f_t(\w_t-\delta\eb_j)}.
    \end{equation*}
    Thus, the observed quantity in \pref{eq:chiangestimator} satisfies $v_t=q_{t,i_t}$.
    By $G$-Lipschitzness, $|q_{t,j}|\le G$ for every $j\in[d]$.
    Moreover, since each coordinate of $\tgb_t$ stores a previously observed directional difference, property~\rom{2} of \pref{lem:alpha} implies $|\tilde{g}_{t,j}|\le G$ for every $j\in[d]$.

    By \pref{eq:chiangestimator}, we have
    \begin{equation*}
        \g_t=\tgb_t+d\sbr{q_{t,i_t}-\tilde{g}_{t,i_t}}\eb_{i_t}.
    \end{equation*}
    Therefore,
    \begin{equation*}
        \inner{\g_t}{\w_t-\w_{t,\is}}
        =
        \inner{\tgb_t}{\w_t-\w_{t,\is}}
        +
        d\sbr{q_{t,i_t}-\tilde{g}_{t,i_t}}\inner{\eb_{i_t}}{\w_t-\w_{t,\is}}.
    \end{equation*}
    Using $(a+b)^2\le 2a^2+2b^2$, we obtain
    \begin{align*}
        \inner{\g_t}{\w_t-\w_{t,\is}}^2
        \le
        2\inner{\tgb_t}{\w_t-\w_{t,\is}}^2+2d^2\sbr{q_{t,i_t}-\tilde{g}_{t,i_t}}^2
        \inner{\eb_{i_t}}{\w_t-\w_{t,\is}}^2.
    \end{align*}
    By property~\rom{5} of \pref{lem:alpha}, the first term satisfies
    \begin{equation*}
        2\inner{\tgb_t}{\w_t-\w_{t,\is}}^2
        \le 2\norm{\tgb_t}^2\norm{\w_t-\w_{t,\is}}^2
        \le 2dG^2\norm{\w_t-\w_{t,\is}}^2.
    \end{equation*}
    For the second term, since $i_t$ is uniformly sampled from $[d]$ conditional on the history before round $t$, we have
    \begin{align*}
        &\E_t\mbr{2d^2\sbr{q_{t,i_t}-\tilde{g}_{t,i_t}}^2
        \inner{\eb_{i_t}}{\w_t-\w_{t,\is}}^2}
        =
        2d\sum_{j=1}^d\sbr{q_{t,j}-\tilde{g}_{t,j}}^2
        \inner{\eb_j}{\w_t-\w_{t,\is}}^2 \\
        \le{}&
        8dG^2\sum_{j=1}^d\inner{\eb_j}{\w_t-\w_{t,\is}}^2
        =
        8dG^2\norm{\w_t-\w_{t,\is}}^2.
    \end{align*}
    Combining the above bounds yields
    \begin{equation*}
        \E_t\mbr{\inner{\g_t}{\w_t-\w_{t,\is}}^2}
        \le 10dG^2\norm{\w_t-\w_{t,\is}}^2.
    \end{equation*}
    Taking expectation over the history completes the proof.
\end{proof}
\paragraph{Worst-Case Universal Regret.}
As a byproduct, we show that the online ensemble framework also achieves worst-case universal regret guarantees using a single gradient estimator shared across all base learners, with a simpler analysis.

Following \citet[Proposition~1]{yan2023universal}, we use \mlprod~\citep{gaillard2014prod} as the meta-algorithm and Bandit Gradient Descent~\citep{agarwal2010optimal} as the base algorithm. As in the gradient-variation universal setting, we instantiate a collection of strongly convex base learners with candidate curvature parameters from the predefined pool~\eqref{eq:candidate-pool}, together with one additional base learner for the linear and convex cases. Therefore, the ensemble contains $N\define 1+|\H^\scvx|=\O(\log T)$ base learners in total.

At each round $t$, we construct the two-point gradient estimator
\begin{align*}
    \g_t=\frac{d}{2\delta}\bigl(f_t(\w_t+\delta\u_t)-f_t(\w_t-\delta\u_t)\bigr)\u_t,
\end{align*}
and feed \mlprod the normalized linearized loss vector $\ellb_t=(\ell_{t,1},\ldots,\ell_{t,N})$, where $\ell_{t,i}\define \frac{1}{2dGR}\inner{\g_t}{\w_{t,i}}+\frac{1}{2}\in[0,1]$ and $\w_{t,i}$ denotes the decision of the $i$-th base learner. Here, the range condition follows from $\|\g_t\|\le dG$ and $\|\w_{t,i}\|\le R$. For each candidate curvature parameter $\lambda_i\in\H^\scvx$, the corresponding strongly convex base learner runs \pref{alg:scvx-T} with the associated configuration. The additional base learner handles the linear and convex cases using the configuration of Theorem~1 of \citet{shamir2017optimal}.

\begin{Theorem}
    \label{thm:2point-BCO-worst}
    Under Assumptions \ref{ass:boundedness}-\ref{ass:Lipschitzness}, the above ensemble algorithm achieves the following universal regret:
        \begin{equation*}
        \E[\Reg_T] \le
        \begin{cases}
            \O\sbr{\frac{d}{\lambda}\log T}, & \textnormal{($\lambda$-strongly convex case)}, \\[2mm]
            \O\sbr{\sqrt{dT}+d}, & \textnormal{(convex/linear case)}
        \end{cases}
    \end{equation*}
\end{Theorem}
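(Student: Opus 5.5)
We prove \pref{thm:2point-BCO-worst} by the standard meta-base decomposition used for universal regret, with the worst-case two-point machinery from \pref{app:cor-scvx} in place of the gradient-variation estimators. Let $\ws$ be the comparator and $\is$ the best base learner: for $\lambda$-strongly convex losses, the index with $\lambda_\is\le\lambda\le 2\lambda_\is$ (this exists when $\lambda\in[1/T,1]$); otherwise, the additional convex/linear learner. By \pref{lem:domain_transition} and the bias bound on $\E_t[\g_t]$ (the sphere analogue of property \rom{11}, which is $\O(d\delta G)$ without smoothness, since smoothing only perturbs $f_t$ by $G\delta$), we have $\E[\Reg_T]\le \E[\sumT f_t(\w_t)-f_t(\ws)]+\O(1)$. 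We then split $\inner{\g_t}{\w_t-\ws}=\inner{\g_t}{\w_t-\w_{t,\is}}+\inner{\g_t}{\w_{t,\is}-\ws}$. In the strongly convex case we keep $-\frac{\lambda}{2}\|\w_t-\ws\|^2$, and write it as $\ge$ a split into $-\frac{\lambda_\is}{4}\|\w_t-\w_{t,\is}\|^2$ plus the surrogate strong-convexity term used by the base learner, exactly as in \pref{eq:thm9 sc-reg-de}. (Strictly, one should use the smoothed function $\hat f_t$, whose gradient $\g_t$ is unbiased for, up to an $\O(1)$ total error.)

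\textbf{Meta regret.} Because the losses $\ell_{t,i}$ lie in $[0,1]$, \mlprod guarantees $\sumT\inner{\ellb_t}{\p_t-\eb_\is}\lesssim \sqrt{\log N\sumT r_{t,\is}^2}+\log N$, where $r_{t,\is}=\frac{1}{2dGR}\inner{\g_t}{\w_t-\w_{t,\is}}$. Multiplying by $2dGR$ and using $\log N=\O(\log\log T)$, we get a meta regret of $\O(\sqrt{\sumT\inner{\g_t}{\w_t-\w_{t,\is}}^2}+d)$. The key step is the sphere analogue of \pref{lem:meta-gradient-inner-product}, namely $\E_t[\inner{\g_t}{\w_t-\w_{t,\is}}^2]\le CdG^2\|\w_t-\w_{t,\is}\|^2$. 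We prove it by writing $\g_t=\frac{d}{2\delta}(f_t(\w_t+\delta\u_t)-f_t(\w_t-\delta\u_t))\u_t$ and bounding the fourth moment of the centered function value by \pref{lem:concentration_ball}, as in the proof of \pref{thm:T-strcvx}; this gives $\E_t[\text{scalar}^4]\lesssim G^4/d^2$. We then use $\E[\inner{\u_t}{\v}^4]\lesssim\|\v\|^4/d^2$ and Cauchy--Schwarz, which yields $d^2\cdot(G^2/d)(\|\v\|^2/d)\cdot\O(1)\cdot\ldots$. Checking the constants gives $\O(dG^2\|\v\|^2)$. After Jensen's inequality, the meta regret in the strongly convex case is $\O(d)+\sqrt{CdG^2\E\sumT\|\w_t-\w_{t,\is}\|^2}-\frac{\lambda_\is}{4}\E\sumT\|\w_t-\w_{t,\is}\|^2\le \O(d+\frac{d}{\lambda})$ by AM-GM. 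In the convex/linear case we simply use $\|\w_t-\w_{t,\is}\|\le 2R$, so the meta regret is $\O(\sqrt{dT}+d)$.

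\textbf{Base regret.} The strongly convex base learner runs \pref{alg:scvx-T} on the surrogates $\inner{\g_t}{\x}+\frac{\lambda_\is}{4}\|\x-\w_t\|^2$ with step size $\eta_t=\Theta(1/(\lambda_\is t))$. Its surrogate gradient is $\g_t+\frac{\lambda_\is}{2}(\w_{t,\is}-\w_t)$, whose second moment is $\O(dG^2+\lambda_\is^2R^2)$ by the bound $\E\|\g_t\|^2\le CdG^2$ from \pref{thm:T-strcvx}. \pref{lem:sum_scvx} then gives a base regret of $\O(\frac{d}{\lambda}\log T)$. For the convex/linear learner, Theorem~1 of \citet{shamir2017optimal} with $\E\|\g_t\|^2\lesssim dG^2$ gives $\O(\sqrt{dT})$ directly for the linearized losses $\inner{\g_t}{\cdot}$.

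\textbf{Main obstacle.} The delicate step is the meta-regret lemma for spherical estimators: we must show that the $d^2$ prefactor is offset by both the $1/d$ concentration of $f_t(\w_t+\delta\u)$ and the $1/d$ from projecting $\u_t$ onto a fixed direction, while coupling the two through Cauchy--Schwarz on fourth moments. Combining the meta and base bounds in each case then finishes the proof.
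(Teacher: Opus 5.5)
Your proof is correct and follows the paper's skeleton: the meta--base split with $\lambda_\is\le\lambda\le2\lambda_\is$, the second-order \mlprod bound, a conditional bound $\E_t[\inner{\g_t}{\w_t-\w_{t,\is}}^2]\lesssim dG^2\|\w_t-\w_{t,\is}\|^2$, AM-GM against the negative quadratic term, and OGD on the surrogates. Where you differ is in how you prove the key inner-product bound, and your route is heavier than needed.

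The paper uses Cauchy--Schwarz, $\inner{\g_t}{\v}^2\le\|\g_t\|^2\|\v\|^2$, and then $\E_t\|\g_t\|^2\le CdG^2$ (Lemma~5 of \citet{shamir2017optimal}, the same computation as in the proof of \pref{thm:T-strcvx}). You use this second-moment bound yourself in the base-regret part, so the step you flag as the main obstacle is really a one-liner.

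Your fourth-moment argument is still valid. It combines $\E_t[s^4]\lesssim G^4/d^2$ for the rescaled central difference $s$ with $\E[\inner{\u_t}{\v}^4]\lesssim\|\v\|^4/d^2$. As your own product $d^2\cdot(G^2/d)(\|\v\|^2/d)$ shows, it gives the sharper, dimension-free bound $\E_t[\inner{\g_t}{\v}^2]\lesssim G^2\|\v\|^2$, so the factor $d$ you then write down is loose. The sharper bound would reduce the meta regret to $\O(\sqrt{T}+d)$ and $\O(1/\lambda+d)$ in the two cases. It does not change the theorem, since the base regret dominates.

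Your base-regret treatment is more explicit than the paper's, which just invokes \pref{thm:T-strcvx}. You correctly account for the surrogate gradient $\g_t+\frac{\lambda_\is}{2}(\w_{t,\is}-\w_t)$ having second moment $\O(dG^2+\lambda_\is^2R^2)$, and for the step-size constant being tied to the $\frac{\lambda_\is}{2}$-strong convexity of the surrogate.

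One sentence in your setup is wrong, though your parenthetical repairs it. Without smoothness, $\E_t[\g_t]$ need not be within $\O(d\delta G)$ of any (sub)gradient of $f_t$ at $\w_t$, because closeness of function values does not transfer to gradients. The correct justification is the one you give next. For the ball-smoothed $\hat f_t(\x)=\E_{\y\sim\mathrm{Unif}(\B)}[f_t(\x+\delta\y)]$, we have $\E_t[\g_t]=\nabla\hat f_t(\w_t)$ exactly. Moreover, $\hat f_t$ inherits convexity and $\lambda$-strong convexity and satisfies $|\hat f_t-f_t|\le\delta G$. Running the whole decomposition on $\hat f_t$ therefore costs only $\O(\delta GT)=\O(1)$, using only the assumptions of the theorem.
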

\begin{proof}
For \emph{$\lambda$-strongly convex} functions, let $\is$ index a base learner whose candidate parameter satisfies $\lambda_\is\le\lambda\le2\lambda_\is$. We have the decomposition
    \begin{align*}
        \E[\Reg_T] \le {} & \underbrace{\E\mbr{\sumT \inner{\g_t}{\w_t - \w_{t,\is}}}}_{\meta}+ \underbrace{\E\mbr{\sumT \hsc_{t,\is}(\w_{t,\is}) - \sumT \hsc_{t,\is}(\ws)}}_{\base} - \frac{\lambda_\is}{2} \E\mbr{\sumT\|\w_t - \w_{t,\is}\|^2}+\O(1),
    \end{align*}
where $\hsc_{t,i}(\x) \define \inner{\g_t}{\x} + \frac{\lambda_i}{2} \|\x - \w_t\|^2$ is the surrogate loss corresponding to $\lambda_i\in\H^\scvx$ in~\eqref{eq:candidate-pool}.

For the meta regret, applying Corollary~4 of \citet{gaillard2014prod} to the normalized losses above and rescaling back to the original linearized losses, there exist constants $C_8$ and $C_9$ such that
\begin{align*}
    &\meta
    \le \O(d)+C_8\E\mbr{\sqrt{\ln\ln T\cdot\sumT \inner{\g_t}{\w_t-\w_{t,\is}}^2}}\\
    \le{}& \O(d)+C_8\sqrt{\ln\ln T\cdot \E\mbr{\sumT \|\g_t\|^2\|\w_t-\w_{t,\is}\|^2}}
    \le\O(d)+C_8\sqrt{C_9dG^2\ln\ln T\cdot \E\mbr{\sumT\|\w_t-\w_{t,\is}\|^2}}\\
    \le{}& \O(d)+\frac{C_8^2C_9dG^2\ln\ln T}{2\lambda_\is}+\frac{\lambda_\is}{2}\E\mbr{\sumT\|\w_t-\w_{t,\is}\|^2}
    \le \O(d)+\frac{C_8^2C_9dG^2\ln\ln T}{\lambda}+\frac{\lambda_\is}{2}\E\mbr{\sumT\|\w_t-\w_{t,\is}\|^2}.
\end{align*}
Here, the second inequality follows from Jensen's inequality and Cauchy-Schwarz, the third inequality follows from Lemma~5 of \citet{shamir2017optimal}, the fourth inequality follows from the AM-GM inequality, and the last inequality uses $\lambda_\is\le\lambda\le2\lambda_\is$. For the base regret, \pref{thm:T-strcvx} implies $\base\le \O(\frac{d}{\lambda_\is}\log T)=\O(\frac{d}{\lambda}\log T)$. Combining the meta and base regret bounds with the negative quadratic term in the decomposition gives $\E[\Reg_T]\le \O(\frac{d}{\lambda}\log T)$, where the $\log\log T$ factor is omitted according to our convention.

For \emph{convex} and \emph{linear} functions, we have the decomposition
    \begin{align*}
        \E[\Reg_T] \le {} & \underbrace{\E\mbr{\sumT \inner{\g_t}{\w_t - \w_{t,\is}}}}_{\meta}+ \underbrace{\E\mbr{\sumT \inner{\g_t}{\w_{t,\is}-\ws}}}_{\base}+\O(1),
    \end{align*}
For the meta regret, applying Corollary~4 of \citet{gaillard2014prod} to the normalized losses and rescaling, there exists a constant $C_8$ such that
\begin{align*}
    \meta
    \le{}& \O(d)+C_8\E\mbr{\sqrt{\ln\ln T\cdot\sumT \inner{\g_t}{\w_t-\w_{t,\is}}^2}}\\
    \le{}& \O(d)+2RC_8\sqrt{\ln\ln T\cdot\sumT\E\mbr{\|\g_t\|^2}}
    \le \O(\sqrt{dT}+d).
\end{align*}
Here, the second inequality follows from Jensen's inequality and $\|\w_t-\w_{t,\is}\|\le2R$, while the last inequality follows from Lemma~5 of \citet{shamir2017optimal}. By Theorem~1 of \citet{shamir2017optimal}, the base regret satisfies $\base\le \O(\sqrt{dT})$. Hence, combining the meta and base regret bounds yields $\E[\Reg_T]\le \O(\sqrt{dT}+d)$, where the $\log\log T$ factor is omitted according to our convention.

\end{proof}


\section{Technical Lemmas}
\label{app:technical}
\begin{Lemma}[{Lemma 4.8 of \citet{pogodin2019first}}]
    \label{lem:sum_cvx}
    Let $a_1, a_2, \ldots, a_T$ be non-negative real numbers. Then
    $$
    \sum_{t=1}^T \frac{a_t}{\sqrt{1+\sum_{s=1}^{t-1} a_s}} \leq 4 \sqrt{1+\sum_{t=1}^T a_t}+\max _{t \in[T]} a_t .
    $$
\end{Lemma}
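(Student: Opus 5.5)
The plan is to telescope against the running sums. Set $S_0\define 1$ and $S_t\define 1+\sum_{s=1}^t a_s$, so $S_t$ is nondecreasing, $S_t\ge 1$, and $a_t=S_t-S_{t-1}$. The summand on the left-hand side is $a_t/\sqrt{S_{t-1}}$. Its denominator lags one step behind, and this is the only real difficulty. I will remove the lag by adding and subtracting the same ratio with the current sum $S_t$ in the denominator:
\begin{equation*}
    \frac{a_t}{\sqrt{S_{t-1}}} = \frac{a_t}{\sqrt{S_t}} + a_t\sbr{\frac{1}{\sqrt{S_{t-1}}}-\frac{1}{\sqrt{S_t}}}.
\end{equation*}

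The first piece will be handled by a standard square-root telescoping. Using $\sqrt{S_{t-1}}\le\sqrt{S_t}$,
\begin{equation*}
    \frac{a_t}{\sqrt{S_t}}=\frac{(\sqrt{S_t}-\sqrt{S_{t-1}})(\sqrt{S_t}+\sqrt{S_{t-1}})}{\sqrt{S_t}}\le 2\sbr{\sqrt{S_t}-\sqrt{S_{t-1}}}.
\end{equation*}
Summing over $t\in[T]$ then gives at most $2\sqrt{S_T}-2\sqrt{S_0}\le 2\sqrt{1+\sum_{t=1}^T a_t}$.

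The second piece is where the lag shows up, and it is the step I expect to be the main obstacle. A naive bound fails here: when a single $a_t$ dominates $S_{t-1}$, the ratio $\sqrt{S_t/S_{t-1}}$ can be arbitrarily large. So I will not try to compare $S_t$ with $S_{t-1}$ at all. Instead, I bound each coefficient by $a_t\le \max_{s\in[T]}a_s$; this is valid because each bracket $1/\sqrt{S_{t-1}}-1/\sqrt{S_t}$ is nonnegative. The brackets telescope, so
\begin{equation*}
    \sum_{t=1}^T a_t\sbr{\frac{1}{\sqrt{S_{t-1}}}-\frac{1}{\sqrt{S_t}}}\le \max_{s\in[T]}a_s\sbr{\frac{1}{\sqrt{S_0}}-\frac{1}{\sqrt{S_T}}}\le \max_{s\in[T]}a_s.
\end{equation*}
Here I use $S_0=1$; this is exactly where the ``$1+$'' in the denominator matters.

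Adding the two pieces yields $\sum_{t=1}^T a_t/\sqrt{S_{t-1}}\le 2\sqrt{1+\sum_{t=1}^T a_t}+\max_{t\in[T]}a_t$. This is stronger than the stated bound, since the constant is $2$ rather than $4$, and so it implies the lemma.
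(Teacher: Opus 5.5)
Your proof is correct and complete, and it gives a sharper constant than the lemma claims. The paper has no proof to compare against: it imports the statement verbatim from Lemma 4.8 of \citet{pogodin2019first} and uses it as a black box in the proofs of \pref{thm:cvx-base}, \pref{thm:WT}, and \pref{lem:ohedge}.

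Your add-and-subtract split $a_t/\sqrt{S_{t-1}} = a_t/\sqrt{S_t} + a_t\bigl(1/\sqrt{S_{t-1}} - 1/\sqrt{S_t}\bigr)$ handles the one-step lag cleanly.
\begin{itemize}
\item The first part telescopes to at most $2\sqrt{S_T}-2$.
\item The brackets in the second part are nonnegative, so you can pull out $\max_{s\in[T]} a_s$, and the remaining sum telescopes to at most $\max_{s\in[T]} a_s\,(1-1/\sqrt{S_T})$. This is where $S_0=1$ matters, and it avoids comparing $S_t$ with $S_{t-1}$, which fails when a single $a_t$ dominates.
\end{itemize}

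The result is $2\sqrt{1+\sum_{t=1}^T a_t}+\max_{t\in[T]}a_t$, so the cited constant $4$ is loose. In the paper's applications this would only shrink numerical constants, not any rates. That is why citing the weaker form costs the paper nothing.
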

\begin{Lemma}[{Lemma 9 of \citet{yan2023universal}}]
    \label{lem:sum_scvx}
    For a sequence of $\{a_t\}_{t=1}^T$ and $b$, where $a_t, b > 0$ for any $t \in [T]$, denoting by $a_{\max} \define \max_t a_t$ and $A \define \ceil{b \sumT a_t}$, we have
    \begin{equation*}
        \sumT \frac{a_t}{bt} \le \frac{a_{\max}}{b} (1 + \log A) + \frac{1}{b^2}.
    \end{equation*}
\end{Lemma}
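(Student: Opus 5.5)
The plan is to split the sum at the threshold index $A \define \ceil{b\sumT a_t}$ and bound the two pieces by different mechanisms: a harmonic-sum bound for small $t$ and an averaging bound for large $t$. Since every $a_t>0$ and $b>0$, we have $b\sumT a_t>0$, hence $A\ge 1$, so the split point is a genuine positive integer and $\log A\ge 0$.

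\textbf{Small indices.} For the indices $t\le \min\{A,T\}$, use $a_t\le a_{\max}$ termwise. This gives
\begin{equation*}
    \sum_{t=1}^{\min\{A,T\}} \frac{a_t}{bt} \le \frac{a_{\max}}{b}\sum_{t=1}^{\min\{A,T\}}\frac1t \le \frac{a_{\max}}{b}\bigl(1+\log \min\{A,T\}\bigr) \le \frac{a_{\max}}{b}(1+\log A),
\end{equation*}
via the standard harmonic bound $H_n\le 1+\log n$, the same bound already used in the proof of \pref{lem:Geo}. If $T\le A$, this alone finishes the argument, because the extra $1/b^2$ term is nonnegative.

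\textbf{Large indices.} If $T>A$, it remains to control the tail $t\in\{A+1,\dots,T\}$. Here $1/(bt)\le 1/(bA)$, so the tail is at most $\frac{1}{bA}\sum_{t=A+1}^T a_t\le \frac{1}{bA}\sumT a_t$. By the definition of $A$ we have $A\ge b\sumT a_t$, which yields the tail bound $\frac{\sumT a_t}{b\cdot b\sumT a_t}=\frac1{b^2}$. Adding the two pieces gives the claim.

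There is no real obstacle in this argument; the only point needing care is that the ceiling makes $A$ an integer with $A\ge b\sumT a_t$. This single inequality is exactly what cancels the $\sumT a_t$ factor in the tail and leaves the $1/b^2$ term.
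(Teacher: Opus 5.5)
Your proof is correct. You split at the index $A=\lceil b\sum_{t=1}^T a_t\rceil$, bound the head by $\frac{a_{\max}}{b}H_{\min\{A,T\}}\le\frac{a_{\max}}{b}(1+\log A)$, and bound the tail by $\frac{1}{bA}\sum_{t=1}^T a_t\le\frac{1}{b^2}$; this is the natural argument behind the $(1+\log A)$ and $\frac{1}{b^2}$ terms, and your separate treatment of the case $T\le A$ is a careful touch.

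The paper does not reprove this statement; it cites Lemma 9 of \citet{yan2023universal}, so there is no in-paper proof to compare against. Note that $\log$ must be the natural logarithm for $H_n\le 1+\log n$ to hold, which matches how the paper uses it in the proof of \pref{lem:Geo}.
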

\begin{Lemma}[{Lemma 9 of \citet{zhao2024adaptivity}}]
    \label{lem:small-loss-sqrt}
    For any $x, y, a, b>0$ satisfying $x-y \le  \sqrt{a x}+b$, it holds that
    \begin{equation*}
        x-y \le  \sqrt{a y+ab}+a+b.
    \end{equation*}
\end{Lemma}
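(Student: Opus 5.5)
Let $u\define x-y$; I assume $x-y\ge 0$, since otherwise the claim is trivial. The plan is to solve the hypothesis for $u$ by treating $x=y+u$ and applying the quadratic formula. The hypothesis reads $u \le \sqrt{a(y+u)}+b$. If $u\le b$ we are done at once, because the right-hand side of the target is at least $b$. So the main case is $u>b$. There $u-b>0$, and squaring gives
\begin{equation*}
(u-b)^2 \le a(y+u).
\end{equation*}

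Next substitute $s\define u-b>0$. This becomes $s^2 \le a(y+b) + a s$, a quadratic inequality in $s$. Its nonnegative root bound is
\begin{equation*}
s \le \frac{a+\sqrt{a^2+4a(y+b)}}{2}.
\end{equation*}
Applying $\sqrt{p+q}\le\sqrt p+\sqrt q$ then gives $s \le \frac{a + a + 2\sqrt{a(y+b)}}{2} = a+\sqrt{ay+ab}$.

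Adding $b$ back yields $u \le \sqrt{ay+ab}+a+b$, which is exactly the claim. There is no real obstacle here. The only care needed is the case split on the sign of $u-b$ before squaring, and making sure we use the correct root of the quadratic.
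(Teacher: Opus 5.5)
Your proof is correct and complete. The case split ensures $u-b>0$ before you square, you take the correct (larger) root of $s^2-as-a(y+b)\le 0$, and subadditivity of the square root yields exactly the stated constant. Your preliminary reduction to $x-y\ge 0$ is redundant, since the case $u\le b$ already covers negative $u$.

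The paper gives no proof of this lemma; it simply imports it from \citet{zhao2024adaptivity}, so there is no in-paper argument to compare against. A slightly shorter route treats the hypothesis as a quadratic inequality in $\sqrt{x}$, namely $(\sqrt{x})^2-\sqrt{a}\,\sqrt{x}-(y+b)\le 0$. This gives
\[
\sqrt{x}\le \frac{\sqrt{a}+\sqrt{a+4(y+b)}}{2}\le \sqrt{a}+\sqrt{y+b},
\]
hence $\sqrt{ax}\le a+\sqrt{ay+ab}$. Substituting this back into $x-y\le\sqrt{ax}+b$ finishes the proof. That route needs no case split, because it never squares a quantity of unknown sign ($\sqrt{x}$ is automatically nonnegative). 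Your version works directly with $u=x-y$ and pays for it with the split on the sign of $u-b$. Otherwise the two arguments use the same two ingredients, the quadratic formula and $\sqrt{p+q}\le\sqrt{p}+\sqrt{q}$, and give the identical bound.
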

\begin{Lemma}[{Lemma 16 of \citet{orabona2012beyond}}]
    \label{lem:small-loss-log}
    If $a, b, c, x, y>0$ satisfy $x-y \le  a \log (b x+c)$, then it holds that
    \begin{equation*}
        x-y \le  a \log \sbr{2 a b \log \frac{2 a b}{e} + 2by + 2c}.
    \end{equation*}
\end{Lemma}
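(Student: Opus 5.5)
The plan is a two-case argument on the size of $x$ relative to $y$, using only monotonicity of $\log$ and an elementary inequality of the form ``$u \le \alpha\log(\beta u)$ implies $u \le 2\alpha\log(\alpha\beta)$''. Write $K \define 2ab\log\frac{2ab}{e}$. The target is the conclusion
\begin{equation*}
    x-y \le a\log\sbr{K+2by+2c}.
\end{equation*}
Since the right-hand side of the hypothesis, $a\log(bx+c)$, is increasing in $x$, it suffices to show $bx+c \le K+2by+2c$. Then the hypothesis $x-y\le a\log(bx+c)$ directly gives the claim.

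\textbf{Easy case.} Suppose $bx+c\le K+2by+2c$ holds. Then the claim follows immediately from the previous paragraph.

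\textbf{Hard case.} Suppose instead $bx+c > K+2by+2c$. I will derive a contradiction, or show this case is vacuous. This assumption gives two consequences.
\begin{itemize}
    \item First, $bx > c$ (using $K \ge 0$; see below), so $bx+c<2bx$. The hypothesis then becomes $x-y\le a\log(2bx)$.
    \item Second, $bx > 2by + K$, so $y < \frac{x}{2} - \frac{K}{2b}$. Hence $x - y > \frac{x}{2} + a\log\frac{2ab}{e}$.
\end{itemize}
Combining the two gives
\begin{equation*}
    \frac{x}{2}+a\log\frac{2ab}{e} < a\log(2bx).
\end{equation*}
With the substitution $u = x/(2a)$, this reads $u<\log(2e\,u)$, or equivalently $u - \log u < 1+\log 2$.

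\textbf{Main obstacle.} The reduced inequality $u - \log u < 1+\log 2$ is not by itself contradictory, since it is satisfiable for $u$ near $1$. So the case split must be tuned more carefully.

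The first thing I would try is to replace the threshold ``$2$'' in the split by the comparison $x$ versus $2y$ plus $2a\log(2ab/e)$ directly. I would then apply the standard fact that $u\le \alpha\log(\beta u)$ with $\alpha\beta\ge e$ forces $u\le 2\alpha\log(\alpha\beta)$. Concretely, from $\frac{x}{2}\le a\log(2bx)$, i.e.\ $x\le 2a\log(2bx)$, this fact gives $x\le 2a\cdot\log(4ab)$-type bounds. Substituting such a bound for $x$ back into $bx + c$ should produce exactly the $2ab\log\frac{2ab}{e}$ term inside the logarithm.

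I also need to handle the regime $2ab<e$, where $K\le 0$. Here I would argue directly that $x - y \le a\log(bx+c)$ with small $ab$ forces $bx+c$ to be small. The point is that $t\mapsto a\log(bt+c)$ has slope at most $ab<e/2$. This sub-case and the exact constants are the step I expect to require the most care.
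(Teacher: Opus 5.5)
There is a genuine gap. Your reduction is right: it suffices to show $bx+c\le 2ab\log\frac{2ab}{e}+2by+2c$, and then monotonicity of $\log$ applied to the hypothesis finishes. The hard case fails because you split off $c$. Using $bx+c<2bx$ in the upper bound and $bx>2by+K$ (dropping $c$) in the lower bound discards exactly the slack that makes the case contradictory. That is why you end up with the satisfiable inequality $u-\log u<1+\log 2$.

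Your sketched repair does not recover the statement. From $x<2a\log(2bx)$, the fact $u\le\alpha\log(\beta u)\Rightarrow u\le 2\alpha\log(\alpha\beta)$ only gives $bx\le 4ab\log(4ab)$. This exceeds $2ab\log\frac{2ab}{e}$, so when $y$ and $c$ are small it cannot yield $bx+c\le 2ab\log\frac{2ab}{e}+2by+2c$. At best you would prove the lemma with worse constants, not the stated one.

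The regime $2ab<e$ is also left open; your first consequence $bx>c$ already needs $K\ge 0$. The heuristic you propose there is false. The slope of $t\mapsto a\log(bt+c)$ is $ab/(bt+c)$, which is unbounded as $bt+c\to 0$, not at most $ab$.

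The missing idea is the tangent-line bound for the concave logarithm, $\log z\le \frac{z}{\beta}+\log\frac{\beta}{e}$ for all $z,\beta>0$. Apply it to $z=bx+c$ as a whole, with $\beta=2ab$:
\begin{itemize}
\item It gives $a\log(bx+c)\le \frac{x}{2}+\frac{c}{2b}+a\log\frac{2ab}{e}$.
\item The hypothesis then yields $x\le 2y+\frac{c}{b}+2a\log\frac{2ab}{e}$, that is, $bx+c\le 2ab\log\frac{2ab}{e}+2by+2c$.
\item Since $bx+c>0$, the right-hand side is positive. Substituting back into $x-y\le a\log(bx+c)$ completes the proof.
\end{itemize}
This needs no case split and no condition relating $2ab$ to $e$. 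Equivalently, if you keep $c$ intact in your hard case, the case assumption and the hypothesis combine to $w-1<\log w$ with $w=(bx+c)/(2ab)$, which contradicts $\log w\le w-1$. The paper itself gives no proof and imports the lemma from Orabona et al.; the short argument above is all that is required.
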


\end{document}